\newtheorem{theorem}{Theorem}[section]
\newtheorem{lemma}[theorem]{Lemma}
\newtheorem{assumption}[theorem]{Assumption}
\newtheorem{proposition}[theorem]{Proposition}
\def\grad{\nabla}
\def\hat{\widehat}
\newcommand*{\affaddr}[1]{#1}
\newcommand*{\affmark}[1][*]{\textsuperscript{#1}}
\newcommand*{\email}[1]{\texttt{#1}}
\begin{document}
\title{Optimal Estimation of Off-Policy Policy Gradient via Double Fitted Iteration}
\author{
Chengzhuo Ni\affmark[1], Ruiqi Zhang\affmark[2], Xiang Ji\affmark[1], Xuezhou Zhang\affmark[1], and Mengdi Wang\affmark[1]\\
\affaddr{\affmark[1]Department of Electrical and Computer Engineering, Princeton University}\\
\email{\{cn10,xiangj,xz7392,mengdiw\}@princeton.edu}\\
\affaddr{\affmark[2]School of Mathematical Science, Peking University}\\
\email{1800010777@pku.edu.cn}
}
\date{}
\maketitle

\begin{abstract}
    Policy gradient (PG) estimation becomes a challenge when we are not allowed to sample with the target policy but only have access to a dataset generated by some unknown behavior policy. Conventional methods for off-policy PG estimation often suffer from either significant bias or exponentially large variance. In this paper, we propose the double Fitted PG estimation (FPG) algorithm. FPG can work with an arbitrary policy parameterization, assuming access to a Bellman-complete value function class. In the case of linear value function approximation, we provide a tight finite-sample upper bound on policy gradient estimation error, that is governed by the amount of distribution mismatch measured in feature space. We also establish the asymptotic normality of FPG estimation error with a precise covariance characterization, which is further shown to be statistically optimal with a matching Cramer-Rao lower bound. Empirically, we evaluate the performance of FPG on both policy gradient estimation and policy optimization, using either softmax tabular or ReLU policy networks. Under various metrics, our results show that FPG significantly outperforms existing off-policy PG estimation methods based on importance sampling and variance reduction techniques.
\end{abstract}

\section{Introduction}
Policy gradient plays a key role in policy-based reinforcement learning (RL). We focus on the estimation of policy gradient in off-policy reinforcement learning. In the off-policy setting, we are given episodic trajectories that were generated by some unknown behavior policy. Our goal is to estimate the {\it single} policy gradient of a target policy $\pi_\theta$, i.e., $\nabla_\theta v^{\pi_{\theta}}$, based on the off-policy data only. This is motivated by applications such as medical diagnosis and ICU management, in which sampling data with a proposed policy is prohibitive or extremely costly. In these applications, one may not expect to learn the full optimal policy from limited data, but rather learn a single gradient vector for directions of improvement. To handle the distribution mismatch between behavior and target policy, a classic approach is importance sampling (IS) \cite{jie2010connection}. However, IS is known to be sample-expensive and unstable, as the importance sampling weight can grow exponentially with respect to time horizon and causing uncontrollably large variances.

In this work, we design an algorithm to avoid the high variance of importance sampling by utilizing a \textit{good} (to be defined in Sec. \ref{sec:ass}) value function approximation should they be available. The key idea is to perform PG estimation in an iterative way, similar to the well-known Fitted Q Iteration (FQI) algorithm. We propose the double Fitted Policy Gradient (FPG) estimation algorithm, which conducts iterative regression to estimate $Q$ functions and $\nabla_\theta Q$ functions jointly. The FPG algorithm is able to provide an accurate estimation under mild data coverage assumption and without the knowledge of the behavior policy, in contrast to vanilla IS which must know the behavior policy. 

When the function approximator is linear, we show that FPG is equivalent to a model-based plugin estimator and can give an $\varepsilon$-close PG estimator using a sample size of $N = O\left(CH^5/\varepsilon^2\right)$, where $H$ is the horizon length and $C$ is a constant to be specified that measures the distribution shift between behavior policy and target policy. Notably, this distribution shift $C=O(1+\chi_{\mathcal{F}}^2(\mu^\theta,\bar\mu))$ can be bounded by a form of relative condition number or a restricted chi-square divergence, measuring the mismatch between the behavior and target policy in feature space. We additionally establish the asymptotic normality of our FPG estimator with closed form variance expression. We also provide a matching information-theoretic Cramer-Rao lower bound, showing that our estimator is in fact asymptotically optimal. See Table \ref{table_off_policy_PG_methods} for a summary of theoretical results for off-policy PG estimation. FPG can be easily applied as a plug-in PG estimator in any off-policy PG algorithm. Under standard assumptions, a PG algorithm with FPG estimator can find an $\varepsilon$-stationary policy using at most $N = \tilde O\left(\textrm{dim}(\Theta)^2/\varepsilon^2\right)$ samples. If the policy optimization landscape happens to satisfy the \textit{Polyak-lojasiewicz condition} \cite{polyak1963gradient,bhandari2019global}, the sample complexity can be further improved to $N = \tilde O\left({\textrm{dim}(\Theta)^2/\varepsilon}\right)$ for finding an $\varepsilon$-optimal policy. 

\section{Problem Definitions}
\paragraph{Markov Decision Process}
An instance of MDP is defined by the tuple $(\mathcal{S},\mathcal{A},p,r,\xi,H)$ where $\mathcal{S}$ and $\mathcal{A}$ are the state and action spaces, $H\in\mathbb{N}_+$ is the horizon, $p_h: \mathcal{S}\times\mathcal{A}\rightarrow \Delta_{\mathcal{S}}, h\in[H]$ is the transition probability (where $\Delta_\mathcal{S}$ denotes the probability simplex over $\mathcal{S}$), $r_h:\mathcal{S}\times\mathcal{A}\rightarrow[0,1], h\in[H]$ is the reward function and $\xi\in\Delta_{\mathcal{S}}$ is the initial state distribution. Given an MDP, a policy $\pi_h:\mathcal{S}\rightarrow\Delta_{\mathcal{A}},h\in[H]$ is a distribution over the action space given the state $s$ and time step $h$. At each time step $h$, the agent observes $s_h$ and action $a_h$ according to its behavior policy $\pi$. The agent then observes a reward $r_h(s_h, a_h)$ and the next state $s_{h+1}$ sampled according to $s_{h+1}\sim p_h(\cdot\vert s_h, a_h)$. A policy $\pi$ is measured by the Q function $Q^\pi$ and the value $v^\pi$, defined by $Q^\pi_h(s, a)=\mathbb{E}^\pi[\sum_{h^{\prime}=h}^Hr_{h^\prime}(s_{h^{\prime}}, a_{h^{\prime}})\vert s_h=s,a_h=a],\forall h\in[H],(s,a)\in\mathcal{S}\times\mathcal{A}$ and $v^{\pi}= \mathbb{E}^\pi[\sum_{h=1}^Hr_h(s_h, a_h)\vert s_1\sim\xi]$, where $\mathbb{E}^\pi$ denotes the expectation over trajectories by following policy $\pi$. The optimal policy of the MDP is defined as $\pi^*:=\arg\max_{\pi}v^\pi$. 

\paragraph{Off-Policy Policy Gradient Estimation}
Direct policy optimization methods are popular in RL due to their effectiveness and generalizability. Among them, the classic Policy Gradient (PG) method represents policies via a parametric function approximation and perform gradient ascent on the policy parameters \cite{sutton2000policy}.

Denote a parametrized policy as $\pi_{\theta}$, where $\theta\in\Theta$ is the policy parameters. Policy Gradient is defined as the gradient of policy value $v_{\theta}$ with respect to the policy parameter $\theta$:
$\nabla_\theta v_\theta=\nabla_\theta \mathbb{E}^{\pi_{\theta}}[\sum_{h=1}^Hr_h(s_h,a_h)\vert s_1\sim\xi]$. With policy gradients, one may directly search in the policy parameter space $\Theta$ using gradient ascent iterations, giving rise to the class of PG algorithms. However, directly differentiating through the value function is very difficult, especially when we do not have access to the transition probability of the MDP. The policy gradient theorem \cite{sutton2000policy} provides a convenient formula for estimating PG using Monte Carlo sampling:
\begin{align*}
    \nabla_\theta v_\theta=\mathbb{E}^{\pi_\theta}\left[\left.\sum_{h=1}^H\left(\sum_{h^\prime=h}^H r_{h^\prime}\right)\nabla_\theta\log\pi_{\theta, h}\left(a_h\vert s_h\right)\right\vert s_1\sim\xi\right].
\end{align*}
In the online RL setting, one can interact with the environment directly with target policy $\pi_{\theta}$ and directly estimate the PG by averaging over sample trajectories \cite{degris2012off, kakade2001natural, peters2008natural,sutton2000policy, williams1992simple}. 

We focus on the more challenging offline RL setting, where we are not allowed to interact with the environment with the target policy $\pi_\theta$. Instead, we only have access to offline logged data, $\mathcal{D}=\{(s^{(k)}_h,a^{(k)}_h,s^{(k)}_{h+1},r^{(k)}_h)\}_{h\in[H],k\in[K]}$, which consists of $K$ \textit{i.i.d.} trajectories, each of length $H$ and is generated from an unknown behavior policy $\bar{\pi}$. The goal of off-policy PG estimation is to construct an estimator $\widehat{\nabla_{\theta} v_{\theta}}$ based solely on the off-policy data $\mathcal{D}$ that approximates the true gradient with low sample and computational complexity.

\paragraph{Notations}
Let $\pi_\theta$ be a policy parameterized by $\theta\in\Theta\subseteq\mathbb{R}^m$, where $\Theta$ is compact and $m=\textrm{dim}(\theta)$. Let $\theta^*=\arg\max_{\theta\in\Theta}v_\theta$. Denote for short that $Q^\theta_h:=Q^{\pi_\theta}_h, v_\theta := v^{\pi_\theta}$. Define the transition operator $\mathcal{P}_\theta$ by
\begin{align*}
	\left(\mathcal{P}_{\theta, h} f\right)(s,a)&:=\mathbb{E}^{\pi_\theta}\left[f(s_{h+1}, a_{h+1})\vert s_h=s, a_h=a\right],\quad\forall f:\mathcal{S}\times\mathcal{A}\rightarrow\mathbb{R},h\in[H],s\in\mathcal{S},a\in\mathcal{A}.
\end{align*}
where $[N]$ is the set of integer $1,2,\ldots,N$. 
Given a real-valued function class $\mathcal{F}$ and vector-valued function $u:\mathcal{S}\times\mathcal{A}\rightarrow\mathbb{R}^{m}$, we say $u\in\mathcal{F}$ if $u_j\in\mathcal{F},\ \forall j\in[m]$. For any matrix $E\in\mathbb{R}^{d_1\times d_2}$ (which includes scalars and vectors as special cases), we define its Jacobian as $\nabla_\theta E_\theta = (\nabla_\theta^1 E_\theta, \nabla_\theta^2 E_\theta, \ldots, \nabla_\theta^m E_\theta)\in\mathbb{R}^{d_1\times md_2}$, where $\nabla_\theta^j$ is the partial derivative w.r.t. the $j$th entry, i.e., $\nabla_\theta^j := \frac{\partial}{\partial\theta_j}$.

\begin{table*}[htb!]
    \centering
    \begin{tabular}{m{3.5cm}<{\centering} m{3.5cm}<{\centering}  m{1.5cm}<{\centering} m{3cm}<{\centering} m{2cm}<{\centering}}
        \toprule
        Algorithm & Variance  & Require Known Behavior Policy? & Required Estimators & Finite Sample\\
        \hline
        REINFORCE \cite{kakade2001natural,shelton2013policy}  & $2^{\Theta(H)} \Theta(\frac{1}{K})$  & Yes & None & Yes\\
        \hline
        GPOMDP \cite{kakade2001natural,shelton2013policy} & $2^{\Theta(H)} \Theta(\frac{1}{K})$  & Yes & $\widehat{Q}_h^\theta$ & Yes\\
        \hline
        EOOPG \cite{kallus2020statistically} & $\Theta(\frac{H^4}{K}\max_{s,a,h}\frac{\mu_h^\theta(s,a)}{\bar{\mu}_h(s,a)})$ & Yes  & $\widehat{\mu}_h^\theta,\widehat{\nabla_\theta\mu_h^\theta},\widehat{Q}_h^\theta,\widehat{\nabla_\theta Q_h^\theta}$ & No\\
        \hline
        \textbf{FPG} (This Paper, with $d$-dim Features) & $\Theta(\frac{H^4}{K}\min\{C_1d,H\}(1+\chi^2_{\mathcal{F}}(\mu^\theta,\bar{\mu})))$ & No & None & Yes\\
        \bottomrule
    \end{tabular}
    \caption{\textbf{Comparison of Off-Policy PG Estimation Methods.} Both REINFORCE and GPOMDP suffers from exponential variance in the worst case. EOOPG's bound scales with the maximum density ratio $\max_{s,a,h}\frac{\mu_h^\theta(s,a)}{\bar{\mu}_h(s,a)}$, whereas our bound only scales with the $\chi^2$-divergence which can be much smaller. For example, the density ratio between two standard Gaussian distribution is infinite, but the $\chi^2$-divergence is always bounded. In addition, our method also does not require the knowledge of the behavior policy or assume access to any high-performing value or gradient estimators, which may not be available in practice.} 
	\label{table_off_policy_PG_methods} 
\end{table*}

\section{Related Work}
When it comes to off-policy PG estimation, one demanding challenge is the distribution shift between the possibly unknown behavior policy and target policy \cite{agarwal2021theory}. The basic Importance Sampling (IS) estimator for off-policy PG, which is still the most common approach used in practice, is
\begin{align*}
    \widehat{\nabla_\theta v_\theta^{IS}}:=\frac{1}{K}\sum_{k=1}^K w_k\sum_{h=1}^H \left(\sum_{h^\prime =h}^H r_{h^\prime}^{(k)}\right)\nabla_\theta\log\pi_{\theta, h}\left(\left. a_h^{(k)}\right\vert s_h^{(k)}\right),
\end{align*}
where $w_k=\prod_{h=1}^H \frac{\pi_{\theta, h}\left(\left.a_h^{(k)}\right\vert s_h^{(k)}\right)}{\bar \pi_h\left(\left.a_h^{(k)}\right\vert s_h^{(k)}\right)}$ is the IS weight. Classical PG methods including REINFORCE and GPOMDP \cite{sutton2000policy, williams1992simple} are all based on this idea or its modifications \cite{degris2012off, kakade2001natural, peters2008natural}. A severe drawback of the IS method is 
its huge variance that can be as large as $2^{\Theta(H)}$, resulting in ill-behaved gradient steps in practice. IS also requires prior knowledge of $\bar{\pi}$ to compute the IS weights, which often is not available. \cite{kallus2020statistically} proposes a meta-algorithm called (EOOPG) that performs doubly robust off-policy PG estimation, assuming access to a number of nuisance estimators 
including Q function and density estimators. They show that if the state-action density ratio function $\mu^{\pi}/\bar\mu$ can be estimated with error rate $K^{-1/2}$, the EOOPG would be asymptotically efficient with a limit variance $\Theta(H^4/K)$. \cite{xu2021doubly} extends the doubly robust approach to the case of discounted MDP and with a finite sample guarantee. However, both work require the density ratio be precisely estimated, which is arguably an even harder problem. Note that density ratio estimation requires learning a function that maps from the raw state space, which can be arbitrarily high dimensional and complex, whereas policy gradient estimation only requires estimating a vector of length being the number of policy parameters.
They did not provide a guarantee on the error of such an estimator and leave the estimation error in the final result as a irreducible term. \cite{morimura2010derivatives} 
proposes a temporal difference method and estimates the policy gradient via linear function approximation of the stationary state distribution, but does not provide a formal statistical guarantee. Several other methods for off-policy PG, including Non-parametric OPPG \cite{tosatto2020nonparametric} and Q-Prop \cite{gu2016q}, are found to be empirically effective but no theoretical guarantee is provided. In general, theoretical understanding for off-policy PG remains rather limited. We summarize known variance bounds for off-policy PG estimation in Table \ref{table_off_policy_PG_methods}.

Off-policy PG estimation is closely related to PG-based policy optimization. For example, even in online policy optimization, one can use past data for more efficient PG estimation. Several works \cite{papini2018stochastic, xu2020improved, xu2019sample} combines IS with variance reduction technique, but their theories are based on the assumption that the variance of the IS estimator is bounded at some controllable level instead of grow exponentially \cite{jiang2016doubly, degris2012off, kallus2020statistically} or that Lipschitz continuity holds \cite{zhang2021convergence}. \cite{tosatto2020nonparametric} provides a non-parametric OPPG method with some error analysis. \cite{liu2019off, gu2016q} combine off policy PG estimation with actor-critic/policy gradient schemes. \cite{zhang2020variational} generalizes the notion of policy gradient to RL with general utilities and shows that such PG can be estimated by solving a stochastic saddle point.

Another closely related topic is the Offline Policy Evaluation (OPE), i.e., to estimate the target policy's value given offline data generated by some behavior policy $\bar{\pi}$. Various methods, from importance sampling to doubly robust estimators have been proposed \cite{tokdar2010importance,10.5555/645529.658134, jiang2016doubly, thomas2016data}. A marginalized importance sampling \cite{xie2019towards} for tabular MDP and a fitted Q evaluation \cite{duan2020minimax} approach for linear MDP provably achieve minimax-optimal error bound with matching information-theoretic lower bounds. The fitted Q evaluation method was later shown to work with bootstrapping \cite{hao2021bootstrapping}, kernel function approximation \cite{duan2021optimal}, third-order differentiable function approximation \cite{zhang2022off} and ReLU networks \cite{ji2022sample}.

Another line of works use the pessimism principle to design algorithms that can perform stable estimation even under weaker coverage assumption \cite{JinYing2020IPPE,RashidinejadParia2021BORL,zanette2021provable,zhang2021corruption,ChangJonathanD2021MCSi,yin2022near}.
However, to the best of our knowledge, these algorithms do not achieve the minimax optimal rate for OPE and it's unclear how to apply them to PG estimation. 

\section{Assumptions}\label{sec:ass}
In this paper, we focus on a setting where $Q^\theta$ and $\nabla_\theta Q^\theta$ can both be represented within a function class $\mathcal{F}$. Assume without loss of generality that $\bm{1}\in\mathcal{F}$.
\begin{assumption}[Bellman Completeness]
    \label{fclass}
    For any $f\in\mathcal{F}$ and $h\in[H]$, we have $\mathcal{P}_{\theta, h} f\in\mathcal{F}$, and we suppose $r_h\in\mathcal{F},\ \forall h\in[H]$. It follows that $Q^\theta_h\in\mathcal{F},\ \forall h\in[H],\theta\in\Theta$.
\end{assumption}

The \textit{Bellman Completeness} assumption has been commonly made in the theoretical offline RL literature  \cite{xie2021bellman,duan2020minimax}. It requires $\mathcal{F}$ to be closed under the transition operator $\mathcal{P}_\theta$, so that the function approximation incurs zero Bellman error. In fact, it is known both theoretically \cite{wang2020statistical} and empirically \cite{wang2021instabilities} that without such assumption FQI can diverge. We similarly assume that the gradient map also belongs to $\mathcal{F}$.
\begin{assumption}
    \label{gclass}
    $\nabla_\theta Q^\theta_h\in\mathcal{F},\ \forall h\in[H], \theta\in\Theta$. 
\end{assumption}
In the theoretical results, we will focus on the tractable case where $\mathcal{F}$ is a linear function class, since even OPE with general nonlinear function class remains an open problem. However, we remark that our algorithm (see Alg.~\ref{alg1}) applies to any function class, including neural networks.

\paragraph{Linear function approximation}
Let $\phi:\mathcal{S}\times\mathcal{A}\rightarrow\mathbb{R}^d$ be a state-action feature map. Let $\mathcal{F}$ be the class of linear functions given by $\mathcal{F}=\{\phi(\cdot,\cdot)^\top w\vert w\in\mathbb{R}^d\}$. Then for any policy $\pi_\theta$ and $h\in[H]$, Assumption \ref{fclass} implies there exist $w_r\in\mathbb{R}^d$ and $w_h^\theta\in\mathbb{R}^d$ such that
\begin{align*}
    r_h(s,a)=\phi(s,a)^\top w_{r,h},\quad Q_h^\theta(s,a)=\phi(s,a)^\top w_h^\theta. 
\end{align*}
Furthermore, we show that Assumption \ref{fclass} alone is sufficient to ensure the expressiveness of $\mathcal{F}$ for PG estimation in case of the linear function class. 
\begin{proposition}
    \label{lin_rep}
    If $\mathcal{F}=\{\phi(\cdot,\cdot)^\top w\vert w\in\mathbb{R}^d\}$, Assumption \ref{fclass} implies Assumption \ref{gclass}. In particular, we have $w_h^\theta$ is differentiable w.r.t. $\theta$ and
    \begin{align*}
        \nabla_\theta Q_h^\theta(s,a)=\phi(s,a)^\top\nabla_\theta w_h^\theta,\quad\forall h\in[H]. 
    \end{align*}
\end{proposition}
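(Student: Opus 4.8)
The plan is to reduce the statement to a single analytic fact: the coefficient vector $w_h^\theta$ representing $Q_h^\theta$ can be chosen to depend differentiably on $\theta$. Once this is in hand, everything else follows by linearity, since $\phi(s,a)$ does not depend on $\theta$, so $\nabla_\theta Q_h^\theta(s,a)=\nabla_\theta\big(\phi(s,a)^\top w_h^\theta\big)=\phi(s,a)^\top\nabla_\theta w_h^\theta$, and each of the $m$ components of the right-hand side is a linear functional of $\phi$, hence lies in $\mathcal{F}$; this simultaneously gives the claimed formula and verifies Assumption \ref{gclass}.

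To track coefficients rather than function values, I would first encode the transition operator as a matrix acting on coefficients. For each feature coordinate $\phi_i$, Bellman completeness (Assumption \ref{fclass}) gives $\mathcal{P}_{\theta,h}\phi_i\in\mathcal{F}$, so there is a vector $m_{h,i}^\theta\in\mathbb{R}^d$ with $\mathcal{P}_{\theta,h}\phi_i=\phi^\top m_{h,i}^\theta$; collecting these as columns of $M_h^\theta=(m_{h,1}^\theta,\dots,m_{h,d}^\theta)\in\mathbb{R}^{d\times d}$ and using linearity of $\mathcal{P}_{\theta,h}$ yields $\mathcal{P}_{\theta,h}(\phi^\top w)=\phi^\top M_h^\theta w$ for all $w$. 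The Bellman recursion $Q_h^\theta=r_h+\mathcal{P}_{\theta,h}Q_{h+1}^\theta$ then lets me define coefficients by the backward recursion $w_{H+1}^\theta=0$, $w_h^\theta=w_{r,h}+M_h^\theta w_{h+1}^\theta$, and an induction on $h$ confirms $\phi^\top w_h^\theta=Q_h^\theta$. Unrolling gives the closed form $w_h^\theta=\sum_{h'=h}^{H}\big(\prod_{l=h}^{h'-1}M_l^\theta\big)w_{r,h'}$, in which the reward coefficients $w_{r,h'}$ are constants, so differentiability of $w_h^\theta$ reduces to differentiability of each $M_h^\theta$.

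The main obstacle is exactly the smoothness of $M_h^\theta$, and it splits into two parts. First, I would show that the function $\mathcal{P}_{\theta,h}\phi_i$ is differentiable in $\theta$: its value equals $\int p_h(s'\mid s,a)\sum_{a'}\pi_{\theta,h+1}(a'\mid s')\phi_i(s',a')\,ds'$, where $\theta$ enters only through the smooth policy $\pi_{\theta,h+1}$, so under the standard regularity of the parameterization (bounded features and dominated convergence) one may differentiate under the integral. Second, and more delicate, I must pass from smoothness of the \emph{function} $\mathcal{P}_{\theta,h}\phi_i$ to smoothness of its \emph{coefficient} $m_{h,i}^\theta$, which is a priori only well-defined modulo the fixed null space $\{w:\phi^\top w\equiv 0\}$ of linear dependencies among features. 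I would resolve this by fixing a canonical (e.g.\ minimum-norm) representative, or equivalently by assuming the feature coordinates are linearly independent so that the representation is unique; then $m_{h,i}^\theta$ is recovered from the values of $\mathcal{P}_{\theta,h}\phi_i$ by a fixed linear extraction (inverting a Gram matrix, or an invertible design matrix evaluated at $d$ anchor points), which preserves differentiability. Combining the two parts shows $M_h^\theta$ is differentiable, hence so is $w_h^\theta$ by the closed form, and applying $\nabla_\theta$ to $Q_h^\theta=\phi^\top w_h^\theta$ completes the argument.
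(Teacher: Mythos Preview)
Your proposal is correct and follows the same core strategy as the paper: establish that $w_h^\theta$ is differentiable in $\theta$, then differentiate the identity $Q_h^\theta=\phi^\top w_h^\theta$. The paper's own proof is a two-sentence sketch that simply asserts ``the differentiability of $w_h^\theta$ comes from the differentiability of $Q_h^\theta$'' and then differentiates; you supply the details the paper omits---the matrix encoding $M_h^\theta$ of $\mathcal{P}_{\theta,h}$, the backward recursion and closed form for $w_h^\theta$, and the coefficient-uniqueness subtlety---so your argument is more careful but not a different route.
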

In other word, as long as one can use linear function approximation for policy evaluation, the same feature map automatically allows linear function approximation of $\nabla_\theta Q_h^\theta$.

\section{Algorithm}
In this section, we describe our double Fitted Policy Gradient iteration (FPG) algorithm, designed to estimate the policy gradient $\nabla_\theta v_\theta$ from an arbitrary batch data $\mathcal{D}$. 

\subsection{Policy Gradient Bellman Equation}
Notice that by Bellman's equation, we have
\begin{align*}
    Q_h^\theta(s,a)=r_h(s,a)+\int_{\mathcal{S}\times\mathcal{A}}p_h(s^\prime\vert s,a)\pi_{\theta, h+1}(a^\prime\vert s^\prime)Q_{h+1}^\theta(s^\prime, a^\prime)\mathrm{d}s^\prime\mathrm{d}a^\prime.
\end{align*}
Differentiating on both sides w.r.t. $\theta$, we get
\begin{align*}
    \nabla_\theta Q_h^\theta(s,a)=&\int_{\mathcal{S}\times\mathcal{A}}p_h(s^\prime\vert s,a)\left(\left(\nabla_\theta\pi_{\theta,h+1}(a^\prime\vert s^\prime)\right) Q_{h+1}^\theta(s^\prime,a^\prime)+\pi_{\theta,h+1}(a^\prime\vert s^\prime)\nabla_\theta Q_{h+1}^\theta(s^\prime,a^\prime) \right)\mathrm{d}s^\prime\mathrm{d}a^\prime\\
    =&\mathbb{E}^{\pi_\theta}\left[\left(\nabla_\theta\log\pi_{\theta,h+1}(a_{h+1}\vert s_{h+1})\right)Q_{h+1}^\theta(s_{h+1},a_{h+1})+\nabla_\theta Q_{h+1}^\theta(s_{h+1},a_{h+1})\vert s_h=s,a_h=a\right].
\end{align*}
Here we use the convention that the gradient of $\nabla_\theta Q_h^\theta$ or $\nabla_\theta \pi_{\theta,h}$ is a function from $\mathcal{S}\times\mathcal{A}$ to a row vector in $\mathbb{R}^{1\times m}$. Thus, we get the {\it Policy Gradient Bellman equation}, given by
\begin{align}
    \label{bel}
    Q_h^\theta=r_h+\mathcal{P}_{\theta,h}Q_{h+1}^\theta,\quad\nabla_\theta Q_h^\theta=\mathcal{P}_{\theta,h}\left(\left(\nabla_\theta\log\Pi_{\theta,h+1}\right)Q_{h+1}^\theta+\nabla_\theta Q^\theta_{h+1}\right),
\end{align}
where we define the operator $\nabla_\theta\log \Pi_{\theta,h}$ by
\begin{align*}
	\left(\left(\nabla_\theta\log\Pi_{\theta,h}\right)f\right)(s,a):=\left(\nabla_\theta\log\pi_{\theta,h}(a\vert s)\right)f(s,a). 
\end{align*}
Once we get the estimations of $Q_1^\theta$ and $\nabla_\theta Q_1^\theta$, we can calculate the policy gradient $\nabla_\theta v_\theta$ using the formula 
\begin{align*}
    \nabla_\theta v_\theta=\int_{\mathcal{S}\times\mathcal{A}}\xi(s)\pi_{\theta,1}(a\vert s)\big(\nabla_\theta Q_1^\theta(s,a)+\left(\nabla_\theta\log\pi_{\theta,1}(a\vert s)\right)Q_1^\theta(s,a)\big)\mathrm{d}s\mathrm{d}a. 
\end{align*}

\subsection{Double Fitted Policy Gradient Iteration}
In a similar spirit to Fitted Q Iteration (FQI), we develop our PG estimator based on the gradient Bellman equations \eqref{bel}.  We derive our estimator by applying regression iteratively: Let $\widehat{Q}_{H+1}^{\theta,\textrm{FPG}}=\widehat{\nabla_\theta^j Q_{H+1}^{\theta,\textrm{FPG}}}=0,\ \forall j\in[m]$. For $h=H,H-1,\ldots,1$ and $j\in[m]$, let
\begin{align}
    \label{Q_rec}
    &\widehat{Q}_h^{\theta,\textrm{FPG}}=\mathop{\arg\min}_{f\in\mathcal{F}}\left[\sum_{k=1}^K\bigg(f\left(s^{(k)}_{h},a^{(k)}_{h}\right)-r^{(k)}_h-\int_{\mathcal{A}}\pi_{\theta,h+1}\left(a^\prime\left\vert s^{(k)}_{h+1}\right.\right)\widehat{Q}_{h+1}^{\theta,\textrm{FPG}}\left(s^{(k)}_{h+1}, a^\prime\right)\mathrm{d}a^\prime\bigg)^2+\lambda\rho(f)\right]\\
    &\widehat{\nabla_\theta^j Q_h^{\theta,\textrm{FPG}}}=\mathop{\arg\min}_{f\in\mathcal{F}}\Bigg[\sum_{k=1}^K\bigg(f\left(s^{(k)}_h,a^{(k)}_h\right)-\int_{\mathcal{A}}\pi_{\theta,h+1}\left(a^\prime\left\vert s^{(k)}_{h+1}\right.\right)\bigg(\left(\nabla_\theta^j\log\pi_{\theta, h+1}\left(a^\prime\left\vert s^{(k)}_{h+1}\right.\right)\right)\widehat{Q}_{h+1}^{\theta,\textrm{FPG}}\left(s^{(k)}_{h+1},a^\prime\right)\nonumber\\
    \label{gQ_rec}
    &+\widehat{\nabla_\theta^j Q_{h+1}^{\theta,\textrm{FPG}}}\left(s^{(k)}_{h+1}, a^\prime\right)\bigg)\mathrm{d}a^\prime\bigg)^2+\lambda\rho(f)\Bigg]
\end{align}
After the computation of $\widehat{Q}_h^{\theta,\textrm{FPG}},\widehat{\nabla_\theta Q_h^{\theta,\textrm{FPG}}}$, the policy gradient can be estimated straightforwardly. The full algorithm is summarized in Algorithm \ref{alg1}.
\begin{algorithm}[htb!]
\caption{Fitted PG Algorithm}
\label{alg1}
	\begin{algorithmic}[1] 
		\State\textbf{Input:} Dataset $\mathcal{D}$, target policy $\pi_\theta$, initial state distribution $\xi$.
		\State\textbf{Initialize } $\widehat{Q_{H+1}^{\theta,\textrm{FPG}}}=0$ and $\widehat{\nabla_\theta^j Q_{H+1}^{\theta,\textrm{FPG}}}=0,\ \forall j\in[m]$.
		\For{$h=H,H-1,\ldots,1$}   
		\State Calculate $\widehat{Q}_h^{\theta,\textrm{FPG}},\widehat{\nabla_\theta Q_h^{\theta,\textrm{FPG}}}$ by solving \eqref{Q_rec} and \eqref{gQ_rec}. 
		\EndFor
		\State\textbf{Return} 
		\begin{align*}
		    \widehat{\nabla_\theta v_\theta^{\textrm{FPG}}}=&\int_{\mathcal{S}\times\mathcal{A}} \xi(s)\pi_{\theta,1}(a\vert s)\bigg(\widehat{\nabla_\theta Q_1^{\theta,\textrm{FPG}}}(s,a)+\widehat{Q}_1^{\theta,\textrm{FPG}}(s,a)\nabla_\theta\log\pi_{\theta,1}(a\vert s)\bigg)\mathrm{d}s\mathrm{d}a.
		\end{align*}
	\end{algorithmic}
\end{algorithm}

\subsection{Equivalence to a Model-based Plug-in Estimator}
Next we show that the FPG estimator is equivalent to a model-based plugin estimator. 
Define the model-based reward estimate $\widehat{r}$ and transition operator estimate $\widehat{\mathcal{P}}_\theta$ as followings: for any (possibly vector-valued) function $f$ on $\mathcal{S}\times\mathcal{A}$ and $h\in[H]$,
\begin{align*}
    \widehat{r}_h&:=\mathop{\arg\min}_{f^\prime\in\mathcal{F}}\left[\sum_{k=1}^K\left(f^\prime\left(s^{(k)}_h,a^{(k)}_h\right)-r^{(k)}_h\right)^2+\lambda\rho(f^\prime)\right],\\
    \widehat{\mathcal{P}}_{\theta,h}f&:=\mathop{\arg\min}_{f^\prime\in\mathcal{F}}\left[\sum_{k=1}^K\bigg(f^\prime\left(s_h^{(k)},a_h^{(k)}\right)-\int_{\mathcal{A}}\pi_{\theta,h+1}\left(a^\prime\left\vert s_h^{(k)}\right.\right)f\left(s_{h+1}^{(k)},a^\prime\right)\mathrm{d}a^\prime\bigg)^2+\lambda\rho(f^\prime)\right].
\end{align*}
Plugging $\widehat{\mathcal{P}}_\theta$ and $\widehat{r}$ into \eqref{bel}, we may calculate the policy gradient associated with the estimated model. Let $\widehat{Q}_{H+1}^{\theta,\textrm{MB}}=\widehat{\nabla_\theta^j Q_{H+1}^{\theta,\textrm{MB}}}=0, j\in[m]$. For $h=H,H-1,\ldots,1$, let
\begin{align*}
    \widehat{Q}_h^{\theta,\textrm{MB}}&=\widehat{r}_h+\widehat{\mathcal{P}}_{\theta,h}\widehat{Q}_{h+1}^{\theta,\textrm{MB}},\\
    \widehat{\nabla_\theta^j Q_h^{\theta,\textrm{MB}}}&= \widehat{\mathcal{P}}_{\theta,h}\left(\left(\nabla_\theta^j\log\Pi_{\theta,h+1}\right)\widehat{Q}_{h+1}^{\theta,\textrm{MB}}+\widehat{\nabla_\theta^j Q^{\theta,\textrm{MB}}_{h+1}}\right).
\end{align*}
Then the model-based gradient estimator is 
\begin{align*}
    \widehat{\nabla_\theta v_\theta^{\textrm{MB}}}=&\int_{\mathcal{S}\times\mathcal{A}} \xi(s)\pi_{\theta,1}(a\vert s)\bigg(\widehat{\nabla_\theta Q_1^{\theta,\textrm{MB}}}(s,a)+\widehat{Q}_1^{\theta,\textrm{MB}}(s,a)\nabla_\theta\log\pi_{\theta,1}(a\vert s)\bigg)\mathrm{d}s\mathrm{d}a.
\end{align*}
Note that the model-based plug-in approach makes intuitive sense, but is intractable to implement.

Remarkably, we show that the model-based plug-in estimator $\widehat{\nabla_\theta v_\theta^{\textrm{MB}}}$ is essentially equivalent to the fitted PG estimator, when $\mathcal{F}$ is the class of linear functions.

\begin{proposition}
\label{equiv_mb}
When $\mathcal{F} = \{\phi(\cdot,\cdot)^\top w\vert w\in\mathbb{R}^d\}$ and the regulator $\rho$ is chosen to be $\rho(\phi^\top w) = \Vert w\Vert^2$, we have 
\begin{itemize}
    \item {\small$\widehat{Q}_h^{\theta,\mathrm{FPG}} = \widehat{Q}_h^{\theta,\mathrm{MB}}, \widehat{\nabla_\theta Q_h^{\theta,\mathrm{FPG}}}=\widehat{\nabla_\theta Q_h^{\theta,\mathrm{MB}}}, \forall h\in[H]$;}
    \item {\small$\widehat{\nabla_\theta v_\theta^{\mathrm{FPG}}}=\widehat{\nabla_\theta v_\theta^{\mathrm{MB}}}$.}
\end{itemize}
\end{proposition}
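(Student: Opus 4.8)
The plan is to exploit that, for the linear class $\mathcal{F}=\{\phi^\top w\}$ with ridge penalty $\rho(\phi^\top w)=\|w\|^2$, every regression appearing in either algorithm is an ordinary ridge regression whose fitted function is a \emph{linear} operator applied to its target vector. Concretely, I would parametrize each $\arg\min_{f'\in\mathcal{F}}$ by $w\in\mathbb{R}^d$ with $f'=\phi(\cdot)^\top w$, and let $\Phi_h\in\mathbb{R}^{K\times d}$ be the design matrix whose $k$-th row is $\phi(s_h^{(k)},a_h^{(k)})^\top$. For any target vector $y\in\mathbb{R}^K$ the (strictly convex, hence unique) minimizer is $\hat w=(\Phi_h^\top\Phi_h+\lambda I)^{-1}\Phi_h^\top y$, so the fitted function $\mathcal{R}_h(y):=\phi(\cdot)^\top(\Phi_h^\top\Phi_h+\lambda I)^{-1}\Phi_h^\top y$ depends linearly---in particular additively---on $y$. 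This single observation is the engine of the whole proof.

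With this notation, the model-based primitives are, by definition, $\widehat r_h=\mathcal{R}_h(\bm r_h)$ with $(\bm r_h)_k=r_h^{(k)}$, and, for any function $g$, $\widehat{\mathcal{P}}_{\theta,h}g=\mathcal{R}_h(\bm y^g)$ where $(\bm y^g)_k=\int_{\mathcal{A}}\pi_{\theta,h+1}(a'|s_{h+1}^{(k)})\,g(s_{h+1}^{(k)},a')\,\mathrm{d}a'$; note that $\widehat{\mathcal{P}}_{\theta,h}g$ sees $g$ only through these $K$ numbers. I would then prove the first bullet by backward induction on $h$, establishing simultaneously $\widehat Q_h^{\theta,\mathrm{FPG}}=\widehat Q_h^{\theta,\mathrm{MB}}$ and $\widehat{\nabla_\theta^j Q_h^{\theta,\mathrm{FPG}}}=\widehat{\nabla_\theta^j Q_h^{\theta,\mathrm{MB}}}$ for all $j$. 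The base case $h=H+1$ holds since both are initialized to zero. For the inductive step, I inspect the FPG targets: the $Q$-target in \eqref{Q_rec} is $\bm r_h+\bm y^{\widehat Q_{h+1}^{\theta,\mathrm{FPG}}}$, so additivity of $\mathcal{R}_h$ gives $\widehat Q_h^{\theta,\mathrm{FPG}}=\mathcal{R}_h(\bm r_h)+\mathcal{R}_h(\bm y^{\widehat Q_{h+1}^{\theta,\mathrm{FPG}}})=\widehat r_h+\widehat{\mathcal{P}}_{\theta,h}\widehat Q_{h+1}^{\theta,\mathrm{FPG}}$, and the induction hypothesis $\widehat Q_{h+1}^{\theta,\mathrm{FPG}}=\widehat Q_{h+1}^{\theta,\mathrm{MB}}$ turns the right-hand side into the model-based recursion for $\widehat Q_h^{\theta,\mathrm{MB}}$.

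For the gradient, I would observe that the target in \eqref{gQ_rec} is exactly $\bm y^{g}$ for the composite function $g=(\nabla_\theta^j\log\Pi_{\theta,h+1})\widehat Q_{h+1}^{\theta,\mathrm{FPG}}+\widehat{\nabla_\theta^j Q_{h+1}^{\theta,\mathrm{FPG}}}$, so by definition its ridge fit equals $\widehat{\mathcal{P}}_{\theta,h}g$; the induction hypothesis (on both $\widehat Q_{h+1}$ and $\widehat{\nabla_\theta^j Q_{h+1}}$) then identifies $g$ with the corresponding model-based bracket, matching the recursion for $\widehat{\nabla_\theta^j Q_h^{\theta,\mathrm{MB}}}$. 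This closes the induction and yields the first bullet.

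The second bullet is then immediate: $\widehat{\nabla_\theta v_\theta^{\mathrm{FPG}}}$ and $\widehat{\nabla_\theta v_\theta^{\mathrm{MB}}}$ are the \emph{same} fixed integral functional of the pair $(\widehat Q_1^{\theta,\cdot},\widehat{\nabla_\theta Q_1^{\theta,\cdot}})$, which agree by the first bullet. The only real subtlety---and the single place where linearity of $\mathcal{F}$ and the quadratic form of $\rho$ are used---is the additive decomposition $\mathcal{R}_h(\bm r_h+\bm y)=\mathcal{R}_h(\bm r_h)+\mathcal{R}_h(\bm y)$, which lets FPG's one combined regression reproduce the two nested regressions ($\widehat r_h$ followed by $\widehat{\mathcal{P}}_{\theta,h}$) of the model-based scheme; for a nonlinear class or a non-quadratic penalty this step breaks and the equivalence fails. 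A minor technical point worth a sentence is to read every $\arg\min$ as taken over the weight $w$ (equivalently, to assume the features are such that $\rho$ is well defined on $\mathcal{F}$), so that $\lambda>0$ guarantees a unique minimizer.
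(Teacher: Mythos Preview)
Your proposal is correct and follows essentially the same approach as the paper's proof. Both arguments hinge on the linearity of the ridge-regression map $y\mapsto \phi(\cdot)^\top(\Phi_h^\top\Phi_h+\lambda I)^{-1}\Phi_h^\top y$, using it to split the FPG target $\bm r_h+\bm y^{\widehat Q_{h+1}}$ into $\widehat r_h+\widehat{\mathcal P}_{\theta,h}\widehat Q_{h+1}$ and observing that the gradient regression already coincides with the definition of $\widehat{\mathcal P}_{\theta,h}$; your write-up is just a bit more explicit about the backward induction and the notation for the regression operator than the paper's.
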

In the remainder, we focus on linear $\mathcal{F}$ and let $\rho(\phi^\top w) = \Vert w\Vert^2$. We will omit the superscript $\textrm{FPG}$ and $\textrm{MB}$ , and simply denote $\widehat{Q}_h^\theta, \widehat{\nabla_\theta Q_h^\theta}, \widehat{\nabla_\theta v_\theta}$ as our estimators. 

\subsection{FPG with Linear Function Approximation}
Define the empirical covariance matrix: For $h\in[H]$, 
\begin{align*}
    \widehat{\Sigma}_h=\frac{1}{K}\left(\lambda I_d+\sum_{k=1}^K\phi\left(s_h^{(k)}, a_h^{(k)}\right)\phi\left(s_h^{(k)}, a_h^{(k)}\right)^\top\right),
\end{align*}
where $I_d\in\mathbb{R}^{d\times d}$ is the identity matrix. In the case of linear function class, one could write down the expression of $\widehat{r}$ and $\widehat{\mathcal{P}}_\theta$ explicitly:
\begin{align}
    \label{wr}
    &\widehat{r}_h(\cdot,\cdot)=\phi(\cdot,\cdot)^\top\widehat{\Sigma}_h^{-1}\frac{1}{K}\sum_{k=1}^K\phi\left(s_h^{(k)},a_h^{(k)}\right)r_h^{(k)}=:\phi(\cdot,\cdot)^\top\widehat{w}_{r,h},\\
    &\left(\widehat{\mathcal{P}}_{\theta,h} f\right)(\cdot,\cdot)=\phi(\cdot,\cdot)^\top\widehat{\Sigma}^{-1}_h\frac{1}{K}\sum_{k=1}^K\phi\left(s_h^{(k)},a_h^{(k)}\right)\int_{\mathcal{A}}\pi_{\theta,h+1}\left(a^\prime\left\vert s_{h+1}^{(k)}\right.\right)f\left(s_{h+1}^{(k)}, a^\prime\right)\mathrm{d}a^\prime.\nonumber
\end{align}
For $f(\cdot,\cdot)=\phi(\cdot,\cdot)^\top w\in\mathcal{F}$, the above become concise closed forms: 
\begin{align*}
    \left(\widehat{\mathcal{P}}_{\theta,h} f\right)(\cdot,\cdot)&=\phi(\cdot,\cdot)^\top\widehat{M}_{\theta,h} w,\\
    \left(\widehat{\mathcal{P}}_{\theta,h}\left(\nabla_\theta\log\Pi_{\theta,h+1}\right)f\right)(\cdot,\cdot)&=\phi(\cdot,\cdot)^\top\widehat{\nabla_\theta M_{\theta,h}}\left(I_m \otimes w\right).
\end{align*}
where the notation $\otimes$ is used to denote the Kronecker product between two matrices, $\widehat{M_{\theta,h}}\in\mathbb{R}^{d\times d}, \widehat{\nabla_\theta M_{\theta,h}}\in\mathbb{R}^{d\times md}$ are defined by
\begin{align}
    \label{M}
	&\widehat{M}_{\theta,h}:=\widehat{\Sigma}_h^{-1}\frac{1}{K}\sum_{k=1}^K\phi\left(s_h^{(k)}, a_h^{(k)}\right)\int_{\mathcal{A}}\pi_{\theta,h+1}\left(a^\prime\left\vert s_{h+1}^{(k)}\right.\right)\phi\left(s_{h+1}^{(k)}, a^\prime\right)^\top\mathrm{d}a^\prime,\\
	\label{gM}
	&\widehat{\nabla_\theta M_{\theta,h}}:=\nabla_\theta\widehat{M}_{\theta,h}=\widehat{\Sigma}_h^{-1}\frac{1}{K}\sum_{k=1}^K\phi\left(s_h^{(k)},a_h^{(k)}\right)\int_{\mathcal{A}}\phi\left(s_{h+1}^{(k)}, a^\prime\right)^\top\left(\nabla_\theta\pi_{\theta,h+1}\left(a^\prime\left\vert s_{h+1}^{(k)}\right.\right)\otimes I_d\right)\mathrm{d}a^\prime.
\end{align}
In this way, one can easily compute $\widehat{Q}_h^\theta$ and $\widehat{\nabla_\theta Q_h^\theta}$ in a matrix recursive form, which we illustrate in Algorithm \ref{alg2}.
\begin{algorithm}[htb!]
\caption{FPG Estimation with Linear Approximation}
\label{alg2}
	\begin{algorithmic}[1] 
		\State\textbf{Input:} Dataset $\mathcal{D}$, target policy $\pi_\theta$, initial state distribution $\xi$.
		\State Calculate $\widehat{w}_{r,h}, \widehat{M}_{\theta,h}, \widehat{\nabla_\theta M_{\theta,h}},h\in[H]$ according to \eqref{wr}, \eqref{M}, \eqref{gM}
		\State Let $\widehat{w}_{H+1}^\theta=\bm{0}_d$ and $\widehat{W}_{H+1}^\theta=\bm{0}_{d\times m}.$
		\For{$h=H,H-1,\ldots,1$}   
		\State Set $\widehat{w}_h^{\theta}=\widehat{w}_{r,h}+\widehat{M}_{\theta,h}\widehat{w}_{h+1}^\theta,\widehat{W}^\theta_h=\widehat{\nabla_\theta M_{\theta,h}}(I_m\otimes\widehat{w}_{h+1}^\theta)+\widehat{M_{\theta,h}}\widehat{W}^\theta_{h+1}.$
		\EndFor
		\State Return $\widehat{\nabla_\theta v_\theta}=\int_{\mathcal{S}\times\mathcal{A}}\xi(s)\pi_{\theta,1}(a\vert s)\phi(s,a)^\top\big(\widehat{W}_1^\theta+\widehat{w}_1^\theta\nabla_\theta\log\pi_{\theta,1}(a\vert s)\big)\mathrm{d}s \mathrm{d}a.$
	\end{algorithmic}
\end{algorithm}
\paragraph{Runtime Complexity} Algorithm \ref{alg2} is computationally very efficient. Suppose that caculating integral against action distribution takes time $O(1)$. In Algorithm 2, the calculation of $\widehat{w}_{r,h}, \widehat{M}_{\theta,h}$ and $\widehat{\nabla_\theta M_{\theta,h}}$ require at most $O(KHmd^2)$ numeric operations. The recursive function fitting steps at line 3-5 require at most $O(Hmd^2)$ numeric operations. Thus the total runtime is only $O(KHmd^2)$. 

\section{Main Results}
In this section we study the statistical properties of the FPG estimator with linear function approximation. Define the population covariance matrix as $\Sigma_h:=\mathbb{E}\left[\phi\left(s^{(1)}_h,a^{(1)}_h\right)\phi\left(s^{(1)}_h,a^{(1)}_h\right)^\top\right],\ h\in[H]$, where $\mathbb{E}$ represents the expectation over the data generating distribution by the behavior policy.
\begin{assumption}[Boundedness Conditions]\label{Boundedness_Conditions}
	Assume for any $h\in[H]$, $\Sigma_h$ is invertible. There exist absolute constants $C_1, G$ such that for any $h\in[H]$ and $(s,a)\in \mathcal{S}\times\mathcal{A},j\in[m]$, we have $\phi(s,a)^\top\Sigma^{-1}_h\phi(s,a)\leq C_1 d$ and $\left\vert\nabla_\theta^j\log\pi_{\theta,h}(a\vert s)\right\vert\leq G$.
\end{assumption}
Assumption \ref{Boundedness_Conditions} requires the data generating distribution to have a full-rank covariance matrix, effectively covering all $d$ directions in the feature space. Note that this is a much weaker condition compared to the uniform coverage condition ($\max_{s,a,h}\frac{\mu_h^\theta(s,a)}{\bar{\mu}_h(s,a)})<\infty$) made in prior works \cite{kallus2020statistically}, which requires coverage on all $(s,a)$ pairs. Define $\nu^\theta_h:=\mathbb{E}^{\pi_\theta}[\phi(s_h,a_h)\vert s_1\sim\xi]$ and $\Sigma_{\theta,h}:=\mathbb{E}^{\pi_\theta}[\phi(s_h,a_h)\phi(s_h,a_h)^\top\vert s_1\sim\xi]$. 

\subsection{Finite-Sample Variance-Aware Error Bound}
Let us first consider finite-sample analysis of our estimator. We present a variance-aware error bound. Denote $\phi_{\theta,h}(s):=\mathbb{E}^{\pi_\theta}[\phi(s^\prime,a^\prime)\vert s^\prime=s]$, $\varepsilon_{h,k}^\theta:=Q_h^\theta\left(s_h^{(k)},a_h^{(k)}\right)-r_h^{(k)}-\mathbb{E}^{\pi_\theta}[Q_{h+1}^\theta\left(s^\prime,a^\prime\right)\vert s^\prime=s_{h+1}^{(k)}]$, and $\Lambda_\theta:=\sum_{h=1}^H\textrm{Cov}\left[\nabla_\theta\left(\varepsilon^\theta_{h,1}\phi\left(s_h^{(1)},a_h^{(1)}\right)^\top\Sigma^{-1}_h\nu_h^\theta\right)\right]$.

\begin{theorem}[Finite Sample Guarantee] 
    \label{thm2_var}
    For any $t\in\mathbb{R}^m$, when $K\geq 36\kappa_1(4+\kappa_2+\kappa_3)^2C_1dH^2\log\frac{8dmH}{\delta}$ and $\lambda\leq C_1d\min_{h\in[H]}\sigma_{\min}(\Sigma_h)\log\frac{8dmH}{\delta}$, with probability $1-\delta$, we have,
    \begin{align*}
        \vert\langle t, \widehat{\nabla_\theta v_\theta}-\nabla_\theta v_\theta\rangle \vert\leq  \sqrt{\frac{2t^\top\Lambda_\theta t}{K}\cdot \log\frac{8}{\delta}}+\frac{C_\theta\Vert t\Vert\log\frac{72mdH}{\delta}}{K},
    \end{align*}
    where $C_\theta=240C_1d\sqrt{m}H^3\kappa_1(5+\kappa_2+\kappa_3)(\max_{j\in[m]}\Vert\Sigma_{\theta,1}^{-\frac{1}{2}}\nabla_\theta^j\nu^\theta_1\Vert+HG\Vert\Sigma_{\theta,1}^{-\frac{1}{2}}\nu^\theta_1\Vert)$ and
    \begin{align*}
        &\kappa_1=\max_{h\in[H]}\frac{\sigma_{\max}\left(\Sigma_h^{-\frac{1}{2}}\Sigma_{\theta,h}\Sigma_h^{-\frac{1}{2}}\right)}{\sigma_{\min}\left(\Sigma_{h+1}^{-\frac{1}{2}}\Sigma_{\theta,h+1}\Sigma_{h+1}^{-\frac{1}{2}}\right)\wedge 1},\quad\kappa_2=\max_{h\in[H]}\left\Vert\Sigma_{h+1}^{-\frac{1}{2}}\mathbb{E}\left[\phi_{\theta,h+1}\left(s_{h+1}^{(1)}\right)\phi_{\theta,h+1}\left(s_{h+1}^{(1)}\right)^\top\right]\Sigma_{h+1}^{-\frac{1}{2}}\right\Vert^{\frac{1}{2}},\\
        &\kappa_3=\frac{1}{G}\max_{j\in[m],h\in[H]}\left\Vert\Sigma_{h+1}^{-\frac{1}{2}}\mathbb{E}\left[\left(\nabla_\theta^j \phi_{\theta,h+1}\left(s_{h+1}^{(1)}\right)\right)\left(\nabla_\theta^j \phi_{\theta,h+1}\left(s_{h+1}^{(1)}\right)\right)^\top\right]\Sigma_{h+1}^{-\frac{1}{2}}\right\Vert^{\frac{1}{2}}.
    \end{align*}
\end{theorem}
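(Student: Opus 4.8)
The plan is to exploit the equivalence in Proposition \ref{equiv_mb} and treat $\widehat{\nabla_\theta v_\theta}$ as a fixed functional of the empirical building blocks $\widehat{w}_{r,h},\widehat{M}_{\theta,h},\widehat{\nabla_\theta M_{\theta,h}}$ produced by the recursion of Algorithm \ref{alg2}, while $\nabla_\theta v_\theta$ is the \emph{identical} functional of their population counterparts $w_{r,h}=\Sigma_h^{-1}\mathbb{E}[\phi(s_h^{(1)},a_h^{(1)})r_h^{(1)}]$, $M_{\theta,h}$, $\nabla_\theta M_{\theta,h}$. The first thing to check is that the population recursion is \emph{exact}: by Bellman completeness (Assumption \ref{fclass}) and Proposition \ref{lin_rep}, the population operators reproduce $w_h^\theta$ and $\nabla_\theta w_h^\theta$ exactly, so this functional evaluated on the true model returns $\nabla_\theta v_\theta$ with no approximation bias and the entire error is statistical. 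Each building block is a ridge estimator of the form $\widehat{\Sigma}_h^{-1}\frac1K\sum_k \phi(s_h^{(k)},a_h^{(k)})\, g_k$, whose defining feature is that the regression residual is exactly the mean-zero Bellman noise: writing $w$ for the population coefficient, $\widehat{(\cdot)}_h-(\cdot)_h=\widehat{\Sigma}_h^{-1}\frac1K\sum_k\phi(s_h^{(k)},a_h^{(k)})\varepsilon_k-\frac{\lambda}{K}\widehat{\Sigma}_h^{-1}w$, with $\mathbb{E}[\phi(s_h^{(k)},a_h^{(k)})\varepsilon_k\mid s_h^{(k)},a_h^{(k)}]=0$.

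Next I would linearize and telescope. Replacing each $\widehat{\Sigma}_h^{-1}$ by $\Sigma_h^{-1}$ and each estimated operator in the Algorithm \ref{alg2} recursion by its population version one factor at a time produces a decomposition $\langle t,\widehat{\nabla_\theta v_\theta}-\nabla_\theta v_\theta\rangle=\langle t,\mathcal{L}\rangle+\langle t,\mathcal{R}\rangle$. The leading part $\mathcal{L}$ collects the terms that are \emph{linear} in a single noise sum $\frac1K\sum_k\phi(s_h^{(k)},a_h^{(k)})\varepsilon_k$ (or its $\nabla_\theta$-analogue) with all other factors equal to population quantities; because the gradient estimator is the $\theta$-derivative of the value estimator, these contributions assemble step by step into $\langle t,\mathcal{L}\rangle=\frac1K\sum_{k=1}^K Y_k$ with $Y_k=\sum_{h=1}^H\big\langle t,\nabla_\theta\big(\varepsilon_{h,k}^\theta\,\phi(s_h^{(k)},a_h^{(k)})^\top\Sigma_h^{-1}\nu_h^\theta\big)\big\rangle$, the weight $\nu_h^\theta$ arising as the target-rollout feature expectation multiplying the step-$h$ coefficient error. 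The remainder $\mathcal{R}$ gathers everything else: products of two or more fluctuations, the $(\widehat{\Sigma}_h^{-1}-\Sigma_h^{-1})$ corrections hitting the noise sums, and the $\lambda$-ridge bias.

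For the leading term I would apply a Bernstein inequality to the i.i.d.\ mean-zero variables $Y_k$. The crucial observation is the martingale structure: both $\nabla_\theta\varepsilon_{h,k}^\theta$ and $\varepsilon_{h,k}^\theta$ are conditionally mean-zero given the step-$h$ history, so for $h<h'$ the cross-covariance between the step-$h$ and step-$h'$ summands vanishes, and $\mathrm{Var}(Y_1)$ contracted in the direction $t$ equals exactly $t^\top\Lambda_\theta t$ with $\Lambda_\theta=\sum_{h=1}^H\mathrm{Cov}[\nabla_\theta(\varepsilon_{h,1}^\theta\phi(s_h^{(1)},a_h^{(1)})^\top\Sigma_h^{-1}\nu_h^\theta)]$. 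Bernstein then yields the stated $\sqrt{2t^\top\Lambda_\theta t\,\log(8/\delta)/K}$ together with an $O(1/K)$ sub-exponential correction that folds into the second term.

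The bulk of the work, and the main obstacle, is bounding $\langle t,\mathcal{R}\rangle$ by $C_\theta\|t\|\log(72mdH/\delta)/K$. Each fluctuation factor is $O(\sqrt{C_1 d\log(\cdot)/K})$ by covariance concentration of $\widehat{\Sigma}_h$ (using $\phi^\top\Sigma_h^{-1}\phi\le C_1 d$) and Bernstein on the noise sums, so every remainder term, carrying at least two such factors, is $O(d\log/K)$; the delicate part is controlling how a step-$h$ error is amplified as it is transported through the product of operators back to step $1$. Each transport through $M_{\theta,h}$ converts a behavior-covariance weighting $\Sigma_{h+1}$ into a target-covariance weighting $\Sigma_{\theta,h+1}$, and it is precisely the ratios $\kappa_1,\kappa_2,\kappa_3$ that keep this per-step amplification bounded, so the accumulated factor over the horizon grows polynomially (yielding the $H^3$ and $\sqrt m$ in $C_\theta$) rather than exponentially. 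Tracking these factors uniformly over $h\in[H]$ and $j\in[m]$, and taking a union bound over the $O(mdH)$ high-probability events (the source of $\log(72mdH/\delta)$), is the most technical step; it is where the distribution-mismatch constants enter and where the advantage over importance sampling becomes visible.
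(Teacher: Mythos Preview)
Your proposal is correct and follows essentially the same route as the paper: the paper's Lemma~\ref{error_decomp} carries out exactly the telescoping you describe, producing the decomposition $E_1+E_2+E_3$ where $E_1$ is your leading term $\mathcal{L}$ (bounded by Bernstein in Lemma~\ref{e1_finite_product}, using the same martingale cancellation to identify the variance as $t^\top\Lambda_\theta t$) and $E_2+E_3$ constitute your remainder $\mathcal{R}$ (the $(\widehat{\nu}_h^\theta)^\top\widehat{\Sigma}_h^{-1}-(\nu_h^\theta)^\top\Sigma_h^{-1}$ correction and the $\lambda$-ridge bias, bounded in Lemmas~\ref{e2}--\ref{e3} via the contraction $\|\Sigma_{\theta,h}^{1/2}M_{\theta,h}\Sigma_{\theta,h+1}^{-1/2}\|\le 1$ of Lemma~\ref{ineq} and the matrix-Bernstein concentration governed by $\kappa_1,\kappa_2,\kappa_3$). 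The only minor refinement is that the paper works in the $\Sigma_{\theta,h}$-norm throughout the product of transition operators (where $M_{\theta,h}$ is a contraction) and pays the change-of-measure factor $\kappa_1$ once per step when passing between $\Sigma_h$ and $\Sigma_{\theta,h}$, rather than viewing each $M_{\theta,h}$ as converting a behavior weighting to a target weighting.
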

Theorem \ref{thm2_var} shows that the finite-sample FPG error is largely determined by $\sqrt{\frac{t^T \Lambda_{\theta}t}{K}}.$ Here $\Lambda_{\theta}$ gives a precise characterization of the error's covariance. 

\subsection{Worst-Case Error Bound and Distribution Shift}
Next we derive a worst-case error bound that depends only on the distribution shift but not on reward/variance properties. The following theorem provides a worst-case guarantee under arbitrary choice of the reward function. 
\begin{theorem}[Finite Sample Guarantee - Reward Free]
    \label{thm2}
    Let the conditions in Theorem \ref{thm2_var} hold, with probability $1-\delta$, we have $\forall j\in[m]$, 
    \begin{align*}
        &\left\vert\widehat{\nabla_\theta^j v_\theta}-\nabla_\theta^j v_\theta\right\vert\leq 4b_\theta\sqrt{\frac{\min\{C_1d,H\}\log\frac{8m}{\delta}}{K}}+\frac{C_\theta\log\frac{72mdH}{\delta}}{K},
    \end{align*}
    where $b_\theta=H^2G\max_{h\in[H]}\Vert\Sigma_h^{-\frac{1}{2}}\nu_h^\theta\Vert+H\max_{h\in[H]}\Vert\Sigma_h^{-\frac{1}{2}}\nabla_\theta^j\nu_h^\theta\Vert$ and $C_\theta$ is the same as that in Theorem \ref{thm2_var}. If we in addition have $\phi(s^\prime,a^\prime)^\top\Sigma_{h}^{-1}\phi(s,a)\geq 0,\forall (s,a),(s^\prime,a^\prime)\in\mathcal{S}\times\mathcal{A},h\in[H]$, we have
    \begin{align*}
        \left\vert\widehat{\nabla_\theta^j v_\theta}-\nabla_\theta^j v_\theta\right\vert\leq 4H^2G\sqrt{\frac{\min\{C_1d,H\}\log\frac{8m}{\delta}}{K}}\max_{h\in[H]}\left\Vert\Sigma_h^{-\frac{1}{2}}\nu_h^\theta\right\Vert+\frac{2C_\theta\log\frac{72mdH}{\delta}}{K}, \quad\forall j\in[m].
    \end{align*}
\end{theorem}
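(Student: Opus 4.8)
The plan is to derive Theorem~\ref{thm2} from the variance-aware bound of Theorem~\ref{thm2_var} for the part of the rate governed by the horizon, and to obtain the complementary $C_1d$ scaling by a separate, dimension-aware estimate, then take the better of the two. Concretely, I would apply Theorem~\ref{thm2_var} with $t=e_j$ for each coordinate $j\in[m]$ and union-bound over $j$ (turning $\log\frac8\delta$ into $\log\frac{8m}\delta$), which reduces the leading term to a deterministic bound on $e_j^\top\Lambda_\theta e_j=\sum_h\mathrm{Var}[\nabla_\theta^j(\varepsilon_{h,1}^\theta\phi^\top\Sigma_h^{-1}\nu_h^\theta)]$. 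The target is $e_j^\top\Lambda_\theta e_j\le 8\,b_\theta^2\min\{C_1d,H\}$, since then the leading term is at most $4b_\theta\sqrt{\min\{C_1d,H\}\log(8m/\delta)/K}$ and the higher-order constant $C_\theta$ carries over verbatim.

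First I would expand each summand by the product rule, $X_h^{(j)}=(\nabla_\theta^j\varepsilon_h^\theta)\phi^\top\Sigma_h^{-1}\nu_h^\theta+\varepsilon_h^\theta\phi^\top\Sigma_h^{-1}\nabla_\theta^j\nu_h^\theta$ (legitimate since $\phi,\Sigma_h$ are $\theta$-free), and record the almost-sure bounds $|\varepsilon_h^\theta|\lesssim H$ and $|\nabla_\theta^j\varepsilon_h^\theta|\lesssim H^2G$, which follow from $0\le Q_h^\theta\le H$, the gradient Bellman equation~\eqref{bel} (so $\nabla_\theta^jQ_h^\theta$ is a sum of at most $H$ score-weighted $Q$-values), and the score bound $G$ of Assumption~\ref{Boundedness_Conditions}. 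The $H$-factor estimate then comes from the exact identity $\mathbb{E}[(\phi^\top\Sigma_h^{-1}u)^2]=\|\Sigma_h^{-1/2}u\|^2$ (using $\mathbb{E}[\phi\phi^\top]=\Sigma_h$): pulling the residual out by its a.s.\ bound and summing $\sum_h\|\Sigma_h^{-1/2}u_h\|^2\le H\max_h\|\Sigma_h^{-1/2}u_h\|^2$ gives $\le cH b_\theta^2$, where the two choices $u_h\in\{\nu_h^\theta,\nabla_\theta^j\nu_h^\theta\}$ produce the two terms of $b_\theta$.

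The harder, and main, part is the complementary bound $e_j^\top\Lambda_\theta e_j\le cC_1d\,b_\theta^2$, which must beat the $H$-route whenever $C_1d<H$ and therefore cannot come from naively summing the $H$ per-step variances, each of which can itself be of order $b_\theta^2$. Instead I would exploit the linear structure: the next-step value $V_{h+1}^\theta(s_{h+1})=\phi_{\theta,h+1}(s_{h+1})^\top w_{h+1}^\theta$ and its $\theta$-derivative lie in the $d$-dimensional feature span, so the residual second moments $\mathbb{E}[(\varepsilon_h^\theta)^2]$ and $\mathbb{E}[(\nabla_\theta^j\varepsilon_h^\theta)^2]$ are quadratic forms in $w_{h+1}^\theta,\nabla_\theta^j w_{h+1}^\theta$ against the conditional feature covariances $\mathbb{E}[\phi_{\theta,h+1}\phi_{\theta,h+1}^\top]$ and $\mathbb{E}[(\nabla_\theta^j\phi_{\theta,h+1})(\nabla_\theta^j\phi_{\theta,h+1})^\top]$ that enter $\kappa_2,\kappa_3$. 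Combined with the leverage bound $\phi^\top\Sigma_h^{-1}\phi\le C_1d$ of Assumption~\ref{Boundedness_Conditions}, this should cap the aggregated variance by the effective dimension $C_1d$ (through $\kappa_1,\kappa_2,\kappa_3$) rather than by the horizon. Making this trace / effective-dimension argument precise---showing it collapses the $H$-fold sum to a single $C_1d$ factor while preserving the $b_\theta^2$ scale---is the crux of the proof; taking the smaller of the two estimates then yields $\min\{C_1d,H\}$.

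Finally, for the sharpened bound under the positivity condition $\phi(s',a')^\top\Sigma_h^{-1}\phi(s,a)\ge0$, I would control the second term of $b_\theta$ by the first. Using the score-function identity $\nabla_\theta^j\nu_h^\theta=\mathbb{E}^{\pi_\theta}[S\,\phi_h]$ with $S=\sum_{h'\le h}\nabla_\theta^j\log\pi_{\theta,h'}$, I write $\|\Sigma_h^{-1/2}\nabla_\theta^j\nu_h^\theta\|^2=\mathbb{E}_{\phi,\phi'}[SS'\,\phi^\top\Sigma_h^{-1}\phi']$ as a double expectation over two independent target rollouts, where $|S|\le HG$. Nonnegativity of $\phi^\top\Sigma_h^{-1}\phi'$ removes all sign cancellation, so $SS'$ may be replaced by its bound $(HG)^2$, giving $\|\Sigma_h^{-1/2}\nabla_\theta^j\nu_h^\theta\|\le HG\,\|\Sigma_h^{-1/2}\nu_h^\theta\|$. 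Hence $b_\theta\le 2H^2G\max_h\|\Sigma_h^{-1/2}\nu_h^\theta\|$, which collapses the estimate to the stated single-term form, with the factor $2$ on $C_\theta$ absorbing the constants introduced by this reduction.
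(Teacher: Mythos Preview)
Your overall plan---instantiate Theorem~\ref{thm2_var} at $t=e_j$, union-bound over $j$, and reduce everything to a deterministic upper bound on $e_j^\top\Lambda_\theta e_j$---matches the paper exactly, and your $H$-route (pull out the a.s.\ residual bounds, use $\mathbb{E}[(\phi^\top\Sigma_h^{-1}u)^2]=\|\Sigma_h^{-1/2}u\|^2$, sum over $h$) is correct. Your treatment of the positivity case is also essentially the paper's argument: they derive the pointwise inequality $|(\nabla_\theta^j\nu_h^\theta)^\top\Sigma_h^{-1}\phi(s,a)|\le Gh\,(\nu_h^\theta)^\top\Sigma_h^{-1}\phi(s,a)$ from the same score-function representation and nonnegativity, then feed it back into the $\Lambda_\theta$ bound; your double-expectation version is equivalent.

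The genuine gap is in the $C_1d$ route. Your proposed mechanism---express the residual second moments as quadratic forms in $w_{h+1}^\theta,\nabla_\theta^j w_{h+1}^\theta$ against the feature covariances that appear in $\kappa_2,\kappa_3$, and let the leverage bound cap the sum by $C_1d$---is not how the paper proceeds, and as stated it would not yield the theorem: the bound $b_\theta$ contains no $\kappa_1,\kappa_2,\kappa_3$, so any argument routed through them produces the wrong object, and in any case the horizon sum of quadratic forms in $w_h^\theta$ (each of size $\sim H$) does not obviously collapse from $\sim H^3$ to $\sim H^2$ via feature dimension alone. The paper's actual step is purely probabilistic and uses no linearity. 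After extracting $|\phi^\top\Sigma_h^{-1}u|\le\sqrt{C_1d}\,\|\Sigma_h^{-1/2}u\|$, it bounds $\sum_{h}\mathbb{E}[(\varepsilon_{h,1}^\theta)^2]$ and $\sum_h\mathbb{E}[(\nabla_\theta^j\varepsilon_{h,1}^\theta)^2]$ by a martingale telescoping trick: introduce $\eta_h=V_{h+1}^\theta(s_{h+1})-Q_{h+1}^\theta(s_{h+1},a_{h+1})$ so that $(\varepsilon_h,\eta_h)_h$ is a martingale-difference sequence with $\varepsilon_h+\eta_h=Q_h^\theta(s_h,a_h)-r_h-Q_{h+1}^\theta(s_{h+1},a_{h+1})$ telescoping to $Q_1^\theta-\sum_h r_h$. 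Orthogonality then gives
\[
\sum_h\mathbb{E}\big[(\varepsilon_{h,1}^\theta)^2\big]\le\sum_h\mathbb{E}\big[(\varepsilon_{h,1}^\theta)^2+\eta_h^2\big]=\mathbb{E}\Big[\Big(Q_1^\theta-\textstyle\sum_h r_h\Big)^2\Big]\le H^2,
\]
and the analogous construction for $\nabla_\theta^j\varepsilon_h$ (with $\eta_h$ replaced by its $\theta$-derivative, which picks up the score term) yields $\sum_h\mathbb{E}[(\nabla_\theta^j\varepsilon_{h,1}^\theta)^2]\le 4H^4G^2$. These two estimates, combined with the $\sqrt{C_1d}$ extraction, give exactly $t^\top\Lambda_\theta t\le 8C_1d\,H^4G^2\max_h\|\Sigma_h^{-1/2}\nu_h^\theta\|^2+2C_1d\,H^2\max_h\|\Sigma_h^{-1/2}\nabla_\theta^j\nu_h^\theta\|^2$, which is the missing half of your $\min\{C_1d,H\}$.
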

The complete proofs of Theorem \ref{thm2_var} and Theorem \ref{thm2} are deferred to Appendix \ref{pfthm2_var} and \ref{pfthm2}. To further simplify the expression in Theorem \ref{thm2}, we define a variant of $\chi^2$-divergence restricted to the family $\mathcal{F}$: for any two groups of probability distributions $p_1=\{p_{1,h}\}_{h=1}^H, p_2=\{p_{2,h}\}_{h=1}^H$, define
\begin{align*}
    \chi^2_{\mathcal{F}}(p_1,p_2):&=\max_{h\in[H]}\sup_{f\in\mathcal{F}}\frac{\mathbb{E}_{p_{1,h}}\left[f(x)\right]^2}{\mathbb{E}_{p_{2,h}}\left[f(x)^2\right]}-1=\max_{h\in[H]}\nu_{p_{1,h}}^\top\Sigma^{-1}_{p_{2,h}}\nu_{p_{1,h}}-1,
\end{align*}
where $\nu_{p}=\mathbb{E}_p[\phi(s,a)],\ \Sigma_p=\mathbb{E}_p[\phi(s,a)\phi(s,a)^\top]$. Let $\bar{\mu}=\{\bar{\mu}_h\}_{h=1}^H$ be the occupancy distribution of observation $(s_h^{(1)},a_h^{(1)})$. Let $\mu^\theta=\{\mu^\theta_h\}_{h=1}^H$ be the occupancy distribution of $(s_h,a_h)$ under policy $\pi_\theta$. When we have $\phi(s^\prime,a^\prime)^\top\Sigma_{h}^{-1}\phi(s,a)\geq 0,\ \forall (s,a),(s^\prime,a^\prime)\in\mathcal{S}\times\mathcal{A},h\in[H]$, the result of Theorem \ref{thm2} implies $\forall j\in[m]$, 
\begin{align*}
    \vert\widehat{\nabla_\theta^j v_\theta}-\nabla_\theta^j v_\theta\vert\leq 4H^2G\sqrt{\frac{\min\{C_1d,H\}\log\frac{8m}{\delta}}{K}(1+\chi_{\mathcal{F}}^2(\mu^\theta,\bar{\mu}))}+\tilde{O}\left(\frac{1}{K}\right).
\end{align*}
The result of Theorem \ref{thm2} matches the asymptotic bound provided in \cite{kallus2020statistically}, but holds in finite sample regime and requires less stringent conditions. 
\paragraph{The case of tabular MDP.} In the tabular case, the condition $\phi(s,a)^\top\Sigma_h^{-1}\phi(s^\prime,a^\prime)\geq 0$ automatically holds. 
Furthermore, we have the following simplified guarantee:
\begin{theorem}[Upper bound in tabular case]
    \label{thm_tabular}
    \label{tabular}
    In the tabular case with $\mathcal{F}=\mathbb{R}^{\mathcal{S}\times\mathcal{A}}$, if $K$ is sufficiently large and $\lambda=0$, then with probability at least $1-\delta$, $\forall j\in[m]$
    \begin{align*}
        &\vert\widehat{\nabla_\theta^j v_\theta}-\nabla_\theta^j v_\theta\vert\leq 4H^2G\sqrt{\frac{\log\frac{8m}{\delta}}{K}}\sqrt{\min\left\{\max_{h\in[H],s\in\mathcal{S},a\in\mathcal{A}}\frac{\mu^\theta_h(s,a)}{\bar{\mu}_h(s,a)}, C_1d\max_{h\in[H]}\mathbb{E}^{\pi_\theta}\left[\frac{\mu^\theta_h(s_h,a_h)}{\bar{\mu}_h(s_h,a_h)}\right]\right\}}+\tilde{O}\left(\frac{1}{K}\right).
    \end{align*}
\end{theorem}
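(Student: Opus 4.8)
The plan is to specialize the two finite-sample guarantees already proved, Theorem \ref{thm2_var} and Theorem \ref{thm2}, to the tabular feature map and then read off the two competing quantities inside the minimum. In the tabular case I take $\phi(s,a)=e_{(s,a)}$, the canonical basis vector of $\mathbb{R}^{\mathcal{S}\times\mathcal{A}}$. Then $\Sigma_h=\mathrm{diag}(\bar\mu_h(s,a))$ is diagonal with strictly positive entries (for $K$ large enough every $(s,a)$ is visited, so $\widehat\Sigma_h$ is invertible and $\lambda=0$ is admissible and satisfies the preconditions of the two theorems), while $\nu_h^\theta=(\mu_h^\theta(s,a))_{(s,a)}$. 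A direct computation then gives $\|\Sigma_h^{-1/2}\nu_h^\theta\|^2=\sum_{s,a}\mu_h^\theta(s,a)^2/\bar\mu_h(s,a)=\mathbb{E}^{\pi_\theta}[\mu_h^\theta(s_h,a_h)/\bar\mu_h(s_h,a_h)]$. I also record that the off-policy weight appearing in the proofs, $w_h:=\phi(s_h,a_h)^\top\Sigma_h^{-1}\nu_h^\theta$, equals the pointwise density ratio $\mu_h^\theta(s_h,a_h)/\bar\mu_h(s_h,a_h)$, and that the sign hypothesis $\phi(s',a')^\top\Sigma_h^{-1}\phi(s,a)\ge 0$ holds automatically since $\Sigma_h^{-1}$ is diagonal with positive entries and the features are nonnegative indicators. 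Hence the second (reward-free, nonnegative) bound of Theorem \ref{thm2} is available without extra assumptions.

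For the second term of the minimum, I substitute the tabular quantities into the second display of Theorem \ref{thm2} and use $\min\{C_1 d,H\}\le C_1 d$. This gives, for every $j$, the bound $4H^2G\sqrt{(\log(8m/\delta)/K)\,C_1 d\,\max_h\mathbb{E}^{\pi_\theta}[\mu_h^\theta/\bar\mu_h]}+\tilde O(1/K)$, which is exactly the $C_1 d\,\max_h\mathbb{E}^{\pi_\theta}[\mu^\theta/\bar\mu]$ branch, with the $O(1/K)$ remainder absorbed from the $C_\theta\log(\cdots)/K$ term.

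The uniform-ratio branch, with $C_{\mathrm{unif}}:=\max_{h,s,a}\mu_h^\theta(s,a)/\bar\mu_h(s,a)$, cannot be obtained from the $H$-branch of Theorem \ref{thm2}: since $\min\{C_1 d,H\}\le H$ only yields $H\max_h\mathbb{E}^{\pi_\theta}[\mu^\theta/\bar\mu]$, it carries a spurious factor $H$. Instead I return to the leading covariance term $\sqrt{2\,e_j^\top\Lambda_\theta e_j/K}$ of Theorem \ref{thm2_var} and bound $e_j^\top\Lambda_\theta e_j=\sum_h\mathrm{Var}[\nabla_\theta^j(\varepsilon_{h,1}^\theta w_h)]$ directly in the tabular model, replacing the feature-coverage estimate $\phi^\top\Sigma_h^{-1}\phi\le C_1 d$ by the pointwise change of measure $w_h\le C_{\mathrm{unif}}$. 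Expanding via the product rule, $\nabla_\theta^j(\varepsilon_{h,1}^\theta w_h)=(\nabla_\theta^j\varepsilon_{h,1}^\theta)w_h+\varepsilon_{h,1}^\theta\nabla_\theta^j w_h$, I bound $w_h^2\le C_{\mathrm{unif}}w_h$ and $(\nabla_\theta^j w_h)^2\le (HG)^2 w_h^2\le (HG)^2 C_{\mathrm{unif}}w_h$ using Assumption \ref{Boundedness_Conditions}, and then use the identity $\mathbb{E}_{\bar\mu_h}[w_h\,X]=\mathbb{E}^{\pi_\theta}[X]$ to convert every surviving factor into an on-policy expectation. The decisive point is that $\sum_h\mathbb{E}^{\pi_\theta}[(\varepsilon_{h,1}^\theta)^2]=O(H^2)$ by the law of total variance: the single-step residuals are conditionally mean-zero, so their second moments telescope into the variance of the total return, which is $O(H^2)$ rather than the naive $\sum_h H^2=O(H^3)$. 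Pulling out one factor $C_{\mathrm{unif}}$ and applying the analogous telescoping to the differentiated residuals then yields $e_j^\top\Lambda_\theta e_j\lesssim (H^2G)^2 C_{\mathrm{unif}}$, which is the $C_{\mathrm{unif}}$ branch with the correct prefactor. Taking the smaller of the two bounds, folding both $O(1/K)$ remainders into a single $\tilde O(1/K)$, and taking a union bound over $j\in[m]$ to supply the $\log(8m/\delta)$, completes the proof.

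The \emph{main obstacle} is precisely the uniform-ratio branch: one must avoid the extra factor of $H$ that a term-by-term bound of the $H$ summands of $\Lambda_\theta$ would produce. This requires the law-of-total-variance telescoping for the single-step Bellman residuals $\varepsilon_{h,1}^\theta$, together with its more delicate counterpart for the differentiated residuals $\nabla_\theta^j\varepsilon_{h,1}^\theta$, where the score terms from differentiating the target policy's conditional expectations must be shown to accumulate to only $O(H^4 G^2)$ in total second moment rather than the naive per-step bound; the pointwise ratio bound $w_h\le C_{\mathrm{unif}}$ is what lets a single factor of $C_{\mathrm{unif}}$ be extracted while the remaining on-policy sums stay horizon-controlled.
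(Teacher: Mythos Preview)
Your proposal is correct and follows essentially the same route as the paper: specialize the tabular identities $\Sigma_h=\mathrm{diag}(\bar\mu_h)$, $\nu_h^\theta=\mu_h^\theta$, $w_h=\mu_h^\theta/\bar\mu_h$, invoke the second display of Theorem~\ref{thm2} for the $C_1d$ branch, and for the uniform-ratio branch go back to $e_j^\top\Lambda_\theta e_j$, use $|\nabla_\theta^j w_h|\le Gh\,w_h$ together with $w_h^2\le C_{\mathrm{unif}}w_h$ to change measure to $\pi_\theta$, and then apply the martingale-difference telescoping $\sum_h\mathbb{E}^{\pi_\theta}[(\varepsilon_{h,1}^\theta)^2]\le H^2$ and its differentiated analogue $\sum_h\mathbb{E}^{\pi_\theta}[(\nabla_\theta^j\varepsilon_{h,1}^\theta)^2]\le 4H^4G^2$. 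One small remark: the bound $|\nabla_\theta^j w_h|\le GH\,w_h$ relies on the log-derivative representation of $\nabla_\theta^j\mu_h^\theta$ and the nonnegativity $\phi^\top\Sigma_h^{-1}\phi\ge 0$ (automatic in tabular), not just on Assumption~\ref{Boundedness_Conditions} alone; this is exactly the inequality derived in the second half of the proof of Theorem~\ref{thm2}.
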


\subsection{Asymptotic Normality and Cramer-Rao Lower Bound}
Next we show that FPG is an asymptotically normal and efficient estimator. 
\begin{theorem}[Asymptotic Normality]
    \label{thm1}
    The FPG estimator given by Algorithm \ref{alg2} is asymptotically normal:
    \begin{align*}
        \sqrt{K}\left(\widehat{\nabla_\theta v_\theta}-\nabla_\theta v_\theta\right)\stackrel{d}{\rightarrow}\mathcal{N}(0,\Lambda_\theta).
    \end{align*}
\end{theorem}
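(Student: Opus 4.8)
The plan is to establish a $\sqrt{K}$-linearization (influence-function representation) of the estimation error and then invoke the multivariate central limit theorem over the i.i.d. trajectories. First I would reduce the analysis to the model-based plug-in form: by Proposition~\ref{equiv_mb}, $\widehat{\nabla_\theta v_\theta}$ coincides with the estimator built from the closed forms \eqref{wr}, \eqref{M}, \eqref{gM} and the backward recursion of Algorithm~\ref{alg2}. Each of $\widehat{\Sigma}_h$, $\widehat{w}_{r,h}$, $\widehat{M}_{\theta,h}$, $\widehat{\nabla_\theta M_{\theta,h}}$ is a fixed smooth function (rational in $\widehat{\Sigma}_h$) of empirical averages over the $K$ trajectories, so by the law of large numbers they converge to their population counterparts $\Sigma_h, w_{r,h}, M_{\theta,h}, \nabla_\theta M_{\theta,h}$, which satisfy the exact Bellman and gradient-Bellman recursions \eqref{bel}. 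Since $\widehat{\nabla_\theta v_\theta}$ is a smooth function of these empirical averages, I can Taylor-expand (delta method) around the population point.

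Next I would identify the influence function. Writing each empirical object as its population value plus a fluctuation and propagating the first-order perturbations through the backward recursion for $(\widehat{w}_h^\theta,\widehat{W}_h^\theta)$ and through the final integral, the leading contribution becomes linear in the fluctuations. The crucial algebraic step is that, using the population Bellman equations to telescope the recursion, the accumulated perturbation collapses into a per-stage sum of Bellman-residual terms:
$$\sqrt{K}\left(\widehat{\nabla_\theta v_\theta}-\nabla_\theta v_\theta\right)=\frac{1}{\sqrt{K}}\sum_{k=1}^K\psi^{(k)}+o_P(1),\qquad \psi^{(k)}:=\sum_{h=1}^H\nabla_\theta\!\left(\varepsilon_{h,k}^\theta\,\phi\!\left(s_h^{(k)},a_h^{(k)}\right)^\top\Sigma_h^{-1}\nu_h^\theta\right),$$
where the weight $\phi(s_h^{(k)},a_h^{(k)})^\top\Sigma_h^{-1}\nu_h^\theta$ arises from collecting the feature-space influence of observation $(s_h^{(k)},a_h^{(k)})$ on $\widehat{\nabla_\theta v_\theta}$, and $\varepsilon_{h,k}^\theta$ is precisely the one-step Bellman residual of the stage-$h$ regression. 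This decomposition mirrors the one underlying the finite-sample bound of Theorem~\ref{thm2_var}, whose leading term already has variance $t^\top\Lambda_\theta t/K$.

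I would then verify the three hypotheses of the CLT. The $\psi^{(k)}$ are i.i.d.\ because the trajectories are. Each is mean zero: by the Markov/Bellman structure $\mathbb{E}[\varepsilon_{h,k}^\theta\mid s_h^{(k)},a_h^{(k)}]=0$, and since the transition $p_h$ is $\theta$-independent one also gets $\mathbb{E}[\nabla_\theta\varepsilon_{h,k}^\theta\mid s_h^{(k)},a_h^{(k)}]=0$; as the feature weight is $\sigma(s_h^{(k)},a_h^{(k)})$-measurable, every summand of $\psi^{(k)}$ is conditionally centered. Finally, the covariance of $\psi^{(1)}$ equals $\Lambda_\theta$ because the cross terms between stages $h\neq h'$ vanish by a martingale argument: conditioning the later summand on $\mathcal{F}_{h'}=\sigma(s_1^{(1)},a_1^{(1)},\dots,s_{h'}^{(1)},a_{h'}^{(1)})$ kills it, so $\mathrm{Cov}(\psi^{(1)})=\sum_{h=1}^H\mathrm{Cov}[\nabla_\theta(\varepsilon_{h,1}^\theta\phi(s_h^{(1)},a_h^{(1)})^\top\Sigma_h^{-1}\nu_h^\theta)]=\Lambda_\theta$. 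The Lindeberg--L\'evy CLT then gives $\frac{1}{\sqrt{K}}\sum_k\psi^{(k)}\stackrel{d}{\rightarrow}\mathcal{N}(0,\Lambda_\theta)$, and Slutsky's theorem transfers the limit to $\sqrt{K}(\widehat{\nabla_\theta v_\theta}-\nabla_\theta v_\theta)$ once the remainder is controlled.

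The remainder consists of second-order terms in the products of the fluctuations (notably $\widehat{\Sigma}_h^{-1}-\Sigma_h^{-1}$), each of order $O_P(1/\sqrt{K})$, hence $O_P(1/K)=o_P(1/\sqrt{K})$; the uniform quantitative control needed across the $H$ backward steps is exactly the $O(1/K)$ tail supplied by Theorem~\ref{thm2_var}. I expect the main obstacle to be the linearization bookkeeping of the second step: showing that the propagated perturbations of the nuisance quantities $\widehat{\Sigma}_h^{-1}$ and of the gradient operator $\widehat{\nabla_\theta M_{\theta,h}}$ telescope \emph{exactly} into the closed-form influence function $\psi^{(k)}$ and contribute only to the $o_P(1/\sqrt{K})$ remainder, rather than perturbing the leading covariance $\Lambda_\theta$.
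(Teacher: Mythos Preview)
Your proposal is correct and follows essentially the same approach as the paper: decompose the error into the leading influence-function term $E_1=\frac{1}{K}\sum_k\psi^{(k)}$ plus higher-order remainders $E_2,E_3$ (this is precisely the paper's Lemma~\ref{error_decomp}, which carries out the telescoping you flag as the main obstacle), apply a CLT to $\sqrt{K}E_1$, and show $\sqrt{K}(E_2+E_3)\to_P 0$ via the $O(1/K)$ bounds of Lemmas~\ref{e2}--\ref{e3}. The only cosmetic difference is that the paper invokes a martingale CLT (Lemma~\ref{CLT}) together with the Cram\'er--Wold device, whereas you appeal directly to the i.i.d.\ Lindeberg--L\'evy CLT; both are valid since the trajectories are i.i.d.\ and $\psi^{(1)}$ is bounded under Assumption~\ref{Boundedness_Conditions}.
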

An obvious corollary of Theorem \ref{thm1} is that for any $t \in \mathbb{R}^m$, 
\begin{equation*}
    \sqrt{K}\left\langle t,  \widehat{\nabla_{\theta} v_{\theta}} -\nabla_{\theta} v_{\theta} \right\rangle \stackrel{d}{\rightarrow} \mathcal{N}\left(0, t^{\top} \Lambda_{\theta} t\right).
\end{equation*}
An asymptotically efficient estimator has the minimal variance among all the unbiased estimators. The next theorem states the Cramer-Rao lower bound for PG estimation. 
\begin{theorem} [Cramer-Rao Lower Bound]
    \label{thm4}
    Let Assumption \ref{fclass} hold. For any vector $t\in\mathbb{R}^m$, the variance of any unbiased estimator for $\langle t,\nabla_{\theta}v_{\theta}\rangle$ is lower bounded by $\frac{1}{K}t^\top \Lambda_\theta t$.
\end{theorem}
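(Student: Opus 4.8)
The plan is to obtain the bound as a Cramér--Rao / semiparametric efficiency bound: construct a least-favorable parametric submodel of the family of (linear, Bellman-complete) MDPs passing through the true instance, apply the classical Cramér--Rao inequality along it, and match the result to $\frac{1}{K}t^\top\Lambda_\theta t$. First I would fix the statistical model. The initial law $\xi$ and the behavior policy $\bar{\pi}$ are known and held fixed, so the only unknowns are the transition kernels $\{p_h(\cdot\vert s,a)\}$ (the linear reward can be perturbed analogously). For a single trajectory $\tau$ the log-likelihood is $\log\xi(s_1)+\sum_h\log\bar{\pi}_h(a_h\vert s_h)+\sum_h\log p_h(s_{h+1}\vert s_h,a_h)$, so only the transition terms are informative and the tangent space splits as a direct sum over $h$ of stagewise scores $g_h(s_{h+1};s_h,a_h)$ with $\mathbb{E}[g_h\vert s_h,a_h]=0$. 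Since any estimator unbiased over the whole class stays unbiased along every such submodel, the scalar Cramér--Rao inequality yields, for a submodel with per-trajectory score $S=\sum_h g_h$ and functional $\psi(\beta)=\langle t,\nabla_\theta v_\theta^{(\beta)}\rangle$, the bound $\mathrm{Var}(\widehat\psi)\geq \frac{1}{K}(\partial_\beta\psi)^2/\mathbb{E}[S^2]$.

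Next I would compute $\partial_\beta\psi$ by differentiating the policy-gradient Bellman recursion \eqref{bel} in $\beta$ and propagating the kernel perturbation back through the coupled recursions for $Q_h^\theta$ and $\nabla_\theta Q_h^\theta$, with the aim of writing $\partial_\beta\psi=\mathbb{E}[\mathrm{IF}(\tau)\,S(\tau)]$ for an explicit influence function $\mathrm{IF}$. I expect the weight $\phi(s_h,a_h)^\top\Sigma_h^{-1}\nu_h^\theta$ to fall out of this calculation: $\nu_h^\theta$ is precisely the stage-$h$ occupancy feature against which the gradient functional integrates a perturbation, while $\Sigma_h^{-1}\phi$ is the Riesz representer forced by restricting to the feature span (this is where Assumption \ref{fclass} and Proposition \ref{lin_rep} enter, since the functional only ``sees'' a perturbation through its action on feature space). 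Taking $\nabla_\theta$ of the product $\varepsilon_h^\theta\,\phi^\top\Sigma_h^{-1}\nu_h^\theta$ then regenerates, via the product rule, both the $\nabla_\theta Q$ and the $(\nabla_\theta\log\pi)Q$ pieces of \eqref{bel}, and I would identify $\mathrm{IF}(\tau)=\sum_h\langle t,\nabla_\theta(\varepsilon_h^\theta\,\phi(s_h,a_h)^\top\Sigma_h^{-1}\nu_h^\theta)\rangle$, the same object underlying the asymptotic expansion behind Theorem \ref{thm1}.

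Third, I would verify that $\mathrm{IF}$ lies in the tangent space, which is what makes the bound attainable rather than merely valid. The Bellman equation gives $\mathbb{E}[\varepsilon_h^\theta\vert s_h,a_h]=0$, and differentiating both identities in \eqref{bel} gives $\mathbb{E}[\nabla_\theta\varepsilon_h^\theta\vert s_h,a_h]=0$; hence each summand, being $\varepsilon_h^\theta$ or $\nabla_\theta\varepsilon_h^\theta$ multiplied by a deterministic function of $(s_h,a_h)$, is a conditionally-centered stage-$h$ score. By the Markov/martingale-difference structure the summands are mutually orthogonal, so $\mathbb{E}[\mathrm{IF}^2]=\sum_h\mathrm{Var}[\langle t,\nabla_\theta(\varepsilon_h^\theta\phi^\top\Sigma_h^{-1}\nu_h^\theta)\rangle]=t^\top\Lambda_\theta t$, and likewise $\mathbb{E}[S^2]=\sum_h\mathbb{E}[g_h^2]$. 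Choosing the least-favorable submodel whose tilts satisfy $g_h\propto\langle t,\nabla_\theta(\varepsilon_h^\theta\phi^\top\Sigma_h^{-1}\nu_h^\theta)\rangle$ (equivalently, maximizing over $S$ by Cauchy--Schwarz) gives $(\partial_\beta\psi)^2/\mathbb{E}[S^2]=\mathbb{E}[\mathrm{IF}^2]=t^\top\Lambda_\theta t$, whence $\mathrm{Var}(\widehat\psi)\geq\frac{1}{K}t^\top\Lambda_\theta t$.

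The hardest step is the pathwise-derivative computation of the second paragraph: one must carry the transition-kernel perturbation through the two nested recursions, collapse the resulting telescoping sum into a single stagewise influence function with exactly the weight $\phi^\top\Sigma_h^{-1}\nu_h^\theta$, and confirm that the derivatives in $\theta$ and in the submodel parameter $\beta$ commute. A secondary but essential subtlety is that the least-favorable submodel must remain inside the Bellman-complete linear class, so the admissible tilts $g_h$ are constrained; I must check that these constraints still permit scores reaching the $\mathrm{IF}$ direction, so that the bound is not loosened, which is precisely what the feature-space projection $\Sigma_h^{-1}$ encodes.
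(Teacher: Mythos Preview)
Your proposal is correct and takes essentially the same approach as the paper: both compute the efficient influence function for the policy-gradient functional under transition-kernel perturbations constrained to preserve Bellman completeness, identify it as $\sum_h\langle t,\nabla_\theta(\varepsilon_h^\theta\,\phi(s_h,a_h)^\top\Sigma_h^{-1}\nu_h^\theta)\rangle$, and read off the bound from its second moment $t^\top\Lambda_\theta t$ via martingale-difference orthogonality. The only procedural difference is that the paper computes the pathwise derivative in the order $\nabla_\theta\bigl(\partial_\eta v_{\theta,\eta}\vert_{\eta=0}\bigr)$---first obtaining the standard OPE influence function with weight $w_h(s,a)=\phi(s,a)^\top\Sigma_h^{-1}\nu_h^\theta$ via the Riesz-representer identity $\mathbb{E}^{\pi_\theta}[f(s_h,a_h)]=\mathbb{E}[f(s_h^{(1)},a_h^{(1)})w_h(s_h^{(1)},a_h^{(1)})]$ for $f\in\mathcal{F}$, and only afterwards applying $\nabla_\theta$---which sidesteps the coupled $(Q,\nabla_\theta Q)$ propagation you flag as the hardest step.
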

The proofs of Theorem \ref{thm1} and \ref{thm4} are deferred to Appendix \ref{pfthm1}, \ref{pfthm4}. Theorem \ref{thm4} along with Theorem \ref{thm1} show that the FQI estimator is statistically optimal.

\subsection{FPG for Policy Optimization}
Lastly we briefly consider the use of FPG for off-policy policy optimization. Assume in the ideal setting we can reliably estimate the PG for all policies, obtaining $\widehat{\nabla_{\theta}v_{\theta}}$ for all $\theta\in\Theta$. Then we can simply set $\widehat{\nabla_{\theta}v_{\theta}} = 0$, identify all the stationary solutions, and pick the best one. For MDP with Lipschitz continuous policy gradients, we show that a policy with $\widehat{\nabla_{\theta}v_{\theta}} = 0$ would be nearly stationary/optimal. 
\begin{assumption}\label{Lipschitz_cont}
    Suppose the parameter space $\Theta$ is bounded and the policy gradient is $L$-Lipschitz continuous and $\chi^2_{\mathcal{F}}$ is $L^\prime$-Lipschitz continuous, i.e., 
    \begin{align*}
        \Vert\nabla_{\theta_1}v_{\theta_1}-\nabla_{\theta_2}v_{\theta_2}\Vert\leq L\Vert\theta_1-\theta_2\Vert,\quad\left\vert\chi_{\mathcal{F}}^2\left(\mu^{\theta_1},\bar{\mu}\right)-\chi_{\mathcal{F}}^2\left(\mu^{\theta_2},\bar{\mu}\right)\right\vert\leq L^\prime\Vert\theta_1-\theta_2\Vert,\quad\forall\theta_1,\theta_2\in\Theta.
    \end{align*}
\end{assumption}
\begin{proposition}
    \label{union_bd}
    Suppose assumption \ref{Lipschitz_cont} and the condition of Theorem \ref{thm2} hold. When $K$ is sufficiently large, we have with probability at least $1-\delta$, 
    \begin{align*}
        \Vert\nabla_\theta v_\theta-\widehat{\nabla_\theta v_\theta}\Vert\leq 64H^2Gm\sqrt{\min\{C_1d,H\}}\sqrt{1+\chi^2_{\mathcal{F}}(\mu^{\theta},\bar{\mu})}\sqrt{\frac{\log\frac{24DKLL^\prime}{\delta HG}}{K}},\quad\forall\theta\in\Theta.
    \end{align*}
    where $D$ is the diameter of $\Theta$. In addition, if the Polyak-Łojasiewicz condition holds, i.e., there exists a constant $c > 0$ such that for any $\theta \in \Theta$, $\frac{1}{2}\Vert\nabla_\theta v_\theta\Vert^2 \geq c(v_{\theta^*}-v_\theta)$, then for any $\widehat{\theta}$ such that $\widehat{\nabla_\theta v_{\widehat{\theta}}}=0$, we have $v_{\theta^*}-v_{\widehat{\theta}}\leq \tilde{\mathcal{O}}\left(\frac{m^2H^4\min\{C_1d,H\} G^2}{K}\right)$. 
\end{proposition}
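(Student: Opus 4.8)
The plan is to prove the uniform error bound by combining the pointwise guarantee of Theorem~\ref{thm2} with a covering-net argument over $\Theta$, and then to read off the Polyak-Łojasiewicz optimality bound by a short direct computation.

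First I would fix a resolution $\epsilon>0$ and take a minimal $\epsilon$-net $\mathcal{N}_\epsilon\subseteq\Theta$; since $\Theta\subseteq\mathbb{R}^m$ has diameter $D$, standard volumetric bounds give $\log|\mathcal{N}_\epsilon|\le m\log(3D/\epsilon)$. At each net point $\theta_0$ I would invoke the reward-free guarantee of Theorem~\ref{thm2} in its $\chi^2$ form, using the identity $\max_{h}\|\Sigma_h^{-1/2}\nu_h^\theta\|=\sqrt{1+\chi^2_{\mathcal{F}}(\mu^\theta,\bar\mu)}$, with confidence $\delta/|\mathcal{N}_\epsilon|$, and union-bound over all $\theta_0\in\mathcal{N}_\epsilon$ and coordinates $j\in[m]$. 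This yields, with probability $1-\delta$, a per-coordinate bound at every net point whose logarithmic factor is $\log\frac{8m|\mathcal{N}_\epsilon|}{\delta}\approx m\log(3D/\epsilon)+\log\frac{8m}{\delta}$.

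Next, for an arbitrary $\theta\in\Theta$ I would select the nearest net point $\theta_0$ (so $\|\theta-\theta_0\|\le\epsilon$) and split the error by the triangle inequality:
\[
\|\nabla_\theta v_\theta-\widehat{\nabla_\theta v_\theta}\|\le\underbrace{\|\nabla_\theta v_\theta-\nabla_{\theta_0}v_{\theta_0}\|}_{\le L\epsilon}+\underbrace{\|\nabla_{\theta_0}v_{\theta_0}-\widehat{\nabla_{\theta_0}v_{\theta_0}}\|}_{\text{net bound}}+\underbrace{\|\widehat{\nabla_{\theta_0}v_{\theta_0}}-\widehat{\nabla_\theta v_\theta}\|}_{\text{estimator fluctuation}}.
\]
The first term is at most $L\epsilon$ by Assumption~\ref{Lipschitz_cont}; the middle term is the net bound, where I would additionally use $\chi^2_{\mathcal{F}}(\mu^{\theta_0},\bar\mu)\le\chi^2_{\mathcal{F}}(\mu^\theta,\bar\mu)+L'\epsilon$ from the $L'$-Lipschitz continuity of $\chi^2_{\mathcal{F}}$; the third term I would control by a deterministic, data-dependent Lipschitz bound $\widehat{L}$ on the explicit map $\theta\mapsto\widehat{\nabla_\theta v_\theta}$, which is a smooth algebraic function of $\pi_{\theta,h}$, $\nabla_\theta\log\pi_{\theta,h}$ and the fixed matrices $\widehat{M}_{\theta,h},\widehat{\nabla_\theta M_{\theta,h}}$, and show $\widehat{L}$ is polynomially bounded with high probability. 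Choosing $\epsilon$ of order $HG/(KLL')$ makes both $L\epsilon$ and $\widehat{L}\epsilon$ of order $\tilde{\mathcal{O}}(1/K)$ and collapses $\log(3D/\epsilon)$ to $\log\frac{24DKLL'}{\delta HG}$ up to constants. Converting the per-coordinate bound to the Euclidean norm costs a factor $\sqrt{m}$, which multiplies the $\sqrt{m}$ already sitting inside $\sqrt{\log|\mathcal{N}_\epsilon|}\approx\sqrt{m\log(3D/\epsilon)}$, producing the stated factor $m$ and the claimed bound after folding lower-order terms into the constant $64$.

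For the second claim, I would use that any $\widehat{\theta}$ with $\widehat{\nabla_\theta v_{\widehat{\theta}}}=0$ satisfies $\nabla_{\widehat{\theta}}v_{\widehat{\theta}}=\nabla_{\widehat{\theta}}v_{\widehat{\theta}}-\widehat{\nabla_{\widehat{\theta}}v_{\widehat{\theta}}}$, so the uniform bound at $\theta=\widehat{\theta}$ gives $\|\nabla_{\widehat{\theta}}v_{\widehat{\theta}}\|^2=\tilde{\mathcal{O}}\!\big(m^2H^4\min\{C_1d,H\}G^2(1+\chi^2_{\mathcal{F}}(\mu^{\widehat{\theta}},\bar\mu))/K\big)$. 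Since $\Theta$ is bounded and $\chi^2_{\mathcal{F}}$ is $L'$-Lipschitz, $\chi^2_{\mathcal{F}}(\mu^\theta,\bar\mu)$ is uniformly bounded over $\Theta$, so the $(1+\chi^2_{\mathcal{F}})$ factor is $O(1)$ and absorbs into $\tilde{\mathcal{O}}$. Applying the Polyak-Łojasiewicz inequality $\frac12\|\nabla_\theta v_\theta\|^2\ge c(v_{\theta^*}-v_\theta)$ at $\theta=\widehat{\theta}$ then gives $v_{\theta^*}-v_{\widehat{\theta}}\le\frac{1}{2c}\|\nabla_{\widehat{\theta}}v_{\widehat{\theta}}\|^2=\tilde{\mathcal{O}}\!\big(m^2H^4\min\{C_1d,H\}G^2/K\big)$, as claimed. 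The main obstacle is the estimator-fluctuation term: unlike the true gradient and $\chi^2_{\mathcal{F}}$, whose smoothness is assumed, the Lipschitz continuity of $\theta\mapsto\widehat{\nabla_\theta v_\theta}$ must be derived from its closed form and shown to have a high-probability bounded constant $\widehat{L}$, so that a net of resolution $\epsilon\sim 1/(K\cdot\mathrm{poly})$ keeps $\log|\mathcal{N}_\epsilon|$ at $O(m\log K)$ without inflating the final bound beyond $m\log\frac{24DKLL'}{\delta HG}$, and care is needed to confirm the various $\tilde{\mathcal{O}}(1/K)$ remainders are genuinely dominated.
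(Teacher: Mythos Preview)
Your proposal follows essentially the same route as the paper's proof: a pointwise application of Theorem~\ref{thm2} (in its $\chi^2$ form), an $\epsilon$-net over $\Theta$ of cardinality at most $(mD/\epsilon)^m$, a union bound, and then a triangle-inequality extension from net points to arbitrary $\theta$ using the Lipschitz continuity of $\nabla_\theta v_\theta$ and of $\chi^2_{\mathcal{F}}(\mu^\theta,\bar\mu)$; the PL argument is identical.

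Two minor differences are worth noting. First, the paper simply writes $\|\widehat{\nabla_\theta v_\theta}-\nabla_\theta v_\theta\|\le 2L\epsilon+\|\widehat{\nabla_{\theta'} v_{\theta'}}-\nabla_{\theta'} v_{\theta'}\|$, i.e., it tacitly uses the same constant $L$ for the estimator fluctuation $\|\widehat{\nabla_\theta v_\theta}-\widehat{\nabla_{\theta'} v_{\theta'}}\|$ without a separate argument; you are more careful here in isolating a data-dependent $\widehat{L}$ for the explicit map $\theta\mapsto\widehat{\nabla_\theta v_\theta}$, which is indeed the only nontrivial step and is justifiable from the closed form in Algorithm~\ref{alg2}. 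Second, the paper chooses $\epsilon=\min\{1/L',\,16H^2Gm\,L^{-1}\sqrt{\min\{C_1d,H\}/K}\}$, i.e., of order $1/\sqrt{K}$, so that $2L\epsilon$ is at most the leading term and the constant merely doubles; your choice $\epsilon\sim HG/(KLL')$ pushes the fluctuation terms to order $1/K$ instead. Either choice lands on the same $\log\frac{24DKLL'}{\delta HG}$ form (up to innocuous constants inside the logarithm), so the final bound is the same.
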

In general, Proposition \ref{union_bd} implies a $O(1/\varepsilon^2)$ sample complexity for finding $\varepsilon$-stationary policies. This off-policy sample efficiency is remarkably better than the best know $O(1/\varepsilon^3)$ on-policy sample efficiency obtained by variance-reduced PG algorithm \cite{zhang2021convergence}, as long as distribution shift is uniformly bounded. This improvement is due to that FPG makes full usage of data to evaluate PG at every $\theta$. We remark that the discussion in this section is more of a stylish observation than a practically sound algorithm. How to incorporate FPG into policy gradient algorithms is an important future direction.

\section{Experiments}
We empirically evaluate the performance of FPG using the OpenAI gym FrozenLake and  CliffWalking environment. For FrozenLake, we use softmax tabular policy parameterization and $H=100$. For CliffWalking, we use softmax on top of a two-layer ReLU network for policy parameterization. We pick the target policy to be a fixed near-optimal policy, and test using dataset generated from different behavior policies. For comparison, we compute the true gradient using the policy gradient theorem and on-policy Monte Carlo simulation.

\paragraph{FPG's data efficiency}
\label{sec:exp-K}
Choosing the behavior policy to be the $\varepsilon$-greedy modification of the target policy for $\varepsilon=0.1$, we generate datasets with varying sizes and evaluate the FPG's estimation error on two metrics: the cosine angle between the true policy gradient and the FPG estimator, and the relative estimation error in $\ell_2$-norm. The closer the cosine is to $1$ and the smaller the relative norm error is, the better the estimated policy gradient is. \textbf{Figure \ref{fig:FrozenLake_1}} shows that FPG gives good estimate even when the data is rather small. The FPG estimate converges to the true gradient with rather moderate variance. In comparison, importance sampling (IS) converges much slower and incurs substantially larger variance. 

\begin{figure}[!t]
 \centering
 \includegraphics[width=0.4\linewidth]{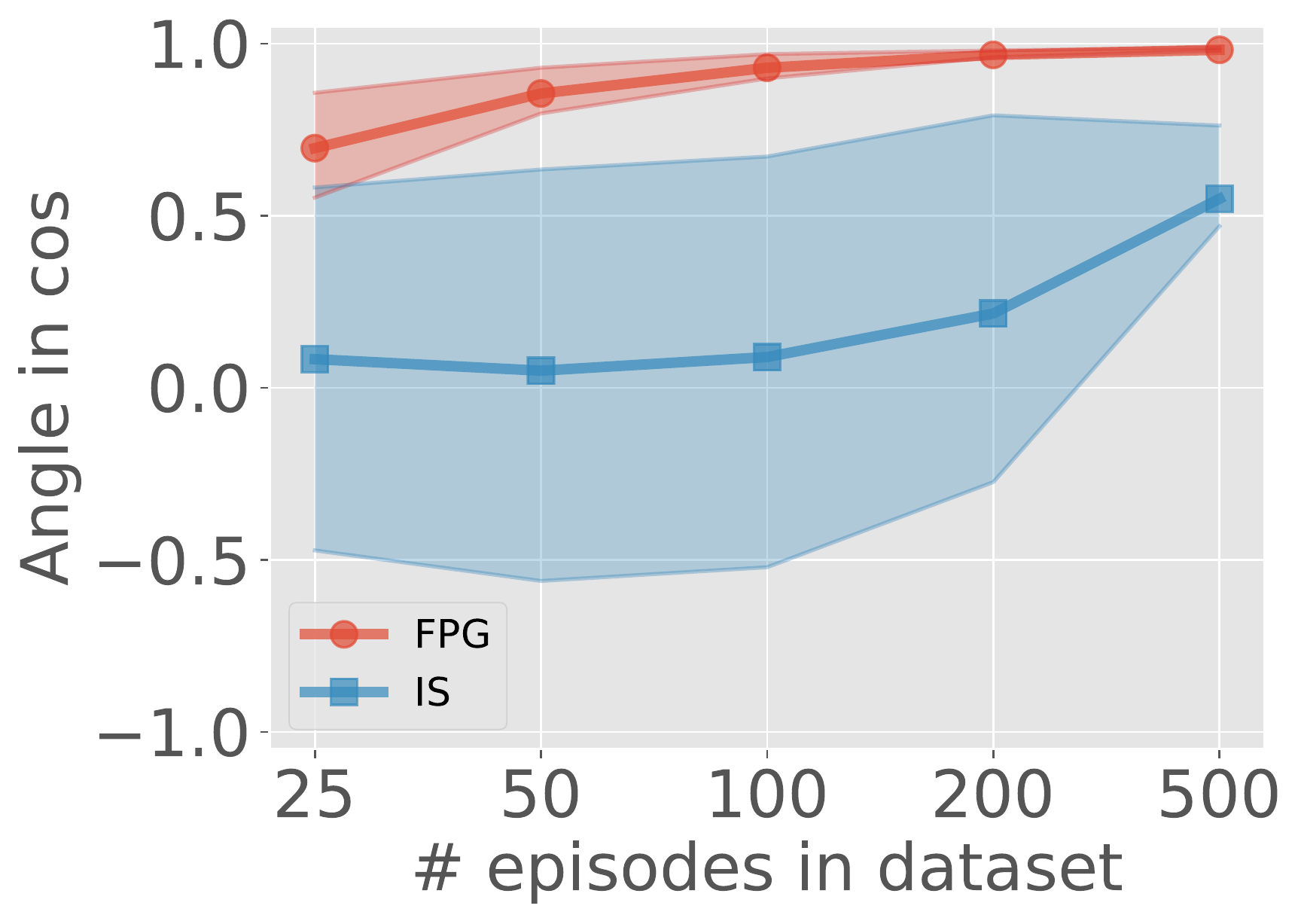}
 \includegraphics[width=0.4\linewidth]{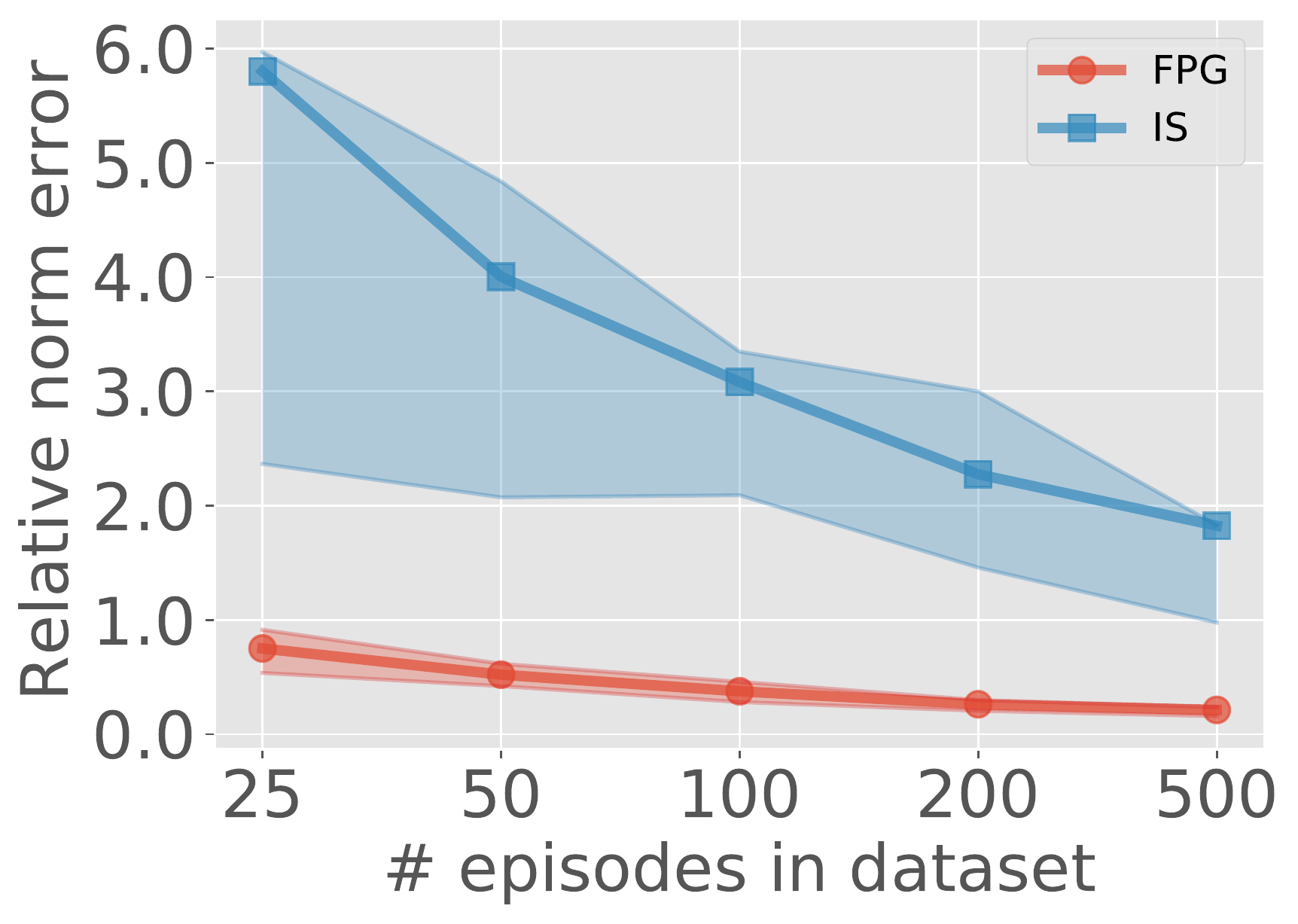}
\caption{\textbf{Sample efficiency of FPG on off-policy data.} The off-policy PG estimation accuracy is evaluated using two metrics: $\cos\angle (\hat{\grad_\theta v_{\theta}}, \grad_\theta v_{\theta})$ and the relative error norm $\frac{\| \hat{\grad_\theta v_{\theta}} -  \grad_\theta v_{\theta}\| }{\|\grad_\theta v_{\theta}\|}$. }
\label{fig:FrozenLake_1}
\end{figure}

\paragraph{The effect of distributional mismatch} 
\label{sec:exp-mismatch}
Next we investigate the effect of distribution shift on off-policy PG estimation. We consider $5$ choices of behavior policies: the target policy, the $\varepsilon$-greedy policies of the target policy, with $\varepsilon=0.1$, $0.3$, $0.5$ and $0.7$. We generate a dataset containing $200$ episodes with each of these behavior policies, run FPG and IS, and evaluate their estimation errors. \textbf{Figure \ref{fig:FrozenLake_2}} shows that larger distribution mismatch leads to larger estimation error in both methods. However, when compared to IS, FPG is significantly more robust to off-policy distribution shift. The accuracy of FPG only degrades slightly with larger distribution mismatch, while IS suffers from exponentially blowing-up error and stops generating reasonable estimates. 

\begin{figure}[!t]
 \centering
  \includegraphics[width=0.4\linewidth]{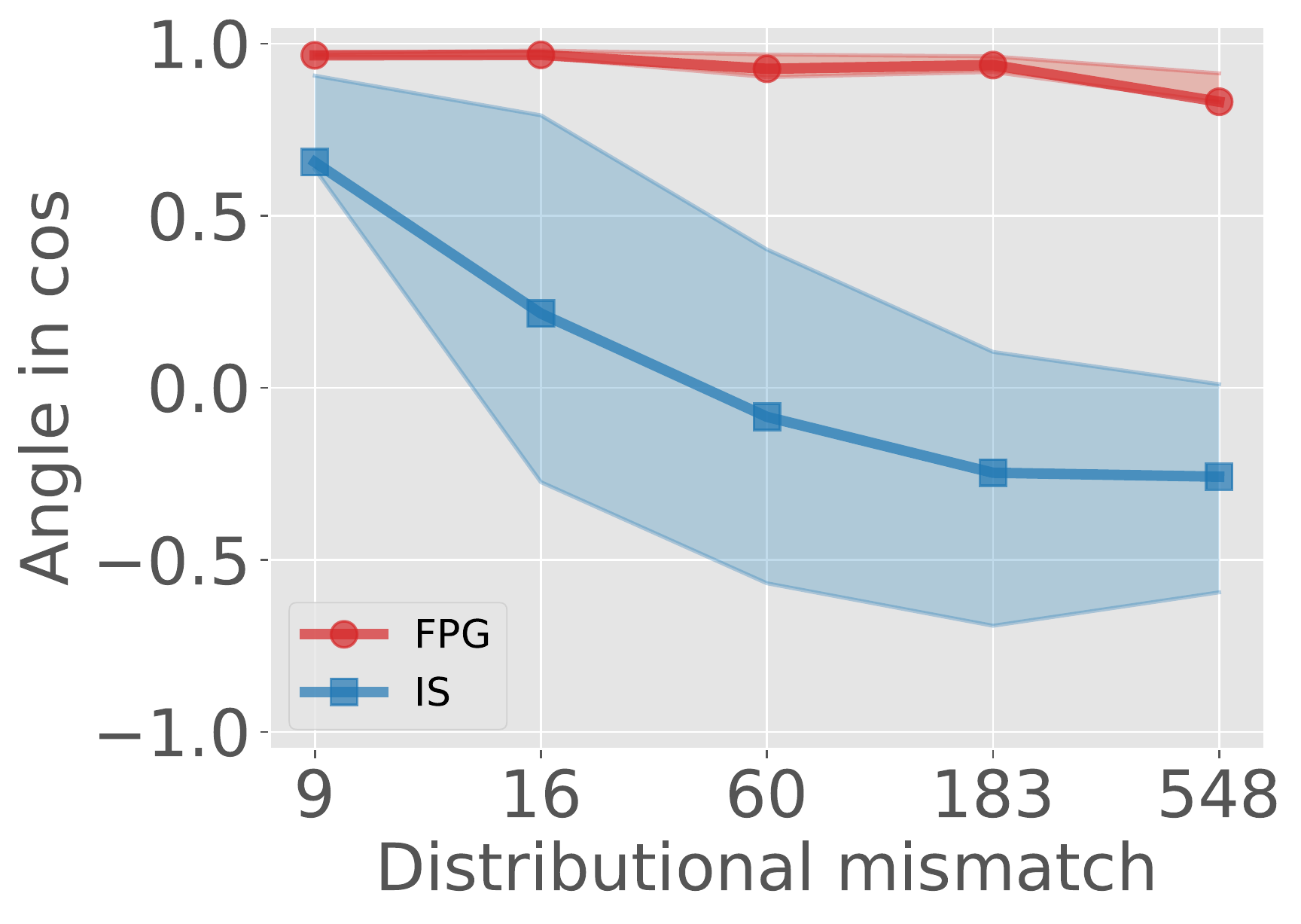}
  \includegraphics[width=0.4\linewidth]{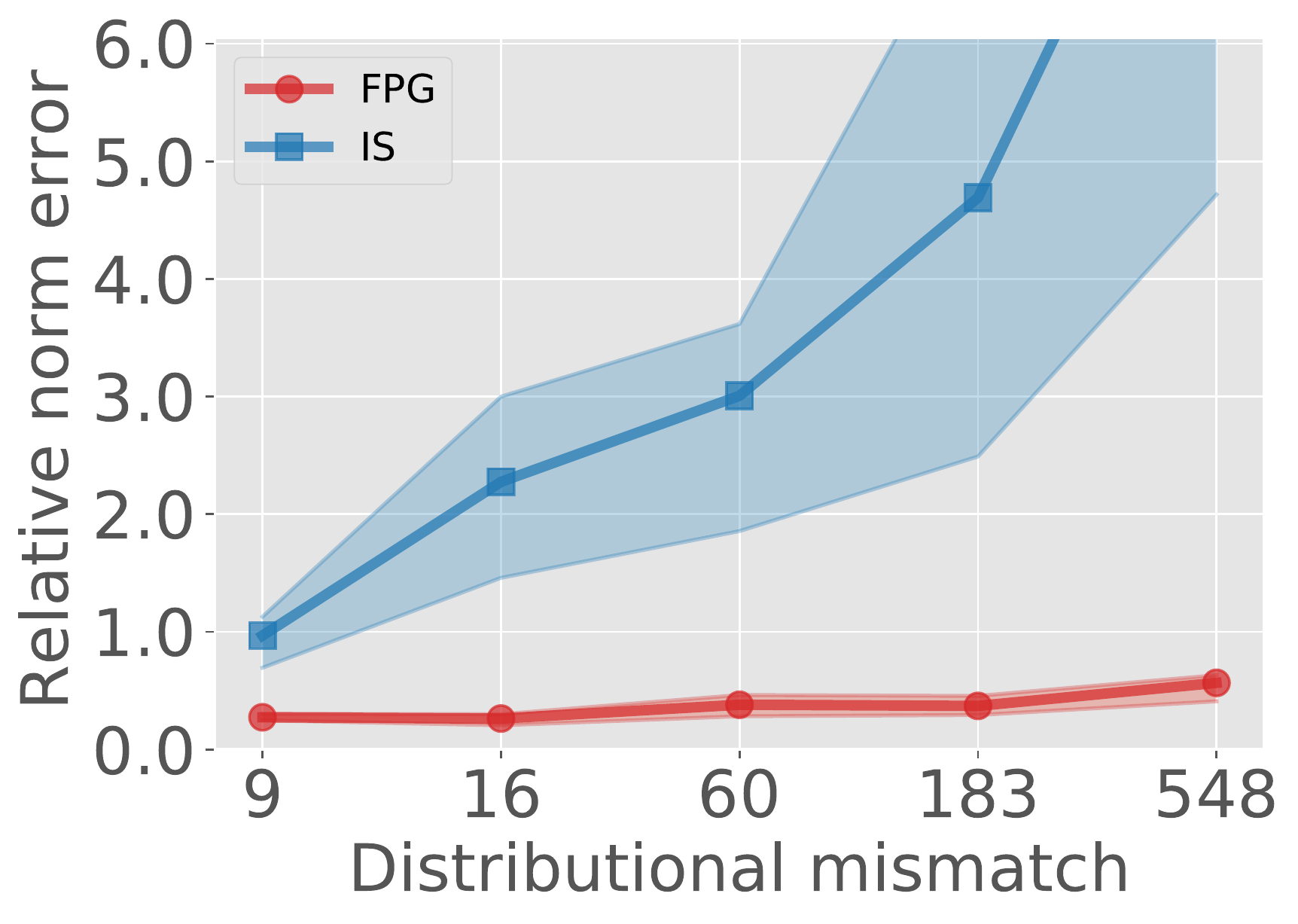}
\caption{\textbf{Tolerance to off-policy distribution shift.} The distributional mismatch is measured by $\hbox{cond}(\bar{\Sigma}^{\frac{1}{2}}\Sigma^{-1}\bar{\Sigma}^{\frac{1}{2}})$, where $\Sigma$ is the data covariance and $\bar{\Sigma}$ is the target policy's occupancy measure.}
\label{fig:FrozenLake_2}
\end{figure}

\paragraph{FPG for policy optimization}
\label{sec:exp-optimization}
We further showcase FPG's applicability to policy optimization. In particular, we test FPG as a gradient estimation module in policy gradient optimization methods. We conduct an experiment using FPG in on-policy REINFORCE and compare it with the vanilla REINFORCE and SVRPG \cite{papini2018stochastic}. All methods are configured to sample $100$ on-policy episodes per iteration. When implementing the FPG-REINFORCE, we take advantage of FPG's off-policy capability and use data from the recent $5$ iterations to improve the gradient estimation accuracy. \textbf{Figure \ref{fig:FrozenLake_3}} shows that such design indeed allows FPG-REINFORCE to converge significantly faster than the two baselines.

\begin{figure}[!t]
    \centering
    \includegraphics[width=0.5\linewidth]{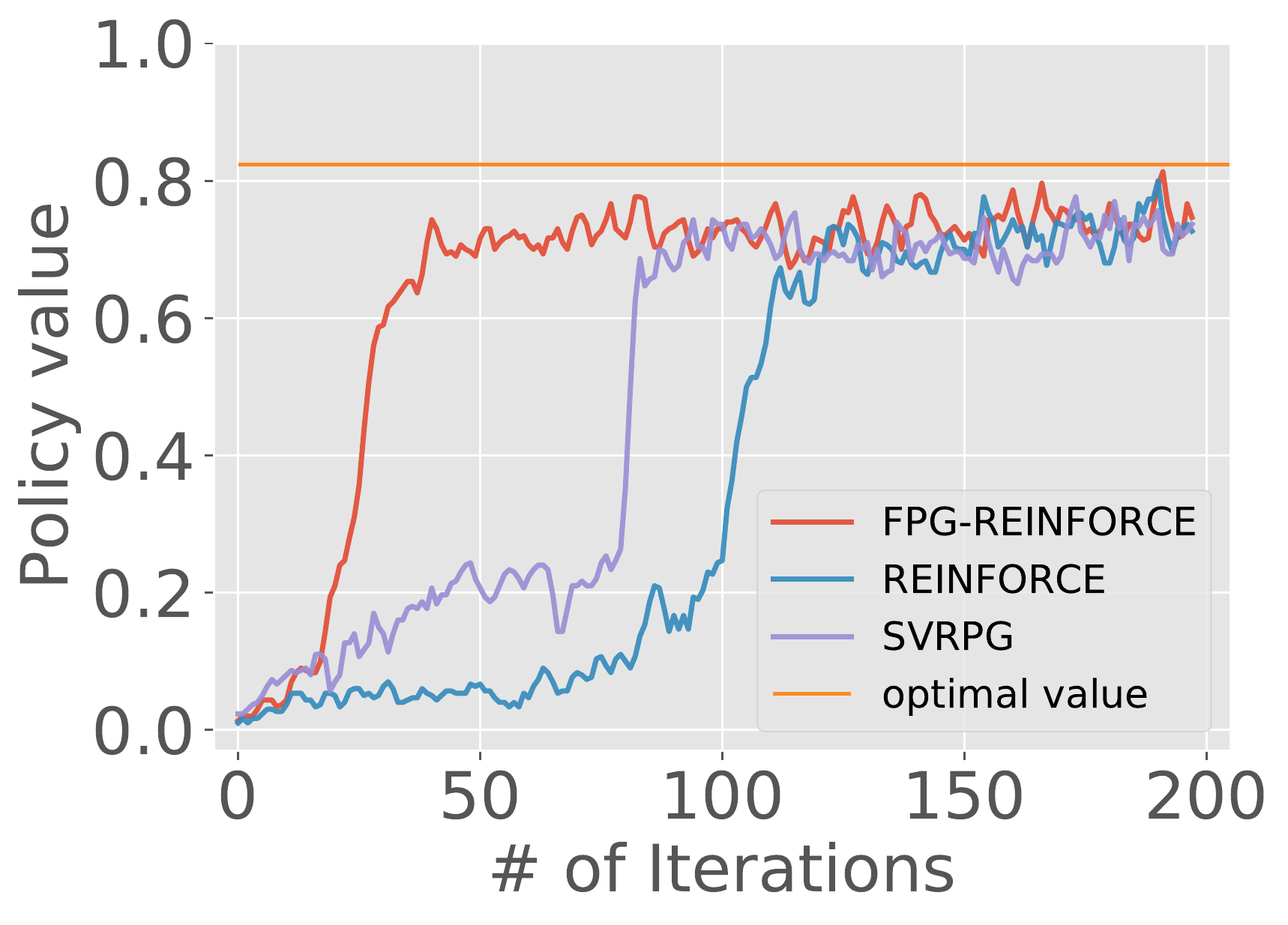}
    \caption{\textbf{FPG for policy optimization.} FPG is used as a module for PG estimates in REINFORCE, compared with other baselines.}
    \label{fig:FrozenLake_3}
\end{figure}

Next we test the use of FPG for offline policy optimization. Let the target policy be the optimal policy of the problem, and let the behavior policy be a $0.3$-greedy variant of the target policy. We generate a dataset consisting of $K=500$ episodes by simulating the behavior policy. For PG estimation, we only use the offline dataset and do not sample for fresh data. Thus, we replace the online policy gradient estimator in REINFORCE with an off-policy one using FPG, and for comparison we also test REINFORCE with an IS estimator. \textbf{Figure \ref{fig:FrozenLake_off}} shows that FPG-REINFORCE converges reasonably fast and approaches the optimal value. However, IS-REINFORCE appears to converge to a highly biased solution, due to that all PGs are estimated using the same small batch dataset and suffer from bias due to distribution shift.

\begin{figure}[!t]
 \centering
 \includegraphics[width=0.5\linewidth]{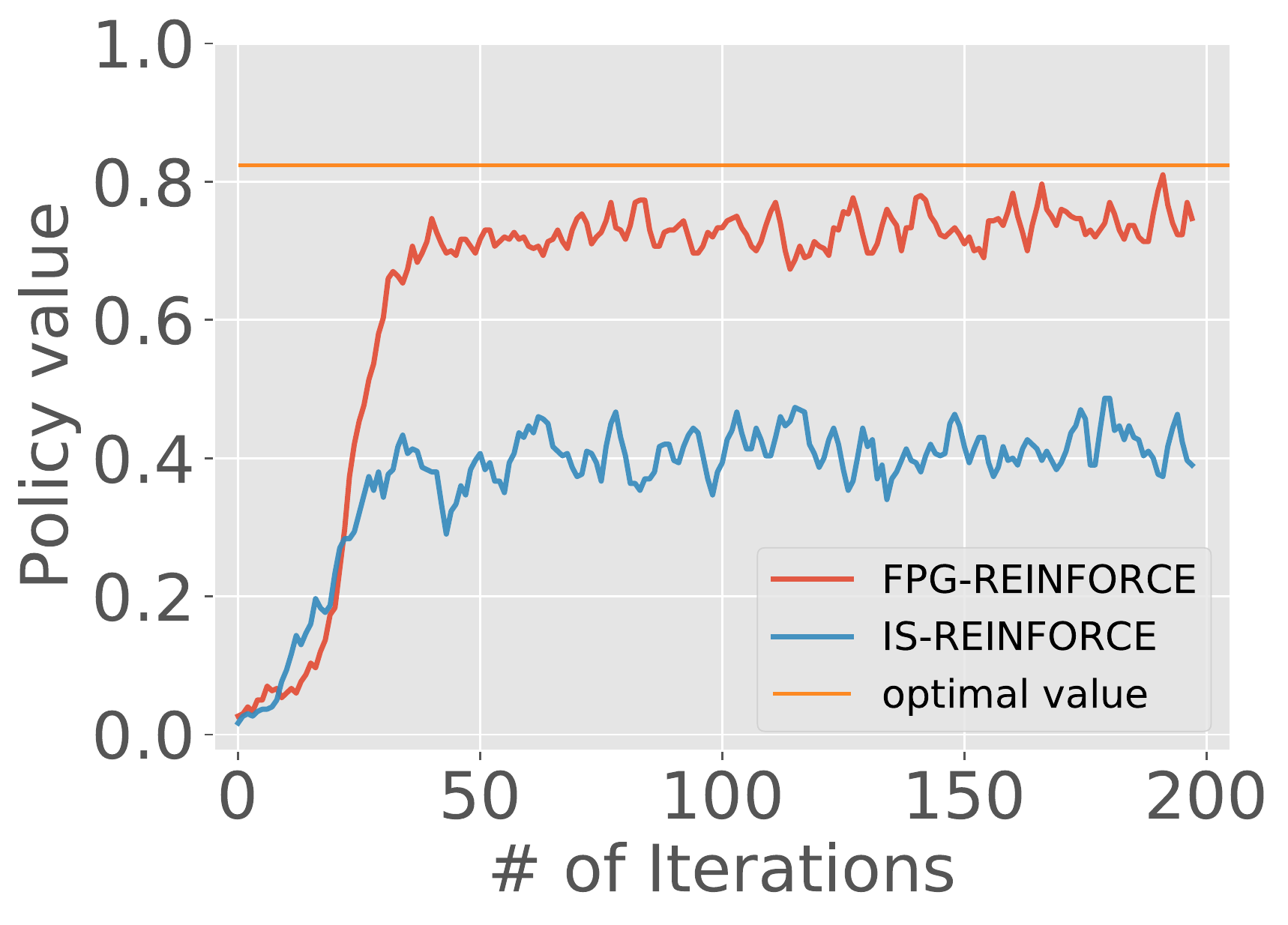}
\caption{\textbf{FPG for offline policy optimization.} FPG is compared with IS as the PG estimator module in offline REINFORCE. }
\label{fig:FrozenLake_off}
\end{figure}

\paragraph{FPG with deep neural network policy parameterization}
Further, we evaluate the efficiency of FPG when using a deep policy network for policy learning in the CliffWalking environment, where $H=100$. Specifically, the environment is modified by adding artificial randomness for stochastic transitions, that is, at each transition, a random action is taken with probability $0.1$. The policy is parameterized with a neural network with one ReLU hidden layer and a softmax layer. 

\def\grad{\nabla}
\def\hat{\widehat}
\begin{figure}[!t]
 \centering
 \includegraphics[width=0.4\linewidth]{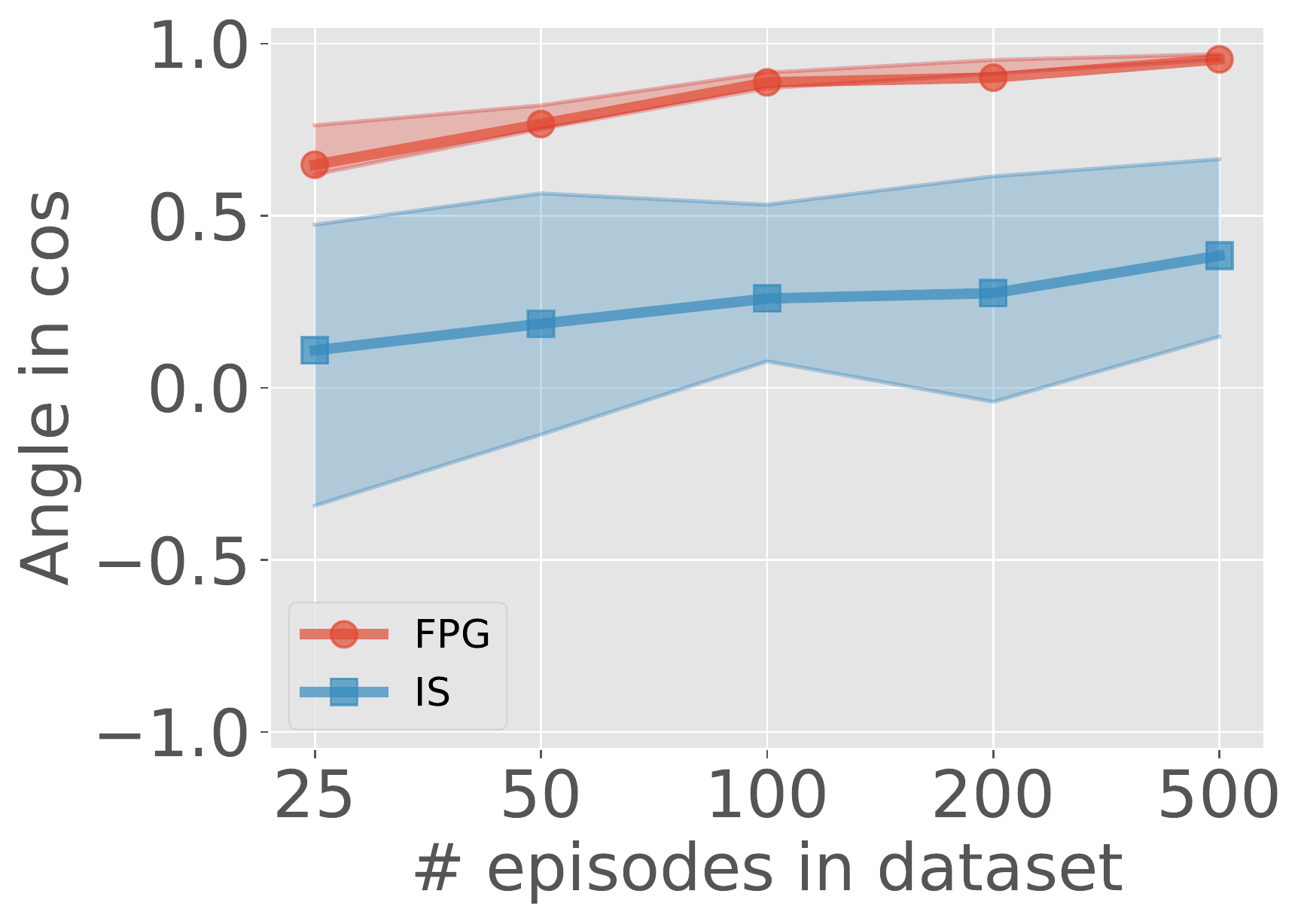}
  \includegraphics[width=0.4\linewidth]{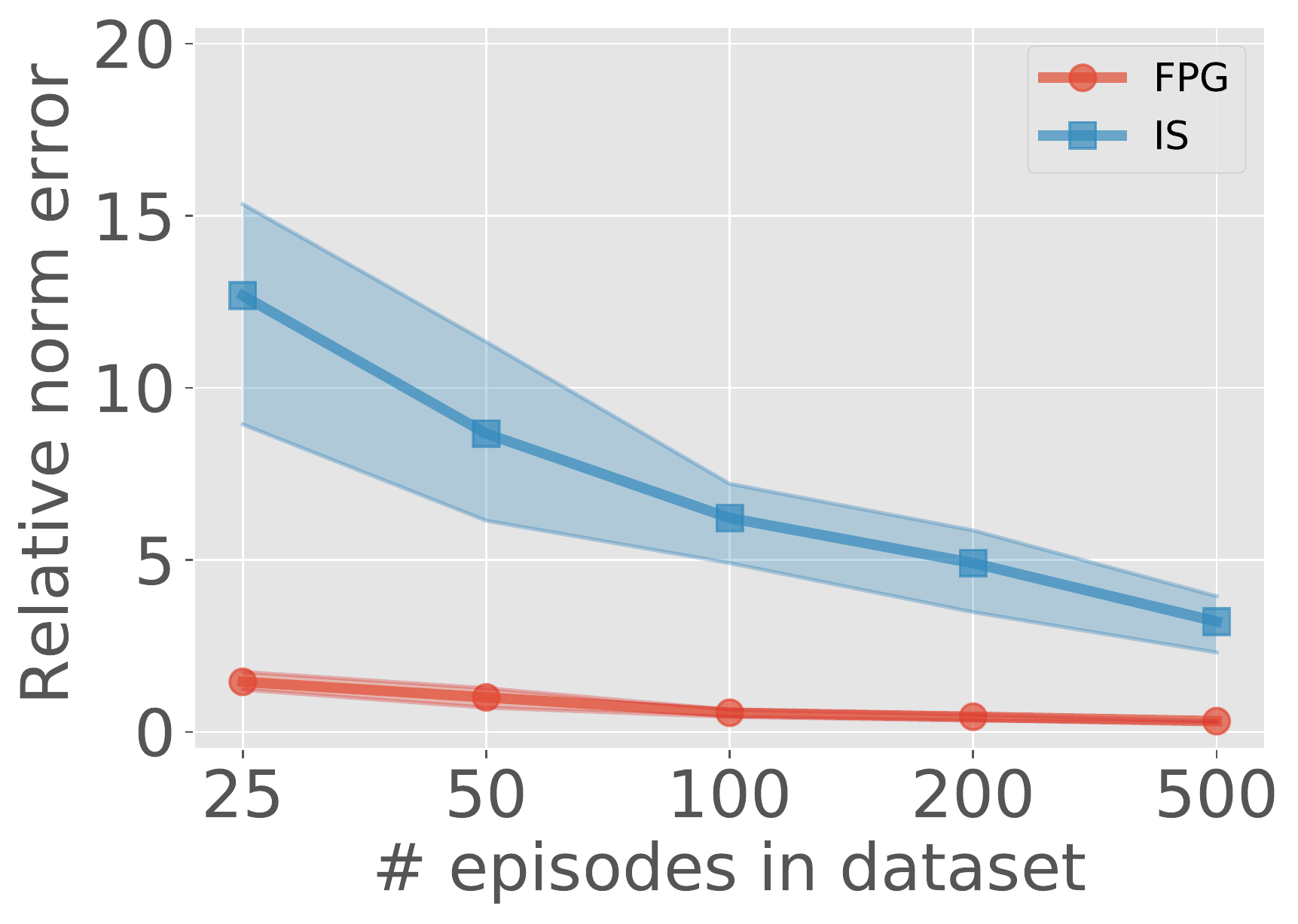}
\caption{\textbf{Sample efficiency of FPG with deep policy networks on off-policy data.} The off-policy PG estimation accuracy is evaluated using two metrics: $\cos\angle (\hat{\grad v_{\theta}}, \grad v_{\theta})$ and the relative error norm $\frac{\| \hat{\grad v_{\theta}} -  \grad v_{\theta}\| }{\|\grad v_{\theta}\|}$. }
\label{fig:CliffWalking_1}
\end{figure}

As before, we test the performance of FPG against the size of off-policy data and the degree of distribution shift, using the same cosine and relative norm error metrics. We test FPG varying the size of the dataset. \textbf{Figure \ref{fig:CliffWalking_1}} shows that FPG still gives accurate estimates with moderate variance, in contrast to IS's inaccuracy and high variance. Both methods become more accurate asymptotically as the dataset size increases.

\begin{figure}[!t]
 \centering
  \includegraphics[width=0.4\linewidth]{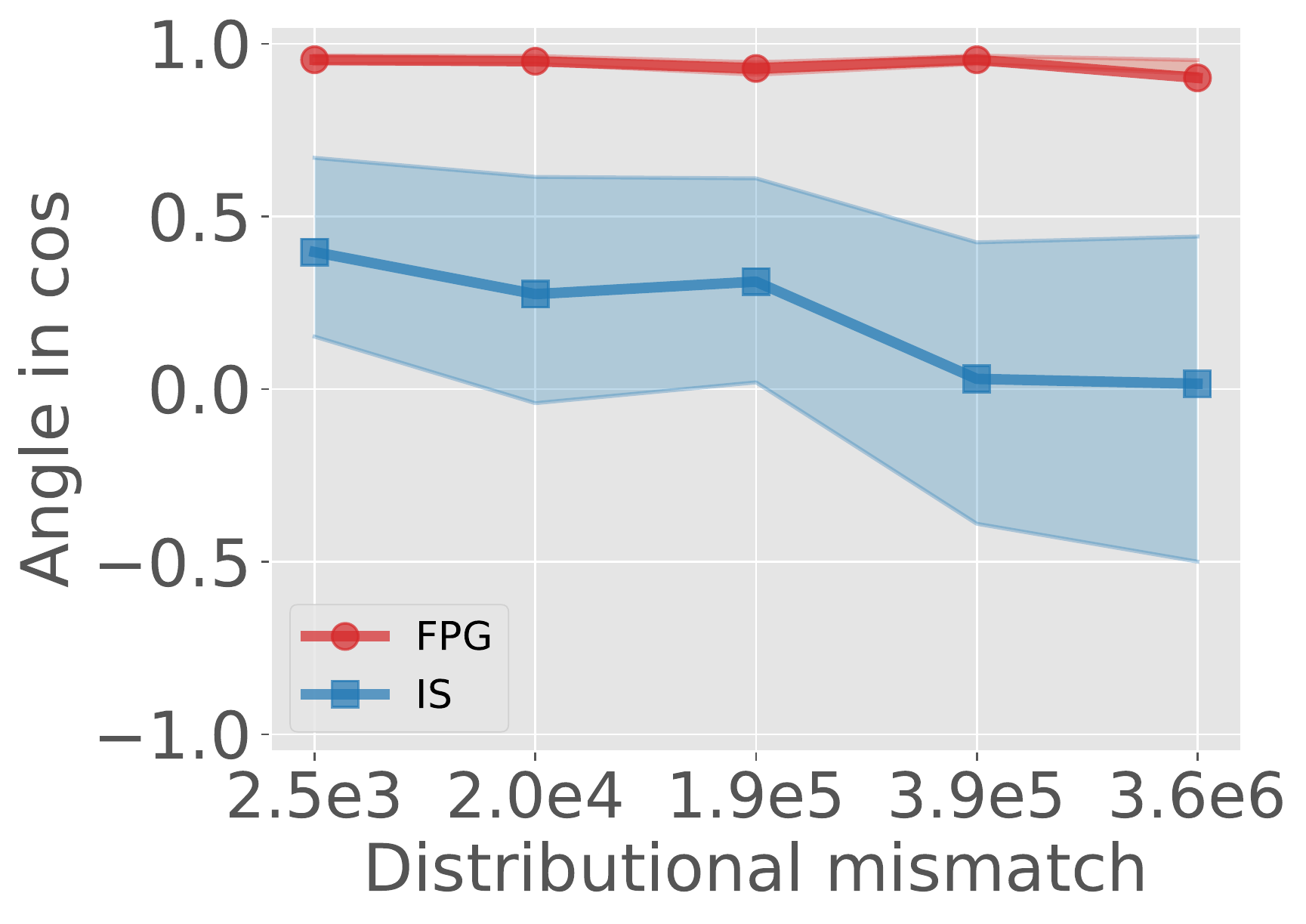}
  \includegraphics[width=0.4\linewidth]{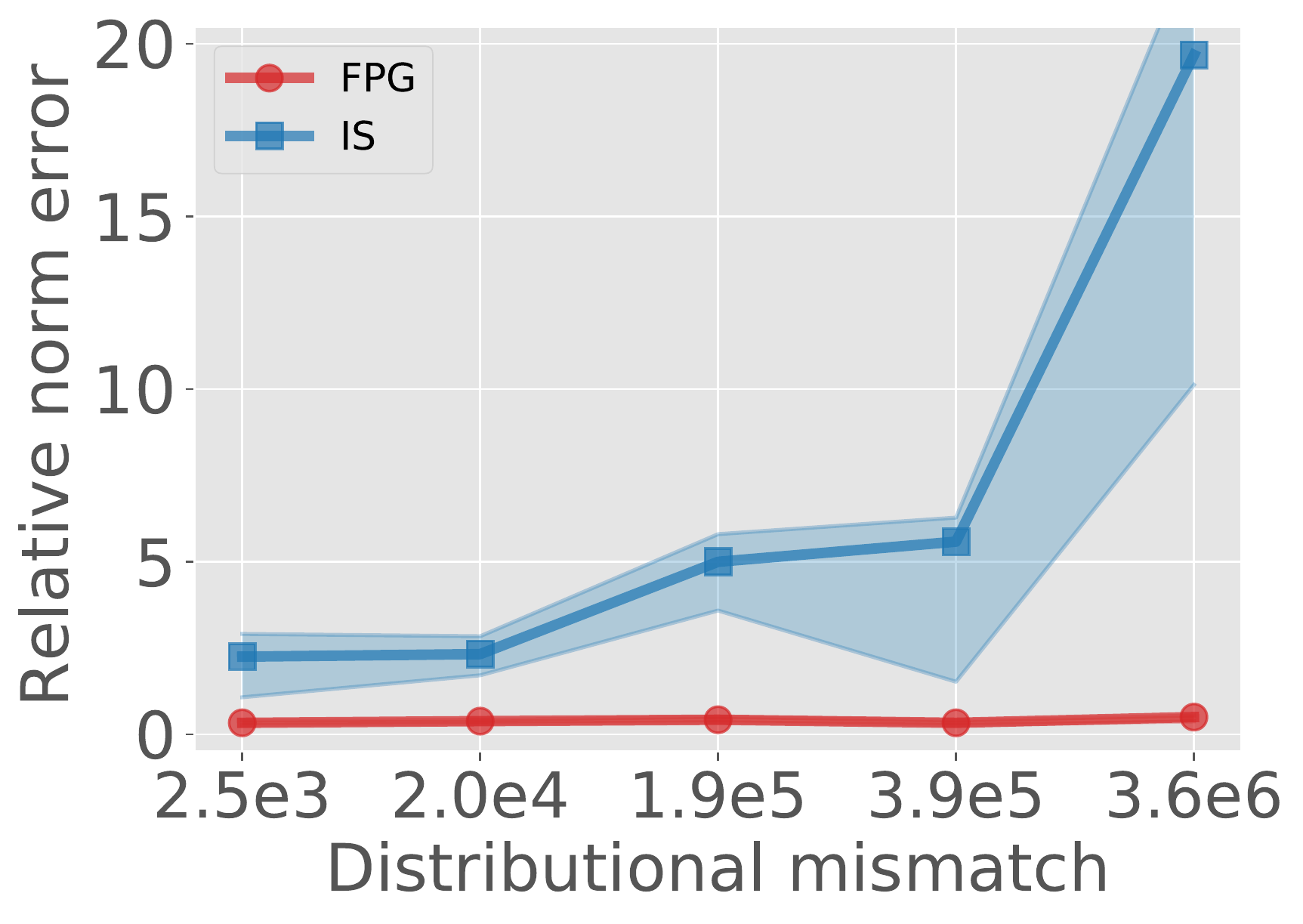}
\caption{\textbf{Tolerance to off-policy distribution shift with deep policy networks.} The distributional mismatch is measured by  $\hbox{cond}(\Sigma^{1/2}\overline{\Sigma}^{-1}\Sigma^{1/2})$, where $\overline{\Sigma}$ is the data covariance and $\Sigma$ is the target policy's occupancy measure.}
\label{fig:CliffWalking_2}
\end{figure}

We also test FPG on datasets with different amount of mismatch from the target policy. \textbf{Figure \ref{fig:CliffWalking_2}} shows that FPG's estimation error is much lower and less affected by enlarging distributional mismatch than IS's. The distribution mismatch is large in the CliffWalking experiments because some state-action pairs are almost never visited by the target policy and seldom visited by the behavior policy. Such state-action pairs cause $\Sigma^{1/2}\overline{\Sigma}^{-1}\Sigma^{1/2}$ to be nearly singular, but they are irrelevant to our estimation. The general trends in these CliffWalking experiments with deep neural network policy are consistent with our theoretical results and FrozenLake experiments.

\paragraph{Bootstrap inference for FPG}
Finally we apply bootstrap inference to construct confidence regions of FPG estimates by subsampling episodes and estimating the bootstrapped probability distribution. We plot contours of bootstrapped confidence regions via quantile KDE. \textbf{Figure \ref{fig:FrozenLake_4}} visualizes the bootstrapped confidence regions in 2D, compared with the confidence region for IS and the ground-truth confidence set. Across all experiments, we observe that the contours of bootstrapping FPG are much smaller and more accurate than the ones of bootstrapping IS. As the $K$ increases, the bootstrapped confidence regions become more concentrated, confirming our theoretical results.

\begin{figure}[!t]
 \centering
  \includegraphics[width=0.4\linewidth]{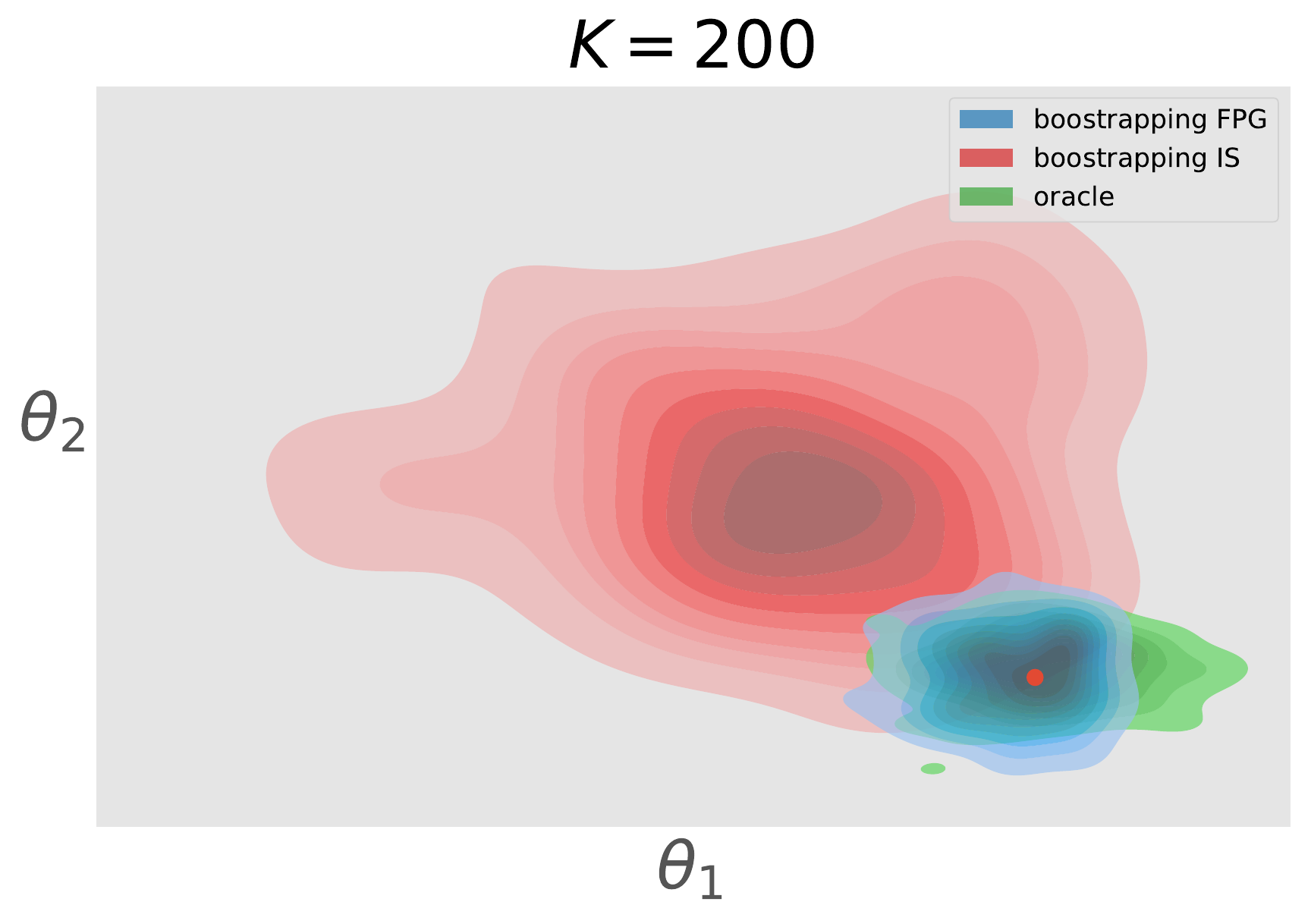}
  \includegraphics[width=0.4\linewidth]{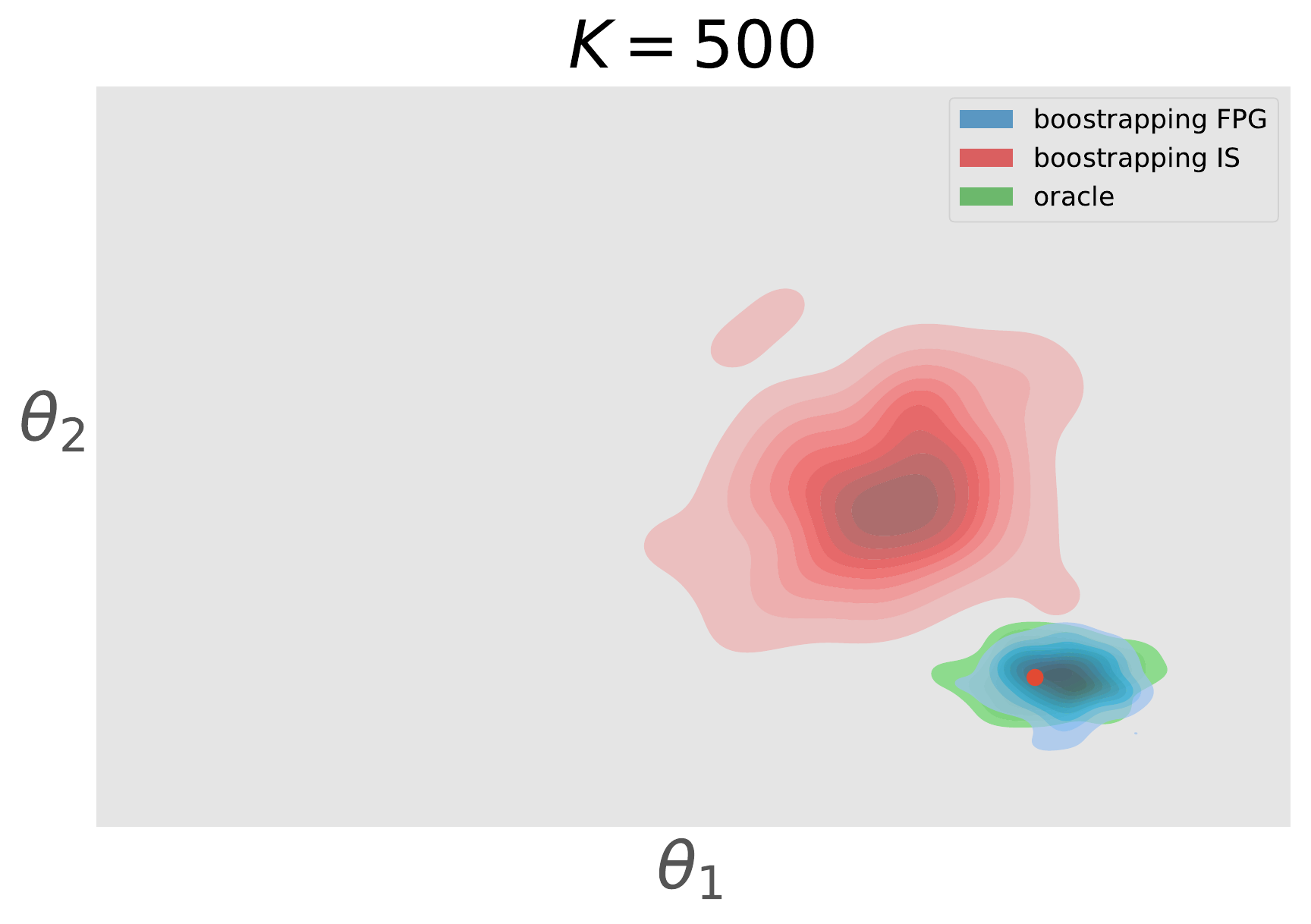}
\caption{\textbf{Bootstrapped confidence region for FPG estimator, with sample size $K=200,500$.} The red dot marks the true gradient. Blue and red areas are confidence sets obtained by bootstrapping FPG and IS respectively. The green oracle gives the empirical confidence region for FPG estimates. }
\label{fig:FrozenLake_4}
\end{figure}

\section{Conclusion}
We propose double Fitted Policy Gradient iteration (FPG) for off-policy PG estimation. FPG theoretically achieves near-optimal rate that matches the Cramar-Rao lower bound and empirically outperforms classic methods on a variety of tasks. Future work includes extension to non-linear function approximation and evaluation on more complex domains.

\bibliographystyle{plain}
\bibliography{example_paper}

\newpage
\appendix

\section{Technical Lemmas}
Let $U_h^\theta =\mathcal{P}_{\theta,h}\left(\nabla_\theta\log\Pi_{\theta,h+1}\right) Q_{h+1}^\theta,\ \widehat{U}_h^\theta=\widehat{\mathcal{P}}_{\theta,h} \left(\nabla_\theta\log\Pi_{\theta,h+1}\right)Q_{h+1}^\theta,\ \tilde{U}_h^\theta=\widehat{\mathcal{P}}_{\theta,h}\left(\nabla_\theta\log\Pi_{\theta,h+1}\right)\widehat{Q}_{h+1}^\theta$, 
\begin{lemma}
    \label{Q_decomp_base}
    We have
    \begin{align*}
        Q_h^\theta&=\sum_{h^\prime=h}^H\left(\prod_{h^{\prime\prime}=h}^{h^\prime-1}\mathcal{P}_{\theta, h^{\prime\prime}}\right)r_{h^\prime},\quad\widehat{Q_h^\theta}=\sum_{h^\prime=h}^H\left(\prod_{h^{\prime\prime}=h}^{h^\prime-1}\widehat{\mathcal{P}}_{\theta, h^{\prime\prime}}\right)\widehat{r}_{h^\prime},\\
        \nabla_\theta Q_h^\theta&=\sum_{h'=h}^H\left(\prod_{h^{\prime\prime}=h}^{h^\prime-1}\mathcal{P}_{\theta, h^{\prime\prime}}\right) U_{h^\prime}^{\theta},\quad\widehat{\nabla_\theta Q_h^\theta}=\sum_{h^\prime=h}^H\left(\prod_{h^{\prime\prime}=h}^{h^\prime-1}\widehat{\mathcal{P}}_{\theta, h^{\prime\prime}}\right)\tilde{U}_{h^\prime}^\theta.
    \end{align*}
\end{lemma}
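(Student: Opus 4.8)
The plan is to prove all four identities by a single \emph{backward induction} on $h$, exploiting that each quantity obeys a linear recursion of identical structure to the Policy Gradient Bellman equation \eqref{bel}.

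First I would put the gradient recursions into additive form. Using the definition $U_h^\theta = \mathcal{P}_{\theta,h}(\nabla_\theta\log\Pi_{\theta,h+1})Q_{h+1}^\theta$ and the linearity of $\mathcal{P}_{\theta,h}$, equation \eqref{bel} becomes $\nabla_\theta Q_h^\theta = U_h^\theta + \mathcal{P}_{\theta,h}\nabla_\theta Q_{h+1}^\theta$, which is exactly the same affine recursion as $Q_h^\theta = r_h + \mathcal{P}_{\theta,h}Q_{h+1}^\theta$ but with source term $r_h$ replaced by $U_h^\theta$. The estimated quantities behave identically: by Proposition \ref{equiv_mb} the estimators coincide with the model-based plug-in recursions, so $\widehat{Q_h^\theta} = \widehat{r}_h + \widehat{\mathcal{P}}_{\theta,h}\widehat{Q_{h+1}^\theta}$, and since $\tilde{U}_h^\theta = \widehat{\mathcal{P}}_{\theta,h}(\nabla_\theta\log\Pi_{\theta,h+1})\widehat{Q_{h+1}^\theta}$, also $\widehat{\nabla_\theta Q_h^\theta} = \tilde{U}_h^\theta + \widehat{\mathcal{P}}_{\theta,h}\widehat{\nabla_\theta Q_{h+1}^\theta}$. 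Thus all four recursions fit the single template $X_h = Y_h + T_h X_{h+1}$ with terminal condition $X_{H+1}=0$, where $(Y_h,T_h)$ is one of $(r_h,\mathcal{P}_{\theta,h})$, $(U_h^\theta,\mathcal{P}_{\theta,h})$, $(\widehat{r}_h,\widehat{\mathcal{P}}_{\theta,h})$, or $(\tilde{U}_h^\theta,\widehat{\mathcal{P}}_{\theta,h})$.

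Then I would establish the generic claim $X_h = \sum_{h'=h}^H \big(\prod_{h''=h}^{h'-1}T_{h''}\big)Y_{h'}$ by backward induction. The base case $h=H+1$ is the empty sum, matching $X_{H+1}=0$, and $h=H$ gives $X_H = Y_H$ since $T_H X_{H+1}=0$. For the inductive step, I would assume the formula at $h+1$, apply $T_h$, and add $Y_h$, obtaining $X_h = Y_h + \sum_{h'=h+1}^H\big(\prod_{h''=h}^{h'-1}T_{h''}\big)Y_{h'}$; here linearity of $T_h$ lets it be absorbed into each product, extending the lower index from $h+1$ to $h$, and $Y_h$ is recognized as the $h'=h$ term with empty product equal to the identity. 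Instantiating $(Y,T)$ in the four ways listed above yields the four stated identities at once.

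The argument is entirely mechanical, so there is no genuine obstacle; the only point needing care is notational bookkeeping. Because the operators $\mathcal{P}_{\theta,h''}$ and $\widehat{\mathcal{P}}_{\theta,h''}$ do not commute, I would fix the convention that $\prod_{h''=h}^{h'-1}T_{h''}$ denotes $T_h T_{h+1}\cdots T_{h'-1}$ with the lowest index applied outermost, which is precisely the order in which operators are peeled off during unrolling, together with the empty-product-equals-identity convention at $h'=h$. I would also remark that the two gradient identities hold column-by-column in the $\mathbb{R}^{1\times m}$-valued convention already adopted for $\nabla_\theta Q_h^\theta$, so the scalar induction applies entrywise and no separate vector-valued argument is required.
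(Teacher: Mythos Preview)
Your proposal is correct and follows essentially the same approach as the paper: both proofs unroll the four linear recursions $X_h = Y_h + T_h X_{h+1}$ (with $X_{H+1}=0$) by backward induction, the only difference being that you abstract the argument into a single template and spell out the operator-ordering and empty-product conventions, whereas the paper treats each case in one line and appeals to ``induction'' without further detail.
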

\begin{proof}
By Bellman's equation, we have $Q_h^\theta=r_h+\mathcal{P}_{\theta,h} Q_{h+1}^\theta$. Therefore, by induction and use the fact that $Q_{H+1}^\theta=0$, we have proved the first equation. By the policy gradient Bellman's equation, we have
\begin{align*}
    \nabla_\theta Q_h^\theta(s,a)&=\mathbb{E}^{\pi_\theta}\left[\left(\nabla_\theta\log\pi_{\theta, h+1}(a^\prime\vert s^\prime)\right)Q_{h+1}^\theta(s^\prime,a^\prime)\vert s,a,h\right]+\mathbb{E}^{\pi_\theta}\left[\nabla_\theta Q_{h+1}^\theta(s^\prime, a^\prime)\vert s,a,h\right],
\end{align*}
i.e., $\nabla_\theta Q_h^\theta=U_h^\theta+\mathcal{P}_{\theta,h}\nabla_\theta Q_{h+1}^\theta$. By induction, we have proved the third equation. The expressions of $\widehat{Q}_h^\theta$ and $\widehat{\nabla_\theta Q_h^\theta}$ can be derived directly from their definitions and induction. 
\end{proof}
The decomposition leads to the following boundedness result: 
\begin{lemma}
\label{upbd}
We have $\vert Q_h^\theta(s,a)\vert\leq H-h+1,\ \Vert\nabla_\theta Q_h^\theta(s,a)\Vert_\infty\leq G(H-h)^2,\ \forall s\in\mathcal{S},a\in\mathcal{A},h\in[H].$
\end{lemma}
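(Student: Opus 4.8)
The plan is to prove both bounds by backward induction on $h$, exploiting the fact that each transition operator $\mathcal{P}_{\theta,h}$ is a non-expansion in the sup-norm. The first observation I would record is that for any bounded $f$, $(\mathcal{P}_{\theta,h}f)(s,a)=\mathbb{E}^{\pi_\theta}[f(s_{h+1},a_{h+1})\vert s_h=s,a_h=a]$ is a convex average of values of $f$, so $\Vert\mathcal{P}_{\theta,h}f\Vert_\infty\leq\Vert f\Vert_\infty$; compositions of such operators are therefore also non-expansive. Combined with $\Vert r_{h'}\Vert_\infty\leq 1$ (rewards take values in $[0,1]$), the decomposition $Q_h^\theta=\sum_{h'=h}^H(\prod_{h''=h}^{h'-1}\mathcal{P}_{\theta,h''})r_{h'}$ from Lemma \ref{Q_decomp_base} immediately yields $\vert Q_h^\theta(s,a)\vert\leq\sum_{h'=h}^H 1=H-h+1$, giving the first claim.

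For the gradient bound I would work componentwise: fix $j\in[m]$ and control $\sup_{s,a}\vert\nabla_\theta^j Q_h^\theta(s,a)\vert$, so that the $\infty$-norm statement follows by taking the maximum over $j$. The engine is the gradient Bellman recursion $\nabla_\theta^j Q_h^\theta=U_h^{\theta,j}+\mathcal{P}_{\theta,h}\nabla_\theta^j Q_{h+1}^\theta$, where $U_h^{\theta,j}:=\mathcal{P}_{\theta,h}(\nabla_\theta^j\log\Pi_{\theta,h+1})Q_{h+1}^\theta$, as recorded in the proof of Lemma \ref{Q_decomp_base}. Before running the induction I would bound the inhomogeneous term: the function $(s,a)\mapsto(\nabla_\theta^j\log\pi_{\theta,h+1}(a\vert s))Q_{h+1}^\theta(s,a)$ has sup-norm at most $G\cdot(H-h)$ by Assumption \ref{Boundedness_Conditions} and the already-proven $Q$ bound $\vert Q_{h+1}^\theta\vert\leq H-h$, and applying the non-expansive $\mathcal{P}_{\theta,h}$ keeps $\Vert U_h^{\theta,j}\Vert_\infty\leq G(H-h)$.

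The induction then runs backward from the base case $h=H$, where $Q_{H+1}^\theta=0$ forces $U_H^{\theta,j}=0$ and hence $\nabla_\theta^j Q_H^\theta=0$, matching the claimed $G(H-H)^2=0$. For the step, assuming $\Vert\nabla_\theta^j Q_{h+1}^\theta\Vert_\infty\leq G(H-h-1)^2$ and using non-expansiveness of $\mathcal{P}_{\theta,h}$,
\begin{align*}
\Vert\nabla_\theta^j Q_h^\theta\Vert_\infty\leq\Vert U_h^{\theta,j}\Vert_\infty+\Vert\mathcal{P}_{\theta,h}\nabla_\theta^j Q_{h+1}^\theta\Vert_\infty\leq G(H-h)+G(H-h-1)^2.
\end{align*}
Writing $n=H-h$, the elementary inequality $n+(n-1)^2\leq n^2$ (equivalent to $1\leq n$) closes the induction for all $h\leq H-1$, while $n=0$ is exactly the base case. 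Alternatively, the same estimate follows without induction by summing the decomposition $\nabla_\theta Q_h^\theta=\sum_{h'=h}^H(\prod_{h''=h}^{h'-1}\mathcal{P}_{\theta,h''})U_{h'}^\theta$ from Lemma \ref{Q_decomp_base}, which gives $\sum_{h'=h}^H G(H-h')=G(H-h)(H-h+1)/2\leq G(H-h)^2$. There is no serious obstacle here: the only care needed is to keep the bound quadratic-and-clean rather than off by constants, to treat $\nabla_\theta Q$ componentwise since it is vector-valued, and to verify at the outset that $\mathcal{P}_{\theta,h}$ genuinely contracts the sup-norm, which is what renders every telescoped product of operators harmless.
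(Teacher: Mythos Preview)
Your proof is correct and follows exactly the route the paper intends: the paper does not spell out a proof but simply states that the bounds follow from the decomposition in Lemma~\ref{Q_decomp_base}, and you have supplied precisely those details (non-expansiveness of $\mathcal{P}_{\theta,h}$ applied to $Q_h^\theta=\sum_{h'=h}^H(\prod\mathcal{P})r_{h'}$ for the first claim, and the summed/inducted bound on $U_{h'}^\theta$ for the second). Both your inductive argument and the direct summation $\sum_{h'=h}^H G(H-h')=G(H-h)(H-h+1)/2\le G(H-h)^2$ are valid ways to close it.
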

Now we consider the decomposition of $Q_h^\theta-\widehat{Q}_h^\theta$: 
\begin{lemma}
\label{Q_decomp}
We have 
\begin{align*}
    Q_h^\theta-\widehat{Q}_h^\theta=\sum_{h^\prime=h}^H\left(\prod_{h^{\prime\prime}=h}^{h^\prime-1}\widehat{\mathcal{P}}_{\theta,h^{\prime\prime}}\right)\left(Q_{h^\prime}^\theta-\widehat{r}_{h^\prime}- \widehat{\mathcal{P}}_{\theta,h^\prime}Q_{h^\prime+1}^\theta\right),\quad\forall h\in[H].
\end{align*}
\end{lemma}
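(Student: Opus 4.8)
The plan is to prove this by backward induction on $h$, which amounts to a telescoping of a one-step recursion obtained through an add-and-subtract trick. It is convenient to abbreviate the one-step Bellman residual of the \emph{true} $Q$-function measured against the \emph{estimated} model as $\delta_{h'} := Q_{h'}^\theta - \widehat{r}_{h'} - \widehat{\mathcal{P}}_{\theta,h'} Q_{h'+1}^\theta$, so that the target identity reads $Q_h^\theta - \widehat{Q}_h^\theta = \sum_{h'=h}^H \left(\prod_{h''=h}^{h'-1} \widehat{\mathcal{P}}_{\theta,h''}\right) \delta_{h'}$.

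The core step is to establish the recursion $Q_h^\theta - \widehat{Q}_h^\theta = \delta_h + \widehat{\mathcal{P}}_{\theta,h}\left(Q_{h+1}^\theta - \widehat{Q}_{h+1}^\theta\right)$. I would start from the true Bellman equation $Q_h^\theta = r_h + \mathcal{P}_{\theta,h} Q_{h+1}^\theta$ from \eqref{bel} and the estimated one $\widehat{Q}_h^\theta = \widehat{r}_h + \widehat{\mathcal{P}}_{\theta,h}\widehat{Q}_{h+1}^\theta$ from the model-based definition (both with terminal value zero at $h=H+1$), subtract them, and insert $\pm\,\widehat{\mathcal{P}}_{\theta,h} Q_{h+1}^\theta$. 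Grouping the reward difference $r_h - \widehat{r}_h$ with $(\mathcal{P}_{\theta,h} - \widehat{\mathcal{P}}_{\theta,h})Q_{h+1}^\theta$ recovers exactly $\delta_h$, while the leftover terms collapse to $\widehat{\mathcal{P}}_{\theta,h}\left(Q_{h+1}^\theta - \widehat{Q}_{h+1}^\theta\right)$ by linearity of the fitted operator.

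With this recursion in hand the induction is routine. The base case at $h = H+1$ is immediate since $Q_{H+1}^\theta - \widehat{Q}_{H+1}^\theta = 0$ and the empty sum vanishes. Assuming the claim at level $h+1$, I substitute the induction hypothesis into the recursion, push $\widehat{\mathcal{P}}_{\theta,h}$ inside the sum by linearity, and merge it with each product $\prod_{h''=h+1}^{h'-1}\widehat{\mathcal{P}}_{\theta,h''}$ to form $\prod_{h''=h}^{h'-1}\widehat{\mathcal{P}}_{\theta,h''}$; the standalone $\delta_h$ is precisely the $h'=h$ term of the full sum under the convention that the empty product equals the identity. This produces the claimed identity at level $h$ and closes the induction.

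There is no serious obstacle here, since the statement is a deterministic algebraic identity rather than a probabilistic estimate. The only points requiring care are the bookkeeping of the empty-product and empty-sum conventions at the horizon boundary, and the fact that $\widehat{\mathcal{P}}_{\theta,h}$ is a genuinely linear operator on $\mathcal{F}$ (which holds because, for the linear function class, the regression defining $\widehat{\mathcal{P}}_{\theta,h}$ acts linearly on its target, exactly as already used implicitly in the product expressions of Lemma \ref{Q_decomp_base}), so that it commutes with the finite summation.
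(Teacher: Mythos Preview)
Your proof is correct, but you take a different route from the paper. The paper starts from the fully unrolled expressions of Lemma~\ref{Q_decomp_base}, writes
\[
Q_h^\theta-\widehat{Q}_h^\theta
=\sum_{h'=h}^H\Bigl(\textstyle\prod_{h''=h}^{h'-1}\mathcal{P}_{\theta,h''}-\prod_{h''=h}^{h'-1}\widehat{\mathcal{P}}_{\theta,h''}\Bigr)r_{h'}
+\sum_{h'=h}^H\Bigl(\textstyle\prod_{h''=h}^{h'-1}\widehat{\mathcal{P}}_{\theta,h''}\Bigr)(r_{h'}-\widehat r_{h'}),
\]
telescopes the product difference as $\sum_{h''}\bigl(\prod\widehat{\mathcal P}\bigr)(\mathcal P-\widehat{\mathcal P})\bigl(\prod\mathcal P\bigr)$, swaps the order of the resulting double sum so that the inner tail $\sum_{h'>h''}\bigl(\prod\mathcal P\bigr)r_{h'}$ collapses back into $Q^\theta_{h''+1}$, and then merges with the reward-difference term. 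You instead establish the one-step recursion $Q_h^\theta-\widehat Q_h^\theta=\delta_h+\widehat{\mathcal P}_{\theta,h}(Q_{h+1}^\theta-\widehat Q_{h+1}^\theta)$ directly from the two Bellman equations via a single add-and-subtract, and close by backward induction. Your argument is shorter and avoids the index gymnastics of the double-sum swap; the paper's route, on the other hand, makes the parallel with the analogous decomposition of $\nabla_\theta Q_h^\theta-\widehat{\nabla_\theta Q_h^\theta}$ (used later in Lemma~\ref{error_decomp}) more transparent, since there the same telescoping of operator products is reused. Your remark about the linearity of $\widehat{\mathcal P}_{\theta,h}$ is well placed and is exactly what justifies pushing it through the finite sum.
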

\begin{proof}
Simply note that 
\begin{align*}
    Q_h^\theta-\widehat{Q}_h^\theta&=\sum_{h^\prime=h}^{H}\left(\prod_{h^{\prime\prime}=h}^{h^\prime-1}\mathcal{P}_{\theta,h^{\prime\prime}}\right)r_{h^\prime}-\sum_{h^\prime=h}^H\left(\prod_{h^{\prime\prime}=h}^{h^\prime-1}\widehat{\mathcal{P}}_{\theta, h^{\prime\prime}}\right)\widehat{r}_{h^\prime}\\
    &=\sum_{h^\prime=h}^H\left(\left(\prod_{h^{\prime\prime}=h}^{h^\prime-1}\mathcal{P}_{\theta,h^{\prime\prime}}\right)-\left(\prod_{h^{\prime\prime}=h}^{h^\prime-1}\widehat{\mathcal{P}}_{\theta, h^{\prime\prime}}\right)\right)r_{h^\prime}+\sum_{h^\prime=h}^H\left(\prod_{h^{\prime\prime}=h}^{h^\prime-1}\widehat{\mathcal{P}}_{\theta,h^{\prime\prime}}\right)\left(r_{h^\prime}-\widehat{r}_{h^\prime}\right)\\
    &=\sum_{h^\prime=h}^H\sum_{h^{\prime\prime}=h}^{h^\prime-1}\left(\prod_{h^{\prime\prime\prime}=h}^{h^{\prime\prime}-1}\widehat{\mathcal{P}}_{\theta,h^{\prime\prime\prime}}\right)\left(\mathcal{P}_{\theta, h^{\prime\prime}}-\widehat{\mathcal{P}}_{\theta,h^{\prime\prime}}\right)\left(\prod_{h^{\prime\prime\prime}=h^{\prime\prime}+1}^{h^\prime-1}\mathcal{P}_{\theta,h^{\prime\prime\prime}}\right)r_{h^\prime}+\sum_{h^\prime=h}^H\left(\prod_{h^{\prime\prime}=h}^{h^\prime-1}\widehat{\mathcal{P}}_{\theta,h^{\prime\prime}}\right)\left(r_{h^\prime}-\widehat{r}_{h^\prime}\right)\\
    &=\sum_{h^{\prime\prime}=h}^{H}\left(\prod_{h^{\prime\prime\prime}=h}^{h^{\prime\prime}-1}\widehat{\mathcal{P}}_{\theta,h^{\prime\prime\prime}}\right)\left(\mathcal{P}_{\theta, h^{\prime\prime}}-\widehat{\mathcal{P}}_{\theta,h^{\prime\prime}}\right)\sum_{h^\prime=h^{\prime\prime}+1}^H\left(\prod_{h^{\prime\prime\prime}=h^{\prime\prime}+1}^{h^\prime-1}\mathcal{P}_{\theta,h^{\prime\prime\prime}}\right)r_{h^\prime}+\sum_{h^\prime=h}^H\left(\prod_{h^{\prime\prime}=h}^{h^\prime-1}\widehat{\mathcal{P}}_{\theta,h^{\prime\prime}}\right)\left(r_{h^\prime}-\widehat{r}_{h^\prime}\right)\\
    &=\sum_{h^{\prime}=h}^{H}\left(\prod_{h^{\prime\prime}=h}^{h^{\prime}-1}\widehat{\mathcal{P}}_{\theta,h^{\prime\prime}}\right)\left(\mathcal{P}_{\theta, h^{\prime}}-\widehat{\mathcal{P}}_{\theta,h^{\prime}}\right)Q^\theta_{h^{\prime}+1}+\sum_{h^\prime=h}^H\left(\prod_{h^{\prime\prime}=h}^{h^\prime-1}\widehat{\mathcal{P}}_{\theta,h^{\prime\prime}}\right)\left(r_{h^\prime}-\widehat{r}_{h^\prime}\right)\\
    &=\sum_{h^\prime=h}^H\left(\prod_{h^{\prime\prime}=h}^{h^{\prime}-1}\widehat{\mathcal{P}}_{\theta,h^{\prime\prime}}\right)\left(Q_{h^\prime}^\theta-\widehat{r}_{h^\prime}-\widehat{\mathcal{P}}_{\theta,h^\prime}Q_{h^\prime+1}^\theta\right), 
\end{align*}
which is the desired result. 
\end{proof}
The following lemma provides an upper bound of matrix production, which will be used when bounding the higher order terms of the finite sample bound. 
\begin{lemma}
\label{decomp}
For any series of matrices $A_1,A_2,\ldots,A_n$ and $\Delta A_1,\Delta A_2,\ldots\Delta A_n$, we have
\begin{align*}
    \left\Vert\prod_{i=1}^n(A_i+\Delta A_i)-\prod_{i=1}^n A_i\right\Vert\leq \prod_{i=1}^n\left(\Vert A_i\Vert+\Vert\Delta A_i \Vert\right)-\prod_{i=1}^n\Vert A_i\Vert.
\end{align*}
\end{lemma}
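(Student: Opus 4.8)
The plan is to prove this purely algebraically, relying only on the triangle inequality and submultiplicativity of the operator norm; no special structure of the $A_i$ or $\Delta A_i$ is needed. First I would expand the perturbed product $\prod_{i=1}^n(A_i+\Delta A_i)$ by the distributive law into a sum of $2^n$ ordered products, one for each subset $S\subseteq[n]$ recording the indices at which the factor $\Delta A_i$ (rather than $A_i$) is selected. The term corresponding to $S=\emptyset$ is exactly $\prod_{i=1}^n A_i$, so subtracting it leaves the sum over all nonempty $S$ of the ordered products $\prod_{i=1}^n B_i^{(S)}$, where $B_i^{(S)}=\Delta A_i$ for $i\in S$ and $B_i^{(S)}=A_i$ otherwise.

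Next I would apply the triangle inequality across this sum and submultiplicativity within each term, yielding the bound $\sum_{\emptyset\neq S\subseteq[n]}\prod_{i\in S}\Vert\Delta A_i\Vert\prod_{i\notin S}\Vert A_i\Vert$. Since the norms are scalars, the ordering of factors becomes irrelevant at this point, and I would recognize this sum as the full expansion of $\prod_{i=1}^n(\Vert A_i\Vert+\Vert\Delta A_i\Vert)$ with its $S=\emptyset$ term $\prod_{i=1}^n\Vert A_i\Vert$ removed. Reassembling gives exactly the claimed right-hand side.

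An equally clean alternative is induction on $n$. Writing $P_n=\prod_{i=1}^n A_i$ and $\tilde P_n=\prod_{i=1}^n(A_i+\Delta A_i)$, the identity $\tilde P_n-P_n=\tilde P_{n-1}\Delta A_n+(\tilde P_{n-1}-P_{n-1})A_n$, combined with the elementary bound $\Vert\tilde P_{n-1}\Vert\leq\prod_{i=1}^{n-1}(\Vert A_i\Vert+\Vert\Delta A_i\Vert)$ and the inductive hypothesis, telescopes into the desired inequality after setting $a_i=\Vert A_i\Vert$, $b_i=\Vert A_i\Vert+\Vert\Delta A_i\Vert$ and noting $b_n\prod_{i<n}b_i-a_n\prod_{i<n}a_i=\prod_{i\le n}b_i-\prod_{i\le n}a_i$.

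There is no genuine analytic difficulty here; the only point requiring care is the bookkeeping of the noncommutative, order-sensitive matrix products during the expansion step, ensuring that each of the $2^n$ terms keeps its factors in the original left-to-right order before norms are taken. Once norms convert everything into commuting nonnegative scalars, the remaining identity is purely combinatorial and elementary.
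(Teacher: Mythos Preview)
Your proposal is correct and takes essentially the same approach as the paper: both expand $\prod_{i=1}^n(A_i+\Delta A_i)$ into $2^n$ ordered products indexed by the choice pattern (you use subsets $S\subseteq[n]$, the paper uses binary strings $\delta\in\{0,1\}^n$), cancel the unperturbed term, apply the triangle inequality and submultiplicativity, and then re-collapse the resulting scalar sum into $\prod_i(\Vert A_i\Vert+\Vert\Delta A_i\Vert)-\prod_i\Vert A_i\Vert$. Your alternative induction argument is an extra (and valid) route the paper does not mention.
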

\begin{proof}
We have
\begin{align*}
    \left\Vert\prod_{i=1}^n(A_i+\Delta A_i)-\prod_{i=1}^n A_i\right\Vert&= \left\Vert\sum_{\delta\in\{0,1\}^n\setminus\{(1,1,\ldots,1)\}}\prod_{i=1}^n A_i^{\delta_i}(\Delta A_i)^{1-\delta_i}\right\Vert\leq\sum_{\delta\in \{0,1\}^n\setminus\{(1,1,\ldots,1)\}}\prod_{i=1}^n\Vert A_i\Vert^{\delta_i}\Vert\Delta A_i\Vert^{1-\delta_i}\\
    &=\prod_{i=1}^n\left(\Vert A_i\Vert+\Vert\Delta A_i\Vert\right)-\prod_{i=1}^n\Vert A_i\Vert.
\end{align*}
\end{proof}
When $\mathcal{F}$ is the class of the linear functions, there exists matrix $M_{\theta,h}$ such that the transition probability satisfies
\begin{align*}
    \mathbb{E}^{\pi_\theta}\left[\phi(s^\prime,a^\prime)^\top\vert s,a,h\right] = \phi(s,a)^\top M_{\theta,h}. 
\end{align*}
The following lemma gives an upper bound on the 2-norm of $M_{\theta,h}$ and its derivatives. 
\begin{lemma}
\label{ineq}
We have $\left\Vert\Sigma_{\theta,h}^{\frac{1}{2}}M_{\theta,h}\Sigma_{\theta,h+1}^{-\frac{1}{2}}\right\Vert \leq 1$ and $\left\Vert\Sigma_{\theta,h}^{\frac{1}{2}}\left(\nabla_\theta^j M_{\theta,h}\right)\Sigma_{\theta,h+1}^{-\frac{1}{2}}\right\Vert\leq G,\ \forall j\in[m],h\in[H]$.
\end{lemma}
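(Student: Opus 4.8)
The plan is to reduce each operator-norm bound to an equivalent positive-semidefinite (PSD) matrix inequality, and then establish that inequality via conditional Jensen's inequality applied to the feature transition. First I would record the elementary linear-algebra equivalence: for any matrix $B$ and invertible $\Sigma_{\theta,h+1}$, conjugating by $\Sigma_{\theta,h+1}^{1/2}$ gives
\[
\left\Vert\Sigma_{\theta,h}^{\frac12}B\,\Sigma_{\theta,h+1}^{-\frac12}\right\Vert\le c
\quad\Longleftrightarrow\quad
B^\top\Sigma_{\theta,h}B\preceq c^2\,\Sigma_{\theta,h+1}.
\]
Thus the two claims are equivalent to $M_{\theta,h}^\top\Sigma_{\theta,h}M_{\theta,h}\preceq\Sigma_{\theta,h+1}$ and $(\nabla_\theta^j M_{\theta,h})^\top\Sigma_{\theta,h}(\nabla_\theta^j M_{\theta,h})\preceq G^2\Sigma_{\theta,h+1}$ respectively.

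For the first inequality, fix $w\in\mathbb{R}^d$ and write, using $\Sigma_{\theta,h}=\mathbb{E}^{\pi_\theta}[\phi(s_h,a_h)\phi(s_h,a_h)^\top]$,
\[
w^\top M_{\theta,h}^\top\Sigma_{\theta,h}M_{\theta,h}w=\mathbb{E}^{\pi_\theta}\left[\left(\phi(s_h,a_h)^\top M_{\theta,h}w\right)^2\right].
\]
By the defining relation $\phi(s,a)^\top M_{\theta,h}=\mathbb{E}^{\pi_\theta}[\phi(s',a')^\top\vert s,a,h]$, the inner term equals the conditional expectation $\mathbb{E}^{\pi_\theta}[\phi(s_{h+1},a_{h+1})^\top w\vert s_h,a_h]$. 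Conditional Jensen then yields $\big(\mathbb{E}[\phi(s_{h+1},a_{h+1})^\top w\vert s_h,a_h]\big)^2\le\mathbb{E}[(\phi(s_{h+1},a_{h+1})^\top w)^2\vert s_h,a_h]$, and taking the outer expectation via the tower property gives $w^\top\Sigma_{\theta,h+1}w$, which is exactly the target PSD inequality.

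For the derivative bound, I would first differentiate the defining relation of $M_{\theta,h}$ with respect to $\theta_j$; using $\nabla_\theta^j\pi=\pi\,\nabla_\theta^j\log\pi$ this produces
\[
\phi(s,a)^\top\nabla_\theta^j M_{\theta,h}=\mathbb{E}^{\pi_\theta}\left[\left(\nabla_\theta^j\log\pi_{\theta,h+1}(a'\vert s')\right)\phi(s',a')^\top\,\middle\vert\,s,a,h\right],
\]
the population analogue of the empirical formula \eqref{gM}. Repeating the quadratic-form computation and applying conditional Jensen gives $\big(\phi(s_h,a_h)^\top\nabla_\theta^j M_{\theta,h}w\big)^2\le\mathbb{E}[(\nabla_\theta^j\log\pi_{\theta,h+1})^2(\phi(s_{h+1},a_{h+1})^\top w)^2\vert s_h,a_h]$. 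The score bound $\vert\nabla_\theta^j\log\pi_{\theta,h+1}\vert\le G$ from Assumption \ref{Boundedness_Conditions} factors out a $G^2$, and the tower property again collapses the remaining expectation to $G^2\,w^\top\Sigma_{\theta,h+1}w$, completing the argument.

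The argument is essentially routine, so I do not anticipate a serious obstacle; the only points requiring care are differentiating $M_{\theta,h}$ correctly to surface the score function $\nabla_\theta^j\log\pi_{\theta,h+1}$ inside the conditional expectation, and ensuring every Jensen step is applied with respect to the conditioning on $(s_h,a_h)$ so that the tower property cleanly recovers the marginal second moment $\Sigma_{\theta,h+1}$.
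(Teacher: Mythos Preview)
Your proposal is correct and essentially identical to the paper's proof: both reduce the operator-norm bounds to the PSD inequalities $M_{\theta,h}^\top\Sigma_{\theta,h}M_{\theta,h}\preceq\Sigma_{\theta,h+1}$ and $(\nabla_\theta^j M_{\theta,h})^\top\Sigma_{\theta,h}(\nabla_\theta^j M_{\theta,h})\preceq G^2\Sigma_{\theta,h+1}$, and verify these by testing against an arbitrary vector via conditional Jensen applied to $f(s,a)=\mu^\top\phi(s,a)$ and $g(s,a)=(\nabla_\theta^j\log\pi_{\theta,h+1})\mu^\top\phi(s,a)$ respectively. The only cosmetic difference is that you state the norm--PSD equivalence up front, whereas the paper arrives at it at the end of each computation.
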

\begin{proof}
Note that for any $f:\mathcal{S}\times\mathcal{A}\rightarrow \mathbb{R},\ f(s,a):=\mu^\top\phi(s, a)$, we have
\begin{align*}
    \mathbb{E}^{\pi_\theta}\left[f^2(s_{h+1},a_{h+1})\vert s_1\sim\xi\right]&=\mathbb{E}^{\pi_\theta}\left[\mathbb{E}^{\pi_\theta}\left[f^2(s_{h+1},a_{h+1})\vert s_h,a_h\right]\vert s_1\sim\xi\right]\\
    &\geq\mathbb{E}^{\pi_\theta}[\mathbb{E}^{\pi_\theta}[f(s_{h+1},a_{h+1})\vert s_h,a_h]^2\vert  s_1\sim\xi]
\end{align*}
The LHS satisfies
\begin{align*}
    \mathbb{E}^{\pi_\theta}[f^2(s_{h+1},a_{h+1})\vert s_1\sim \xi]&=\mu^\top\Sigma_{\theta, h+1}\mu
\end{align*}
and the RHS satisfies
\begin{align*}
    \mathbb{E}^{\pi_\theta}[\mathbb{E}^{\pi_\theta}[f(s_{h+1},a_{h+1})\vert s_h,a_h]^2\vert s_1\sim\xi]=\mathbb{E}^{\pi_\theta}[\mu^\top M_{\theta,h}^\top\phi(s_h,a_h)\phi(s_h,a_h)^\top M_{\theta,h}\mu\vert s_1\sim\xi]=\mu^\top M_{\theta,h}^\top\Sigma_{\theta,h} M_{\theta,h}\mu
\end{align*}
Therefore, we have $\mu^\top\Sigma_{\theta,h+1}\mu\geq\mu^\top M_{\theta,h}^\top\Sigma_{\theta,h} M_{\theta,h} \mu,\ \forall\mu$, which implies $\Vert\Sigma_{\theta,h}^{\frac{1}{2}}M_{\theta,h} \Sigma_{\theta,h+1}^{-\frac{1}{2}}\Vert\leq 1$. Similarly, let $g:\mathcal{S}\times\mathcal{A}\rightarrow\mathbb{R},\ g(s, a):=\left(\nabla_\theta^j\log\pi_{\theta, h+1}(s, a)\right)\mu^\top\phi(s,a)$, we have
\begin{align*}
    \mathbb{E}^{\pi_\theta}[g^2(s_{h+1},a_{h+1})\vert s_1\sim\xi]=&\mathbb{E}^{\pi_\theta}[\mathbb{E}^{\pi_\theta}[g^2(s_{h+1}, a_{h+1})\vert s_h,a_h]\vert s_1\sim\xi]\\
    \geq&\mathbb{E}^{\pi_\theta}[\mathbb{E}^{\pi_\theta}[g(s_{h+1},a_{h+1})\vert s_h,a_h]^2\vert s_1\sim\xi]
\end{align*}
The LHS satisfies
\begin{align*}
    \mathbb{E}^{\pi_\theta}[g^2(s_{h+1},a_{h+1})\vert s_1\sim\xi]&=\mu^\top\mathbb{E}^{\pi_\theta}\left[\left(\nabla_\theta^j\log\pi_{\theta,h+1}(a\vert s)\right)^2\phi(s_{h+1}, a_{h+1})\phi(s_{h+1},a_{h+1})^\top\vert s_1\sim\xi\right]\mu\\
    &\leq G^2\mu^\top\mathbb{E}^{\pi_\theta}\left[\phi(s_{h+1},a_{h+1})\phi(s_{h+1},a_{h+1})^\top\vert s_1\sim\xi\right]\mu=G^2\mu^\top\Sigma_{\theta,h+1}\mu
\end{align*}
and the RHS satisfies 
\begin{align*}
    &\mathbb{E}^{\pi_\theta}[\mathbb{E}^{\pi_\theta}[g(s_{h+1},a_{h+1})\vert s_h, a_h]^2\vert s_1\sim\xi]\\
    =&\mathbb{E}^{\pi_\theta}[\mu^\top\left(\nabla_\theta^j M_{\theta,h}\right)^\top \phi(s_h,a_h)\phi(s_h,a_h)^\top\left(\nabla_\theta^j M_{\theta,h}\right)\mu\vert s_1\sim\xi]\\
    =&\mu^\top\left(\nabla_\theta^j M_{\theta,h}\right)^\top\Sigma_{\theta,h}\left(\nabla_\theta^j M_{\theta,h}\right)\mu.
\end{align*}
Therefore, we get $G^2\mu^\top\Sigma_{\theta,h+1}\mu\geq\mu^\top\left(\nabla_\theta^j M_{\theta,h}\right)^\top\Sigma_{\theta,h}\left(\nabla_\theta^j M_{\theta,h}\right)\mu,\ \forall\mu$, which implies $\left\Vert \Sigma_{\theta,h}^{\frac{1}{2}}\left(\nabla_\theta^j M_\theta\right)\Sigma_{\theta,h+1}^{-\frac{1}{2}}\right\Vert \leq G$.
\end{proof}

\begin{lemma}
\label{dsig1}
We have with probability at least $1-\delta$,
\begin{align*}
\left\Vert\Sigma_h^{-\frac{1}{2}}\left(\frac{1}{K}\sum_{k=1}^K\phi(s_h^{(k)}, a_h^{(k)})\phi(s_h^{(k)},a_h^{(k)})^\top\right)\Sigma^{-\frac{1}{2}}_h-I_d\right\Vert\leq \sqrt{\frac{2C_1d\log\frac{2dH}{\delta}}{K}} + \frac{2C_1d\log\frac{2dH}{\delta}}{3K}.
\end{align*}
\end{lemma}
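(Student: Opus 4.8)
The plan is to apply the matrix Bernstein inequality to the i.i.d.\ rank-one summands. First I would define, for each $k\in[K]$, the symmetric random matrix $X_k := \Sigma_h^{-1/2}\phi(s_h^{(k)},a_h^{(k)})\phi(s_h^{(k)},a_h^{(k)})^\top\Sigma_h^{-1/2}$, which is rank one and positive semidefinite. Since $\Sigma_h=\mathbb{E}[\phi(s_h^{(1)},a_h^{(1)})\phi(s_h^{(1)},a_h^{(1)})^\top]$ and the trajectories are i.i.d., we have $\mathbb{E}[X_k]=I_d$, so setting $Y_k:=X_k-I_d$ yields i.i.d.\ centered symmetric matrices whose scaled sum $\frac1K\sum_k Y_k$ is exactly the quantity to bound in operator norm.

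Next I would collect the two ingredients that matrix Bernstein requires. For the uniform bound, note that $X_k$ is a rank-one projection scaled by $\phi(s_h^{(k)},a_h^{(k)})^\top\Sigma_h^{-1}\phi(s_h^{(k)},a_h^{(k)})$, which is at most $C_1 d$ by Assumption \ref{Boundedness_Conditions}; since $0\preceq X_k\preceq C_1 d\, I_d$ and $C_1 d\ge 1$ (which follows by taking the trace of $\mathbb{E}[\Sigma_h^{-1}\phi\phi^\top]=I_d$), this gives $\|Y_k\|\le C_1 d$ almost surely. For the variance, I would use the rank-one identity $X_k^2=\big(\phi(s_h^{(k)},a_h^{(k)})^\top\Sigma_h^{-1}\phi(s_h^{(k)},a_h^{(k)})\big)X_k\preceq C_1 d\, X_k$, so that $\mathbb{E}[X_k^2]\preceq C_1 d\,\mathbb{E}[X_k]=C_1 d\, I_d$ and hence $\mathbb{E}[Y_k^2]=\mathbb{E}[X_k^2]-I_d\preceq C_1 d\, I_d$. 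The matrix variance statistic is therefore $\big\|\sum_{k=1}^K\mathbb{E}[Y_k^2]\big\|\le K C_1 d$.

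With $R=C_1 d$ and $\sigma^2=KC_1 d$, the matrix Bernstein inequality gives $\mathbb{P}(\|\sum_k Y_k\|\ge t)\le 2d\exp\!\big(\tfrac{-t^2/2}{\sigma^2+Rt/3}\big)$. Setting the right-hand side equal to $\delta/H$ and inverting the resulting quadratic in $t$, using $\sqrt{a+b}\le\sqrt a+\sqrt b$ to split the positive root, yields with probability at least $1-\delta/H$ the bound $\|\sum_k Y_k\|\le \sqrt{2\sigma^2\log\frac{2dH}{\delta}}+\frac{2R}{3}\log\frac{2dH}{\delta}$. Dividing by $K$ and substituting $R,\sigma^2$ reproduces exactly the claimed inequality for a fixed $h$; a union bound over $h\in[H]$, which is what the factor $H$ inside the logarithm accounts for, then makes the statement hold simultaneously for all $h$ with probability $1-\delta$.

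I expect no genuinely hard step here, as this is a textbook matrix-concentration argument; the only places demanding care are (i) exploiting the rank-one structure to control $\mathbb{E}[X_k^2]$ by $C_1 d\,\mathbb{E}[X_k]$ rather than by the crude $(C_1 d)^2$, which is precisely what keeps the leading term at $\sqrt{C_1 d/K}$ instead of $C_1 d/\sqrt{K}$, and (ii) performing the Bernstein inversion so that the constants $2$ and $2/3$ emerge exactly as stated.
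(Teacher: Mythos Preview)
Your proposal is correct and follows essentially the same approach as the paper: define the centered rank-one matrices, bound their operator norm and variance by $C_1 d$, apply matrix Bernstein, and take a union bound over $h\in[H]$. The only cosmetic difference is that you invoke the rank-one identity $X_k^2=(\phi^\top\Sigma_h^{-1}\phi)X_k$ directly, whereas the paper derives the same variance bound by testing against an arbitrary vector $\mu$; both yield $\mathbb{E}[X_k^2]\preceq C_1 d\,I_d$.
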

\begin{proof}
Define
\begin{align*}
	X^{(k)}_h=\Sigma^{-\frac{1}{2}}_h\phi\left(s^{(k)}_h,a^{(k)}_h\right) \phi\left(s^{(k)}_h,a^{(k)}_h\right)^\top\Sigma_h^{-\frac{1}{2}}\in\mathbb{R}^{d\times d}.
\end{align*}
It's easy to see that $X^{(1)}_h,X^{(2)}_h,\ldots,X^{(K)}_h$ are independent and $\mathbb{E}\left[X^{(k)}_h\right]=I_d$. In the remaining part of the proof, we will apply the matrix Bernstein's inequality to analyze the concentration of $\frac{1}{K}\sum_{k=1}^K X^{(k)}_h$. We first consider the matrix-valued variance $\textrm{Var}\left(X^{(k)}_h\right)=\mathbb{E}\left[\left(X^{(k)}_h-I_d\right)^2\right]=\mathbb{E}\left[\left(X^{(k)}_h\right)^2\right]-I_d$. For any vector $\mu\in\mathbb{R}^d$,
\begin{align*} 
	\mu^\top\mathbb{E}\left[\left(X_h^{(k)}\right)^2\right]\mu=&\mathbb{E}\left[\left\Vert X_h^{(k)}\mu\right\Vert^2\right]=\mathbb{E}\left[\left\Vert\Sigma^{-\frac{1}{2}}_h\phi\left(s^{(k)}_h,a^{(k)}_h\right)\phi\left(s^{(k)}_h,a^{(k)}_h\right)^\top\Sigma_h^{-\frac{1}{2}}\mu\right\Vert^2\right]\\
	\leq&\mathbb{E}\left[\left\Vert\Sigma_h^{-\frac{1}{2}}\phi\left(s^{(k)}_h,a^{(k)}_h\right)\right\Vert^2\left\Vert\phi\left(s^{(k)}_h,a^{(k)}_h\right)^\top\Sigma_h^{-\frac{1}{2}}\mu\right\Vert^2\right]\leq C_1d\mathbb{E}\left[\left\Vert\phi\left(s^{(k)}_h,a^{(k)}_h\right)^\top\Sigma_h^{-\frac{1}{2}}\mu\right\Vert^2\right]\\
    =&C_1d\mu^\top\mathbb{E}\left[X_h^{(k)}\right]\mu = C_1d \Vert\mu\Vert^2,
\end{align*}
where we used the identity $\left\Vert\phi\left(s^{(k)}_h,a^{(k)}_h\right)^\top\Sigma_h^{-\frac{1}{2}}\mu\right\Vert^2=\mu^\top X_h^{(k)}\mu$ and $\mathbb{E}\left[X_h^{(k)}\right]=I_d$. We have
\begin{align*}
    \textrm{Var}(X_h^{(k)})\preceq\mathbb{E}\left[\left(X_h^{(k)}\right)^2\right]\preceq C_1dI_d. 
\end{align*}
Additionally,
\begin{align*}
	-I_d\preceq X_h^{(k)}-I_d=\Sigma_h^{-\frac{1}{2}}\phi\left(s^{(k)}_h,a^{(k)}_h\right)\phi\left(s^{(k)}_h,a^{(k)}_h\right)^\top\Sigma_h^{-\frac{1}{2}}-I_d \preceq C_1dI_d-I_d. 
\end{align*}
Therefore, $\Vert X_h^{(k)}-I_d\Vert\leq C_1d$. Since $X^{(1)}_h,X^{(2)}_h,\ldots,X^{(K)}_h$ are \textit{i.i.d.}, by the matrix-form Bernstein inequality, we have
\begin{align*}
	\mathbb{P}\left(\left\Vert\sum_{k=1}^K X_h^{(k)}-I_d\right\Vert\geq\varepsilon\right)\leq 2d\cdot\exp\left(-\frac{\varepsilon^2/2}{C_1dK+C_1d \varepsilon/3}\right),\quad \forall \varepsilon>0. 
\end{align*}
With probability at least $1 - \delta$,
\begin{align*}
    \left\Vert\frac{1}{K}\sum_{k=1}^K\left(X_h^{(k)}-I_d\right)\right\Vert\leq \sqrt{\frac{2C_1d\log\frac{2d}{\delta}}{K}}+\frac{2C_1d\log\frac{2d}{\delta}}{3K}, 
\end{align*}
Taking a union bound over $h\in[H]$, we derive the desired result.	
\end{proof}
Let $\Delta\Sigma_h^{-1}=\widehat{\Sigma}_h^{-1}-\Sigma_h$, 
\begin{lemma}
\label{dsig2}
If $\left\Vert\Sigma_h^{-\frac{1}{2}}\widehat{\Sigma}_h\Sigma_h^{-\frac{1}{2}}-I_d\right\Vert\leq\frac{1}{2}$, then 
$\left\Vert\Sigma_h^{\frac{1}{2}}\left(\Delta\Sigma_h^{-1}\right)\Sigma_h^{\frac{1}{2}}\right\Vert\leq 2\left\Vert\Sigma_h^{-\frac{1}{2}}\widehat{\Sigma}_h\Sigma_h^{-\frac{1}{2}}-I_d\right\Vert$. 
\end{lemma}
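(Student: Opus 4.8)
The plan is to eliminate the $\Sigma_h$ factors by the congruence transformation $\Sigma_h^{1/2}(\cdot)\Sigma_h^{1/2}$, reducing the claim to a perturbation estimate for the inverse of a single matrix that is close to the identity. Write $A := \Sigma_h^{-1/2}\widehat{\Sigma}_h\Sigma_h^{-1/2}$ and $\Delta := A - I_d$, so that the hypothesis is exactly $\Vert\Delta\Vert\leq\tfrac{1}{2}$. Since $\widehat{\Sigma}_h$ is symmetric positive definite and $\Sigma_h^{-1/2}$ is symmetric, $A$ is symmetric; its eigenvalues are $1+\lambda_i(\Delta)\in[\tfrac12,\tfrac32]$, so $A$ is invertible. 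From $\widehat{\Sigma}_h=\Sigma_h^{1/2}A\Sigma_h^{1/2}$ we get $\widehat{\Sigma}_h^{-1}=\Sigma_h^{-1/2}A^{-1}\Sigma_h^{-1/2}$, and also $\Sigma_h^{1/2}\Sigma_h^{-1}\Sigma_h^{1/2}=I_d$.

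Conjugating the quantity of interest then produces a clean cancellation of the $\Sigma_h^{\pm1/2}$ factors (interpreting $\Delta\Sigma_h^{-1}=\widehat{\Sigma}_h^{-1}-\Sigma_h^{-1}$, as the notation indicates):
\begin{align*}
\Sigma_h^{1/2}\left(\widehat{\Sigma}_h^{-1}-\Sigma_h^{-1}\right)\Sigma_h^{1/2}=A^{-1}-I_d.
\end{align*}
Hence the lemma reduces to bounding $\Vert A^{-1}-I_d\Vert$ in terms of $\Vert\Delta\Vert$. For this I would use the identity $A^{-1}-I_d=A^{-1}(I_d-A)=-A^{-1}\Delta$, which gives $\Vert A^{-1}-I_d\Vert\leq\Vert A^{-1}\Vert\,\Vert\Delta\Vert$. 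Since $A=I_d+\Delta$ is symmetric with $\Vert\Delta\Vert\leq\tfrac12$, its smallest eigenvalue satisfies $\sigma_{\min}(A)\geq 1-\Vert\Delta\Vert\geq\tfrac12$, so $\Vert A^{-1}\Vert=1/\sigma_{\min}(A)\leq 1/(1-\Vert\Delta\Vert)\leq 2$. Combining the two bounds yields $\Vert A^{-1}-I_d\Vert\leq 2\Vert\Delta\Vert$, which is precisely the desired inequality. (Equivalently, one may expand $A^{-1}=\sum_{n\geq 0}(-\Delta)^n$ by the Neumann series, valid because $\Vert\Delta\Vert<1$, and sum the tail $\sum_{n\geq 1}\Vert\Delta\Vert^n=\Vert\Delta\Vert/(1-\Vert\Delta\Vert)\leq 2\Vert\Delta\Vert$.)

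I do not expect any genuine obstacle here: the entire content is the observation that the two-sided $\Sigma_h^{1/2}$-conjugation converts $\widehat{\Sigma}_h^{-1}-\Sigma_h^{-1}$ into $A^{-1}-I_d$, after which the estimate is a one-line perturbation bound for the inverse of a matrix near the identity. The only points warranting a word of care are the invertibility of $A$ and the constant $2$, both of which are delivered by the assumption $\Vert\Delta\Vert\leq\tfrac12$ through the eigenvalue interval $[\tfrac12,\tfrac32]$.
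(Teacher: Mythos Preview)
Your proposal is correct and is essentially the same argument as the paper's proof. The paper writes the factorization as $\Sigma_h^{1/2}(\widehat{\Sigma}_h^{-1}-\Sigma_h^{-1})\Sigma_h^{1/2}=(\Sigma_h^{1/2}\widehat{\Sigma}_h^{-1}\Sigma_h^{1/2})(I_d-\Sigma_h^{-1/2}\widehat{\Sigma}_h\Sigma_h^{-1/2})$ and then bounds $\Vert\Sigma_h^{1/2}\widehat{\Sigma}_h^{-1}\Sigma_h^{1/2}\Vert\leq 2$ via $\sigma_{\min}\geq\tfrac12$; in your notation this is exactly $A^{-1}-I_d=-A^{-1}\Delta$ with $\Vert A^{-1}\Vert\leq 2$, so the two proofs coincide up to the substitution $A=\Sigma_h^{-1/2}\widehat{\Sigma}_h\Sigma_h^{-1/2}$.
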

\begin{proof}
Note that
\begin{align}
    \label{2_1}
    \left\Vert\Sigma_h^{\frac{1}{2}}\left(\Delta\Sigma_h^{-1}\right)\Sigma_h^{\frac{1}{2}}\right\Vert=\left\Vert\Sigma_h^{\frac{1}{2}}\left(\widehat{\Sigma}_h^{-1}-\Sigma_h^{-1}\right)\Sigma_h^{\frac{1}{2}}\right\Vert\leq\left\Vert\Sigma_h^{\frac{1}{2}}\widehat{\Sigma}_h^{-1}\Sigma_h^{\frac{1}{2}}\right\Vert\left\Vert\Sigma_h^{-\frac{1}{2}}\widehat{\Sigma}_h\Sigma_h^{-\frac{1}{2}}-I_d\right\Vert.  
\end{align}
Because we have $\left\Vert\Sigma_h^{-\frac{1}{2}}\widehat{\Sigma}_h\Sigma_h^{-\frac{1}{2}}-I_d\right\Vert\leq\frac{1}{2}$, we get $\sigma_{\textrm{min}}\left(\Sigma_h^{-\frac{1}{2}}\widehat{\Sigma}_h\Sigma_h^{-\frac{1}{2}}\right)\geq\frac{1}{2}$, which implies $\left\Vert\Sigma_h^{\frac{1}{2}}\widehat{\Sigma}_h^{-1}\Sigma_h^{\frac{1}{2}}\right\Vert\leq 2$. Combining this result with \eqref{2_1} finishes the proof. 
\end{proof}
Let $\Delta Y_{\theta,h}=\frac{1}{K}\sum_{k=1}^K\phi\left(s_h^{(k)},a_h^{(k)}\right)\phi_{\theta,h+1}\left(s_{h+1}^{(k)}\right)^\top-\Sigma_h M_{\theta,h}$, 
\begin{lemma}
\label{dy}
With probability at least $1-\delta$, the following inequalities hold simultaneously:
\begin{align}
\label{3_1}
\left\Vert\Sigma_h^{-\frac{1}{2}}\left(\Delta Y_{\theta,h}\right)\Sigma_{h+1}^{-\frac{1}{2}}\right\Vert&\leq\left(\kappa_2\vee 1\right)\sqrt{\frac{2C_1d\log\frac{4dH}{\delta}}{K}} + \frac{4C_1d\log\frac{4dH}{\delta}}{3K},\\
\label{3_2}
\left\Vert\Sigma_h^{-\frac{1}{2}}\left(\nabla_\theta^j\left(\Delta Y_{\theta,h}\right)\right)\Sigma_{h+1}^{-\frac{1}{2}}\right\Vert&\leq\left(\kappa_3\vee 1\right)G\sqrt{\frac{2C_1d\log\frac{4mdH}{\delta}}{K}}+\frac{4C_1dG\log\frac{4mdH}{\delta}}{3K},\quad\forall j\in[m],
\end{align}
where $\kappa_2, \kappa_3$ are defined in Theorem \ref{thm2_var}. 
\end{lemma}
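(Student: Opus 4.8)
The plan is to recognize $\Delta Y_{\theta,h}$ as a centered average of i.i.d. rectangular rank-one matrices and apply the matrix Bernstein inequality, exactly as in the proof of Lemma \ref{dsig1} but now for a non-symmetric summand. First I would verify that $\Delta Y_{\theta,h}$ has mean zero. Writing $Y_h^{(k)}:=\phi(s_h^{(k)},a_h^{(k)})\phi_{\theta,h+1}(s_{h+1}^{(k)})^\top$ and conditioning on $(s_h^{(k)},a_h^{(k)})$, the next state $s_{h+1}^{(k)}$ is drawn from the behavior transition $p_h(\cdot\mid s_h^{(k)},a_h^{(k)})$, whereas $\phi_{\theta,h+1}(\cdot)=\mathbb{E}^{\pi_\theta}[\phi(\cdot,a')\mid\cdot]$ averages only over the \emph{target} action; hence $\mathbb{E}[\phi_{\theta,h+1}(s_{h+1}^{(k)})^\top\mid s_h^{(k)},a_h^{(k)}]=\phi(s_h^{(k)},a_h^{(k)})^\top M_{\theta,h}$ by the defining relation of $M_{\theta,h}$, and the outer expectation gives $\mathbb{E}[Y_h^{(k)}]=\Sigma_h M_{\theta,h}$. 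Thus $\Sigma_h^{-\frac12}\Delta Y_{\theta,h}\Sigma_{h+1}^{-\frac12}=\frac1K\sum_k(Z_h^{(k)}-\mathbb{E}[Z_h^{(k)}])$ for the whitened summand $Z_h^{(k)}:=\Sigma_h^{-\frac12}\phi(s_h^{(k)},a_h^{(k)})\phi_{\theta,h+1}(s_{h+1}^{(k)})^\top\Sigma_{h+1}^{-\frac12}$.

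Next I would control the two ingredients matrix Bernstein requires. For the uniform bound, Assumption \ref{Boundedness_Conditions} gives $\Vert\Sigma_h^{-\frac12}\phi(s_h^{(k)},a_h^{(k)})\Vert\le\sqrt{C_1 d}$, and by Jensen $\Vert\Sigma_{h+1}^{-\frac12}\phi_{\theta,h+1}(s)\Vert\le\mathbb{E}^{\pi_\theta}[\Vert\Sigma_{h+1}^{-\frac12}\phi(s,a')\Vert\mid s]\le\sqrt{C_1 d}$, so $\Vert Z_h^{(k)}\Vert\le C_1 d$ and the centered term is bounded by $2C_1 d$, producing the $\frac{4C_1 d}{3K}\log(\cdot)$ tail term. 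For the variance, since $Z_h^{(k)}=uv^\top$ is rectangular I would bound both sides: pulling out the scalar $\Vert v\Vert^2=\Vert\Sigma_{h+1}^{-\frac12}\phi_{\theta,h+1}(s_{h+1})\Vert^2\le C_1 d$ gives $\mathbb{E}[Z Z^\top]\preceq C_1 d\,I_d$, while pulling out $\Vert u\Vert^2=\Vert\Sigma_h^{-\frac12}\phi(s_h,a_h)\Vert^2\le C_1 d$ gives $\mathbb{E}[Z^\top Z]\preceq C_1 d\,\Sigma_{h+1}^{-\frac12}\mathbb{E}[\phi_{\theta,h+1}(s_{h+1})\phi_{\theta,h+1}(s_{h+1})^\top]\Sigma_{h+1}^{-\frac12}$, whose norm equals $C_1 d\,\kappa_2^2$ by the definition of $\kappa_2$ in Theorem \ref{thm2_var}. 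Centering only decreases these, so the matrix variance is at most $C_1 d(\kappa_2\vee 1)^2$. Applying the symmetric matrix Bernstein inequality to the $2d\times2d$ Hermitian dilation of $Z_h^{(k)}-\mathbb{E}[Z_h^{(k)}]$ (dimensional prefactor $2\cdot2d=4d$), then taking a union bound over $h\in[H]$ (replacing $\delta$ by $\delta/H$, which turns $\log 4d$ into $\log\frac{4dH}{\delta}$), delivers \eqref{3_1}.

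For \eqref{3_2} I would repeat the argument with $Y_h^{(k)}$ replaced by its $\theta_j$-derivative. The log-derivative identity $\nabla_\theta^j\phi_{\theta,h+1}(s)=\mathbb{E}^{\pi_\theta}[(\nabla_\theta^j\log\pi_{\theta,h+1}(a'\mid s))\phi(s,a')\mid s]$ yields $\nabla_\theta^j\Delta Y_{\theta,h}=\frac1K\sum_k\phi(s_h^{(k)},a_h^{(k)})(\nabla_\theta^j\phi_{\theta,h+1}(s_{h+1}^{(k)}))^\top-\Sigma_h\nabla_\theta^j M_{\theta,h}$, and differentiating the conditional identity from the first paragraph confirms it is again centered. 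The only change in the estimates is that the factor $\Vert\Sigma_{h+1}^{-\frac12}\nabla_\theta^j\phi_{\theta,h+1}(s)\Vert$ now gains a factor $G$ from $\vert\nabla_\theta^j\log\pi_{\theta,h+1}\vert\le G$ (Assumption \ref{Boundedness_Conditions}), giving per-term norm $\le GC_1 d$ and variance $\le C_1 d\,G^2(\kappa_3\vee 1)^2$, where $\kappa_3$ enters through $\Vert\Sigma_{h+1}^{-\frac12}\mathbb{E}[(\nabla_\theta^j\phi_{\theta,h+1})(\nabla_\theta^j\phi_{\theta,h+1})^\top]\Sigma_{h+1}^{-\frac12}\Vert=G^2\kappa_3^2$. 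A union bound over both $h\in[H]$ and $j\in[m]$ produces the $\log\frac{4mdH}{\delta}$ factor and yields \eqref{3_2}.

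I expect the main obstacle to be the mean-zero verification: correctly separating the behavior-policy transition kernel (which governs the law of $s_{h+1}$ given $(s_h,a_h)$) from the target-policy averaging baked into $\phi_{\theta,h+1}$ and $M_{\theta,h}$, and checking that this identity survives differentiation in $\theta$. The remaining work — the two-sided variance computation that produces the $\kappa_2\vee1$ and $\kappa_3\vee1$ factors, and the tracking of the Bernstein and union-bound constants — is routine bookkeeping.
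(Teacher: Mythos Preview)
Your proposal is correct and follows essentially the same approach as the paper's own proof: whiten the rank-one summands, verify centering via the conditional identity $\mathbb{E}[\phi_{\theta,h+1}(s_{h+1})^\top\mid s_h,a_h]=\phi(s_h,a_h)^\top M_{\theta,h}$, bound both matrix variances $\mathbb{E}[ZZ^\top]$ and $\mathbb{E}[Z^\top Z]$ to extract $C_1 d(\kappa_2\vee1)^2$ (respectively $C_1 d\,G^2(\kappa_3\vee1)^2$), apply rectangular matrix Bernstein, and union-bound over $h$ (and $j$). The only cosmetic difference is that the paper invokes matrix Bernstein directly with prefactor $2d$ rather than phrasing it via the Hermitian dilation, and it explicitly allocates $\delta/2$ to each of \eqref{3_1} and \eqref{3_2} before the final union bound; you should make this last split explicit so that both inequalities hold simultaneously with probability $1-\delta$.
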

\begin{proof}
Take
\begin{align*}
	Y_{\theta,h}^{(k)}:=\Sigma_h^{-\frac{1}{2}}\phi\left(s^{(k)}_h,a^{(k)}_h\right)\phi_{\theta,h+1}\left(s^{(k)}_{h+1}\right)^\top\Sigma_{h+1}^{-\frac{1}{2}},\quad\forall k\in[K]. 
\end{align*}
Then, $\Sigma_h^{-\frac{1}{2}}\left(\Delta Y_{\theta,h}\right)\Sigma_{h+1}^{-\frac{1}{2}} =\frac{1}{K}\sum_{k=1}^K\left(Y_{\theta,h}^{(k)}-\Sigma_h^{\frac{1}{2}}M_{\theta,h} \Sigma^{-\frac{1}{2}}_{h+1}\right)$. Note that
\begin{align}
    \label{SMS0} 
    \begin{aligned} 
        \mathbb{E}\left[Y_{\theta,h}^{(k)}\right]=&\mathbb{E}\left[\Sigma_h^{-\frac{1}{2}}\phi\left(s^{(k)}_h,a^{(k)}_h\right)\phi_{\theta,h+1}\left(s^{(k)}_{h+1}\right)^\top\Sigma_{h+1}^{-\frac{1}{2}}\right]\\
        =&\mathbb{E}\left[\Sigma_h^{-\frac{1}{2}} \phi\left(s^{(k)}_h,a^{(k)}_h\right)\mathbb{E}^{\pi_\theta}\left[\phi\left(s^\prime, a^\prime\right)^\top \vert s^{(k)}_h,a^{(k)}_h, h\right]\Sigma_{h+1}^{-\frac{1}{2}}\right]\\
        =&\mathbb{E}\left[\Sigma_h^{-\frac{1}{2}}\phi\left(s^{(k)}_h,a^{(k)}_h\right)\phi\left(s^{(k)}_h,a^{(k)}_h\right)^\top M_{\theta,h}\Sigma_{h+1}^{-\frac{1}{2}}\right]=\Sigma_h^{\frac{1}{2}}M_{\theta,h}\Sigma_{h+1}^{-\frac{1}{2}}, 
    \end{aligned}
\end{align}
To this end, $\Sigma_h^{-\frac{1}{2}}\left(\Delta Y_{\theta,h}\right)\Sigma_{h+1}^{-\frac{1}{2}}=\frac{1}{K}\sum_{k=1}^K\left( Y_{\theta,h}^{(k)}-\mathbb{E}\left[Y_{\theta,h}^{(k)}\right]\right)$. Since the trajectories are \textrm{i.i.d.}, we use the matrix-form Bernstein inequality to estimate $\left\Vert\Sigma_h^{-\frac{1}{2}}(\Delta Y_{\theta,h})\Sigma_{h+1}^{-\frac{1}{2}}\right\Vert$. For any $\mu\in \mathbb{R}^d$, we have
\begin{align*}
    \mu^\top\mathbb{E}\left[Y_{\theta,h}^{(k)}\left(Y_{\theta,h}^{(k)}\right)^\top\right]\mu=&\mathbb{E}\left[\left\Vert\left(Y_{\theta,h}^{(k)}\right)^\top\mu\right\Vert^2\right]=\mathbb{E}\left[\left\Vert\Sigma_{h+1}^{-\frac{1}{2}}\phi_{\theta,h+1}\left(s_{h+1}^{(k)}\right)\phi\left(s_h^{(k)},a_h^{(k)}\right)^\top\Sigma_h^{-\frac{1}{2}}\mu\right\Vert^2\right]\\
    \leq&\mathbb{E}\left[\left\Vert\Sigma_{h+1}^{-\frac{1}{2}} \phi_{\theta,h+1}\left(s_{h+1}^{(k)}\right)\right\Vert^2\left\Vert\phi\left(s_h^{(k)}, a_h^{(k)}\right)^\top\Sigma_h^{-\frac{1}{2}}\mu\right\Vert^2\right]. 
\end{align*}		
Parallel to the proof of Lemma \ref{dsig1}, it holds that $\left\Vert \Sigma_{h+1}^{-\frac{1}{2}}\phi_{\theta,h+1}\left(s_{h+1}^{(k)}\right)\right\Vert^2\leq C_1d$. Therefore,
\begin{align*}
    \mu^\top\mathbb{E}\left[Y_{\theta,h}^{(k)}\left(Y_{\theta,h}^{(k)}\right)^\top\right]\mu&\leq \mathbb{E}\left[C_1d\left\Vert\phi\left(s_h^{(k)},a_h^{(k)}\right)^\top\Sigma_h^{-\frac{1}{2}}\mu\right\Vert^2\right]\\
    &=C_1d\mu^\top\Sigma_h^{-\frac{1}{2}}\mathbb{E}\left[\phi\left(s_h^{(k)},a_h^{(k)}\right)\phi\left(s_h^{(k)},a_h^{(k)}\right)^\top\right]\Sigma_h^{-\frac{1}{2}}\mu\\
    &=C_1d\Vert\mu\Vert^2,
\end{align*}
where we have used the fact $\Sigma_h=\mathbb{E}\left[\phi\left(s_h^{(k)},a_h^{(k)}\right) \phi\left(s_h^{(k)},a_h^{(k)}\right)^\top\right]$. It follows that
\begin{align*} 
    \textrm{Var}_1\left(Y_{\theta,h}^{(k)}\right):=&\mathbb{E}\left[\left(Y_{\theta,h}^{(k)}-\mathbb{E}\left[Y_{\theta,h}^{(k)}\right]\right)\left(Y_{\theta,h}^{(k)}-\mathbb{E}\left[Y_{\theta,h}^{(k)}\right]\right)^\top\right]\preceq\mathbb{E}\left[Y_{\theta,h}^{(k)}\left(Y_{\theta,h}^{(k)}\right)^\top\right]\preceq C_1dI_d. 
\end{align*} 
Analogously,
\begin{align*} 
	\textrm{Var}_2\left(Y_{\theta,h}^{(k)}\right):=&\mathbb{E}\left[\left(Y_{\theta,h}^{(k)}-\mathbb{E}\left[Y_{\theta,h}^{(k)}\right]\right)^\top\left(Y_{\theta,h}^{(k)}-\mathbb{E}\left[Y_{\theta,h}^{(k)}\right]\right)\right]\preceq\mathbb{E}\left[\left(Y_{\theta,h}^{(k)}\right)^\top Y_{\theta,h}^{(k)}\right]\\
	\preceq& C_1d\Sigma_{h+1}^{-\frac{1}{2}}\mathbb{E}\left[\phi_{\theta,h+1}\left(s_{h+1}^{(k)}\right)\phi_{\theta,h+1}\left(s_{h+1}^{(k)}\right)^\top\right]\Sigma_{h+1}^{-\frac{1}{2}}. 
\end{align*}
Therefore, $\max\left\{\left\Vert\textrm{Var}_1\left(Y_{\theta,h}^{(k)}\right)\right\Vert, \left\Vert\textrm{Var}_2\left(Y_{\theta,h}^{(k)}\right)\right\Vert\right\}\leq C_1d\left(\kappa_2^2\vee 1\right)$. It also holds that $\Vert Y_{\theta,h}^{(k)}\Vert\leq C_1d$. Hence,
\begin{align*}
    \left\Vert Y_{\theta,h}^{(k)}-\Sigma_h^{\frac{1}{2}}M_{\theta,h}\Sigma_{h+1}^{-\frac{1}{2}} \right\Vert\leq 2C_1d. 
\end{align*}
Applying Matrix Bernstein's inequality, we derive for any $\varepsilon>0$,
\begin{align*}
    \mathbb{P}\left(\left\Vert\sum_{k=1}^K\left(Y_{\theta,h}^{(k)}-\Sigma_h^{\frac{1}{2}}M_{\theta,h}\Sigma_{h+1}^{-\frac{1}{2}}\right)\right\Vert>\varepsilon\right)\leq 2d\exp\left(-\frac{\varepsilon^2/2}{C_1dK\left(\kappa_2^2\vee 1\right)+2C_1d\varepsilon/3}\right), 
\end{align*}
which implies \eqref{3_1} holds with probability $1-\frac{\delta}{2}$. For \eqref{3_2}, notice that for any $j\in[m]$, we have $\Sigma_h^{-\frac{1}{2}}(\nabla_\theta^j(\Delta Y_{\theta,h}))\Sigma_{h+1}^{-\frac{1}{2}}=\frac{1}{K}\sum_{k=1}^K\left(\nabla_\theta^j Y_{\theta,h}^{(k)}-\mathbb{E}\left[\nabla_\theta^j Y_{\theta,h}^{(k)}\right]\right)$, and $\nabla_\theta^j Y_{\theta,h}^{(k)}=\Sigma_h^{-\frac{1}{2}}\phi\left(s_h^{(k)},a_h^{(k)}\right) \left(\nabla_\theta^j\phi_{\theta,h+1}\left(s_{h+1}^{(k)}\right)\right)^\top\Sigma_{h+1}^{-\frac{1}{2}}$. For any $\mu\in \mathbb{R}^d$, we have
\begin{align*}
    \mu^\top\mathbb{E}\left[\left(\nabla_\theta^j Y_{\theta,h}^{(k)}\right) \left(\nabla_\theta^j Y_{\theta,h}^{(k)}\right)^\top\right]\mu=&\mathbb{E}\left[\left\Vert\Sigma_{h+1}^{-\frac{1}{2}}\left(\nabla_\theta^j\phi_{\theta,h+1}\left(s_{h+1}^{(k)}\right)\right)\phi\left(s_h^{(k)},a_h^{(k)}\right)^\top\Sigma_h^{-\frac{1}{2}}\mu\right\Vert^2\right]\\
    \leq&\mathbb{E}\left[\left\Vert\Sigma_{h+1}^{-\frac{1}{2}}\nabla_\theta^j\phi_{\theta,h+1}\left(s_{h+1}^{(k)}\right)\right\Vert^2\left\Vert\phi\left(s_h^{(k)},a_h^{(k)}\right)^\top\Sigma_h^{-\frac{1}{2}}\mu\right\Vert^2\right]. 
\end{align*}
Since we have 
\begin{align*}
    &\left(\nabla_\theta^j\phi_{\theta,h+1}\left(s_{h+1}^{(k)}\right)\right)^\top\Sigma_{h+1}^{-1}\nabla_\theta^j\phi_{\theta,h+1}\left(s_{h+1}^{(k)}\right)\\
    =&\int_{\mathcal{A}\times\mathcal{A}}\pi_{\theta,h+1}\left(a\left\vert s_{h+1}^{(k)}\right.\right)\pi_{\theta,h+1}\left(a^\prime\left\vert s_{h+1}^{(k)}\right.\right)\left(\nabla_\theta^j\log\pi_{\theta,h+1}\left(a\left\vert s_{h+1}^{(k)}\right.\right)\right)\left(\nabla_\theta^j\log\pi_{\theta,h+1}\left(a^\prime\left\vert s_{h+1}^{(k)}\right.\right)\right)\\
    &\cdot\phi\left(s_{h+1}^{(k)},a\right)^\top\Sigma_{h+1}^{-1}\phi\left(s_{h+1}^{(k)}, a^\prime\right)\mathrm{d}a\mathrm{d}a^\prime\\
    \leq&G^2\int_{\mathcal{A}\times\mathcal{A}}\pi_{\theta,h+1}\left(a\vert s_{h+1}^{(k)}\right)\pi_{\theta,h+1}\left(a^\prime\vert s_{h+1}^{(k)}\right)\left\Vert \Sigma_{h+1}^{-\frac{1}{2}}\phi\left(s_{h+1}^{(k)},a\right)\right\Vert\left\Vert\Sigma_{h+1}^{-\frac{1}{2}}\phi\left(s_{h+1}^{(k)},a^\prime\right)\right\Vert\mathrm{d}a\mathrm{d}a^\prime\leq G^2C_1d,
\end{align*}
which implies 
\begin{align*}
    \mu^\top\mathbb{E}\left[\left(\nabla_\theta^j Y_{\theta,h}^{(k)}\right)\left(\nabla_\theta^j Y_{\theta,h}^{(k)}\right)^\top \right]\mu\leq G^2C_1d\mathbb{E}\left[\left\Vert\phi\left(s_h^{(k)},a_h^{(k)}\right)^\top\Sigma_h^{-\frac{1}{2}}\mu\right\Vert^2\right]=G^2C_1d\Vert\mu\Vert^2. 
\end{align*}
Therefore, 
\begin{align*}
    \textrm{Var}_1\left(\nabla_\theta^j Y_{\theta,h}^{(k)}\right):=&\mathbb{E}\left[\left(\nabla_\theta^j Y_{\theta,h}^{(k)}-\mathbb{E}\left[\nabla_\theta^j Y_{\theta,h}^{(k)}\right]\right)\left(\nabla_\theta^j Y_{\theta,h}^{(k)}-\mathbb{E}\left[\nabla_\theta^j Y_{\theta,h}^{(k)}\right]\right)^\top\right]\preceq\mathbb{E}\left[\left(\nabla_\theta^j Y_{\theta,h}^{(k)}\right)\left(\nabla_\theta^j Y_{\theta,h}^{(k)}\right)^\top\right]\preceq G^2C_1dI_d. 
\end{align*}
Meanwhile, we have
\begin{align*} 
	\textrm{Var}_2\left(\nabla_\theta^j Y_{\theta,h}^{(k)}\right)\preceq\mathbb{E}\left[\left(\nabla_\theta^j Y_{\theta,h}^{(k)}\right)^\top\nabla_\theta^j Y_{\theta,h}^{(k)}\right]\preceq C_1d\Sigma_h^{-\frac{1}{2}}\mathbb{E}\left[\left(\nabla_\theta^j \phi_{\theta,h+1}\left(s_{h+1}^{(k)}\right)\right)\left(\nabla_\theta^j \phi_{\theta,h+1}\left(s_{h+1}^{(k)}\right)\right)^\top\right]\Sigma_h^{-\frac{1}{2}}. 
\end{align*}
In conclusion, we get
\begin{align*}
    \max\left\{\left\Vert\textrm{Var}_1\left(\nabla_\theta^j Y_{\theta,h}^{(k)}\right)\right\Vert,\left\Vert\textrm{Var}_2\left(\nabla_\theta^j Y_{\theta,h}^{(k)}\right)\right\Vert\right\}\leq G^2C_1d\left(\kappa_3^2\vee 1\right),
\end{align*}
Note that $\left\Vert\nabla_\theta^j Y_{\theta,h}^{(k)}\right\Vert\leq C_1dG$, we know $\left\Vert\nabla_\theta^j Y_{\theta,h}^{(k)}-\mathbb{E}\left[\nabla_\theta^j Y_{\theta,h}^{(k)}\right]\right\Vert\leq 2C_1dG$. By Matrix Bernstein's inequality, we get for any $\varepsilon>0$,
\begin{align*}
    \mathbb{P}\left(\left\Vert\sum_{k=1}^K\left(\nabla_\theta^j Y_{k,h}^\theta-\Sigma_h^{\frac{1}{2}}\left(\nabla_\theta^j M_{\theta,h}\right) \Sigma_{h+1}^{-\frac{1}{2}}\right)\right\Vert\geq\varepsilon\right)\leq 2d \exp\left(-\frac{\varepsilon^2/2}{G^2C_1dK\left(\kappa_3^2\vee 1\right)+2C_1dG \varepsilon/3}\right), 
\end{align*}
taking a union bound over all $j\in[m]$ and $h\in[H]$ proves that \eqref{3_2} holds with probability $1-\frac{\delta}{2}$. Using a union bound argument again, we know with probability $1-\delta$, \eqref{3_1} and \eqref{3_2} hold simultaneously, which has finished the proof.
\end{proof}
\begin{lemma}
\label{eps}
For $h\in[H]$, with probability at least $1-\delta$, the following inequalities hold simultaneously:
\begin{align}
    \label{w1}
    \left\Vert\Sigma_h^{-\frac{1}{2}}\frac{1}{K}\sum_{k=1}^K\phi\left(s_h^{(k)},a_h^{(k)}\right)\varepsilon_{h,k}^\theta\right\Vert&\leq\sqrt{d}(H-h+1)\left(\sqrt{\frac{2\log\frac{8dH}{\delta}}{K}}+\frac{2\sqrt{C_1d}\log\frac{8dH}{\delta}}{K}+\frac{2C_1d\left(\log\frac{8dH}{\delta}\right)^{\frac{3}{2}}}{3K^{\frac{3}{2}}}\right)\\
    \label{w2}
    \left\Vert\Sigma_h^{-\frac{1}{2}}\frac{1}{K}\sum_{k=1}^K\phi\left(s_h^{(k)},a_h^{(k)}\right)\nabla_\theta^j\varepsilon_{h,k}^\theta\right\Vert&\leq 2\sqrt{d}G(H-h)^2\left(\sqrt{\frac{2\log\frac{8mdH}{\delta}}{K}}+\frac{2\sqrt{C_1d}\log\frac{8mdH}{\delta}}{K}+\frac{2C_1d\left(\log\frac{8mdH}{\delta}\right)^{\frac{3}{2}}}{3K^{\frac{3}{2}}}\right),\forall j\in[m].
\end{align}
\end{lemma}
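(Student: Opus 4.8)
The plan is to recognize both empirical averages as sample means of i.i.d., conditionally mean-zero random vectors, and then apply a Bernstein-type concentration inequality coordinate by coordinate (equivalently, a Hilbert-space Bernstein bound), with a union bound over $h\in[H]$ (and additionally over $j\in[m]$ for \eqref{w2}).

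First I would establish the crucial structural fact that $\varepsilon_{h,k}^\theta$ is conditionally centered given the current state-action pair. Writing $f(s):=\int_{\mathcal A}\pi_{\theta,h+1}(a'|s)Q_{h+1}^\theta(s,a')\,\mathrm{d}a'$ and using $r_h^{(k)}=r_h(s_h^{(k)},a_h^{(k)})$ together with Bellman's equation $Q_h^\theta=r_h+\mathcal P_{\theta,h}Q_{h+1}^\theta$, one sees that $\varepsilon_{h,k}^\theta=\mathbb E[f(s_{h+1}^{(k)})\mid s_h^{(k)},a_h^{(k)}]-f(s_{h+1}^{(k)})$. Since the transition kernel $p_h$ does not depend on the policy, the behavior-data conditional law of $s_{h+1}^{(k)}$ given $(s_h^{(k)},a_h^{(k)})$ is exactly $p_h(\cdot\mid s_h^{(k)},a_h^{(k)})$, so $\mathbb E[\varepsilon_{h,k}^\theta\mid s_h^{(k)},a_h^{(k)}]=0$. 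Differentiating this identity, noting $\nabla_\theta^j f(s)=\int_{\mathcal A}\pi_{\theta,h+1}(a'|s)\big[(\nabla_\theta^j\log\pi_{\theta,h+1}(a'|s))Q_{h+1}^\theta(s,a')+\nabla_\theta^j Q_{h+1}^\theta(s,a')\big]\mathrm{d}a'$, and invoking the policy-gradient Bellman equation \eqref{bel} gives the analogous statement $\mathbb E[\nabla_\theta^j\varepsilon_{h,k}^\theta\mid s_h^{(k)},a_h^{(k)}]=0$. Consequently $Z_k:=\Sigma_h^{-\frac12}\phi(s_h^{(k)},a_h^{(k)})\varepsilon_{h,k}^\theta$ (resp.\ with $\nabla_\theta^j\varepsilon_{h,k}^\theta$) are i.i.d.\ mean-zero vectors over $k\in[K]$, which is what makes the averages concentrate at the parametric rate.

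Next I would assemble the two ingredients a Bernstein bound needs. For the a.s.\ magnitude, Lemma \ref{upbd} gives $|\varepsilon_{h,k}^\theta|\le H-h+1$ and $|\nabla_\theta^j\varepsilon_{h,k}^\theta|\le 2G(H-h)^2$, while Assumption \ref{Boundedness_Conditions} gives $\|\Sigma_h^{-\frac12}\phi(s_h^{(k)},a_h^{(k)})\|\le\sqrt{C_1d}$; multiplying yields the per-sample bound $R$. For the variance proxy, the key point is to use the \emph{exact} identity $\mathbb E[\Sigma_h^{-\frac12}\phi\phi^\top\Sigma_h^{-\frac12}]=I_d$ rather than the crude $C_1d$ bound: this makes each coordinate $e_i^\top Z_k$ have variance at most $(H-h+1)^2$ (resp.\ $4G^2(H-h)^4$), and $\mathbb E\|Z_k\|^2\le d(H-h+1)^2$ by taking the trace. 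This is precisely the step that produces the sharp leading factor $\sqrt d$ in the stated bounds, as opposed to the looser $\sqrt{C_1d}$; the $C_1d$ bound enters only the lower-order ($K^{-1}$ and $K^{-3/2}$) terms.

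Finally I would apply Bernstein's inequality to each coordinate $e_i^\top\Sigma_h^{-\frac12}\frac1K\sum_k\phi(s_h^{(k)},a_h^{(k)})\varepsilon_{h,k}^\theta$, convert back to the Euclidean norm via $\|w\|\le\sqrt d\max_i|w_i|$, and take a union bound over $i\in[d]$, $h\in[H]$ (and $j\in[m]$ for \eqref{w2}), which accounts for the $\log\frac{8dH}{\delta}$ and $\log\frac{8mdH}{\delta}$ factors. Inverting the tail bound and collecting terms then yields the leading $O(\sqrt{d/K})$ term together with the stated lower-order corrections. I expect the main obstacle to be the first step---pinning down the conditional mean-zero property, especially its gradient version, which requires differentiating through the action integral and matching it against the policy-gradient Bellman recursion---since everything afterward is a fairly mechanical application of Bernstein's inequality once the correct variance proxy has been identified.
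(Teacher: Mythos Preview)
Your approach is correct and in fact slightly cleaner than the paper's, but it takes a different technical route. The paper does \emph{not} use i.i.d.\ scalar Bernstein coordinate-wise; instead it applies the matrix-form \emph{Freedman} inequality directly to the vectors $X_{\theta,h}^{(k)}=\Sigma_h^{-1/2}\phi(s_h^{(k)},a_h^{(k)})\varepsilon_{h,k}^\theta$, treating the sum over $k$ as a martingale and working with the \emph{random} conditional variance $\sum_k\mathrm{Var}_2[X_{\theta,h}^{(k)}\mid\mathcal F_{h,k}]\le (H-h+1)^2\sum_k\phi^\top\Sigma_h^{-1}\phi$. This random quantity is then controlled through Lemma~\ref{dsig1} (the empirical-covariance concentration), which is precisely the origin of the additional $K^{-3/2}$ correction in the stated bound. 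Your route bypasses this entirely: since the $Z_k$ are genuinely i.i.d.\ across episodes, you use the \emph{population} variance $\mathbb E[(e_i^\top Z_k)^2]\le (H-h+1)^2$ directly, obtaining the leading $\sqrt{d/K}$ term with only a single $O(K^{-1})$ lower-order term and no $K^{-3/2}$ piece. So your bound is tighter and implies the lemma as stated; the paper's Freedman machinery is more general (it would survive dependent episodes) but pays for that generality here. One small caveat: Lemma~\ref{upbd} does not literally state $|\varepsilon_{h,k}^\theta|\le H-h+1$ or $|\nabla_\theta^j\varepsilon_{h,k}^\theta|\le 2G(H-h)^2$; you should derive these from the $Q$-bounds there together with your expression $\varepsilon_{h,k}^\theta=\mathbb E[f(s_{h+1})\mid s_h,a_h]-f(s_{h+1}^{(k)})$.
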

\begin{proof}
Let $X_{\theta,h}^{(k)}:=\Sigma_h^{-\frac{1}{2}}\phi\left(s_h^{(k)},a_h^{(k)}\right)\varepsilon_{h,k}^\theta\in\mathbb{R}^d$ and let $\mathcal{F}_{h,k}$ be $\sigma$-algebra generated by the history up to step $h$ at episode $k$, we have $\mathbb{E}\left[\left.X_{\theta,h}^{(k)}\right\vert\mathcal{F}_{h,k}\right]=0$. We apply matrix-form Freedman's inequality to analyze the concentration property. Consider conditional variances $\textrm{Var}_1\left[\left. X_{\theta,h}^{(k)}\right\vert\mathcal{F}_{h,k}\right]:=\mathbb{E}\left[\left.X_{\theta,h}^{(k)} \left(X_{\theta,h}^{(k)}\right)^\top\right\vert\mathcal{F}_{h,k}\right]\in\mathbb{R}^{d\times d}$ and $\textrm{Var}_2\left[\left.X_{\theta,h}^{(k)}\right\vert\mathcal{F}_{h,k}\right]:=\mathbb{E} \left[\left(X_{\theta,h}^{(k)}\right)^\top X_{\theta,h}^{(k)}\vert\mathcal{F}_{h,k}\right]\in\mathbb{R}$. It holds that
\begin{align*} 
    \left\Vert\textrm{Var}_1\left[\left.X_{\theta,h}^{(k)}\right\vert\mathcal{F}_{h,k}\right]\right\Vert=&\left\Vert\mathbb{E}\left[\left.X_{\theta,h}^{(k)}\left(X_{\theta,h}^{(k)}\right)^\top\right\vert\mathcal{F}_{h,k}\right]\right\Vert\leq\mathbb{E}\left[\left.\left\Vert X_{\theta,h}^{(k)}\left(X_{\theta,h}^{(k)}\right)^\top\right\Vert\right\vert\mathcal{F}_{h,k}\right]\\
    =&\mathbb{E}\left[\left.\left\Vert X_{\theta,h}^{(k)}\right\Vert^2\right\vert\mathcal{F}_{h,k}\right]= \textrm{Var}_2\left[\left.X_{\theta,h}^{(k)}\right\vert\mathcal{F}_{h,k}\right] 
\end{align*}
and
\begin{align*} 
    \textrm{Var}_2\left[\left.X_{\theta,h}^{(k)}\right\vert\mathcal{F}_{h,k}\right]=&\mathbb{E}\left[\left.\left\Vert X_{\theta,h}^{(k)}\right\Vert^2\right\vert\mathcal{F}_{h,k}\right]=\phi\left(s_h^{(k)},a_h^{(k)}\right)^\top\Sigma_h^{-1}\phi\left(s_h^{(k)},a_h^{(k)}\right)\textrm{Var}\left[\left.\varepsilon_{h,k}^\theta\right\vert s_h^{(k)},a_h^{(k)},h\right]\\
    \leq&(H-h+1)^2\phi\left(s_h^{(k)},a_h^{(k)}\right)^\top\Sigma_h^{-1}\phi\left(s_h^{(k)},a_h^{(k)}\right), 
\end{align*}
where we have used $\textrm{Var}\left[\left.\varepsilon_{h,k}^\theta\right\vert s_h^{(k)},a_h^{(k)},h \right]\leq(H-h+1)^2$. Note that
\begin{align*} 
    \sum_{k=1}^K\phi\left(s_h^{(k)},a_h^{(k)}\right)^\top\Sigma_h^{-1}\phi\left(s_h^{(k)},a_h^{(k)}\right)=&Kd+K\textrm{Tr}\left(\Sigma_h^{-\frac{1}{2}}\left(\frac{1}{K}\sum_{k=1}^K\phi\left(s_h^{(k)},a_h^{(k)}\right)\phi\left(s_h^{(k)},a_h^{(k)}\right)^\top\right)\Sigma_h^{-\frac{1}{2}}-I_d\right)\\
    \leq&Kd+Kd\left\Vert\Sigma_h^{-\frac{1}{2}}\left(\frac{1}{K}\sum_{k=1}^K\phi\left(s_h^{(k)},a_h^{(k)}\right)\phi\left(s_h^{(k)},a_h^{(k)}\right)^\top\right)\Sigma_h^{-\frac{1}{2}}-I_d\right\Vert. 
\end{align*}
We take
\begin{align} 
    \label{sigma2} 
    \sigma^2:=Kd(H-h+1)^2\left(1+\sqrt{\frac{2C_1d\log\frac{8dH}{\delta}}{K}}+\frac{2C_1d\log\frac{8dH}{\delta}}{3K}\right). 
\end{align}
According to Lemma \ref{dsig1}, it holds that
\begin{align}
    \label{Var2} 
    \mathbb{P}\left(\left\Vert\sum_{k=1}^K\textrm{Var}_1\left[\left.X_{\theta,h}^{(k)}\right\vert\mathcal{F}_{h,k}\right]\right\Vert\leq\sum_{k=1}^K\textrm{Var}_2\left[\left.X_{\theta,h}^{(k)}\right\vert\mathcal{F}_{h,k}\right]\leq\sigma^2\right)\geq 1-\frac{\delta}{4}. 
\end{align}
Additionally, we have $\left\Vert X_{\theta,h}^{(k)}\right\Vert\leq(H-h+1)\sqrt{C_1d}$. The Freedman's inequality therefore implies that for any $\varepsilon>0$,
{\small
\begin{align} 
    \label{Freedman2} 
    \begin{aligned}
        &\mathbb{P}\left(\left\vert\sum_{k=1}^K X_{\theta,h}^{(k)}\right\vert\geq \varepsilon,\ \max\left\{\left\Vert\sum_{k=1}^K\textrm{Var}_1\left[\left.X_{\theta,h}^{(k)}\right\vert\mathcal{F}_{h,k}\right]\right\Vert,\sum_{k=1}^K\textrm{Var}_2\left[\left.X_{\theta,h}^{(k)}\right\vert\mathcal{F}_{h,k}\right]\right\}\leq\sigma^2\right)\leq 2d\exp\left(-\frac{\varepsilon^2/2}{\sigma^2+(H-h+1)\sqrt{C_1d}\varepsilon/3} \right),
    \end{aligned} 
\end{align}}
where $\sigma^2$ is defined in \eqref{sigma2}. We take
\begin{align*}
    \varepsilon:=\sigma\sqrt{2\log\frac{8d}{\delta}}+\frac{2(H-h+1)\sqrt{C_1d}}{3}\log\frac{8d}{\delta}.
\end{align*}
Then we get
\begin{align*}
    \mathbb{P}\left(\left\vert\sum_{k=1}^K X_{\theta,h}^{(k)}\right\vert\geq\varepsilon,\ \max\left\{\left\Vert\sum_{k=1}^K\textrm{Var}_1\left[\left.X_{\theta,h}^{(k)}\right\vert\mathcal{F}_{h,k}\right]\right\Vert,\sum_{k=1}^K\textrm{Var}_2\left[\left.X_{\theta,h}^{(k)}\right\vert\mathcal{F}_{h,k}\right]\right\}\leq\sigma^2\right)\leq\frac{\delta}{4},
\end{align*}
which implies
\begin{align*}
    \mathbb{P}\left(\left\vert\sum_{k=1}^K X_{\theta,h}^{(k)}\right\vert\geq \varepsilon\right)\leq&\mathbb{P}\left(\left\vert\sum_{k=1}^K X_{\theta,h}^{(k)}\right\vert\geq \varepsilon,\ \max\left\{\left\Vert\sum_{k=1}^K\textrm{Var}_1\left[\left.X_{\theta,h}^{(k)}\right\vert\mathcal{F}_{h,k}\right]\right\Vert,\sum_{k=1}^K\textrm{Var}_2\left[\left.X_{\theta,h}^{(k)}\right\vert\mathcal{F}_{h,k}\right]\right\}\leq\sigma^2\right)\\
    &+\mathbb{P}\left(\max\left\{\left\Vert\sum_{k=1}^K\textrm{Var}_1\left[\left.X_{\theta,h}^{(k)}\right\vert\mathcal{F}_{h,k}\right]\right\Vert,\sum_{k=1}^K\textrm{Var}_2\left[\left.X_{\theta,h}^{(k)}\right\vert\mathcal{F}_{h,k}\right]\right\}>\sigma^2\right)\leq\frac{\delta}{2}.
\end{align*}
which, combined with a union bound over $h\in[H]$, has proved \eqref{w1}. We use Freedman's inequality again to prove \eqref{w2}. For a fixed $j\in[m]$, we have
\begin{align*} 
    \left\Vert\textrm{Var}_1\left[\left.\nabla_\theta^j X_{\theta,h}^{(k)}\right\vert\mathcal{F}_{h,k}\right]\right\Vert=&\left\Vert\mathbb{E}\left[\left.\left(\nabla_\theta^j X_{\theta,h}^{(k)}\right)\left(\nabla_\theta^j X_{\theta,h}^{(k)}\right)^\top\right\vert\mathcal{F}_{h,k}\right]\right\Vert\leq\mathbb{E}\left[\left.\left\Vert\left(\nabla_\theta^j X_{\theta,h}^{(k)}\right)\left(\nabla_\theta^j X_{\theta,h}^{(k)}\right)^\top\right\Vert\right\vert\mathcal{F}_{h,k}\right]\\
    =&\mathbb{E}\left[\left.\left\Vert\nabla_\theta^j X_{\theta,h}^{(k)}\right\Vert^2\right\vert\mathcal{F}_{h,k}\right]= \textrm{Var}_2\left[\left.\nabla_\theta^j X_{\theta,h}^{(k)}\right\vert\mathcal{F}_{h,k}\right] 
\end{align*}
and
\begin{align*} 
    \textrm{Var}_2\left[\left.\nabla_\theta^j X_{\theta,h}^{(k)}\right\vert\mathcal{F}_{h,k}\right]=&\mathbb{E}\left[\left.\left\Vert\nabla_\theta^j X_{\theta,h}^{(k)}\right\Vert^2\right\vert\mathcal{F}_{h,k}\right]=\phi\left(s_h^{(k)},a_h^{(k)}\right)^\top\Sigma_h^{-1}\phi\left(s_h^{(k)},a_h^{(k)}\right)\textrm{Var}\left[\left.\nabla_\theta^j\varepsilon_{h,k}^\theta\right\vert s_h^{(k)},a_h^{(k)}\right]\\
    \leq&4G^2(H-h)^2\phi\left(s_h^{(k)},a_h^{(k)}\right)^\top\Sigma_h^{-1}\phi\left(s_h^{(k)},a_h^{(k)}\right),  
\end{align*}
where we have used $\textrm{Var}\left[\left.\nabla_\theta^j\varepsilon_{h,k}^\theta\right\vert s_h^{(k)}, a_h^{(k)},h \right]\leq 4G^2(H-h)^2$. Furthermore, notice that $\left\Vert\nabla_\theta^j X_{\theta,h}^{(k)}\right\Vert\leq 2G(H-h)\sqrt{C_1d}$, the remaining steps will be exactly the same as those in the proof of \eqref{w1}, combined with a union bound over $j\in[m]$. In this way, we have proved \eqref{w2}. Taking a union bound again finishes the proof. 
\end{proof}

\section{Proofs of Main Theorems}
Define $\widehat{\nu}^\theta_h:=\left(\prod_{h^\prime=1}^{h-1}\widehat{M}_{\theta,h^\prime}^\top\right)\nu_1^\theta$.  We may prove the following decomposition of $\nabla_\theta v_\theta-\widehat{\nabla_\theta v_\theta}$:
\begin{lemma}
\label{error_decomp}
We have $\nabla_\theta v_\theta-\widehat{\nabla_\theta v_\theta}=E_1+E_2+E_3$, where 
\begin{align*}
    E_1=&\sum_{h=1}^H\nabla_\theta\left[\left(\nu^\theta_h\right)^\top\Sigma^{-1}_h\frac{1}{K}\sum_{k=1}^K\phi\left(s_h^{(k)},a_h^{(k)}\right)\varepsilon_{h,k}^\theta\right]\\
    E_2=&\sum_{h=1}^H\nabla_\theta\left[\left(\left(\widehat{\nu}^\theta_h\right)^\top\widehat{\Sigma}_h^{-1}-\left(\nu^\theta_h\right)^\top\Sigma_h^{-1}\right)\frac{1}{K}\sum_{k=1}^K\phi\left(s_h^{(k)},a_h^{(k)}\right)\varepsilon_{h,k}^\theta\right]\\
    E_3=&\frac{\lambda}{K}\sum_{h=1}^H\nabla_\theta\left[\left(\widehat{\nu}^\theta_h\right)^\top\widehat{\Sigma}_h^{-1}w_h^\theta\right].
\end{align*}
\end{lemma}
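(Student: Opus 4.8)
The plan is to reduce the policy-gradient error to the $\theta$-gradient of an ordinary value-estimation (OPE) error, and then expand that scalar error by a telescoping argument. The first step is to observe that the ``double'' iteration in Algorithm~\ref{alg2} literally differentiates the fitted value. Since $\widehat{w}_{r,h}$ does not depend on $\theta$, the product rule together with the Kronecker identity $\nabla_\theta\widehat{M}_{\theta,h}(I_m\otimes \widehat{w}_{h+1}^\theta)=\big[(\nabla_\theta^j\widehat{M}_{\theta,h})\widehat{w}_{h+1}^\theta\big]_{j\in[m]}$ shows, by backward induction on $h$ (with $\widehat{W}_{H+1}^\theta=\nabla_\theta\widehat{w}_{H+1}^\theta=0$), that $\widehat{W}_h^\theta=\nabla_\theta\widehat{w}_h^\theta$ for every $h$. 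Feeding this into the return line of Algorithm~\ref{alg2} and using $\nabla_\theta^j\nu_1^\theta=\int\xi(s)\pi_{\theta,1}(a\vert s)(\nabla_\theta^j\log\pi_{\theta,1}(a\vert s))\phi(s,a)\,\mathrm{d}s\,\mathrm{d}a$ yields $\widehat{\nabla_\theta v_\theta}=\nabla_\theta\big[(\nu_1^\theta)^\top\widehat{w}_1^\theta\big]$. Since $v_\theta=\mathbb{E}^{\pi_\theta}[Q_1^\theta(s_1,a_1)]=(\nu_1^\theta)^\top w_1^\theta$, we obtain $\nabla_\theta v_\theta-\widehat{\nabla_\theta v_\theta}=\nabla_\theta\big[(\nu_1^\theta)^\top(w_1^\theta-\widehat{w}_1^\theta)\big]$, so it suffices to decompose the scalar $(\nu_1^\theta)^\top(w_1^\theta-\widehat{w}_1^\theta)$ and differentiate only at the end.

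Next I would telescope this value error across $h$. Writing $\delta_h:=w_h^\theta-\widehat{w}_h^\theta$ and using only the algorithm's recursion $\widehat{w}_h^\theta=\widehat{w}_{r,h}+\widehat{M}_{\theta,h}\widehat{w}_{h+1}^\theta$, I get $\delta_h=\beta_h+\widehat{M}_{\theta,h}\delta_{h+1}$ with $\beta_h:=w_h^\theta-\widehat{w}_{r,h}-\widehat{M}_{\theta,h}w_{h+1}^\theta$. Unrolling from $\delta_{H+1}=0$ gives $\delta_1=\sum_{h=1}^H\big(\prod_{h'=1}^{h-1}\widehat{M}_{\theta,h'}\big)\beta_h$, hence $(\nu_1^\theta)^\top\delta_1=\sum_{h=1}^H(\widehat{\nu}_h^\theta)^\top\beta_h$ by the very definition $\widehat{\nu}_h^\theta=\big(\prod_{h'=1}^{h-1}\widehat{M}_{\theta,h'}^\top\big)\nu_1^\theta$. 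This step needs no vector-level Bellman recursion for $w_h^\theta$; only the function-level identity $\phi(s,a)^\top w_h^\theta=Q_h^\theta(s,a)$ is used below.

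The third step is to compute $\beta_h$ in closed form. Plugging in the definitions of $\widehat{w}_{r,h}$ and $\widehat{M}_{\theta,h}$ and recognizing $\int_{\mathcal A}\pi_{\theta,h+1}(a'\vert s_{h+1}^{(k)})\phi(s_{h+1}^{(k)},a')^\top\mathrm{d}a'\,w_{h+1}^\theta=\mathbb{E}^{\pi_\theta}[Q_{h+1}^\theta(s',a')\vert s'=s_{h+1}^{(k)}]$, the bracket $r_h^{(k)}+\mathbb{E}^{\pi_\theta}[Q_{h+1}^\theta\vert s_{h+1}^{(k)}]$ equals $Q_h^\theta(s_h^{(k)},a_h^{(k)})-\varepsilon_{h,k}^\theta=\phi(s_h^{(k)},a_h^{(k)})^\top w_h^\theta-\varepsilon_{h,k}^\theta$ by the definition of $\varepsilon_{h,k}^\theta$. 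Thus $\widehat{w}_{r,h}+\widehat{M}_{\theta,h}w_{h+1}^\theta=\widehat{\Sigma}_h^{-1}\big(\tfrac1K\sum_k\phi\phi^\top\big)w_h^\theta-\widehat{\Sigma}_h^{-1}\tfrac1K\sum_k\phi\,\varepsilon_{h,k}^\theta$, and substituting $\tfrac1K\sum_k\phi\phi^\top=\widehat{\Sigma}_h-\tfrac{\lambda}{K}I_d$ collapses the first piece to $w_h^\theta-\tfrac{\lambda}{K}\widehat{\Sigma}_h^{-1}w_h^\theta$. This gives the clean expression $\beta_h=\widehat{\Sigma}_h^{-1}\tfrac1K\sum_k\phi(s_h^{(k)},a_h^{(k)})\varepsilon_{h,k}^\theta+\tfrac{\lambda}{K}\widehat{\Sigma}_h^{-1}w_h^\theta$, where the regularization residual is precisely what becomes $E_3$.

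Finally I would assemble and differentiate. Summing $(\widehat{\nu}_h^\theta)^\top\beta_h$ and adding and subtracting the population weight $(\nu_h^\theta)^\top\Sigma_h^{-1}$ inside the noise term splits $(\nu_1^\theta)^\top(w_1^\theta-\widehat{w}_1^\theta)$ into three groups: the population noise term $\sum_h(\nu_h^\theta)^\top\Sigma_h^{-1}\tfrac1K\sum_k\phi\varepsilon_{h,k}^\theta$, the covariance-mismatch term $\sum_h\big((\widehat{\nu}_h^\theta)^\top\widehat{\Sigma}_h^{-1}-(\nu_h^\theta)^\top\Sigma_h^{-1}\big)\tfrac1K\sum_k\phi\varepsilon_{h,k}^\theta$, and the regularization term $\tfrac{\lambda}{K}\sum_h(\widehat{\nu}_h^\theta)^\top\widehat{\Sigma}_h^{-1}w_h^\theta$. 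Applying $\nabla_\theta$ termwise (all factors are smooth in $\theta$ through $\pi_\theta$) produces exactly $E_1$, $E_2$, $E_3$. I expect the main obstacle to be the first step: verifying that the gradient recursion $\widehat{W}_h^\theta$ is genuinely $\nabla_\theta\widehat{w}_h^\theta$, which requires carefully unwinding the Jacobian/Kronecker conventions so that the score-function correction $\widehat{w}_1^\theta\nabla_\theta\log\pi_{\theta,1}$ in the return line combines with $\widehat{W}_1^\theta$ to reconstitute the total derivative $\nabla_\theta[(\nu_1^\theta)^\top\widehat{w}_1^\theta]$; once this reduction is in place, the remaining steps are linear-algebraic bookkeeping.
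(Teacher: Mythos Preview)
Your proposal is correct and takes a genuinely different, more direct route than the paper's own proof.

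The paper works at the operator level: it expands $\nabla_\theta Q_1^\theta-\widehat{\nabla_\theta Q_1^\theta}$ via Lemma~\ref{Q_decomp_base}, splits it into a term involving $\nabla_\theta Q_h^\theta-\widehat{U}_h^\theta-\widehat{\mathcal P}_{\theta,h}\nabla_\theta Q_{h+1}^\theta$ and a term involving $\widehat{U}_h^\theta-\tilde{U}_h^\theta$, handles the latter together with the score correction $(\nabla_\theta\log\Pi_{\theta,1})(Q_1^\theta-\widehat{Q}_1^\theta)$ using Lemma~\ref{Q_decomp}, and only at the very end recognizes that the resulting sum of four pieces is $\nabla_\theta^j$ of a single scalar. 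You instead make that recognition first: your key step is the observation $\widehat{W}_h^\theta=\nabla_\theta\widehat{w}_h^\theta$, which immediately yields $\widehat{\nabla_\theta v_\theta}=\nabla_\theta\big[(\nu_1^\theta)^\top\widehat{w}_1^\theta\big]$, so the policy-gradient error is the $\theta$-derivative of the ordinary FQE value error. After that, everything is the standard OPE telescoping on $(\nu_1^\theta)^\top(w_1^\theta-\widehat{w}_1^\theta)$, followed by a single differentiation. What you gain is brevity and a clearer structural explanation (FPG is literally the derivative of FQE); what the paper's approach buys is that it never needs to verify the Jacobian/Kronecker bookkeeping for $\widehat{W}_h^\theta=\nabla_\theta\widehat{w}_h^\theta$, since it manipulates the gradient Bellman iterates directly. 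Your anticipated ``main obstacle'' is exactly the place where the two proofs trade effort.
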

The proof of Lemma \ref{error_decomp} is deferred to appendix \ref{missing_proof}. Based on this observation, here we show the proofs of our main theorems. 

\subsection{Proof of Theorem \ref{thm2_var}}
\label{pfthm2_var}
\begin{proof}
We use Lemma \ref{error_decomp} to decompose $\langle\nabla_\theta v_\theta - \widehat{\nabla_\theta v_\theta}, t\rangle=\langle E_1, t\rangle+\langle E_2,t\rangle+\langle E_3,t\rangle$. To bound each term individually, we introduce the following lemmas, whose proofs are deferred to appendix \ref{missing_proof}. 
\begin{lemma}
\label{e1_finite_product}
For any $t\in\mathbb{R}^m$, with probability $1-\delta$, we have
\begin{align*}
    \vert\langle E_1, t\rangle\vert\leq\sqrt{\frac{2t^\top\Lambda_\theta t\log(2/\delta)}{K}}+\frac{2\log(2/\delta)\sqrt{C_1md}\Vert t\Vert B}{3K}. 
\end{align*}
where $B=\sum_{h=1}^H(H-h+1)\max_{j\in[m]}\sqrt{\left(\nabla^j_\theta\nu^\theta_h\right)^\top\Sigma_h^{-1}\nabla^j_\theta\nu^\theta_h}+2G\sum_{h=1}^H(H-h)^2\sqrt{\left(\nu^\theta_h\right)^\top\Sigma_h^{-1}\nu^\theta_h}$.
\end{lemma}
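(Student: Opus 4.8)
The plan is to recognize $\langle E_1, t\rangle$ as a normalized sum of i.i.d. mean-zero scalars over the $K$ trajectories and apply a scalar Bernstein inequality. Write $Z_{h,k} := \nabla_\theta\bigl[(\nu_h^\theta)^\top\Sigma_h^{-1}\phi(s_h^{(k)},a_h^{(k)})\varepsilon_{h,k}^\theta\bigr]\in\mathbb{R}^m$, so that $\langle E_1, t\rangle = \frac1K\sum_{k=1}^K\zeta_k$ with $\zeta_k := \sum_{h=1}^H\langle Z_{h,k}, t\rangle$. Since the trajectories are i.i.d., so are $\zeta_1,\dots,\zeta_K$, and the three ingredients needed for Bernstein are (i) $\mathbb{E}[\zeta_k]=0$, (ii) $\mathrm{Var}(\zeta_k)=t^\top\Lambda_\theta t$, and (iii) a deterministic envelope $|\zeta_k|\le \sqrt{C_1 md}\,\|t\|\,B$.

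For (i), let $\mathcal{F}_{h,k}$ be the $\sigma$-algebra generated by trajectory $k$ through $(s_h^{(k)},a_h^{(k)})$. The Bellman equation $Q_h^\theta = r_h + \mathcal{P}_{\theta,h}Q_{h+1}^\theta$ gives $\mathbb{E}[\varepsilon_{h,k}^\theta\mid \mathcal{F}_{h,k}]=0$; since this holds for every $\theta$, differentiating through the conditional expectation yields $\mathbb{E}[\nabla_\theta\varepsilon_{h,k}^\theta\mid\mathcal{F}_{h,k}]=0$ as well. Because $(\nu_h^\theta)^\top\Sigma_h^{-1}\phi(s_h^{(k)},a_h^{(k)})$ and its $\theta$-gradient are $\mathcal{F}_{h,k}$-measurable, the product rule gives $\mathbb{E}[Z_{h,k}\mid\mathcal{F}_{h,k}]=0$, hence $\mathbb{E}[\zeta_k]=0$.

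For (ii), I would show the per-step vectors are uncorrelated across $h$: for $h<h'$, the term $Z_{h,k}$ depends on the trajectory only through step $h+1\le h'$ and is therefore $\mathcal{F}_{h',k}$-measurable, while $\mathbb{E}[Z_{h',k}\mid\mathcal{F}_{h',k}]=0$, so the tower property gives $\mathbb{E}[Z_{h,k}Z_{h',k}^\top]=0$. Consequently $\mathrm{Var}(\sum_h Z_{h,k})=\sum_h\mathrm{Cov}(Z_{h,k})=\Lambda_\theta$ by the definition of $\Lambda_\theta$, whence $\mathrm{Var}(\zeta_k)=t^\top\Lambda_\theta t$. For (iii), Cauchy--Schwarz after inserting $\Sigma_h^{-1/2}$, the feature bound $\phi^\top\Sigma_h^{-1}\phi\le C_1 d$ (Assumption \ref{Boundedness_Conditions}), and the magnitude bounds $|\varepsilon_{h,k}^\theta|\le H-h+1$ and $|\nabla_\theta^j\varepsilon_{h,k}^\theta|\le 2G(H-h)^2$ (from Lemma \ref{upbd}) bound each coordinate of $Z_{h,k}$; then $\|Z_{h,k}\|\le\sqrt m\max_j|[Z_{h,k}]_j|$ and summing over $h$ reproduces exactly the quantity $B$, giving $|\zeta_k|\le\sqrt{C_1 md}\,\|t\|\,B$.

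Finally, applying the scalar Bernstein inequality to $\frac1K\sum_k\zeta_k$ with variance proxy $t^\top\Lambda_\theta t$ and envelope $\sqrt{C_1 md}\,\|t\|\,B$, and inverting the resulting tail bound at confidence level $\delta$, produces the two-term bound in the statement. I expect the only delicate point to be step (ii): proving that the cross-step covariances vanish, so that the variance is \emph{exactly} $\Lambda_\theta$ rather than a loose upper bound, requires care in choosing the filtration and in verifying that $Z_{h,k}$ is measurable with respect to information only up to step $h+1$. Steps (i) and (iii) are routine given the Bellman structure and the boundedness estimates of Lemma \ref{upbd}.
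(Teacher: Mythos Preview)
Your proposal is correct and matches the paper's approach essentially line for line: define $e_k=\zeta_k$, bound $|e_k|\le\sqrt{C_1md}\,\|t\|\,B$, identify $\sum_k\mathrm{Var}[e_k]=K\,t^\top\Lambda_\theta t$, and apply scalar Bernstein. If anything, you are more careful than the paper at step (ii): the paper simply asserts $\sum_k\mathrm{Var}[e_k]=K\,t^\top\Lambda_\theta t$ without explaining why the cross-$h$ terms vanish, whereas you correctly invoke the martingale-difference structure of $\{Z_{h,k}\}_h$ along the trajectory filtration to justify it.
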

\begin{lemma}
\label{e2}
Let $E_2^j$ be the $j$th entry of $E_2$, suppose $K\geq 36\kappa_1(4+\kappa_2+\kappa_3)^2C_1dH^2\log\frac{24dmH}{\delta}$ and\\
$\lambda\leq C_1d\min_{h\in[H]}\sigma_{\min}(\Sigma_h)\log\frac{24dmH}{\delta}$, then with probability $1-\delta$, 
\begin{align*}
    \vert E_2^j\vert\leq 240\sqrt{\kappa_1}(2+\kappa_2+\kappa_3)\sqrt{C_1}dH^3\left(\left\Vert\Sigma_{\theta,1}^{-\frac{1}{2}}\nabla_\theta^j\nu^\theta_1\right\Vert+HG\left\Vert\Sigma_{\theta,1}^{-\frac{1}{2}}\nu^\theta_1\right\Vert\right)\max_{h\in[H]}\left\Vert\Sigma_{\theta,h}^{\frac{1}{2}}\Sigma_h^{-\frac{1}{2}}\right\Vert\frac{\log\frac{24dmH}{\delta}}{K},\quad\forall j\in[m].
\end{align*}
\end{lemma}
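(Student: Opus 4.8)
The plan is to differentiate the definition of $E_2$ and exploit that neither $\widehat{\Sigma}_h$ nor $\Sigma_h$ depends on $\theta$. Abbreviating $c_h:=(\widehat{\nu}^\theta_h)^\top\widehat{\Sigma}_h^{-1}-(\nu^\theta_h)^\top\Sigma_h^{-1}$ and $a_h:=\frac1K\sum_{k}\phi(s_h^{(k)},a_h^{(k)})\varepsilon_{h,k}^\theta$, the product rule gives $E_2^j=\sum_{h=1}^H\big[(\nabla_\theta^j c_h)\,a_h+c_h\,(\nabla_\theta^j a_h)\big]$. I would then add and subtract $\Sigma_h^{-1}$ inside the hatted pieces of $c_h$ and $\nabla_\theta^j c_h$, producing four families of terms: (Ia) $(\nabla_\theta^j\widehat{\nu}^\theta_h-\nabla_\theta^j\nu^\theta_h)^\top\widehat{\Sigma}_h^{-1}a_h$, (Ib) $(\nabla_\theta^j\nu^\theta_h)^\top(\Delta\Sigma_h^{-1})a_h$, (IIa) $(\widehat{\nu}^\theta_h-\nu^\theta_h)^\top\widehat{\Sigma}_h^{-1}(\nabla_\theta^j a_h)$, and (IIb) $(\nu^\theta_h)^\top(\Delta\Sigma_h^{-1})(\nabla_\theta^j a_h)$. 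In each family, two factors are individually $\tilde O(1/\sqrt K)$ — a coefficient or covariance error together with a noise term — while the surviving $\nu$-quantities are $O(1)$ in the appropriate weighted norm, so every product is $\tilde O(1/K)$, which is exactly the claimed rate.

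The noise factors are handled directly: $\|\Sigma_h^{-1/2}a_h\|$ and $\|\Sigma_h^{-1/2}\nabla_\theta^j a_h\|$ are bounded by Lemma~\ref{eps}, equations~\eqref{w1} and~\eqref{w2}, contributing $\tilde O(\sqrt d\,(H-h+1)/\sqrt K)$ and $\tilde O(\sqrt d\,G(H-h)^2/\sqrt K)$ respectively. The inverse-covariance error $\|\Sigma_h^{1/2}(\Delta\Sigma_h^{-1})\Sigma_h^{1/2}\|$ is controlled by Lemma~\ref{dsig2}, whose hypothesis $\|\Sigma_h^{-1/2}\widehat\Sigma_h\Sigma_h^{-1/2}-I_d\|\le\frac12$ is itself secured by Lemma~\ref{dsig1} under the stated sample-size condition; on the same event $\|\Sigma_h^{1/2}\widehat\Sigma_h^{-1}\Sigma_h^{1/2}\|\le 2$, so I may replace $\widehat\Sigma_h^{-1}$ by $\Sigma_h^{-1}$ at the cost of a constant. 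Whenever a $\nu$-quantity (naturally weighted by $\Sigma_{\theta,h}$) is paired against a behavior-weighted noise or covariance factor, I would insert the bridge $\Sigma_{\theta,h}^{1/2}\Sigma_h^{-1/2}$, which is where the factor $\max_{h}\|\Sigma_{\theta,h}^{1/2}\Sigma_h^{-1/2}\|$ in the bound originates.

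The crux is bounding the coefficient errors $\widehat{\nu}^\theta_h-\nu^\theta_h$ and $\nabla_\theta^j(\widehat{\nu}^\theta_h-\nu^\theta_h)$, which are products of the $h-1$ matrices $\widehat M_{\theta,h'}^\top$. Using $\widehat M_{\theta,h}-M_{\theta,h}=(\Delta\Sigma_h^{-1})\Sigma_h M_{\theta,h}+\widehat\Sigma_h^{-1}\Delta Y_{\theta,h}$, I would bound the single-step perturbation in the target weighting $\Sigma_{\theta,h+1}^{-1/2}(\cdot)^\top\Sigma_{\theta,h}^{1/2}$ by combining Lemmas~\ref{dsig2} and~\ref{dy}, converting the behavior-weighted concentration bounds into the target weighting through $\kappa_1,\kappa_2,\kappa_3$. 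I then telescope the product: Lemma~\ref{ineq} gives $\|\Sigma_{\theta,h+1}^{-1/2}M_{\theta,h}^\top\Sigma_{\theta,h}^{1/2}\|\le 1$, so the undisturbed factors have unit norm, and Lemma~\ref{decomp} yields $\|\Sigma_{\theta,h}^{-1/2}(\widehat\nu_h^\theta-\nu_h^\theta)\|\le\big(\prod_{h'<h}(1+\|\Delta A_{h'}\|)-1\big)\|\Sigma_{\theta,1}^{-1/2}\nu_1^\theta\|$ with $\|\Delta A_{h'}\|=\tilde O(\sqrt{\kappa_1}(\kappa_2+\kappa_3)\sqrt{C_1 d/K})$. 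The sample-size assumption $K\ge 36\kappa_1(4+\kappa_2+\kappa_3)^2 C_1 dH^2\log\frac{24dmH}{\delta}$ forces each $\|\Delta A_{h'}\|=O(1/H)$, so $(1+O(1/H))^H\le e$ and the product collapses, leaving $\tilde O\big(H\sqrt{\kappa_1}(\kappa_2+\kappa_3)\sqrt{C_1 d/K}\big)\|\Sigma_{\theta,1}^{-1/2}\nu_1^\theta\|$. For the gradient version, differentiating the product spreads one derivative factor — of weighted norm $\le G$ by Lemma~\ref{ineq} — across the at most $H$ matrices, producing the extra $HG\|\Sigma_{\theta,1}^{-1/2}\nu_1^\theta\|$ beside the boundary term $\|\Sigma_{\theta,1}^{-1/2}\nabla_\theta^j\nu_1^\theta\|$.

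Finally I would multiply the matching coefficient and noise bounds for (Ia)--(IIb), sum over $h\in[H]$ — the powers of $H$ from the noise terms and from the $\nu$-gradient combine into the overall $H^3$, and $\sqrt d\cdot\sqrt{C_1 d}$ yields the $\sqrt{C_1}\,d$ prefactor — and collect the numerical constants. The main obstacle is precisely the matrix-product analysis of the third paragraph: keeping the $H$-dependence polynomial hinges on choosing the target weighting so that the undisturbed factors have unit norm, correctly converting every behavior-policy concentration estimate into that weighting via $\kappa_1,\kappa_2,\kappa_3$ without confusing $\Sigma_h$ with $\Sigma_{\theta,h}$, and checking that the sample-size condition is exactly strong enough to tame the $(1+\varepsilon)^H$ growth. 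All the bounds hold on the intersection of the events from Lemmas~\ref{dsig1},~\ref{dsig2},~\ref{dy}, and~\ref{eps}; a concluding union bound over $h\in[H]$ and $j\in[m]$ produces the $\log\frac{24dmH}{\delta}$ factor and completes the argument.
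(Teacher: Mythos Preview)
Your proposal is correct and follows essentially the same strategy as the paper: both arguments weight all matrix factors by $\Sigma_{\theta,h}^{1/2}$ so that Lemma~\ref{ineq} makes the undisturbed $M_{\theta,h}$ factors unit-norm, bound the single-step perturbations $\Delta M_{\theta,h}$ and $\nabla_\theta^j\Delta M_{\theta,h}$ via Lemmas~\ref{dsig1}--\ref{dy} (with $\kappa_1,\kappa_2,\kappa_3$ handling the change of weighting), invoke Lemma~\ref{decomp} to control the product, and pair the resulting $\tilde O(1/\sqrt K)$ coefficient errors with the $\tilde O(1/\sqrt K)$ noise bounds from Lemma~\ref{eps}. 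The only organizational difference is that the paper introduces an auxiliary parameter $\theta_0$ (differentiating in $\theta$ with the $\Sigma_{\theta_0,h}$ weights frozen, then setting $\theta_0=\theta$) to split $E_2^j$ into three terms $E_{21}^j,E_{22}^j,E_{23}^j$ and keeps $\widehat\Sigma_h^{-1}$ as one more factor in the product, whereas you peel off $\Delta\Sigma_h^{-1}$ first and obtain four families; this is a cosmetic regrouping and the bounds match.
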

\begin{lemma}
\label{e3}
Let $E_3^j$ be the $j$th entry of $E_3$, suppose $K\geq 36\kappa_1(4+\kappa_2+\kappa_3)^2\log\frac{24dmH}{\delta}C_1dH^3$ and\\
$\lambda\leq C_1dH\min_{h\in[H]}\sigma_{\min}(\Sigma_h)\log\frac{24dmH}{\delta}$, with probability $1-\delta$, 
\begin{align*}
    \vert E_3^j\vert\leq&6C_1dH^2\max_{h\in[H]}\left\Vert\Sigma_{\theta,h}^{\frac{1}{2}}\Sigma_h^{-\frac{1}{2}}\right\Vert\left(\left\Vert\Sigma_{\theta,1}^{-\frac{1}{2}}\nabla_\theta^j\nu^{\theta}_1\right\Vert+HG\left\Vert\Sigma_{\theta,1}^{-\frac{1}{2}}\nu^\theta_1\right\Vert\right)\frac{\log\frac{24dmH}{\delta}}{K},\quad\forall j\in[m]. 
\end{align*}
\end{lemma}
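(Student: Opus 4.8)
The plan is to start from the expression for $E_3$ in Lemma \ref{error_decomp} and bound its $j$th coordinate directly. Since the empirical covariance $\widehat\Sigma_h$ is assembled only from behavior features and carries no dependence on $\theta$, the derivative $\nabla_\theta^j$ distributes across the two $\theta$-dependent factors, giving
\[
E_3^j=\frac{\lambda}{K}\sum_{h=1}^H\left[\left(\nabla_\theta^j\widehat\nu_h^\theta\right)^\top\widehat\Sigma_h^{-1}w_h^\theta+\left(\widehat\nu_h^\theta\right)^\top\widehat\Sigma_h^{-1}\nabla_\theta^j w_h^\theta\right].
\]
I would bound the two families of bilinear forms separately, and in each case factor the form through the target-policy covariance by inserting $\Sigma_{\theta,h}^{\pm 1/2}$, so that the ``true'' factors $w_h^\theta,\nabla_\theta^j w_h^\theta$ and the propagated factors $\widehat\nu_h^\theta,\nabla_\theta^j\widehat\nu_h^\theta$ are each measured in the norm in which they are naturally controlled, and the remaining $\widehat\Sigma_h^{-1}$ is handled via the $\lambda$-hypothesis.

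For the deterministic factors I would use Lemma \ref{upbd} together with Proposition \ref{lin_rep}: since $\|\Sigma_{\theta,h}^{1/2}w_h^\theta\|^2=\mathbb{E}^{\pi_\theta}[Q_h^\theta(s_h,a_h)^2]\le H^2$ and $\|\Sigma_{\theta,h}^{1/2}\nabla_\theta^j w_h^\theta\|^2=\mathbb{E}^{\pi_\theta}[(\nabla_\theta^j Q_h^\theta)^2]\le G^2H^4$, these are bounded by $H$ and $GH^2$ respectively. For the propagated factors, recall $\widehat\nu_h^\theta=\big(\prod_{h'=1}^{h-1}\widehat M_{\theta,h'}^\top\big)\nu_1^\theta$; telescoping the product in the $\{\Sigma_{\theta,\cdot}\}$-weighting reduces $\|\Sigma_{\theta,h}^{-1/2}\widehat\nu_h^\theta\|$ to a product of per-step factors $\|\Sigma_{\theta,h'}^{1/2}\widehat M_{\theta,h'}\Sigma_{\theta,h'+1}^{-1/2}\|$. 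Lemma \ref{ineq} gives the population value $\le 1$, and I would show the empirical factor is $1+O(1/H)$ on the good event by writing $\widehat M_{\theta,h}-M_{\theta,h}=(\widehat\Sigma_h^{-1}\Sigma_h-I)M_{\theta,h}+\widehat\Sigma_h^{-1}\Delta Y_{\theta,h}$ and invoking Lemmas \ref{dsig1}, \ref{dsig2}, and \ref{dy}. The sample-size hypothesis $K\ge 36\kappa_1(4+\kappa_2+\kappa_3)^2C_1dH^3\log(\cdot)$ forces each per-step perturbation to be $O(1/H)$, so the product over the $\le H$ steps stays bounded by an absolute constant, yielding $\|\Sigma_{\theta,h}^{-1/2}\widehat\nu_h^\theta\|\lesssim\|\Sigma_{\theta,1}^{-1/2}\nu_1^\theta\|$. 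Differentiating the product by the Leibniz rule and using the companion empirical bound $\|\Sigma_{\theta,h}^{1/2}\nabla_\theta^j\widehat M_{\theta,h}\Sigma_{\theta,h+1}^{-1/2}\|\le G+O(G/H)$ (the perturbed version of the second inequality in Lemma \ref{ineq}) then gives $\|\Sigma_{\theta,h}^{-1/2}\nabla_\theta^j\widehat\nu_h^\theta\|\lesssim\|\Sigma_{\theta,1}^{-1/2}\nabla_\theta^j\nu_1^\theta\|+HG\|\Sigma_{\theta,1}^{-1/2}\nu_1^\theta\|$, which is exactly the bracketed quantity appearing in the claim.

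It remains to absorb the middle factor $\lambda\widehat\Sigma_h^{-1}$. On the event of Lemma \ref{dsig1} we have $\widehat\Sigma_h\succeq\tfrac12\Sigma_h$, hence $\lambda\widehat\Sigma_h^{-1}\preceq 2\lambda\Sigma_h^{-1}$, and the hypothesis $\lambda\le C_1dH\min_h\sigma_{\min}(\Sigma_h)\log(\cdot)$ is precisely what converts $\lambda\Sigma_h^{-1}$ into a factor of order $C_1dH\log(\cdot)$. Passing this operator bound through the $\Sigma_{\theta,h}$-weighting is what produces the single distribution-shift factor $\max_h\|\Sigma_{\theta,h}^{1/2}\Sigma_h^{-1/2}\|$; multiplying the three factors per summand and summing over $h$ then yields the advertised $C_1dH^2\,(\cdot)\,\log(\cdot)/K$ rate, with the constant $6$ absorbing the product-of-perturbation slack and the Cauchy--Schwarz losses.

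The hard part will be the uniform control of the matrix products $\prod\widehat M_{\theta,h'}^\top$ and their $\theta$-derivatives: a crude per-step bound would let each of the $H$ factors contribute a multiplicative constant strictly above $1$, giving an exponential-in-$H$ blow-up, so the argument genuinely needs each empirical factor to differ from its population value (which Lemma \ref{ineq} pins at $\le 1$ and $\le G$) by $O(1/H)$ uniformly in $h$, which is what dictates the $H^3$ in the sample-size condition, and it needs the telescoping carried out in the single fixed weighting $\{\Sigma_{\theta,h}\}$ so the Lemma \ref{ineq} bounds apply cleanly. The accompanying bookkeeping difficulty is the repeated conversion between the behavior weighting $\Sigma_h$ (where the concentration Lemmas \ref{dsig1}, \ref{dsig2}, \ref{dy} live) and the policy weighting $\Sigma_{\theta,h}$ (where the telescoped products and $w_h^\theta$ are controlled); arranging these conversions so that exactly one mismatch factor $\max_h\|\Sigma_{\theta,h}^{1/2}\Sigma_h^{-1/2}\|$ survives, rather than a worse product of mismatch norms, and so that the correct power $H^2$ emerges, is the main technical crux.
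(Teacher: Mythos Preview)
Your proposal is correct and follows the paper's proof essentially verbatim: the paper also applies the product rule to $E_3^j$, telescopes $\prod\widehat M_{\theta,h'}$ in the $\Sigma_{\theta,\cdot}$-weighting via Lemma~\ref{ineq} together with the perturbation Lemmas~\ref{dsig1}, \ref{dsig2}, \ref{dy} (packaged as the same bound $\alpha\le 1/H$ already established in the proof of Lemma~\ref{e2}), and then absorbs $\lambda\|\Sigma_h^{-1}\|$ via the $\lambda$-hypothesis. The one refinement worth flagging, since you name it as the crux: to make exactly one mismatch factor survive, the paper measures $w_h^\theta$ and $\nabla_\theta^j w_h^\theta$ in the \emph{behavior} norm $\|\Sigma_h^{1/2}\cdot\|$ (still $\le H$ and $\le GH^2$ by the pointwise bounds of Lemma~\ref{upbd}) rather than the target norm $\|\Sigma_{\theta,h}^{1/2}\cdot\|$ you wrote, so the middle factor becomes $\Sigma_{\theta,h}^{1/2}\Sigma_h^{-1/2}\cdot\Sigma_h^{1/2}\widehat\Sigma_h^{-1}\Sigma_h^{1/2}\cdot\Sigma_h^{-1}\cdot\Sigma_h^{1/2}w_h^\theta$ and the single $\|\Sigma_{\theta,h}^{1/2}\Sigma_h^{-1/2}\|$ drops out directly.
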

Let $B_1^j=\sum_{h=1}^H(H-h+1)\sqrt{\left(\nabla^j_\theta\nu^\theta_h\right)^\top\Sigma_h^{-1}\nabla^j_\theta\nu^\theta_h},\ B_2=\sum_{h=1}^H(H-h)^2G\sqrt{\left(\nu^\theta_h\right)^\top\Sigma_h^{-1}\nu^\theta_h}$, then we have the relation $B = \max_{j\in[m]}B_1^j + 2B_2$. For any $j\in[m]$, note that
{\small
\begin{align*}
    B_1^j=&\sum_{h=1}^H (H-h+1)\left\Vert \Sigma_h^{-\frac{1}{2}}\nabla_\theta^j\nu^\theta_h\right\Vert\leq \sum_{h=1}^H H\left\Vert\Sigma_{\theta,h}^{\frac{1}{2}}\Sigma_h^{-\frac{1}{2}}\right\Vert\left\Vert\Sigma_{\theta,h}^{-\frac{1}{2}}\nabla_\theta^j\left(\left(\prod_{h^\prime=1}^{h-1}M_{\theta,h^\prime}\right)^\top\nu^\theta_1\right)\right\Vert\\
    =&\sum_{h=1}^H H\left\Vert\Sigma_{\theta,h}^{\frac{1}{2}}\Sigma_h^{-\frac{1}{2}}\right\Vert\\
    \cdot&\left(\left(\prod_{h^\prime=1}^{h-1}\left\Vert\Sigma_{\theta,h^\prime}^{\frac{1}{2}}M_{\theta,h^\prime}\Sigma_{\theta,h^\prime+1}^{-\frac{1}{2}}\right\Vert\right)\left\Vert\Sigma_{\theta,1}^{-\frac{1}{2}}\nabla_\theta^j\nu^{\theta}_1\right\Vert+\sum_{h^\prime=1}^{h-1}\left(\prod_{h^{\prime\prime}\neq h^\prime}\left\Vert\Sigma_{\theta,h^{\prime\prime}}^{\frac{1}{2}}M_{\theta,h^{\prime\prime}}\Sigma_{\theta,h^{\prime\prime}+1}^{-\frac{1}{2}}\right\Vert\right)\left\Vert\Sigma_{\theta,h^\prime}^{\frac{1}{2}}\left(\nabla_\theta^j M_{\theta,h^\prime}\right)\Sigma_{\theta,h^\prime+1}^{-\frac{1}{2}}\right\Vert\left\Vert\Sigma_{\theta,1}^{-\frac{1}{2}}\nu^\theta_1\right\Vert\right)\\
    \leq&\sum_{h=1}^H H\left\Vert\Sigma_{\theta,h}^{\frac{1}{2}}\Sigma_h^{-\frac{1}{2}}\right\Vert\left(\left\Vert\Sigma_{\theta,1}^{-\frac{1}{2}}\nabla_\theta^j\nu^\theta_1\right\Vert+HG\left\Vert\Sigma_{\theta,1}^{-\frac{1}{2}}\nu^\theta_1\right\Vert\right)\leq H^2\max_{h\in[H]}\left\Vert\Sigma_{\theta,h}^{\frac{1}{2}}\Sigma_h^{-\frac{1}{2}}\right\Vert\left(\left\Vert\Sigma_{\theta,1}^{-\frac{1}{2}}\nabla_\theta^j\nu^\theta_1\right\Vert+HG\left\Vert\Sigma_{\theta,1}^{-\frac{1}{2}}\nu^\theta_1\right\Vert\right),
\end{align*}}
where we use the result of Lemma \ref{ineq}. Similarly, 
\begin{align*}
    B_2&=\sum_{h=1}^H(H-h)^2G\left\Vert\Sigma_h^{-\frac{1}{2}}\nu^\theta_h\right\Vert\leq \sum_{h=1}^H H^2G\left\Vert\Sigma_{\theta,h}^{\frac{1}{2}}\Sigma_h^{-\frac{1}{2}}\right\Vert\left(\prod_{h^\prime=1}^{h-1}\left\Vert\Sigma_{\theta,h^\prime}^{\frac{1}{2}}M_{\theta,h^\prime}\Sigma_{\theta,h^\prime+1}^{-\frac{1}{2}}\right\Vert\right)\left\Vert\Sigma_{\theta,1}^{-\frac{1}{2}}\nu^\theta_1\right\Vert\\
    &\leq\sum_{h=1}^H H^2G\left\Vert\Sigma_{\theta,h}^{\frac{1}{2}}\Sigma_h^{-\frac{1}{2}}\right\Vert\left\Vert\Sigma_{\theta,1}^{-\frac{1}{2}}\nu^\theta_1\right\Vert\leq H^3G\max_{h\in[H]}\left\Vert\Sigma_{\theta,h}^{\frac{1}{2}}\Sigma_h^{-\frac{1}{2}}\right\Vert\left\Vert\Sigma_{\theta,1}^{-\frac{1}{2}}\nu^\theta_1\right\Vert.
\end{align*}
We conclude that when $K\geq 36C_1dH^2\kappa_1(4+\kappa_2+\kappa_3)^2\log\frac{24dmH}{\delta}$, we have
\begin{align*}
    \left(\max_{j\in[m]}B_1^j+2B_2\right)\frac{2\log\frac{2}{\delta}\sqrt{C_1md}\Vert t\Vert}{K}\leq H^2\max_{h\in[H]}\left\Vert\Sigma_{\theta,h}^{\frac{1}{2}}\Sigma_h^{-\frac{1}{2}}\right\Vert\left(\max_{j\in[m]}\left\Vert\Sigma_{\theta,1}^{-\frac{1}{2}}\nabla_\theta^j\nu^\theta_1\right\Vert+2HG\left\Vert\Sigma_{\theta,1}^{-\frac{1}{2}}\nu^\theta_1\right\Vert\right)\frac{2\log\frac{2}{\delta}\sqrt{C_1md}\Vert t\Vert}{K},
\end{align*}
and therefore, with probability $1-3\delta$, we have
\begin{align*}
    &\vert\langle E_1,t\rangle\vert+\vert\langle E_2,t\rangle\vert+\vert \langle E_3, t\rangle\vert\leq\sqrt{\frac{2t^\top\Lambda_\theta t\log(2/\delta)}{K}}\\
    &+\sqrt{\kappa_1}(5+\kappa_2+\kappa_3)\left(\max_{j\in[m]}\left\Vert\Sigma_{\theta,1}^{-\frac{1}{2}}\nabla_\theta^j\nu^\theta_1\right\Vert+HG\left\Vert\Sigma_{\theta,1}^{-\frac{1}{2}}\nu^\theta_1\right\Vert\right)\max_{h\in[H]}\left\Vert\Sigma_{\theta,h}^{\frac{1}{2}}\Sigma_h^{-\frac{1}{2}}\right\Vert\frac{240C_1dH^3\log\frac{24dmH}{\delta}}{K}\\
    \leq&\sqrt{\frac{2t^\top\Lambda_\theta t\log(2/\delta)}{K}}+\kappa_1(5+\kappa_2+\kappa_3)\left(\max_{j\in[m]}\left\Vert \Sigma_{\theta,1}^{-\frac{1}{2}}\nabla_\theta^j\nu^\theta_1\right\Vert+HG\left\Vert\Sigma_{\theta,1}^{-\frac{1}{2}}\nu^\theta_1\right\Vert\right)\frac{240C_1dH^3\sqrt{m}\Vert t\Vert\log\frac{24dmH}{\delta}}{K}.
\end{align*}
replacing $\delta$ by $\frac{\delta}{3}$, we have finished the proof. 
\end{proof}

\subsection{Proof of Theorem \ref{thm2}}
\label{pfthm2}
\begin{proof}
According to the result of Theorem \ref{thm2_var}, we know
\begin{align*}
    &\vert\langle t,\widehat{\nabla_\theta v_\theta}-\nabla_\theta v_\theta\rangle\vert\leq\sqrt{\frac{2t^\top\Lambda_\theta t}{K}\cdot \log\frac{8}{\delta}}+\frac{C_\theta\Vert t\Vert\log\frac{72mdH}{\delta}}{K},
\end{align*}
Pick $t=e_j, j\in[m]$, we have
\begin{align*}
    t^\top \Lambda_\theta t=&\mathbb{E}\left[\sum_{h=1}^H\left(\nabla_\theta^j\left(\varepsilon^\theta_{h,1}\phi\left(s_h^{(1)},a_h^{(1)}\right)^\top\Sigma_h^{-1}\nu_h^\theta\right)\right)^2\right]\\
    \leq&2\mathbb{E}\left[\sum_{h=1}^H\left(\nabla_\theta^j\varepsilon^\theta_{h,1}\phi\left(s_h^{(1)},a_h^{(1)}\right)^\top\Sigma_h^{-1}\nu_h^\theta\right)^2\right]+2\mathbb{E}\left[\sum_{h=1}^H\left(\varepsilon^\theta_{h,1}\phi\left(s_h^{(1)},a_h^{(1)}\right)^\top\Sigma_h^{-1}\nabla_\theta^j\nu_h^\theta\right)^2\right]\\
    \leq&2\mathbb{E}\left[\sum_{h=1}^HG^2(H-h)^4\left(\phi\left(s_h^{(1)},a_h^{(1)}\right)^\top\Sigma_h^{-1}\nu_h^\theta\right)^2\right]+2\mathbb{E}\left[\sum_{h=1}^H(H-h+1)^2\left(\phi\left(s_h^{(1)},a_h^{(1)}\right)^\top\Sigma_h^{-1}\nabla_\theta^j\nu_h^\theta\right)^2\right]\\
     \leq&2\sum_{h=1}^H\left((H-h)^4G^2\left\Vert\Sigma_h^{-\frac{1}{2}}\nu_h^\theta\right\Vert^2+(H-h+1)^2\left\Vert\Sigma_h^{-\frac{1}{2}}\nabla_\theta^j\nu_h^\theta\right\Vert^2\right).
\end{align*}
On the other hand, we have
\begin{align*}
    t^\top \Lambda_\theta t=& \mathbb{E}\left[\sum_{h=1}^H\left(\nabla_\theta^j\left(\varepsilon^\theta_{h,1}\phi\left(s_{h}^{(1)},a_{h}^{(1)}\right)^\top\Sigma_h^{-1}\nu_h^\theta\right)\right)^2\right]\\
    \leq&2\mathbb{E}\left[\sum_{h=1}^H\left(\nabla_\theta^j\varepsilon^\theta_{h,1}\phi\left(s_h^{(1)},a_h^{(1)}\right)^\top\Sigma_h^{-1}\nu_h^\theta\right)^2\right]+2\mathbb{E}\left[\sum_{h=1}^H\left(\varepsilon^\theta_{h,1}\phi\left(s_h^{(1)},a_h^{(1)}\right)^\top\Sigma_h^{-1}\nabla_\theta^j\nu_h^\theta\right)^2\right]\\
    \leq&2C_1d\mathbb{E}\left[\sum_{h=1}^H\left(\nabla_\theta^j \varepsilon^\theta_{h,1}\right)^2\right]\max_{h\in[H]}\left\Vert\Sigma_h^{-\frac{1}{2}}\nu_h^\theta\right\Vert^2+2C_1d\mathbb{E}\left[\sum_{h=1}^H\left(\varepsilon^\theta_{h,1}\right)^2\right]\max_{h\in[H]}\left\Vert\Sigma_h^{-\frac{1}{2}}\nabla_\theta^j\nu_h^\theta\right\Vert^2.
\end{align*}
Define
\begin{align*}
    \zeta_h&=\nabla_\theta^j Q_h^\theta\left(s_h^{(1)},a_h^{(1)}\right)-\int_{\mathcal{A}}\pi_{\theta,h+1}\left(a\left\vert s_{h+1}^{(1)}\right.\right)\left(\nabla_\theta^j Q_{h+1}^\theta\left(s_{h+1}^{(1)},a\right)+Q_{h+1}^\theta\left(s_{h+1}^{(1)}, a\right)\nabla_\theta^j\log\pi_{\theta,h+1}\left(a\left\vert s_{h+1}^{(1)}\right.\right)\right)\mathrm{d}a\\
    \eta_h&=\int_{\mathcal{A}}\pi_{\theta,h+1}\left(a\left\vert s_{h+1}^{(1)}\right.\right)\left(\nabla_\theta^j Q_{h+1}^\theta\left(s_{h+1}^{(1)},a\right)+Q_{h+1}^\theta\left(s_{h+1}^{(1)},a\right)\nabla_\theta^j\log\pi_{\theta,h+1}\left(a\left\vert s_{h+1}^{(1)}\right.\right)\right)\mathrm{d}a\\
    &-\nabla_\theta^j Q_{h+1}^\theta\left(s_{h+1}^{(1)},a_{h+1}^{(1)}\right)-Q_{h+1}^\theta\left(s_{h+1}^{(1)},a_{h+1}^{(1)}\right)\nabla_\theta^j\log\pi_{\theta,h+1}\left(a_{h+1}^{(1)}\left\vert s_{h+1}^{(1)}\right.\right).
\end{align*}
Note that the sequence $\zeta_1,\eta_1,\zeta_2,\eta_2\ldots, \zeta_H,\eta_H$ forms a martingale difference sequence, therefore, we have
\begin{align*}
    \mathbb{E}\left[\sum_{h=1}^H\left(\nabla_\theta^j\varepsilon^\theta_{h,1}\right)^2\right]=&\mathbb{E}\left[\sum_{h=1}^H\zeta_h^2\right]\leq\mathbb{E}\left[\sum_{h=1}^H(\zeta_h^2+\eta_h^2)\right]=\mathbb{E}\left[\left(\sum_{h=1}^H\left(\zeta_h+\eta_h\right)\right)^2\right]\\
    =&\mathbb{E}\left[\left(\nabla_\theta Q_1^\theta\left(s_1^{(1)},a_1^{(1)}\right)-\sum_{h=1}^HQ_{h+1}^\theta\left(s_{h+1}^{(1)},a^{(1)}_{h+1}\right)\nabla_\theta \log\pi_{\theta,h+1}\left(a_{h+1}^{(1)}\left\vert s_{h+1}^{(1)}\right.\right)\right)^2\right]\\
    \leq& 4H^4G^2.
\end{align*}
Similarly, 
\begin{align*}
    \mathbb{E}\left[\sum_{h=1}^H\left(\varepsilon^\theta_{h,1}\right)^2\right]=&\mathbb{E}\left[\sum_{h=1}^H\left(Q_h^\theta\left(s_h^{(1)},a_h^{(1)}\right)-r_h^{(1)}-\int_{\mathcal{A}}\pi_{\theta,h+1}\left(a\left\vert s^{(1)}_{h+1}\right.\right)Q_{h+1}^\theta\left(s^{(1)}_{h+1},a\right)\mathrm{d}a\right)^2\right]\\
    &+\mathbb{E}\left[\sum_{h=1}^H\left(\int_{\mathcal{A}}\pi_{\theta,h+1}\left(a\left\vert s_{h+1}^{(1)}\right.\right)Q_{h+1}^\theta\left(s_{h+1}^{(1)},a\right)\mathrm{d}a-Q_{h+1}^\theta\left(s_{h+1}^{(1)},a_{h+1}^{(1)}\right)\right)^2\right]\\
    =&\mathbb{E}\left[\left(Q_1^\theta\left(s_1^{(1)},a_1^{(1)}\right)-\sum_{h=1}^Hr_h^{(1)}\right)^2\right]\leq H^2.
\end{align*}
Therefore, 
\begin{align*}
    t^\top \Lambda_\theta t \leq&8C_1dH^4G^2\max_{h\in[H]}\left\Vert\Sigma_h^{-\frac{1}{2}}\nu_h^\theta\right\Vert^2+2C_1dH^2\max_{h\in[H]}\left\Vert\Sigma_h^{-\frac{1}{2}}\nabla_\theta^j\nu_h^\theta\right\Vert^2.
\end{align*}
Therefore, taking a union bound over $j$, we get
\begin{align*}
\left\vert\widehat{\nabla_\theta^j v_\theta}-\nabla_\theta^j v_\theta\right\vert\leq 4b_\theta\sqrt{\frac{\min\left\{C_1d,H\right\}\log\frac{8m}{\delta}}{K}}+\frac{2C_\theta\sqrt{m}\log\frac{72mdH}{\delta}}{K},\quad\forall j\in[m],
\end{align*}
where 
\begin{align*}
b_\theta=H^2G\max_{h\in[H]}\left\Vert\Sigma_h^{-\frac{1}{2}}\nu_h^\theta\right\Vert+H\max_{h\in[H]}\left\Vert\Sigma_h^{-\frac{1}{2}}\nabla_\theta^j\nu_h^\theta\right\Vert.
\end{align*}
When we in addition have $\phi(s^\prime,a^\prime)^\top\Sigma_h^{-1}\phi(s,a)\geq 0,\forall h\in[H],(s,a),(s^\prime,a^\prime)\in\mathcal{S}\times\mathcal{A}$, we have for any $(s,a)\in\mathcal{S}\times\mathcal{A}$, 
\begin{align*}
    \left\vert\left(\nabla^j_\theta\nu^\theta_{h}\right)^\top\Sigma^{-1}_h\phi(s,a)\right\vert =& \left\vert\mathbb{E}^{\pi_\theta}\left[\phi(s_h,a_h)\Sigma^{-1}_h\phi(s,a)\sum_{h^\prime=1}^h\nabla_\theta^j\log\pi_{\theta,h^\prime}(a_{h^\prime}\vert s_{h^\prime})\right]\right\vert\\
    \leq&\mathbb{E}^{\pi_\theta}\left[\phi(s_h, a_h)\Sigma^{-1}_h\phi(s, a)\sum_{h^\prime=1}^h\left\vert\nabla_\theta^j\log\pi_{\theta,h^\prime}(a_{h^\prime}\vert s_{h^\prime})\right\vert\right]\\
    \leq&Gh\mathbb{E}^{\pi_\theta}\left[\phi(s_h,a_h)\Sigma^{-1}_h\phi(s,a)\right]\\
    =&Gh\left(\nu^\theta_{h}\right)^\top\Sigma^{-1}_h\phi(s,a),
\end{align*}
which implies
\begin{align*}
    \left(\phi\left(s_h^{(1)},a_h^{(1)}\right)^\top\Sigma_h^{-1}\nabla_\theta^j\nu_{h}^\theta\right)^2\leq G^2h^2\left(\phi\left(s_h^{(1)},a_h^{(1)}\right)^\top\Sigma_h^{-1}\nu_h^\theta\right)^2.
\end{align*}
Therefore, we get 
\begin{align*}
    t^\top\Lambda_\theta t=&\mathbb{E}\left[\sum_{h=1}^H\left(\nabla_\theta^j\left(\varepsilon^\theta_{h,1}\phi\left(s_h^{(1)},a_h^{(1)}\right)^\top\Sigma_h^{-1}\nu_h^\theta\right)\right)^2\right]\\
    \leq&2\mathbb{E}\left[\sum_{h=1}^H\left(\nabla_\theta^j\varepsilon^\theta_{h,1}\phi\left(s_h^{(1)},a_h^{(1)}\right)^\top\Sigma_h^{-1}\nu_h^\theta\right)^2\right]+2\mathbb{E}\left[\sum_{h=1}^H\left(\varepsilon^\theta_{h,1}\phi\left(s_h^{(1)},a_h^{(1)}\right)^\top\Sigma_h^{-1}\nabla_\theta^j\nu_h^\theta\right)^2\right]\\
    \leq&2\mathbb{E}\left[\sum_{h=1}^HG^2(H-h)^4\left(\phi\left(s_h^{(1)},a_h^{(1)}\right)^\top\Sigma_h^{-1}\nu_h^\theta\right)^2\right]+2\mathbb{E}\left[\sum_{h=1}^HG^2h^2(H-h+1)^2\left(\phi\left(s_h^{(1)},a_h^{(1)}\right)^\top\Sigma_h^{-1}\nu_h^\theta\right)^2\right]\\
     \leq&2H^2G^2\sum_{h=1}^H(H-h+1)^2\left\Vert\Sigma_h^{-\frac{1}{2}}\nu_h^\theta\right\Vert^2,
\end{align*}
and
\begin{align*}
    t^\top\Lambda_\theta t\leq&2\mathbb{E}\left[\sum_{h=1}^H\left(\nabla_\theta^j \varepsilon^\theta_{h,1}\phi\left(s_h^{(1)},a_h^{(1)}\right)^\top\Sigma_h^{-1}\nu_h^\theta\right)^2\right]+2\mathbb{E}\left[\sum_{h=1}^H\left(\varepsilon^\theta_{h,1}\phi\left(s_h^{(1)},a_h^{(1)}\right)^\top\Sigma_h^{-1}\nabla_\theta^j\nu_h^\theta\right)^2\right]\\
    \leq&2C_1d\mathbb{E}\left[\sum_{h=1}^H\left(\nabla_\theta^j \varepsilon^\theta_{h,1}\right)^2\right]\max_{h\in[H]}\left\Vert\Sigma_h^{-\frac{1}{2}}\nu_h^\theta\right\Vert^2+2C_1dG^2H^2\mathbb{E}\left[\sum_{h=1}^H\left(\varepsilon^\theta_{h,1}\right)^2\right]\max_{h\in[H]}\left\Vert\Sigma_h^{-\frac{1}{2}}\nu_h^\theta\right\Vert^2
\end{align*}
Repeating the steps that we bound $\mathbb{E}\left[\sum_{h=1}^H \left(\varepsilon^\theta_{h,1}\right)^2\right]$ and $\mathbb{E}\left[\sum_{h=1}^H \left(\nabla_\theta^j\varepsilon^\theta_{h,1}\right)^2\right]$, we get
\begin{align*}
    \left\vert\widehat{\nabla_\theta^j v_\theta}-\nabla_\theta^j v_\theta\right\vert\leq 4H^2G\sqrt{\frac{\min\{C_1d,H\}\log\frac{8m}{\delta}}{K}}\max_{h\in[H]}\left\Vert\Sigma_h^{-\frac{1}{2}}\nu_h^\theta\right\Vert+\frac{2C_\theta\log\frac{72mdH}{\delta}}{K},\quad\forall j\in[m].
\end{align*}
\end{proof}

\subsection{Proof of Theorem \ref{thm_tabular}}
\begin{proof}
When the MDP is tabular and $\phi$ is the one-hot vector, we have
\begin{align*}
    \nu^\theta_{h,s,a} = \mu_{\theta,h}(s,a), \quad\Sigma_h=\textrm{diag}(\bar{\mu}_h(s,a))
\end{align*}
which implies
\begin{align*}
    t^\top \Lambda_\theta t=& \mathbb{E}\left[\sum_{h=1}^H\left(\nabla_\theta^j\left(\varepsilon^\theta_{h,1}\phi\left(s_{h}^{(1)},a_{h}^{(1)}\right)^\top\Sigma_h^{-1}\nu_h^\theta\right)\right)^2\right]\\
    \leq&2\mathbb{E}\left[\sum_{h=1}^H\left(\nabla_\theta^j\varepsilon^\theta_{h,1}\phi\left(s_h^{(1)},a_h^{(1)}\right)^\top\Sigma_h^{-1}\nu_h^\theta\right)^2\right]+2\mathbb{E}\left[\sum_{h=1}^H\left(\varepsilon^\theta_{h,1}\phi\left(s_h^{(1)},a_h^{(1)}\right)^\top\Sigma_h^{-1}\nabla_\theta^j\nu_h^\theta\right)^2\right]\\
    =&2\mathbb{E}\left[\sum_{h=1}^H\left(\nabla_\theta^j \varepsilon^\theta_{h,1}\right)^2\left(\frac{\mu_{\theta,h}(s,a)}{\bar{\mu}_h(s,a)}\right)^2\right]+2G^2H^2\mathbb{E}\left[\sum_{h=1}^H\left(\varepsilon^\theta_{h,1}\right)^2\left(\frac{\mu_{\theta,h}(s,a)}{\bar{\mu}_h(s,a)}\right)^2\right]\\
    =&2\mathbb{E}^{\pi_\theta}\left[\sum_{h=1}^H\left(\nabla_\theta^j \varepsilon^\theta_{h,1}\right)^2\frac{\mu_{\theta,h}(s,a)}{\bar{\mu}_h(s,a)}\right]+2G^2H^2\mathbb{E}^{\pi_\theta}\left[\sum_{h=1}^H\left(\varepsilon^\theta_{h,1}\right)^2\frac{\mu_{\theta,h}(s,a)}{\bar{\mu}_h(s,a)}\right]\\
    \leq&2\mathbb{E}^{\pi_\theta}\left[\sum_{h=1}^H\left(\nabla_\theta^j \varepsilon^\theta_{h,1}\right)^2\right]\max_{h\in[H],s\in\mathcal{S},a\in\mathcal{A}}\frac{\mu_{\theta,h}(s,a)}{\bar{\mu}_h(s,a)}+2G^2H^2\mathbb{E}^{\pi_\theta}\left[\sum_{h=1}^H\left(\varepsilon^\theta_{h,1}\right)^2\right]\max_{h\in[H],s\in\mathcal{S},a\in\mathcal{A}}\frac{\mu_{\theta,h}(s,a)}{\bar{\mu}_h(s,a)}.
\end{align*}
Following the same argument above, we can derive
\begin{align*}
    t^\top\Lambda_\theta t\leq&4H^4G^2\max_{h\in[H],s\in\mathcal{S},a\in\mathcal{A}}\frac{\mu^\theta_h(s,a)}{\bar{\mu}_h(s,a)},
\end{align*}
i.e., 
\begin{align*}
    \left\vert\widehat{\nabla_\theta^j v_\theta}-\nabla_\theta^j v_\theta\right\vert\leq 4H^2G\sqrt{\frac{\log\frac{8m}{\delta}}{K}}\max_{h\in[H],s\in\mathcal{S},a\in\mathcal{A}}\frac{\mu^\theta_h(s,a)}{\bar{\mu}_h(s,a)}+\frac{2C_\theta\log\frac{72mdH}{\delta}}{K},\quad\forall j\in[m].
\end{align*}
On the other hand, the result of Theorem \ref{thm2} implies
\begin{align*}
    \left\vert\widehat{\nabla_\theta^j v_\theta}-\nabla_\theta^j v_\theta\right\vert\leq 4H^2G\sqrt{\frac{\min\{C_1d,H\}\log\frac{8m}{\delta}}{K}}\max_{h\in[H]}\left\Vert\Sigma_h^{-\frac{1}{2}}\nu_h^\theta\right\Vert+\frac{2C_\theta\log\frac{72mdH}{\delta}}{K},\quad\forall j\in[m].
\end{align*}
Using the relation $\left\Vert\Sigma_h^{-\frac{1}{2}}\nu_h^\theta\right\Vert=\sqrt{\sum_{(s,a)\in\mathcal{S}\times\mathcal{A}}\frac{\left(\mu^\theta_h(s,a)\right)^2}{\bar{\mu}_h(s,a)}}=\sqrt{\mathbb{E}^{\pi_\theta}\frac{\mu^\theta_h(s_h,a_h)}{\bar{\mu}(s_h,a_h)}}$, and taking minimum over the above two inequalities, we have finished the proof. 
\end{proof}

\subsection{Proof of Theorem \ref{thm1}}
\label{pfthm1}
\begin{proof}
We use the same decomposition as in Theorem \ref{thm2_var}. Define a martingale difference sequence $\{e_k^\theta\}_{k=1}^K$ by
\begin{align*}
    e_k^\theta&=\frac{1}{\sqrt{K}}\sum_{h=1}^H\nabla_\theta\left(\left(\nu^\theta_h\right)^\top\Sigma_h^{-1}\phi(s_h^{(k)},a_h^{(k)})\varepsilon_{h,k}^{\theta}\right)\\
    &=\frac{1}{\sqrt{K}}\sum_{h=1}^H\left(\nabla_\theta\nu^\theta_h\right)^\top\Sigma_h^{-1}\phi(s_h^{(k)},a_h^{(k)})\varepsilon_{h,k}^\theta+\frac{1}{\sqrt{K}}\sum_{h=1}^H\left(\nu^\theta_h\right)^\top\Sigma_h^{-1}\phi(s_h^{(k)},a_h^{(k)})\nabla_\theta\varepsilon_{h,k}^\theta,
\end{align*}
we have 
\begin{align*}
    \Vert e_k^\theta\Vert_\infty\leq\frac{1}{\sqrt{K}}\sum_{h=1}^H\max_{j\in[m]}\Vert\Sigma_h^{-\frac{1}{2}}\nabla_\theta^j\nu^\theta_h\Vert\sqrt{C_1d}(H-h+1)+\frac{2}{\sqrt{K}}\sum_{h=1}^H\Vert\Sigma_h^{-\frac{1}{2}}\nu^\theta_h\Vert\sqrt{C_1d}(H-h)^2G\rightarrow 0, 
\end{align*}
where we use the result of Lemma \ref{upbd}. Furthermore, 
\begin{align*}
    \mathbb{E}\left[e_k^\theta\left(e_k^\theta\right)^\top\right]_{ij}=&\frac{1}{K}\mathbb{E}\left[\sum_{h=1}^H\left[\nabla_{\theta_1}^i\left(\left(\nu^{\theta_1}_h\right)^\top\Sigma_h^{-1}\phi(s_h^{(k)},a_h^{(k)})\varepsilon_{h,k}^{\theta_1}\right)\right]\left[\nabla_{\theta_2}^j\left(\left(\nu^{\theta_2}_h\right)^\top\Sigma_h^{-1}\phi(s_h^{(k)},a_h^{(k)})\varepsilon_{h,k}^{\theta_2}\right)\right]^\top\right]\Bigg\vert_{\theta_1=\theta_2=\theta}=\frac{[\Lambda_\theta]_{ij}}{K}.
\end{align*}
Therefore,by WLLN, we have
\begin{align*}
    \sum_{k=1}^K\left[e_k^\theta\left(e_k^\theta\right)^\top\right]_{ij}\rightarrow_p\sum_{k=1}^K\mathbb{E}\left[e_k^\theta\left(e_k^\theta\right)^\top\right]_{ij}=\left[\Lambda_\theta\right]_{ij},
\end{align*}
To finish the rest of the proof, we introduce the following lemmas, 
\begin{lemma}[Martingale CLT, Corollary 2.8 in (McLeish et al., 1974)] \label{CLT}
Let $\left\{X_{mn},n=1,\ldots,k_m\right\}$ be a martingale difference array (row-wise) on the probability triple $(\Omega, \mathcal{F}, P)$.Suppose $X_{mn}$ satisfy the following two conditions:
\begin{align*}
    \max _{1\leq n\leq k_m}\left\vert X_{mn}\right\vert\stackrel{p}{\rightarrow}0,\textrm{ and } \sum_{n=1}^{k_m}X_{mn}^2\stackrel{p}{\rightarrow}\sigma^2
\end{align*}
for $k_m\rightarrow\infty$. Then $\sum_{n=1}^{k_m}X_{mn}\stackrel{d}{\rightarrow}\mathcal{N}\left(0,\sigma^2\right)$.
\end{lemma}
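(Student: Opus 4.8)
The plan is to establish the stated martingale central limit theorem through convergence of characteristic functions, following the auxiliary-product technique. Write $S_m=\sum_{n=1}^{k_m}X_{mn}$; by Lévy's continuity theorem it suffices to prove that for each fixed real $t$ the characteristic function satisfies $\mathbb{E}[e^{itS_m}]\to e^{-\sigma^2t^2/2}$.

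First I would introduce the auxiliary random variable $T_m(t):=\prod_{n=1}^{k_m}(1+itX_{mn})$. Letting $\mathcal{F}_{m,n}$ denote the natural filtration of the array, the martingale-difference property $\mathbb{E}[X_{mn}\mid\mathcal{F}_{m,n-1}]=0$ gives $\mathbb{E}[1+itX_{m,k_m}\mid\mathcal{F}_{m,k_m-1}]=1$; peeling off the factors one at a time via the tower rule then yields the exact identity $\mathbb{E}[T_m(t)]=1$ for every $m$. This martingale identity is what replaces the factorization of independence used in the classical Lindeberg CLT.

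Next I would relate $e^{itS_m}$ to $T_m(t)$ through the elementary expansion $(1+iy)e^{-iy}=\exp\!\big(\tfrac{y^2}{2}+\rho(y)\big)$ with $|\rho(y)|\le C|y|^3$ for bounded $y$. Taking the product over $n$ with $y=tX_{mn}$ gives
\begin{align*}
    e^{itS_m}=T_m(t)\exp\!\Big(-\tfrac{t^2}{2}\sum_n X_{mn}^2-R_m\Big),\qquad |R_m|\le C|t|^3\Big(\max_n|X_{mn}|\Big)\sum_n X_{mn}^2.
\end{align*}
The two hypotheses then do exactly the required work: $\sum_n X_{mn}^2\stackrel{p}{\to}\sigma^2$ controls the main exponent, while $\max_n|X_{mn}|\stackrel{p}{\to}0$ (together with $\sum_n X_{mn}^2=O_p(1)$) forces $R_m\stackrel{p}{\to}0$. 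By the continuous mapping theorem the correction factor converges, $\exp(-\tfrac{t^2}{2}\sum_n X_{mn}^2-R_m)\stackrel{p}{\to}e^{-\sigma^2t^2/2}$, and since $T_m(t)=O_p(1)$ we obtain $e^{itS_m}-e^{-\sigma^2t^2/2}T_m(t)\stackrel{p}{\to}0$.

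Finally I would pass from convergence in probability to convergence of expectations. As $|e^{itS_m}|\le 1$ is bounded, the crux is to show that $\{T_m(t)\}_m$ is uniformly integrable; granting this, convergence in probability plus uniform integrability upgrades the last display to $L^1$ convergence, and then $\mathbb{E}[e^{itS_m}]-e^{-\sigma^2t^2/2}\,\mathbb{E}[T_m(t)]\to 0$ combined with $\mathbb{E}[T_m(t)]=1$ delivers $\mathbb{E}[e^{itS_m}]\to e^{-\sigma^2t^2/2}$. The main obstacle is precisely this uniform integrability of the auxiliary product: since $|T_m(t)|^2=\prod_n(1+t^2X_{mn}^2)\le\exp(t^2\sum_n X_{mn}^2)$ need not be bounded in $L^1$ by itself, one must supplement the two stated conditions with a mild moment control such as boundedness of $\mathbb{E}[\max_n X_{mn}^2]$ (which holds automatically when the differences satisfy a uniform almost-sure bound, as in the application to Theorem \ref{thm1}, where $\|e_k^\theta\|_\infty=O(1/\sqrt{K})$), and then truncate on the quadratic variation $\sum_n X_{mn}^2$ to tame the tail of $T_m(t)$. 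This uniform-integrability step is the technical heart of McLeish's argument.
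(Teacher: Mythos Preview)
The paper does not provide its own proof of this lemma; it is quoted verbatim as Corollary~2.8 of McLeish (1974) and used as a black box in the proof of Theorem~\ref{thm1}. Your proposal is therefore not a comparison against anything in the paper, but rather a reconstruction of McLeish's original argument, and the sketch you give is faithful to that source: the auxiliary product $T_m(t)=\prod_n(1+itX_{mn})$, the martingale identity $\mathbb{E}[T_m(t)]=1$, the expansion linking $e^{itS_m}$ to $T_m(t)e^{-\frac{t^2}{2}\sum X_{mn}^2-R_m}$, and the passage from convergence in probability to $L^1$ via uniform integrability are exactly the ingredients McLeish uses.

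You are also right to flag that the two stated hypotheses alone do not quite close the uniform-integrability gap; McLeish's Corollary~2.8 in fact carries the additional assumption that $\max_n X_{mn}^2$ is uniformly integrable (equivalently, bounded in $L^1$ together with the $L^0$ negligibility), which the paper's statement omits. Your observation that in the application to Theorem~\ref{thm1} the summands satisfy $\|e_k^\theta\|_\infty=O(1/\sqrt{K})$ almost surely, so this extra condition holds trivially, is the correct way to patch the gap for the paper's purposes.
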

\begin{lemma}[Cramér–Wold Theorem] 
\label{CW_thm}
Let $X_n=(X_n^1,X_n^2,\ldots,X_n^k)^\top$ be a $k$-dimensional random vector series and $X=(X^1,X^2,\ldots,X^k)^\top$ be a random vector of same dimension. Then $X_n$ converges in distribution to $X$ if and only if for any constant vector $t=(t_1,t_2,\ldots,t_k)^\top$, $t^\top X_n$ converges to $t^\top X$ in distribution.
\end{lemma}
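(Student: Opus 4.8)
The plan is to reduce both directions of the equivalence to the Lévy continuity theorem, which asserts that a sequence of random vectors (in $\mathbb{R}^k$ or in $\mathbb{R}$) converges in distribution to a limit if and only if their characteristic functions converge pointwise to the characteristic function of that limit. Throughout I would write $\varphi_{X_n}(u)=\mathbb{E}[e^{iu^\top X_n}]$ and $\varphi_X(u)=\mathbb{E}[e^{iu^\top X}]$ for $u\in\mathbb{R}^k$. The entire argument hinges on one elementary identity: the characteristic function of the scalar projection $t^\top X_n$, evaluated at $s\in\mathbb{R}$, equals $\mathbb{E}[e^{is\,t^\top X_n}]=\varphi_{X_n}(st)$, and likewise for $X$. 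This lets me pass freely between one-dimensional characteristic functions of projections and the joint characteristic function.

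For the forward implication I would note that the map $x\mapsto t^\top x$ is continuous, so if $X_n\stackrel{d}{\rightarrow}X$ then the continuous mapping theorem immediately gives $t^\top X_n\stackrel{d}{\rightarrow}t^\top X$ for every fixed $t\in\mathbb{R}^k$. This direction carries no real difficulty.

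For the converse, which is the substantive direction, I would assume $t^\top X_n\stackrel{d}{\rightarrow}t^\top X$ for every $t\in\mathbb{R}^k$ and fix an arbitrary $t$. Applying the one-dimensional Lévy continuity theorem to the scalar sequence $t^\top X_n$ yields $\mathbb{E}[e^{is\,t^\top X_n}]\to\mathbb{E}[e^{is\,t^\top X}]$ for all $s\in\mathbb{R}$. Specializing to $s=1$ and invoking the identity above gives $\varphi_{X_n}(t)\to\varphi_X(t)$. Since $t$ was arbitrary, the joint characteristic functions converge pointwise on all of $\mathbb{R}^k$ to $\varphi_X$, and the multivariate Lévy continuity theorem then delivers $X_n\stackrel{d}{\rightarrow}X$.

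The only delicate point, and the step I expect to be the main obstacle, is the correct invocation of the multivariate continuity theorem: pointwise convergence of characteristic functions guarantees convergence in distribution only when the limiting function is known a priori to be a genuine characteristic function (equivalently, continuous at the origin). Here this condition is satisfied for free, because the candidate limit $\varphi_X$ is by construction the characteristic function of the given vector $X$; consequently no separate tightness argument or continuity verification is needed. With that hypothesis discharged automatically, the projection-to-joint identity does all the work and the reduction is clean.
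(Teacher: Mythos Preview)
Your proof is correct and is the standard characteristic-function argument for the Cram\'er--Wold theorem. The paper itself does not supply a proof of this lemma; it is stated as a classical result and simply invoked in the proof of Theorem~\ref{thm1}, so there is no paper proof to compare against.
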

Lemma \ref{CLT} implies $\sum_{k=1}^K t^\top e_k\rightarrow_d\mathcal{N}(0,t^\top\Lambda_\theta t)$ for any $t$, and Lemma \ref{CW_thm} implies
\begin{align*}
    \sqrt{K}E_1=\sum_{k=1}^K e_k\rightarrow_p\mathcal{N}(0,\Lambda_\theta). 
\end{align*}
Furthermore, notice that the results of Lemma \ref{e2} and Lemma \ref{e3} imply $\sqrt{K}E_2\rightarrow_p 0, \sqrt{K}E_3\rightarrow_p 0$. Combining the above results, we have finished the proof. 
\end{proof}

\subsection{Proof of Theorem \ref{thm4}}
\label{pfthm4}
\begin{proof}
Our proof is similar to that of \cite{hao2021bootstrapping}. We first derive the influence function of policy gradient estimator for sake of completeness. We denote each of the $K$ sampled trajectories as
$$
\boldsymbol{\tau}:=\left(s_{1}, a_{1}, r_{1}, s_{2}, a_{2}, r_{2}, \ldots, s_{H}, a_{H}, r_{H}, s_{H+1}\right)
$$
We denote $\bar{\pi}(a \mid s)$ as the behavior policy. The distribution of trajectory is then given by
$$
\mathcal{P}(d \boldsymbol{\tau})= \bar{\xi}\left(d s_{1}, d a_{1}\right) p_1\left(d s_{2} \mid s_{1}, a_{1}\right) \bar{\pi}_2\left(d a_{2} \mid s_{2}\right) \ldots \bar{\pi}_H\left(d a_{H} \mid s_{H}\right) p_H\left(d s_{H+1} \mid s_{H}, a_{H}\right)
$$
Define $p_{\eta} = p + \eta\Delta p$ as a new transition probability function and $\mathcal{P}_{\eta}:=\mathcal{P}+\eta \Delta\mathcal{P}$ where $\Delta \mathcal{P}$ satisfies
$$(\Delta \mathcal{P}_h) \mathcal{F} \subseteq \mathcal{F},\forall h\in[H].$$ 
Define $g_{\eta,h}\left(s^{\prime} \mid s, a\right):=\frac{\partial}{\partial \eta} \log p_{\eta,h}\left( s^{\prime} \mid s, a\right)$ and the score function as 
$$
g_{\eta}(\boldsymbol{\tau}):=\frac{\partial}{\partial \eta} \log \mathcal{P}_{\eta}(d \boldsymbol{\tau})=\sum_{h=1}^{H} g_{\eta,h}\left(s_{h+1} \mid s_{h}, a_{h}\right).
$$
Without loss of generality, we assume $p_{\eta}$ is continuously derivative with respect to $\eta.$ This guarantees that we can change the order of taking derivatives with respect to $\eta$ and $\theta.$ When the subscript $\eta$ vanishes, it means $\eta = 0$ and the underlying transition probability is $p(s^{\prime}|s,a),$ i.e. $p_0(s^{\prime} |s,a) = p(s^{\prime} |s,a).$ Then we denote $g_h(s^{\prime}|s,a) := \left.\frac{\partial}{\partial \eta}\log p_{\eta,h}(s^{\prime}|s,a)\right|_{\eta = 0},$ and $g(\boldsymbol{\tau}) = \sum_{h=1}^H g_h(s_{h+1}|s_h,a_h).$ We define the policy value under new transition kernel is
\begin{equation*}
    v_{\theta,\eta} := \mathbb{E}^{\pi_{\theta}} \left[\left.\sum_{h=1}^H r_h(s_h,a_h) \right| s_1 \sim \xi, \mathcal{P}_{\eta}\right]
\end{equation*}
Then, our objective function is
$$
\psi_{\eta} := \nabla_{\theta} v_{\theta,\eta} =\mathbb{E}^{\pi_{\theta}}\left[\left.\sum_{h=1}^{H} \nabla_{\theta} \log \pi_{\theta,h}\left(a_{h} \mid s_{h}\right) \cdot\left(\sum_{h^{\prime}=h}^{H} r_h\left(s_{h^{\prime}}, a_{h^{\prime}}\right)\right) \right| s_{1} \sim \xi, \mathcal{P}_{\eta}\right].
$$
We are going to compute the influence function with respect to the above objective function. We denote this influence function as $\mathcal{I}(\boldsymbol{\tau}).$ By definition, it satisfies that
\begin{equation*}
    \left.\frac{\partial}{\partial \eta} \psi_{\eta}\right|_{\eta = 0} = \mathbb{E}\left[g(\boldsymbol{\tau}) \mathcal{I}(\boldsymbol{\tau})\right].
\end{equation*}
By exchanging the order of derivatives, we find that
\begin{equation*}
    \left.\frac{\partial}{\partial \eta} \psi_{\eta}\right|_{\eta = 0} = \nabla_{\theta} \left[\left.\frac{\partial}{\partial \eta} v_{\theta,\eta}\right|_{\eta = 0}\right].
\end{equation*}
Therefore, we calculate the derivatives.
\begin{align*}
    \frac{\partial}{\partial \eta} v_{\theta, \eta}
    &= \frac{\partial}{\partial \eta}\left[\sum_{h=1}^{H} \int_{(\mathcal{S} \times \mathcal{A})^{h}} r_h\left(s_{h}, a_{h}\right) \xi(s_1) \prod_{j=1}^{h-1} p_{\eta,j}\left(s_{j+1} \mid s_{j}, a_{j}\right) \prod_{j=1}^{h} \pi_{\theta,j}\left(a_{j} \mid s_{j}\right)d\boldsymbol{\tau}_h\right] \\
    &=\sum_{h=1}^{H} \int_{(\mathcal{S} \times \mathcal{A})^{h}} r_h\left(s_{h}, a_{h}\right)\left(\sum_{j=1}^{h-1} g_{\eta,j}\left(s_{j+1} \mid s_{j}, a_{j}\right)\right) \xi(s_1) \prod_{j=1}^{h-1} p_{\eta,j}\left(s_{j+1} \mid s_{j}, a_{j}\right) \prod_{j=1}^{h} \pi_{\theta,j}\left(a_{j} \mid s_{j}\right) d\boldsymbol{\tau}_h\\
    &= \int_{(\mathcal{S} \times \mathcal{A})^{H}} \sum_{h=1}^{H} r_h\left(s_{h}, a_{h}\right)\left(\sum_{j=1}^{h-1} g_{\eta,j}\left(s_{j+1} \mid s_{j}, a_{j}\right)\right) \left[\xi(s_1) \prod_{j=1}^{H} p_{\eta,j}\left(s_{j+1} \mid s_{j}, a_{j}\right) \prod_{j=1}^{H} \pi_{\theta,j}\left(a_{j} \mid s_{j}\right)\right] d\boldsymbol{\tau}.
\end{align*}
We denote $Q_{h,\eta}^{\theta}$ and  $\nabla_{\theta}Q_{h,\eta}^{\theta}$ as the state-action function and its gradient with underlying transition probability being $p_{\eta}.$ For sake of simplicity, we define the state value function as
\begin{equation*}
    V_h^{\theta}(s) := \mathbb{E}^{\pi_{\theta}} \left[\left.\sum_{h^{\prime} = h}^H r_h(s_h,a_h) \right| s_h = s, \mathcal{P}\right].
\end{equation*}
We denote $V_{h,\eta}^{\theta}(s)$ as the same function except for transition probability substituted by $p_{\eta}.$ Therefore,
\begin{align*}
    \frac{\partial}{\partial \eta} v_{\theta, \eta}
    &= \mathbb{E}^{\pi_{\theta}} \left[\left.\sum_{h=1}^{H} r_h\left(s_{h}, a_{h}\right)\left(\sum_{j=1}^{h-1} g_{\eta,j}\left(s_{j+1} \mid s_{j}, a_{j}\right)\right) \right| s_1 \sim \xi, \mathcal{P}_{\eta} \right] \\
    &= \mathbb{E}^{\pi_{\theta}} \left[\left.\sum_{j=1}^{H} g_{\eta,j}\left(s_{j+1} \mid s_{j}, a_{j}\right) \sum_{h=j+1}^H r_h\left(s_{h}, a_{h}\right) \right| s_1 \sim \xi, \mathcal{P}_{\eta} \right] \\
    &= \mathbb{E}^{\pi_{\theta}} \left[\left.\sum_{j=1}^{H} g_{\eta,j}\left(s_{j+1} \mid s_{j}, a_{j}\right) \cdot \mathbb{E}^{\pi_{\theta}} \left[\left.\sum_{h=j+1}^H r_h\left(s_{h}, a_{h}\right) \right| s_{j+1}\right]\right| s_1 \sim \xi, \mathcal{P}_{\eta} \right] \\
    &= \mathbb{E}^{\pi_{\theta}} \left[\left.\sum_{j=1}^{H} \mathbb{E}\left[\left. g_{\eta,j}\left(s_{j+1} \mid s_{j}, a_{j}\right) V_{j+1,\eta}^{\theta}(s_{j+1}) \right| s_j,a_j \right]\right| s_1 \sim \xi, \mathcal{P}_{\eta} \right].
\end{align*}
Therefore,
\begin{equation}\label{influence_function1}
    \left.\frac{\partial}{\partial \eta} v_{\theta, \eta}\right|_{\eta=0} = \mathbb{E}^{\pi_{\theta}} \left[\left.\sum_{h=1}^{H} \mathbb{E}\left[\left. g_h\left(s_{h+1} \mid s_{h}, a_{h}\right) V_{h+1}^{\theta}(s_{h+1}) \right| s_h,a_h \right]\right| s_1 \sim \xi, \mathcal{P}_{\eta} \right].
\end{equation}
Notice that $\Sigma_h = \mathbb{E}\left[\phi(s_h^{(1)},a_h^{(1)})\phi(s_h^{(1)},a_h^{(1)})^{\top}\right]$ and denote $w_h(s,a) := \phi^{\top}(s,a)\Sigma_h^{-1} \nu_h^{\theta} = \phi^{\top}(s,a)\Sigma_h^{-1} \mathbb{E}^{\pi_{\theta}} \left[\phi(s_h,a_h) \mid s_1 \sim \xi\right].$ We leverage the following fact to rewrite \eqref{influence_function1}: for any $f(s,a) = w_f^{\top} \phi(s,a) \in \mathcal{F}$ where $w_f \in \mathbb{R}^d,$ we have
\begin{align*}
\mathbb{E}^{\pi_{\theta}}\left[f(s_h,a_h)\right]
&= \mathbb{E}^{\pi_{\theta}} \left[ w_f^{\top} \phi(s_h,a_h)\right] \\
&= \mathbb{E}^{\pi_{\theta}} \left[ w_f^{\top} \mathbb{E}\left[\phi(s_h^{(1)},a_h^{(1)})\phi^{\top}(s_h^{(1)},a_h^{(1)})\right] \Sigma_h^{-1} \phi(s_h,a_h) \right] \\
&= \mathbb{E} \left[w_f^{\top}\phi(s_h^{(1)},a_h^{(1)})\phi^{\top}(s_h^{(1)},a_h^{(1)})\Sigma_h^{-1}\mathbb{E}^{\pi_{\theta}} \left[\phi(s_h,a_h)\right]\right]\\
&= \mathbb{E}\left[ f(s_h^{(1)},a_h^{(1)}) w_h(s_h^{(1)},a_h^{(1)})\right]
\end{align*}
Since 
\begin{equation*}
    \mathbb{E} \left[ g_h\left(s^{\prime} \mid s, a\right) V_{h+1}^{\theta}(s^{\prime}) | s, a \right] = \left.\frac{\partial}{\partial \eta} \left(Q_{h,\eta}^{\theta}(s,a) - r_{\eta}(s,a)\right)\right|_{\eta = 0} \in \mathcal{F},
\end{equation*}
we have
\begin{align*}
    \left.\frac{\partial}{\partial\eta} v_{\theta,\eta}\right|_{\eta = 0}
    &= \mathbb{E}\left[\sum_{h=1}^{H} w_{h}(s_h^{(1)},a_h^{(1)}) \mathbb{E}\left[g_h\left(s^{\prime} \mid s_h^{(1)},a_h^{(1)}\right) \cdot  V_{h+1}^{\theta}\left(s^{\prime}\right) \mid s_h^{(1)},a_h^{(1)}\right]\right] \\
    &=\mathbb{E}\left[\mathbb{E}_{s^{\prime} \sim p(\cdot \mid s_h^{(1)},a_h^{(1)})}\left[\sum_{h=1}^{H} w_{h}(s_h^{(1)},a_h^{(1)}) g_h\left(s^{\prime} \mid s_h^{(1)},a_h^{(1)}\right) \cdot  V_{h+1}^{\theta}\left(s^{\prime}\right)\right]\right] \\
    &=\mathbb{E}\left[\mathbb{E}_{s^{\prime} \sim p(\cdot \mid s_h^{(1)},a_h^{(1)})}\left[\sum_{h=1}^{H} w_{h}(s_h^{(1)},a_h^{(1)}) g_h\left(s^{\prime} \mid s_h^{(1)},a_h^{(1)}\right)\left( V_{h+1}^{\theta}\left(s^{\prime}\right)-\mathbb{E}\left[V_{h+1}^{\theta}\left(s^{\prime}\right) \mid s_h^{(1)},a_h^{(1)}\right]\right)\right]\right]\\
    &=\mathbb{E}\left[\sum_{h=1}^{H} w_{h}(s_h^{(1)},a_h^{(1)}) g_h\left(s_{h+1}^{(1)} \mid s_h^{(1)},a_h^{(1)}\right)\left( V_{h+1}^{\theta}\left(s_{h+1}^{(1)}\right)-\mathbb{E}\left[V_{h+1}^{\theta}\left(s_{h+1}^{(1)}\right) \mid s_h^{(1)},a_h^{(1)}\right]\right)\right]\\
    &= \mathbb{E} \left[g\left(\boldsymbol{\tau}\right)\sum_{h=1}^H w_{h}(s_h^{(1)}, a_h^{(1)})\left( V_{h+1}^{\theta}\left(s_{h+1}^{(1)}\right)-\mathbb{E}\left[V_{h+1}^{\theta}\left(s_{h+1}^{(1)}\right)\mid s_h^{(1)},a_h^{(1)}\right]\right)\right].
\end{align*}
Taking gradient in both sides and we have
\begin{equation*}
    \nabla_{\theta}\left(\left.\frac{\partial}{\partial\eta} v_{\theta,\eta}\right|_{\eta = 0}\right) = \mathbb{E}\left\{g\left(\boldsymbol{\tau}\right) \cdot \nabla_{\theta} \left[\sum_{h=1}^H w_{h}(s_h^{(1)}, a_h^{(1)}) \left( V_{h+1}^{\theta}\left(s_{h+1}^{(1)}\right)-\mathbb{E}\left[V_{h+1}^{\theta}\left(s_{h+1}^{(1)}\right) \mid s_h^{(1)}, a_h^{(1)}\right]\right)\right]\right\}.
\end{equation*}
The implies that the influence function we want is
\begin{equation*}
    \mathcal{I}(\boldsymbol{\tau}) = \nabla_{\theta} \left[\sum_{h=1}^H w_{h}(s_h^{(1)}, a_h^{(1)}) \left( V_{h+1}^{\theta}\left(s_{h+1}^{(1)}\right)-\mathbb{E}\left[V_{h+1}^{\theta}\left(s_{h+1}^{(1)}\right) \mid s_h^{(1)}, a_h^{(1)}\right]\right)\right].
\end{equation*}
Insert the expression of $w_h(s,a)$ and exploit $\varepsilon_{h,k}^{\theta}=Q_{h}^{\theta}(s_h^{(k)}, a_h^{(k)})-r_h^{(k)}-\int_{\mathcal{A}} \pi_{\theta,h+1}\left(a^{\prime} \mid s_{h+1}^{(k)}\right) Q_{h+1}^{\theta}\left(s_{h+1}^{(k)}, a^{\prime}\right) \mathrm{d} a^{\prime},$ we can rewrite the influence function as
\begin{equation*}
    \mathcal{I}(\boldsymbol{\tau})=-\nabla_\theta\left[\sum_{h=1}^H\phi(s_h^{(1)},a_h^{(1)})^\top\Sigma_h^{-1}\varepsilon_{h,1}^\theta\nu_h^\theta\right]
\end{equation*}
Therefore, since the cross terms vanish by taking conditional expectation, we have
\begin{align*}
    &\mathbb{E}\left[\mathcal{I}(\boldsymbol{\tau})^{\top} \mathcal{I}(\boldsymbol{\tau})\right]=\mathbb{E}\Bigg[\sum_{h=1}^H\left(\nabla_\theta\left(\varepsilon^\theta_{h,1}\phi(s_h^{(1)},a_h^{(1)})^\top\Sigma^{-1}_h\nu_{h}^\theta\right)\right)^\top\nabla_\theta\left(\varepsilon^\theta_{h,1}\phi(s_h^{(1)},a_h^{(1)})^\top\Sigma^{-1}_h\nu_{h}^\theta \right)\Bigg]=\Lambda_{\theta}.
\end{align*}
For any vector $t \in \mathbb{R}^m,$ when it comes to $\left\langle t,\psi_{\eta}\right\rangle,$ by linearity we have
\begin{equation*}
    \left.\frac{\partial}{\partial \eta} \left\langle t,\psi_{\eta}\right\rangle\right|_{\eta=0}=\mathbb{E}[g(\boldsymbol{\tau}) \left\langle t,\mathcal{I}(\boldsymbol{\tau})\right\rangle].
\end{equation*}
Then the influence function of $\left\langle t,\nabla_{\theta}v_{\theta}\right\rangle$ is $\left\langle t,\mathcal{I}(\boldsymbol{\tau})\right\rangle.$ The Cramer-Rao lower bound for $\left\langle t,\nabla_{\theta}v_{\theta}\right\rangle$ is
\begin{equation*}
    \mathbb{E}\left[\left\langle t,\mathcal{I}(\boldsymbol{\tau})\right\rangle^2\right] = t^{\top} \mathbb{E}\left[\mathcal{I}(\boldsymbol{\tau})^{\top}\mathcal{I}(\boldsymbol{\tau})\right] t = t^{\top} \Lambda_{\theta} t.
\end{equation*}
By continuous mapping theorem, a trivial corollary of Theorem \ref{thm4} is that for any $t \in \mathbb{R}^m,$
\begin{equation*}
    \sqrt{K}\left(\left\langle t,\widehat{\nabla_{\theta} v_{\theta}}-\nabla_{\theta} v_{\theta}\right\rangle\right) \stackrel{d}{\rightarrow} \mathcal{N}\left(0, t^{\top} \Lambda_{\theta} t\right).
\end{equation*} 
This implies that the variance of any unbiased estimator for $\left\langle t, \nabla_{\theta} v_{\theta} \right\rangle \in \mathbb{R}$ is lower bounded by $\frac{1}{\sqrt{K}}t^{\top} \Lambda_{\theta} t.$
\end{proof}

\section{Missing Proofs}
\label{missing_proof}
\subsection{Proof of Proposition \ref{lin_rep}}
\begin{proof}
The differentiability of $w_h^\theta$ comes from the differentiability of $Q_h^\theta$. And simply taking derivatives w.r.t. $\theta$ on both sides of $Q_h^\theta=\phi^\top w_h^\theta$, we get the desired result. 
\end{proof}

\subsection{Proof of Proposition \ref{equiv_mb}}
\begin{proof}
To prove the equality, it suffices to prove that given the same input $\widehat{Q}^\theta_{h+1}, \nabla_\theta^j\widehat{Q}^\theta_{h+1}$, we have
{\small
\begin{align*}
    &\mathop{\arg\min}_{f\in\mathcal{F}}\left(\sum_{k=1}^K\left(f(s_h^{(k)},a_h^{(k)})-r_h^{(k)}-\int_{\mathcal{A}}\pi_{\theta,h+1}(a^\prime\vert s_{h+1}^{(k)})\widehat{Q}_{h+1}^{\theta}(s_{h+1}^{(k)}, a^\prime)\mathrm{d}a^\prime\right)^2+\lambda\rho(f)\right) = \widehat{r}_h+\widehat{\mathcal{P}}_{\theta,h}\widehat{Q}_{h+1}^{\theta}\\
    &\mathop{\arg\min}_{f\in\mathcal{F}}\left(\sum_{k=1}^K\left(f(s_h^{(k)},a_h^{(k)})-\int_{\mathcal{A}}\pi_{\theta,h+1}(a^\prime\vert s_{h+1}^{(k)})\left(\left(\nabla_\theta^j\log\pi_{\theta,h+1}(a^\prime\vert s_{h+1}^{(k)})\right)\widehat{Q}_{h+1}^{\theta}(s_{h+1}^{(k)}, a^\prime)+\widehat{\nabla_\theta^j Q_{h+1}^{\theta}}(s_{h+1}^{(k)}, a^\prime)\right)\mathrm{d}a^\prime\right)^2+\lambda\rho(f)\right) \\
    &= \widehat{\mathcal{P}}_{\theta,h}\left(\left(\nabla_\theta^j\log\Pi_\theta\right)\widehat{Q}_{h+1}^{\theta}+\widehat{\nabla_\theta^j Q^{\theta}_{h+1}}\right).
\end{align*}}
The second equation holds due to the definition of $\widehat{\mathcal{P}}_{\theta,h}$. For the first equation, note that when $\mathcal{F}$ is the class of linear functions and $\rho(\phi^\top w)=\Vert w\Vert^2$, the LHS has a closed form solution:
\begin{align*}
    &\mathop{\arg\min}_{f\in\mathcal{F}}\left(\sum_{k=1}^K\left(f(s_h^{(k)},a_h^{(k)})-r_h^{(k)}-\int_{\mathcal{A}}\pi_{\theta,h+1}(a^\prime\vert s_{h+1}^{(k)})\widehat{Q}_{h+1}^{\theta}(s_{h+1}^{(k)}, a^\prime)\mathrm{d}a^\prime\right)^2+\lambda\rho(f)\right) \\
    =& \phi^\top\widehat{\Sigma}_h^{-1}\frac{1}{K}\sum_{k=1}^K\left(r_h^{(k)} + \int_{\mathcal{A}}\pi_{\theta,h+1}(a^\prime\vert s_{h+1}^{(k)})\widehat{Q}_{h+1}^{\theta}(s_{h+1}^{(k)}, a^\prime)\mathrm{d}a^\prime\right)\\
    =&\phi^\top\widehat{\Sigma}_h^{-1}\frac{1}{K}\sum_{k=1}^K r_h^{(k)} + \phi^\top\widehat{\Sigma}_h^{-1}\frac{1}{K}\sum_{k=1}^K \int_{\mathcal{A}}\pi_\theta(a^\prime\vert s_{h+1}^{(k)})\widehat{Q}_{h+1}^{\theta}(s_{h+1}^{(k)}, a^\prime)\mathrm{d}a^\prime\\
    =&\widehat{r}_h+\widehat{\mathcal{P}}_{\theta,h}\widehat{Q}_{h+1}^{\theta}. 
\end{align*}
Therefore, we have finished the proof. 
\end{proof}

\subsection{Proof of Proposition \ref{union_bd}}
\begin{proof}
The result of Theorem \ref{thm2} implies for any fixed $\theta$, when we choose $\lambda \leq\log\frac{8dmH}{\delta}C_1d\min_{h\in[H]}\sigma_{\textrm{min}}(\Sigma_h)$, and $K\geq 36\kappa_1(4+\kappa_2+\kappa_3)^2\log\frac{8dmH}{\delta}C_1dH^2$ sufficiently large such that
\begin{align*}
    &4H^2G\sqrt{\min\{C_1d,H\}}\sqrt{1+\chi^2_{\mathcal{F}}(\mu^\theta,\bar{\mu})}\sqrt{\frac{2\log\frac{24m}{\delta}}{K}}\\
    \geq&480C_1dm^{0.5}H^{3.5}\kappa_1(5+\kappa_2+\kappa_3)(\max_{j\in[m]}\Vert\Sigma_{\theta,1}^{-\frac{1}{2}}\nabla_\theta^j\nu^\theta_1\Vert+HG\Vert\Sigma_{\theta,1}^{-\frac{1}{2}}\nu^\theta_1\Vert)\frac{\log\frac{72mdH}{\delta}}{K},
\end{align*}
then we have
\begin{align*}
    \Vert\widehat{\nabla_\theta v_\theta}-\nabla_\theta v_\theta\Vert \leq 8&H^2G\sqrt{m\min\{C_1d,H\}}\sqrt{1+\chi^2_{\mathcal{F}}(\mu^\theta,\bar{\mu})}\sqrt{\frac{2\log\frac{24m}{\delta}}{K}}.
\end{align*}
Note that when the diameter of $\Theta$ is bounded by $D$, for any $\varepsilon > 0$, it's always possible to find an $\varepsilon$-net $\mathcal{N}_\varepsilon$ such that $\vert \mathcal{N}_\varepsilon\vert\leq\left(\frac{mD}{\varepsilon}\right)^m$. Taking a union bound over $\mathcal{N}_\varepsilon$, we get with probability $1-\delta$, 
\begin{align*}
     \Vert\widehat{\nabla_\theta v_\theta}-\nabla_\theta v_\theta\Vert\leq 16H^2Gm\sqrt{\min\{C_1d,H\}}\sqrt{1+\chi^2_{\mathcal{F}}(\mu^\theta,\bar{\mu})}\sqrt{\frac{\log\frac{24mD}{\delta\varepsilon}}{K}}, \quad\forall\theta\in\mathcal{N}_\varepsilon.
\end{align*}
Therefore, for any $\theta\in\Theta$, pick $\theta^\prime\in\mathcal{N}_\varepsilon$ such that $\Vert\theta-\theta^\prime\Vert\leq \varepsilon$, we have
\begin{align*}
    \Vert\widehat{\nabla_\theta v_\theta}-\nabla_\theta v_\theta\Vert&\leq 2L\varepsilon + \Vert\widehat{\nabla_\theta v_{\theta^\prime}}-\nabla_\theta v_{\theta^\prime}\Vert\\
    &\leq 2L\varepsilon+16H^2Gm\sqrt{\min\{C_1d,H\}}\sqrt{1+\chi^2_{\mathcal{F}}(\mu^{\theta^\prime},\bar{\mu})}\sqrt{\frac{\log\frac{24mD}{\delta\varepsilon}}{K}}. 
\end{align*}
Because $\chi^2_{\mathcal{F}}(\mu^{\theta^\prime},\bar{\mu})$ is $L^\prime$-Lipschitz in $\theta$, we have
\begin{align*}
    \Vert\widehat{\nabla_\theta v_\theta}-\nabla_\theta v_\theta\Vert\leq 2L\varepsilon+16H^2Gm\sqrt{\min\{C_1d,H\}}\sqrt{1+2L^\prime\varepsilon + \chi^2_{\mathcal{F}}(\mu^{\theta},\bar{\mu})}\sqrt{\frac{\log\frac{24mD}{\delta\varepsilon}}{K}}. 
\end{align*}
In particular, pick
\begin{align*}
    \varepsilon = \min\left\{\frac{1}{L^\prime},\frac{16H^2Gm}{L}\sqrt{\frac{\min\{C_1d,H\}}{K}}\right\},
\end{align*}
we get
\begin{align*}
    \Vert\widehat{\nabla_\theta v_\theta}-\nabla_\theta v_\theta\Vert\leq 64H^2Gm\sqrt{\min\{C_1d,H\}}\sqrt{1+\chi^2_{\mathcal{F}}(\mu^\theta,\bar{\mu})}\sqrt{\frac{\log\frac{24DKLL^\prime}{\delta HG}}{K}},\quad\forall\theta\in\Theta. 
\end{align*}
\end{proof}

\subsection{Proof of Lemma \ref{error_decomp}}
\begin{proof}
Note that
\begin{align*}
    \nabla_\theta Q_1^\theta-\widehat{\nabla_\theta Q_1^\theta} &= \sum_{h=1}^H\left(\prod_{h^\prime=1}^{h-1}\mathcal{P}_{\theta,h^\prime}\right)U_h^\theta-\sum_{h=1}^H\left(\prod_{h^\prime=1}^{h-1}\widehat{\mathcal{P}}_{\theta,h^\prime}\right)\tilde{U}_h^\theta\\
    &=\sum_{h=1}^H\left(\prod_{h^\prime=1}^{h-1}\mathcal{P}_{\theta,h^\prime}\right)U_h^\theta-\sum_{h=1}^H\left(\prod_{h^\prime=1}^{h-1}\widehat{\mathcal{P}}_{\theta,h^\prime}\right)\widehat{U}_h^\theta+\sum_{h=1}^H\left(\prod_{h^\prime=1}^{h-1}\widehat{\mathcal{P}}_{\theta,h^\prime}\right)\left(\widehat{U}_h^\theta-\tilde{U}_h^\theta\right)\\
    &=\sum_{h=1}^H\left(\prod_{h^\prime=1}^{h-1}\widehat{\mathcal{P}}_{\theta,h^\prime}\right)\left( \nabla_\theta Q_h^\theta-\widehat{U}_h^\theta-\widehat{\mathcal{P}}_{\theta,h}\nabla_\theta Q_{h+1}^\theta\right)+\sum_{h=1}^H\left(\prod_{h^\prime=1}^{h-1}\widehat{\mathcal{P}}_{\theta,h^\prime}\right)\left(\widehat{U}_h^\theta -\tilde{U}_h^\theta\right).
\end{align*}
For the first term, we have
\begin{align*}
    &\sum_{h=1}^H\left(\prod_{h^\prime=1}^{h-1}\widehat{\mathcal{P}}_{\theta,h^\prime}\right)\left(\nabla_\theta Q_h^\theta-\widehat{U}_h^\theta-\widehat{\mathcal{P}}_{\theta,h}\nabla_\theta Q_{h+1}^\theta\right)\\
    =&\sum_{h=1}^H\left(\prod_{h^\prime=1}^{h-1}\widehat{\mathcal{P}}_{\theta,h^\prime}\right)\phi^\top\widehat{\Sigma}_h^{-1}\frac{1}{K}\sum_{k=1}^K\phi\left(s_h^{(k)},a_h^{(k)}\right)\\
    &\cdot\left(\nabla_\theta Q_h^\theta\left(s_h^{(k)},a_h^{(k)}\right)-\int_{\mathcal{A}}\left(\left(\nabla_\theta\pi_{\theta,h+1}\left(a^\prime\left\vert s_{h+1}^{(k)}\right.\right)\right)Q_{h+1}^{\theta}\left(s_{h+1}^{(k)},a^\prime\right)+\pi_{\theta,h+1}\left(a^\prime\left\vert s_{h+1}^{(k)}\right.\right)\nabla_\theta Q_{h+1}^\theta\left(s_{h+1}^{(k)},a^\prime\right)\right)\mathrm{d}a^\prime\right)\\
    &+\frac{\lambda}{K}\sum_{h=1}^H\left(\prod_{h^\prime=1}^{h-1}\widehat{\mathcal{P}}_{\theta,h^\prime}\right)\phi^\top\widehat{\Sigma}^{-1}_h\nabla_\theta w_h^\theta\\
    =&\sum_{h=1}^H\phi^\top\left(\prod_{h^\prime=1}^{h-1}\widehat{M}_{\theta,h^\prime}\right)\widehat{\Sigma}_h^{-1}\frac{1}{K}\sum_{k=1}^K\phi\left(s_h^{(k)},a_h^{(k)}\right)\nabla_\theta\varepsilon_{h,k}^\theta+\frac{\lambda}{K}\sum_{h=1}^H\phi^\top\left(\prod_{h^\prime=1}^{h-1}\widehat{M}_{\theta,h^\prime}\right)\widehat{\Sigma}^{-1}_h\nabla_\theta w_h^\theta.
\end{align*}
Using the definition of $\widehat{\nu}_h^\theta$, we get
\begin{align}
    \label{p1}
    \begin{aligned}
        &\int_{\mathcal{S}\times\mathcal{A}}\xi(s)\pi_{\theta,1}(a\vert s)\left(\sum_{h=1}^H\left(\prod_{h^\prime=1}^{h-1}\widehat{\mathcal{P}}_{\theta,h^\prime}\right)\left(\nabla_\theta Q_h^\theta-\widehat{U_h^\theta}-\widehat{\mathcal{P}}_{\theta,h}\nabla_\theta Q_{h+1}^\theta\right)\right)(s,a)\mathrm{d}s\mathrm{d}a\\
        =&\sum_{h=1}^H\left(\widehat{\nu}_h^\theta\right)^\top\widehat{\Sigma}^{-1}_h\frac{1}{K}\sum_{k=1}^K\phi\left(s_h^{(k)},a_h^{(k)}\right)\nabla_\theta\varepsilon_{h,k}^\theta+\frac{\lambda}{K}\sum_{h=1}^H\left(\widehat{\nu}_h^\theta\right)^\top\widehat{\Sigma}_h^{-1}\nabla_\theta w^\theta_h.
    \end{aligned}
\end{align}
For the second term,  by Lemma \ref{Q_decomp}, we have
\begin{align*}
    \sum_{h=1}^H\left(\prod_{h^\prime=1}^{h-1}\widehat{\mathcal{P}}_{\theta,h^\prime}\right)\left( \widehat{U}_h^\theta-\tilde{U}_h^\theta\right)&=\sum_{h=1}^H\left(\prod_{h^\prime=1}^{h}\widehat{\mathcal{P}}_{\theta,h^\prime}\right)\left(\nabla_\theta\log\Pi_{\theta,h+1}\right)\left(Q_{h+1}^\theta-\widehat{Q}_{h+1}^\theta\right)\\
    &=\sum_{h=1}^H\left(\prod_{h^\prime=1}^{h}\widehat{\mathcal{P}}_{\theta,h^\prime}\right)\left(\nabla_\theta\log\Pi_{\theta,h+1}\right)\sum_{h^\prime=h+1}^H\left(\prod_{h^{\prime\prime}=h+1}^{h^\prime-1}\widehat{\mathcal{P}}_{\theta,h^{\prime\prime}}\right)\left(Q_{h^\prime}^\theta-\widehat{r}_{h^\prime}- \widehat{\mathcal{P}}_{\theta,h^\prime}Q_{h^\prime+1}^\theta\right)\\
    &=\sum_{h=1}^H\sum_{h^\prime=1}^{h-1}\left(\prod_{h=1}^{h^\prime}\widehat{\mathcal{P}}_{\theta,h}\right)\left(\nabla_\theta\log\Pi_{\theta,h^\prime+1}\right)\left(\prod_{h^{\prime\prime}=h^\prime+1}^{h-1}\widehat{\mathcal{P}}_{\theta,h^{\prime\prime}}\right)\left(Q_{h}^\theta-\widehat{r}_{h}- \widehat{\mathcal{P}}_{\theta,h}Q_{h+1}^\theta\right).
\end{align*}
Meanwhile, again by Lemma \ref{Q_decomp}, we have
\begin{align*}
    \left(\nabla_\theta\log\Pi_{\theta,1}\right)(Q_1^\theta-\widehat{Q}_1^\theta)=\left(\nabla_\theta\log\Pi_{\theta,1}\right)\sum_{h=1}^{H}\left(\prod_{h^\prime=1}^{h-1}\widehat{\mathcal{P}}_{\theta,h^\prime}\right)\left(Q_{h}^\theta-\widehat{r}_{h}-\widehat{\mathcal{P}}_{\theta,h}Q_{h+1}^\theta\right),
\end{align*}
which implies
\begin{align*}
    &\sum_{h=1}^H\left(\prod_{h^\prime=1}^{h-1}\widehat{\mathcal{P}}_{\theta,h^\prime}\right)\left(\widehat{U}_h^\theta-\tilde{U}_h^\theta\right)+\left(\nabla_\theta\log\Pi_{\theta,1}\right)(Q_1^\theta-\widehat{Q}_1^\theta)\\
    =&\sum_{h=1}^H\left(\left(\nabla_\theta\log\Pi_{\theta,1}\right)\left(\prod_{h^\prime=1}^{h-1}\widehat{\mathcal{P}}_{\theta,h^\prime}\right)+\sum_{h^\prime=1}^{h-1}\left(\prod_{h=1}^{h^\prime}\widehat{\mathcal{P}}_{\theta,h}\right)\left(\nabla_\theta\log\Pi_{\theta,h^\prime+1}\right)\left(\prod_{h^{\prime\prime}=h^\prime+1}^{h-1}\widehat{\mathcal{P}}_{\theta,h^{\prime\prime}}\right)\right)\left(Q_{h}^\theta-\widehat{r}_{h}-\widehat{\mathcal{P}}_{\theta,h}Q_{h+1}^\theta\right)\\
    =&\sum_{h=1}^H\sum_{h^\prime=0}^{h-1}\left(\prod_{h=1}^{h^\prime}\widehat{\mathcal{P}}_{\theta,h}\right)\left(\nabla_\theta\log\Pi_{\theta,h^\prime+1}\right)\left(\prod_{h^{\prime\prime}=h^\prime+1}^{h-1}\widehat{\mathcal{P}}_{\theta,h^{\prime\prime}}\right)\left(Q_{h}^\theta-\widehat{r}_{h}-\widehat{\mathcal{P}}_{\theta,h}Q_{h+1}^\theta\right)\\
    =&\sum_{h=1}^H\sum_{h^\prime=0}^{h-1}\left(\prod_{h=1}^{h^\prime}\widehat{\mathcal{P}}_{\theta,h}\right)\left(\nabla_\theta\log\Pi_{\theta,h^\prime+1}\right)\left(\prod_{h^{\prime\prime}=h^\prime+1}^{h-1}\widehat{\mathcal{P}}_{\theta,h^{\prime\prime}}\right)\phi^\top\widehat{\Sigma}_h^{-1}\\
    &\cdot\frac{1}{K}\sum_{k=1}^K\phi\left(s_h^{(k)},a_h^{(k)}\right)\left(Q_h^\theta\left(s_h^{(k)},a_h^{(k)}\right)-r_h^{(k)}-\int_{\mathcal{A}}\pi_{\theta,h+1}\left(a^\prime\left\vert s_{h+1}^{(k)}\right.\right)Q_{h+1}^\theta\left(s_{h+1}^{(k)},a^\prime\right)\mathrm{d}a^\prime\right)\\
    &+\frac{\lambda}{K}\sum_{h=1}^H\sum_{h^\prime=0}^{h-1}\left(\prod_{h^{\prime\prime}=1}^{h^\prime}\widehat{\mathcal{P}}_{\theta,h^{\prime\prime}}\right)\left(\nabla_\theta\log\Pi_{\theta,h^\prime+1}\right)\left(\prod_{h^{\prime\prime}=h^\prime+1}^{h-1}\widehat{\mathcal{P}}_{\theta,h^{\prime\prime}}\right)\phi^\top\widehat{\Sigma}^{-1}_hw_h^\theta.
\end{align*}
For each $j\in[m]$, notice the relation
\begin{align*}
    \left(\nabla_\theta^j\widehat{\nu}_h^\theta\right)^\top&=\left(\nabla_\theta^j\nu_1^\theta\right)^\top\left(\prod_{h^\prime=1}^{h-1}\widehat{M}_{\theta,h^\prime}\right)+\sum_{h^\prime=1}^{h-1}\left(\nu_1^{\theta}\right)^\top\left(\prod_{h^{\prime\prime}=1}^{h^\prime-1}\widehat{M}_{\theta,h^{\prime\prime}}\right)\left(\widehat{\nabla_\theta^j M_{\theta,h^\prime}}\right)\left(\prod_{h^{\prime\prime}=h^\prime+1}^{h-1}\widehat{M}_{\theta,h^{\prime\prime}}\right)\\
    &=\int_{\mathcal{S}\times\mathcal{A}}\xi(s)\pi_{\theta,1}(a\vert s)\left(\sum_{h^\prime=0}^{h-1}\left(\prod_{h^{\prime\prime}=1}^{h^\prime}\widehat{\mathcal{P}}_{\theta,h^{\prime\prime}}\right)\left(\nabla_\theta\log\Pi_{\theta,h^\prime+1}\right)\left(\prod_{h^{\prime\prime}=h^\prime+1}^{h-1}\widehat{\mathcal{P}}_{\theta,h^{\prime\prime}}\right)\phi^\top\right)(s,a)\mathrm{d}s\mathrm{d}a.
\end{align*} 
Therefore, we have
\begin{align}
    \label{p2}
    \begin{aligned}
        &\left[\int\xi(s)\pi_{\theta,1}(a\vert s)\left(\sum_{h=1}^H\left(\prod_{h^\prime=1}^{h-1}\widehat{\mathcal{P}}_{\theta,h^\prime}\right)\left(\widehat{U}_h^\theta-\tilde{U}_h^\theta\right)+\left(\nabla_\theta\log\Pi_{\theta,1}\right)(Q_1^\theta-\widehat{Q_1^\theta})\right)(s,a)\mathrm{d}s\mathrm{d}a\right]_j\\
        =&\sum_{h=1}^H\left(\nabla_\theta^j\widehat{\nu}_h^\theta\right)^\top\widehat{\Sigma}_h^{-1}\frac{1}{K}\sum_{k=1}^K\phi\left(s_h^{(k)},a_h^{(k)}\right)\left(Q_h^\theta\left(s_h^{(k)},a_h^{(k)}\right)-r_h^{(k)}-\int_{\mathcal{A}}\pi_{\theta,h+1}\left(a^\prime\left\vert s_{h+1}^{(k)}\right.\right)Q_{h+1}^\theta\left(s_{h+1}^{(k)},a^\prime\right)\mathrm{d}a^\prime\right)\\
        &+\frac{\lambda}{K}\sum_{h=1}^H\left(\nabla_\theta^j\widehat{\nu}_h^\theta\right)^\top\widehat{\Sigma}_h^{-1}w_h^\theta\\
        =&\sum_{h=1}^H\left(\nabla_\theta^j\widehat{\nu}_h^\theta\right)^\top\widehat{\Sigma}_h^{-1}\frac{1}{K}\sum_{k=1}^K\phi\left(s_h^{(k)},a_h^{(k)}\right)\varepsilon_{h,k}^\theta+\frac{\lambda}{K}\sum_{h=1}^H\left(\nabla_\theta^j\widehat{\nu}_h^\theta\right)^\top\widehat{\Sigma}_h^{-1}w_h^\theta.
    \end{aligned}
\end{align}
Combing the results of \eqref{p1} and \eqref{p2}, we get for each $j\in[m]$, 
\begin{align*}
    &\nabla_\theta^j v_\theta-\widehat{\nabla_\theta^j v_\theta}=\int_{\mathcal{S}\times\mathcal{A}}\xi(s)\pi_{\theta,1}(a\vert s)\left(\nabla_\theta^j Q_1^\theta-\widehat{\nabla_\theta^j Q_1^\theta}+\left(\nabla_\theta^j\log\Pi_{\theta,1}\right)(Q_1^\theta-\widehat{Q_1^\theta})\right)(s,a)\textrm{d}s\textrm{d}a\\
    =&\sum_{h=1}^H\Bigg(\left(\widehat{\nu}_h^\theta\right)^\top\widehat{\Sigma}_h^{-1}\frac{1}{K}\sum_{k=1}^K\phi\left(s_h^{(k)},a_h^{(k)}\right)\nabla_\theta^j\varepsilon_{h,k}^\theta+\frac{\lambda}{K}\left(\widehat{\nu}_h^\theta\right)^\top\widehat{\Sigma}_h^{-1}\nabla_\theta^j w^\theta_h+\left(\nabla_\theta^j\nu_h^\theta\right)^\top\widehat{\Sigma}_h^{-1}\frac{1}{K}\sum_{k=1}^K\phi\left(s_h^{(k)},a_h^{(k)}\right)\varepsilon_{h,k}^\theta\\
    &+\frac{\lambda}{K}\left(\nabla_\theta^j\widehat{\nu}_h^\theta\right)^\top\widehat{\Sigma}^{-1}_hw_h^\theta\Bigg)\\
    =&\sum_{h=1}^H\nabla_\theta^j\left(\left(\widehat{\nu}_h^\theta\right)^\top\widehat{\Sigma}_h^{-1}\frac{1}{K}\sum_{k=1}^K\phi\left(s_h^{(k)},a_h^{(k)}\right)\varepsilon_{h,k}^\theta+\frac{\lambda}{K}\left(\widehat{\nu}_h^\theta\right)^\top\widehat{\Sigma}_h^{-1}w^{\theta}_h\right)\\
    =&\sum_{h=1}^H\nabla_\theta^j\left(\left(\widehat{\nu}_h^\theta\right)^\top\widehat{\Sigma}_h^{-1}\frac{1}{K}\sum_{k=1}^K\phi\left(s_h^{(k)},a_h^{(k)}\right)\varepsilon_{h,k}^\theta+\frac{\lambda}{K}\left(\widehat{\nu}_h^\theta\right)^\top\widehat{\Sigma}_h^{-1}w^\theta_h+\left(\left(\widehat{\nu}_h^\theta\right)^\top\widehat{\Sigma}_h^{-1}-\left(\nu_h^\theta\right)^\top\Sigma^{-1}_h\right)\frac{1}{K}\sum_{k=1}^K\phi\left(s_h^{(k)},a_h^{(k)}\right)\varepsilon_{h,k}^\theta\right).
\end{align*}
Rewriting the above decomposition in a vector form, we get
\begin{align*}
    \nabla_\theta v_\theta-\widehat{\nabla_\theta v_\theta}=&\sum_{h=1}^H\nabla_\theta\Bigg(\left(\widehat{\nu}_h^\theta\right)^\top\widehat{\Sigma}^{-1}_h\frac{1}{K}\sum_{k=1}^K\phi\left(s_h^{(k)},a_h^{(k)}\right)\varepsilon_{h,k}^\theta\\
    &+\frac{\lambda}{K}\left(\widehat{\nu}_h^\theta\right)^\top\widehat{\Sigma}_h^{-1}w^\theta_h+\left(\left(\widehat{\nu}_h^\theta\right)^\top\widehat{\Sigma}_h^{-1}-\left(\nu_h^\theta\right)^\top\Sigma_h^{-1}\right)\frac{1}{K}\sum_{k=1}^K\phi\left(s_h^{(k)},a_h^{(k)}\right)\varepsilon_{h,k}^\theta\Bigg),
\end{align*}
which is the desired result. 
\end{proof}

\subsection{Proof of Lemma \ref{e1_finite_product}}
\begin{proof}
Note that, 
\begin{align*}
    \langle E_1,t\rangle=&\sum_{h=1}^H\left\langle\nabla_\theta\left(\left(\nu^\theta_h\right)^\top\Sigma_h^{-1}\frac{1}{K}\sum_{k=1}^K\phi\left(s_h^{(k)},a_h^{(k)}\right)\varepsilon_{h,k}^\theta\right), t\right\rangle\\
    =&\sum_{h=1}^H\left\langle\left(\nabla_\theta\nu^\theta_h\right)^\top\Sigma_h^{-1}\frac{1}{K}\sum_{k=1}^K\phi\left(s_h^{(k)},a_h^{(k)}\right)\varepsilon_{h,k}^\theta, t\right\rangle+\sum_{h=1}^H\left\langle\left(\nu^\theta_h\right)^\top\Sigma_h^{-1}\frac{1}{K}\sum_{k=1}^K\phi\left(s_h^{(k)},a_h^{(k)}\right)\nabla_\theta\varepsilon_{h,k}^\theta,t\right\rangle.
\end{align*}
Let $e_k = \sum_{h=1}^H\left\langle\nabla_\theta\left(\left(\nu^\theta_h\right)^\top\Sigma_h^{-1}\phi\left(s_h^{(k)},a_h^{(k)}\right)\varepsilon_{h,k}^\theta\right),t\right\rangle$, we have 
\begin{align*}
    \vert e_k\vert&\leq\sqrt{C_1dm}\Vert t\Vert\sum_{h=1}^H(H-h+1)\max_{j\in[m]}\sqrt{\left(\nabla^j_\theta\nu^\theta_h\right)^\top\Sigma_h^{-1}\nabla^j_\theta\nu^\theta_h}+2G\sqrt{C_1dm}\Vert t\Vert\sum_{h=1}^H(H-h)^2\sqrt{\left(\nu^\theta_h\right)^\top\Sigma_h^{-1}\nu^\theta_h}=B\sqrt{C_1dm}\Vert t\Vert.
\end{align*}
We have
\begin{align*}
    &\sum_{k=1}^K\textrm{Var}[e_k]=\sum_{k=1}^K\mathbb{E}\left[\left\langle\sum_{h=1}^H\nabla_\theta\left(\left(\nu^\theta_h\right)^\top\Sigma_h^{-1}\phi\left(s_h^{(k)},a_h^{(k)}\right)\varepsilon_{h,k}^\theta\right),t\right\rangle^2\right]=Kt^\top\Lambda_\theta t.
\end{align*}
We pick $\sigma^2=Kt^\top\Lambda_\theta t$, the Bernstein’s inequality implies that for any $\varepsilon\in\mathbb{R}$, 
\begin{align*}
    \mathbb{P}\left(\left\vert\sum_{k=1}^K e_k\right\vert\geq\varepsilon\right)\leq 2\exp\left(-\frac{\varepsilon^2/2}{\sigma^2+\sqrt{C_1dm}\Vert t\Vert B\varepsilon/3}\right).
\end{align*}
Therefore, if we pick $\varepsilon=\sigma\sqrt{2\log(2/\delta)}+2\log(2/\delta)\sqrt{C_1dm}\Vert t\Vert B/3$, we get
\begin{align*}
    \mathbb{P}\left(\left\vert\sum_{k=1}^K e_k\right\vert\geq\varepsilon\right)\leq\delta,
\end{align*}
i.e., we have with probability $1-\delta$, 
\begin{align*}
    \left\vert\frac{1}{K}\sum_{k=1}^K e_k\right\vert\leq\sqrt{\frac{2t^\top\Lambda_\theta t\log(2/\delta)}{K}}+\frac{2\log(2/\delta)\sqrt{C_1dm}\Vert t\Vert B}{3K}
\end{align*}
\end{proof}

\subsection{Proof of Lemma \ref{e2}}
\begin{proof}
For an arbitrarily given $\theta_0$, let $\Sigma_{\theta_0,h}=\mathbb{E}^{\pi_{\theta_0}}[\phi(s_h,a_h)\phi(s_h,a_h)^\top]$,  we have
\begin{align*}
    &\left(\left(\widehat{\nu}^\theta_h\right)^\top\widehat{\Sigma}_h^{-1}-\left(\nu^\theta_h\right)^\top\Sigma_h^{-1}\right)\frac{1}{K}\sum_{k=1}^K\phi\left(s_h^{(k)},a_h^{(k)}\right)\varepsilon_{h,k}^\theta\\
    =&\left(\nu^\theta_1\right)^\top\left(\left(\prod_{h^\prime=1}^{h-1}\widehat{M}_{\theta,h^\prime}\right)\widehat{\Sigma}_h^{-1}-\left(\prod_{h^\prime=1}^{h-1}M_{\theta,h^\prime}\right)\Sigma_h^{-1}\right)\frac{1}{K}\sum_{k=1}^K\phi\left(s_h^{(k)},a_h^{(k)}\right)\varepsilon_{h,k}^\theta\\
    =&\left(\Sigma_{\theta_0,1}^{-\frac{1}{2}}\nu^\theta_1\right)^\top\left(\left(\prod_{h^\prime=1}^{h-1}\Sigma_{\theta_0,h^\prime}^{\frac{1}{2}}\widehat{M}_{\theta,h^\prime}\Sigma_{\theta_0,h^\prime+1}^{-\frac{1}{2}}\right)\Sigma_{\theta_0,h}^{\frac{1}{2}}\Sigma_h^{-\frac{1}{2}}\Sigma_h^{\frac{1}{2}}\widehat{\Sigma}_h^{-1}\Sigma_h^{\frac{1}{2}}-\left(\prod_{h^\prime=1}^{h-1}\Sigma_{\theta_0,h^\prime}^{\frac{1}{2}}M_{\theta,h^\prime}\Sigma_{\theta_0,h^\prime+1}^{-\frac{1}{2}}\right)\Sigma_{\theta_0,h}^{\frac{1}{2}}\Sigma^{-\frac{1}{2}}_h\right)\\
    &\cdot\Sigma_h^{-\frac{1}{2}}\frac{1}{K}\sum_{k=1}^K\phi\left(s_h^{(k)},a_h^{(k)}\right)\varepsilon_{h,k}^\theta.
\end{align*}
Taking derivatives on both sides, and let $\theta_0=\theta$, we get
\begin{align*}
    &\nabla_\theta^j E_2=\nabla_\theta^j\left(\sum_{h=1}^H\left(\left(\widehat{\nu}^\theta_h\right)^\top\widehat{\Sigma}_h^{-1}-\left(\nu^\theta_h\right)^\top\Sigma_h^{-1}\right)\frac{1}{K}\sum_{k=1}^K\phi\left(s_h^{(k)},a_h^{(k)}\right)\varepsilon_{h,k}^\theta\right)=E_{21}^j+E_{22}^j+E_{23}^j,
\end{align*}
where 
\begin{align*}
    E_{21}^j=&\sum_{h=1}^H\left(\Sigma_{\theta,1}^{-\frac{1}{2}}\nu^\theta_1\right)^\top\left(\left(\prod_{h^\prime=1}^{h-1}\Sigma_{\theta,h^\prime}^{\frac{1}{2}}\widehat{M}_{\theta,h^\prime}\Sigma_{\theta,h^\prime+1}^{-\frac{1}{2}}\right)\Sigma_{\theta,h}^{\frac{1}{2}}\Sigma_h^{-\frac{1}{2}}\Sigma_h^{\frac{1}{2}}\widehat{\Sigma}_h^{-1}\Sigma_h^{\frac{1}{2}}-\left(\prod_{h^\prime=1}^{h-1}\Sigma_{\theta,h^\prime}^{\frac{1}{2}}M_{\theta,h^\prime}\Sigma_{\theta,h^\prime+1}^{-\frac{1}{2}}\right)\Sigma_{\theta,h}^{\frac{1}{2}}\Sigma_h^{-\frac{1}{2}}\right)\\
    &\cdot\Sigma_h^{-\frac{1}{2}}\frac{1}{K}\sum_{k=1}^K\phi\left(s_h^{(k)},a_h^{(k)}\right)\nabla_\theta^j\varepsilon_{h,k}^\theta\\
    E_{22}^j=&\sum_{h=1}^H\left(\Sigma_{\theta,1}^{-\frac{1}{2}}\nabla_\theta^j\nu^\theta_1\right)^\top\left(\left(\prod_{h^\prime=1}^{h-1}\Sigma_{\theta,h^\prime}^{\frac{1}{2}}\widehat{M}_{\theta,h^\prime}\Sigma_{\theta,h^\prime+1}^{-\frac{1}{2}}\right)\Sigma_{\theta,h}^{\frac{1}{2}}\Sigma_h^{-\frac{1}{2}}\Sigma_h^{\frac{1}{2}}\widehat{\Sigma}_h^{-1}\Sigma_h^{\frac{1}{2}}-\left(\prod_{h^\prime=1}^{h-1}\Sigma_{\theta,h^\prime}^{\frac{1}{2}}M_{\theta,h^\prime}\Sigma_{\theta,h^\prime+1}^{-\frac{1}{2}}\right)\Sigma_{\theta,h}^{\frac{1}{2}}\Sigma_h^{-\frac{1}{2}}\right)\\
    &\cdot\Sigma_h^{-\frac{1}{2}}\frac{1}{K}\sum_{k=1}^K\phi\left(s_h^{(k)},a_h^{(k)}\right)\varepsilon_{h,k}^\theta\\
    E_{23}^j=&\sum_{h=1}^H\left(\Sigma_{\theta,1}^{-\frac{1}{2}}\nu^\theta_1\right)^\top\\
    &\cdot\left(\left.\nabla_\theta^j\left(\prod_{h^\prime=1}^{h-1}\Sigma_{\theta_0,h^\prime}^{\frac{1}{2}}\widehat{M}_{\theta,h^\prime}\Sigma_{\theta_0,h^\prime+1}^{-\frac{1}{2}}\right)\right\vert_{\theta_0=\theta}\Sigma_{\theta,h}^{\frac{1}{2}}\Sigma_h^{-\frac{1}{2}}\Sigma_h^{\frac{1}{2}}\widehat{\Sigma}_h^{-1}\Sigma_h^{\frac{1}{2}}-\left.\nabla_\theta^j\left(\prod_{h^\prime=1}^{h-1}\Sigma_{\theta_0,h^\prime}^{\frac{1}{2}}M_{\theta,h^\prime}\Sigma_{\theta_0,h^\prime+1}^{-\frac{1}{2}}\right)\right\vert_{\theta_0=\theta}\Sigma_{\theta,h}^{\frac{1}{2}}\Sigma_h^{-\frac{1}{2}}\right)\\
    &\cdot\Sigma_h^{-\frac{1}{2}}\frac{1}{K}\sum_{k=1}^K\phi\left(s_h^{(k)},a_h^{(k)}\right)\varepsilon_{h,k}^\theta.
\end{align*}
Therefore, using the result of Lemma \ref{decomp}, we get
\begin{align*}
    \vert E_{21}^j\vert\leq&\sum_{h=1}^H\left\Vert\Sigma_{\theta,1}^{-\frac{1}{2}}\nu^\theta_1\right\Vert\left\Vert\Sigma_{\theta,h}^{\frac{1}{2}}\Sigma_h^{-\frac{1}{2}}\right\Vert\left(\left(\prod_{h^\prime=1}^{h-1}\left(1+\left\Vert\Sigma_{\theta,h^\prime}^{\frac{1}{2}}\left(\Delta M_{\theta,h^\prime}\right)\Sigma_{\theta,h^\prime+1}^{-\frac{1}{2}}\right\Vert\right)\right)\left(1+\left\Vert\Sigma_h^{\frac{1}{2}}\left(\Delta\Sigma_h^{-1}\right)\Sigma_h^{\frac{1}{2}}\right\Vert\right)-1\right)\\
    &\cdot\left\Vert\Sigma_h^{-\frac{1}{2}}\frac{1}{K}\sum_{k=1}^K\phi\left(s_h^{(k)},a_h^{(k)}\right)\nabla_\theta^j\varepsilon_{h,k}^\theta\right\Vert\\
    \vert E_{22}^j\vert\leq&\sum_{h=1}^H\left\Vert\Sigma_{\theta,1}^{-\frac{1}{2}}\nabla_\theta^j\nu^\theta_1\right\Vert\left\Vert\Sigma_{\theta,h}^{\frac{1}{2}}\Sigma_h^{-\frac{1}{2}}\right\Vert\left(\left(\prod_{h^\prime=1}^{h-1}\left(1+\left\Vert\Sigma_{\theta,h^\prime}^{\frac{1}{2}}\left(\Delta M_{\theta,h^\prime}\right)\Sigma_{\theta,h^\prime+1}^{-\frac{1}{2}}\right\Vert\right)\right)\left(1+\left\Vert\Sigma_h^{\frac{1}{2}}\left(\Delta\Sigma_h^{-1}\right)\Sigma_h^{\frac{1}{2}}\right\Vert\right)-1\right)\\
    &\cdot\left\Vert\Sigma_h^{-\frac{1}{2}}\frac{1}{K}\sum_{k=1}^K\phi\left(s_h^{(k)},a_h^{(k)}\right)\varepsilon_{h,k}^\theta\right\Vert\\
    \vert E_{23}^j\vert\leq&\sum_{h=1}^H\sum_{h^\prime=1}^{h-1}G\left\Vert\Sigma_{\theta,1}^{-\frac{1}{2}}\nu^\theta_1\right\Vert\left\Vert\Sigma_{\theta,h}^{\frac{1}{2}}\Sigma_h^{-\frac{1}{2}}\right\Vert\\
    &\cdot\left(\left(\prod_{h^{\prime\prime}\neq h^\prime}\left(1+\left\Vert\Sigma_{\theta,h^{\prime\prime}}^{\frac{1}{2}}\left(\Delta M_{\theta,h^{\prime\prime}}\right)\Sigma_{\theta,h^{\prime\prime}}^{-\frac{1}{2}}\right\Vert\right)\right)\left(1+\left\Vert\Sigma_{\theta,h^\prime}^{\frac{1}{2}}\left(\frac{\nabla_\theta^j\left(\Delta M_{\theta,h^\prime}\right)}{G}\right)\Sigma_{\theta,h^\prime}^{-\frac{1}{2}}\right\Vert\right)\left(1+\left\Vert\Sigma_h^{\frac{1}{2}}\left(\Delta\Sigma_h^{-1}\right)\Sigma_h^{\frac{1}{2}}\right\Vert\right)-1\right)\\
    &\cdot\left\Vert\Sigma_h^{-\frac{1}{2}}\frac{1}{K}\sum_{k=1}^K\phi\left(s_h^{(k)},a_h^{(k)}\right)\varepsilon_{h,k}^\theta\right\Vert,
\end{align*}
where $\Delta\Sigma^{-1}_h=\widehat{\Sigma}^{-1}_h-\Sigma_h$ and we use the fact $\left\Vert\Sigma_{\theta,h}^{\frac{1}{2}}M_{\theta,h}\Sigma_{\theta,h+1}^{-\frac{1}{2}}\right\Vert\leq 1$ and $\left\Vert\Sigma_{\theta,h}^{\frac{1}{2}}\left(\nabla_\theta^j M_{\theta,h}\right)\Sigma_{\theta,h+1}^{-\frac{1}{2}}\right\Vert\leq G$ from Lemma \ref{ineq}. Furthermore, we have
\begin{align*}
    \left\Vert\Sigma_{\theta,h}^{\frac{1}{2}}\left(\Delta M_{\theta,h}\right)\Sigma_{\theta,h+1}^{-\frac{1}{2}}\right\Vert &\leq\left\Vert\Sigma_{\theta,h}^{\frac{1}{2}}\Sigma_h^{-\frac{1}{2}}\right\Vert\left\Vert\Sigma_{h+1}^{\frac{1}{2}}\Sigma_{\theta,h+1}^{-\frac{1}{2}}\right\Vert\left\Vert\Sigma_h^{\frac{1}{2}}\left(\Delta M_{\theta,h}\right)\Sigma_{h+1}^{-\frac{1}{2}}\right\Vert=\sqrt{\kappa_1}\left\Vert\Sigma_h^{\frac{1}{2}}\left(\Delta M_{\theta,h}\right)\Sigma_{h+1}^{-\frac{1}{2}}\right\Vert\\
    &=\sqrt{\kappa_1}\left\Vert\Sigma_h^{\frac{1}{2}}\left(\widehat{\Sigma}_h^{-1}\frac{1}{K}\sum_{k=1}^K\phi\left(s_h^{(k)},a_h^{(k)}\right)\int_{\mathcal{A}}\phi\left(s_{h+1}^{(k)},a^\prime\right)\pi_{\theta,h+1}\left(a^\prime\left\vert s_{h+1}^{(k)}\right.\right)\mathrm{d}a^\prime- M_{\theta,h}\right)\Sigma_{h+1}^{-\frac{1}{2}}\right\Vert\\
    &\leq\sqrt{\kappa_1}\left(\left(1+\left\Vert\Sigma_h^{\frac{1}{2}}\left(\Delta\Sigma_h^{-1}\right)\Sigma_h^{\frac{1}{2}}\right\Vert\right)\left(1+\left\Vert\Sigma_h^{-\frac{1}{2}}\left(\Delta Y_{\theta,h}\right)\Sigma_{h+1}^{-\frac{1}{2}}\right\Vert\right)-1\right),
\end{align*}
where $\Delta Y_{\theta,h}=\frac{1}{K}\sum_{k=1}^K\phi\left(s_h^{(k)},a_h^{(k)}\right)\phi_{\theta,h+1}\left(s_{h+1}^{(k)}\right)^\top-\Sigma_h M_{\theta,h}$ and the last inequality uses Lemma \ref{decomp} again. Similarly, we have
\begin{align*}
    \left\Vert\Sigma_{\theta,h}^{\frac{1}{2}}\left(\frac{\nabla_\theta^j\left(\Delta M_{\theta,h}\right)}{G}\right)\Sigma_{\theta,h+1}^{-\frac{1}{2}}\right\Vert\leq\sqrt{\kappa_1}\left(\left(1+\left\Vert\Sigma_h^{\frac{1}{2}}\left(\Delta\Sigma_h\right)\Sigma_h^{\frac{1}{2}}\right\Vert\right)\left(1+\left\Vert\Sigma_h^{-\frac{1}{2}}\left(\frac{\nabla_\theta^j\left(\Delta Y_{\theta,h}\right)}{G}\right)\Sigma_{h+1}^{-\frac{1}{2}}\right\Vert\right)-1\right).
\end{align*}
Now, define $\alpha=6\sqrt{\kappa_1}(4+\kappa_2+\kappa_3)\sqrt{\frac{C_1d\log\frac{8dmH}{\delta}}{K}}$ and pick
\begin{align*}
    K\geq 36\kappa_1(4+\kappa_2+\kappa_3)^2C_1dH^2\log\frac{8dmH}{\delta},\quad\lambda\leq C_1d\min_{h\in[H]}\sigma_{\textrm{min}}(\Sigma_h)\cdot\log\frac{8dmH}{\delta},
\end{align*}
we get $\alpha\leq\frac{1}{H}$. Using the results of Lemma \ref{dsig1}, Lemma \ref{dsig2}, Lemma \ref{dy}, we get with probability $1-2\delta$, 
\begin{align}
    \label{sig}
    \begin{aligned}
        \left\Vert\Sigma_h^{\frac{1}{2}}\left(\Delta\Sigma_h^{-1}\right)\Sigma_h^{\frac{1}{2}}\right\Vert\leq &2\left\Vert\Sigma_h^{-\frac{1}{2}}\widehat{\Sigma}_h\Sigma_h^{-\frac{1}{2}}-I_d\right\Vert\leq 2\sqrt{\frac{2C_1d\log\frac{2dH}{\delta}}{K}}+\frac{4C_1d\log\frac{2dH}{\delta}}{3K}+\frac{2\lambda\Vert\Sigma^{-1}_h\Vert}{K}\\
        \leq &4\sqrt{\frac{C_1d\log\frac{8dmH}{\delta}}{K}}\leq\alpha\leq 1, 
    \end{aligned}
\end{align}
and $\forall j\in[m]$, 
\begin{align*}
    \left\Vert\Sigma_h^{\frac{1}{2}}\left(\Delta Y_{\theta,h}\right)\Sigma_{h+1}^{-\frac{1}{2}}\right\Vert\leq 2(\kappa_2+1)\sqrt{\frac{C_1d\log\frac{8dmH}{\delta}}{K}}\leq 1,\quad\left\Vert\Sigma_h^{\frac{1}{2}}\left(\frac{\nabla_\theta^j\left(\Delta Y_{\theta,h}\right)}{G}\right)\Sigma_{h+1}^{-\frac{1}{2}}\right\Vert\leq 2(\kappa_3+1)\sqrt{\frac{C_1d\log\frac{8dmH}{\delta}}{K}}\leq 1,
\end{align*}
which implies
\begin{align}
    \label{dm}
    \left\Vert\Sigma_{\theta,h}^{\frac{1}{2}}\left(\Delta M_{\theta,h}\right)\Sigma_{\theta,h+1}^{-\frac{1}{2}}\right\Vert\leq 2\sqrt{\kappa_1}\left(\left\Vert\Sigma_h^{\frac{1}{2}}\left(\Delta\Sigma_h^{-1}\right)\Sigma_h^{\frac{1}{2}}\right\Vert+\left\Vert\Sigma_h^{\frac{1}{2}}\left(\Delta Y_{\theta,h}\right)\Sigma_{h+1}^{-\frac{1}{2}}\right\Vert\right)\leq\alpha,
\end{align}
where we use the fact $(1+x_1)(1+x_2)-1\leq 2(x_1+x_2)$ whenever $x_1,x_2\in[0,1]$. Similarly, we get
\begin{align}
    \label{ddm}
    \left\Vert\Sigma_{\theta,h}^{\frac{1}{2}}\left(\frac{\nabla_\theta^j\left(\Delta M_{\theta,h}\right)}{G}\right)\Sigma_{\theta,h+1}^{-\frac{1}{2}}\right\Vert\leq\alpha,\quad\forall j\in[m].
\end{align}
Meanwhile, by Lemma \ref{eps}, we get with probability $1-\delta$,
\begin{align}
    \label{ep1}
    \left\Vert\Sigma_h^{-\frac{1}{2}}\frac{1}{K}\sum_{k=1}^K\phi\left(s_h^{(k)},a_h^{(k)}\right)\varepsilon_{h,k}^\theta\right\Vert&\leq 4\sqrt{d}(H-h+1)\sqrt{\frac{\log\frac{8dmH}{\delta}}{K}}\\
    \label{ep2}
    \left\Vert\Sigma^{-\frac{1}{2}}_h\frac{1}{K}\sum_{k=1}^K\phi\left(s_h^{(k)},a_h^{(k)}\right)\nabla_\theta^j\varepsilon_{h,k}^\theta\right\Vert&\leq 8\sqrt{d}G(H-h)^2\sqrt{\frac{\log\frac{8dmH}{\delta}}{K}}.
\end{align}
Combining the results of \eqref{sig}, \eqref{dm}, \eqref{ddm}, \eqref{ep1},\eqref{ep2} and use a union bound, we have with probability $1-3\delta$, 
\begin{align*}
    \vert E_{21}^j\vert\leq&\sum_{h=1}^H16h\alpha\sqrt{d}G(H-h)^2\left\Vert\Sigma_{\theta,1}^{-\frac{1}{2}}\nu^\theta_1\right\Vert\left\Vert\Sigma_{\theta,h}^{\frac{1}{2}}\Sigma_h^{-\frac{1}{2}}\right\Vert \sqrt{\frac{\log\frac{8dmH}{\delta}}{K}}\\
    \leq&16\alpha\sqrt{d}H^4G\left\Vert\Sigma_{\theta,1}^{-\frac{1}{2}}\nu^\theta_1\right\Vert\max_{h\in[H]}\left\Vert\Sigma_{\theta,h}^{\frac{1}{2}}\Sigma_h^{-\frac{1}{2}}\right\Vert\sqrt{\frac{\log\frac{8dmH}{\delta}}{K}},\\
    \vert E_{22}^j\vert\leq&\sum_{h=1}^H8h\alpha\sqrt{d}(H-h+1)\left\Vert\Sigma_{\theta,1}^{-\frac{1}{2}}\nabla_\theta^j\nu^\theta_1\right\Vert\left\Vert\Sigma_{\theta,h}^{\frac{1}{2}}\Sigma_h^{-\frac{1}{2}}\right\Vert\sqrt{\frac{\log\frac{8dmH}{\delta}}{K}}\\
    \leq &8\alpha\sqrt{d}H^3\left\Vert\Sigma_{\theta,1}^{-\frac{1}{2}}\nabla_\theta^j\nu^\theta_1\right\Vert\max_{h\in[H]}\left\Vert\Sigma_{\theta,h}^{\frac{1}{2}}\Sigma_h^{-\frac{1}{2}}\right\Vert\sqrt{\frac{\log\frac{8dmH}{\delta}}{K}}\\
    \vert E_{23}^j\vert\leq&\sum_{h=1}^H16Gh\alpha\sqrt{d}(H-h+1)(h-1)\left\Vert\Sigma_{\theta,1}^{-\frac{1}{2}}\nu^\theta_1\right\Vert\left\Vert\Sigma_{\theta,h}^{\frac{1}{2}}\Sigma_h^{-\frac{1}{2}}\right\Vert\sqrt{\frac{\log\frac{8dmH}{\delta}}{K}}\\
    \leq&16\alpha\sqrt{d}H^4G\left\Vert\Sigma_{\theta,1}^{-\frac{1}{2}}\nu^\theta_1\right\Vert\max_{h\in[H]}\left\Vert\Sigma_{\theta,h}^{\frac{1}{2}}\Sigma_h^{-\frac{1}{2}}\right\Vert\sqrt{\frac{\log\frac{8dmH}{\delta}}{K}},
\end{align*}
where we use the fact $(1+\alpha)^h-1\leq 2h\alpha$ whenever $\alpha h\leq 1$.  Summing up the above terms and using the definition of $\alpha$, we get
\begin{align*}
    \vert E_2^j\vert\leq 240\sqrt{\kappa_1}(4+\kappa_2+\kappa_3)\sqrt{C_1}dH^3\left(\left\Vert\Sigma_{\theta,1}^{-\frac{1}{2}}\nabla_\theta^j\nu^\theta_1\right\Vert+HG\left\Vert\Sigma_{\theta,1}^{-\frac{1}{2}}\nu^\theta_1\right\Vert\right)\max_{h\in[H]}\left\Vert\Sigma_{\theta,h}^{\frac{1}{2}}\Sigma_h^{-\frac{1}{2}}\right\Vert\frac{\log\frac{8dmH}{\delta}}{K},\quad\forall j\in[m]
\end{align*}
Replacing $\delta$ by $\frac{\delta}{3}$, we have finished the proof. 
\end{proof}

\subsection{Proof of Lemma \ref{e3}}
\begin{proof}
Similar to the decomposition in the proof of Lemma \ref{e2}, we have
\begin{align*}
    \left\vert E_3^j\right\vert=&\frac{\lambda}{K}\left\vert\sum_{h=1}^H\nabla_\theta^j\left(\left(\widehat{\nu}^\theta_h\right)^\top\widehat{\Sigma}_h^{-1}w_h^\theta\right)\right\vert\\
    \leq&\frac{\lambda}{K}\sum_{h=1}^H\left\Vert\Sigma_{\theta,1}^{-\frac{1}{2}}\nu^\theta_1\right\Vert\left\Vert\Sigma_{\theta,h}^{\frac{1}{2}}\Sigma_h^{-\frac{1}{2}}\right\Vert\left(\prod_{h^\prime=1}^{h-1}\left(1+\left\Vert\Sigma_{\theta,h^\prime}^{\frac{1}{2}}\left(\Delta M_{\theta,h^\prime}\right)\Sigma_{\theta,h^\prime+1}^{-\frac{1}{2}}\right\Vert\right)\right)\left(1+\left\Vert\Sigma_h^{\frac{1}{2}}\left(\Delta\Sigma_h^{-1}\right)\Sigma_h^{\frac{1}{2}}\right\Vert\right)\left\Vert\Sigma_h^{-1}\right\Vert\left\Vert\Sigma_h^{\frac{1}{2}}\nabla_\theta^j w_h^\theta\right\Vert\\
    +&\frac{\lambda}{K}\sum_{h=1}^H\left\Vert\Sigma_{\theta,1}^{-\frac{1}{2}}\nabla_\theta^j\nu^\theta_1\right\Vert\left\Vert\Sigma_{\theta,h}^{\frac{1}{2}}\Sigma_h^{-\frac{1}{2}}\right\Vert\left(\prod_{h^\prime=1}^{h-1}\left(1+\left\Vert\Sigma_{\theta,h^\prime}^{\frac{1}{2}}\left(\Delta M_{\theta,h^\prime}\right)\Sigma_{\theta,h^\prime+1}^{-\frac{1}{2}}\right\Vert\right)\right)\left(1+\left\Vert\Sigma_h^{\frac{1}{2}}\left(\Delta\Sigma_h^{-1}\right)\Sigma_h^{\frac{1}{2}}\right\Vert\right)\left\Vert\Sigma_h^{-1}\right\Vert\left\Vert\Sigma_h^{\frac{1}{2}}w_h^\theta\right\Vert\\
    +&\frac{\lambda}{K}\sum_{h=1}^H\sum_{h^\prime=1}^{h-1}G\left\Vert\Sigma_{\theta,1}^{-\frac{1}{2}}\nu^\theta_1\right\Vert\left\Vert\Sigma_{\theta,h}^{\frac{1}{2}}\Sigma_h^{-\frac{1}{2}}\right\Vert\\
    &\cdot\left(\prod_{h^{\prime\prime}\neq h^\prime}\left(1+\left\Vert\Sigma_{\theta,h^{\prime\prime}}^{\frac{1}{2}}\left(\Delta M_{\theta,h^{\prime\prime}}\right)\Sigma_{\theta,h^{\prime\prime}}^{-\frac{1}{2}}\right\Vert\right)\right)\left(1+\left\Vert\Sigma_{\theta,h^\prime}^{\frac{1}{2}}\left(\frac{\nabla_\theta^j\left(\Delta M_{\theta,h^\prime}\right)}{G}\right)\Sigma_{\theta,h^\prime}^{-\frac{1}{2}}\right\Vert\right)\left(1+\left\Vert\Sigma_h^{\frac{1}{2}}\left(\Delta\Sigma_h^{-1}\right)\Sigma_h^{\frac{1}{2}}\right\Vert\right)\\
    &\cdot\left\Vert\Sigma_h^{-1}\right\Vert\left\Vert\Sigma_h^{\frac{1}{2}}w_h^\theta\right\Vert\\
    \leq&\frac{\lambda}{K}\sum_{h=1}^H\left(1+\alpha\right)^h\left\Vert\Sigma_h^{-1}\right\Vert\left\Vert\Sigma_{\theta,h}^{\frac{1}{2}}\Sigma_h^{-\frac{1}{2}}\right\Vert\left(\left\Vert\Sigma_{\theta,1}^{-\frac{1}{2}}\nu^\theta_1\right\Vert\left\Vert\Sigma_h^{\frac{1}{2}}\nabla_\theta^j w_h^\theta\right\Vert+\left\Vert\Sigma_{\theta,1}^{-\frac{1}{2}}\nabla_\theta^j\nu^\theta_1\right\Vert\left\Vert\Sigma_h^{\frac{1}{2}}w_h^\theta\right\Vert+G(h-1)\left\Vert\Sigma_{\theta,1}^{-\frac{1}{2}}\nu^\theta_1\right\Vert\left\Vert\Sigma_h^{\frac{1}{2}}w_h^\theta\right\Vert\right),
\end{align*}
where $\alpha$ is defined in the same way as that in the proof of Lemma \ref{e2}. Similarly, we have $\alpha\leq\frac{1}{H}$ with probability $1-3\delta$ and we have
\begin{align*}
    \left\Vert\Sigma_h^{\frac{1}{2}}\nabla_\theta^j w_h^\theta\right\Vert^2&=\mathbb{E}\left[\left(\nabla_\theta^j Q^\theta_h\left(s_h^{(1)},a_h^{(1)}\right)\right)^2\right]\leq G^2(H-h)^4\\
    \left\Vert\Sigma_h^{\frac{1}{2}}w_h^\theta\right\Vert^2&=\mathbb{E}\left[\left(Q^\theta_h\left(s_h^{(1)},a_h^{(1)}\right)\right)^2\right]\leq(H-h+1)^2.
\end{align*}
We conclude
\begin{align*}
    \vert E_3^j\vert\leq&3\frac{\lambda}{K}\sum_{h=1}^H\left\Vert\Sigma_h^{-1}\right\Vert\left\Vert\Sigma_{\theta,h}^{\frac{1}{2}}\Sigma_h^{-\frac{1}{2}}\right\Vert\left(\left\Vert\Sigma_{\theta,1}^{-\frac{1}{2}}\nu^\theta_1\right\Vert G(H-h)^2+\left\Vert\Sigma_{\theta,1}^{-\frac{1}{2}}\nabla_\theta^j\nu^\theta_1\right\Vert(H-h+1)+G(h-1)\left\Vert\Sigma_{\theta,1}^{-\frac{1}{2}}\nu^\theta_1\right\Vert(H-h+1)\right)\\
    \leq&6\frac{\lambda}{K}H^2\max_{h\in[H]}\left\Vert\Sigma_h^{-1}\right\Vert\left\Vert\Sigma_{\theta,h}^{\frac{1}{2}}\Sigma_h^{-\frac{1}{2}}\right\Vert\left(\left\Vert\Sigma_{\theta,1}^{-\frac{1}{2}}\nu^\theta_1\right\Vert GH+\left\Vert\Sigma_{\theta,1}^{-\frac{1}{2}}\nabla_\theta^j\nu^\theta_1\right\Vert\right)\\
    \leq&6\frac{\log\frac{8dmH}{\delta}C_1dH^2}{K}\max_{h\in[H]}\left\Vert\Sigma_{\theta,h}^{\frac{1}{2}}\Sigma_h^{-\frac{1}{2}}\right\Vert\left(\left\Vert\Sigma_{\theta,1}^{-\frac{1}{2}}\nu^\theta_1\right\Vert GH+\left\Vert\Sigma_{\theta,1}^{-\frac{1}{2}}\nabla_\theta^j\nu^\theta_1\right\Vert\right).
\end{align*}
Replacing $\delta$ by $\frac{\delta}{3}$, we have finished the proof. 
\end{proof}

\section{Extension to Time-homogeneous Discounted MDP}
\label{homo}
\subsection{Approach}
Our method can be easily extended to the case of time-homogeneous discounted MDP. Similar to the time-inhomogeneous case, under the setting of the time-homogeneous discounted MDP, an instance of MDP is defined by $(\mathcal{S},\mathcal{A},p,r, \xi,\gamma)$ where $\mathcal{S}$ and $\mathcal{A}$ are the state and action spaces, $\gamma\in(\frac{1}{2},1)$ is the discount factor, $p:\mathcal{S}\times\mathcal{A}\times\mathcal{S}\rightarrow\mathbb{R}_+$ is the transition probability, $r:\mathcal{S}\times\mathcal{A}\rightarrow[0,1]$ is the reward function and $\xi:\mathcal{S}\rightarrow\mathbb{R}_+$ is the initial state distribution. Similarly, the policy $\pi:\mathcal{S}\times\mathcal{A}\rightarrow\mathbb{R}_+$ is a distribution over the action space conditioned on an arbitrary given state $s$. We define the value function and $Q$ function by
\begin{align*}
v^\theta=\mathbb{E}\left[\sum_{h=1}^\infty\gamma^{h-1}r(s_h,a_h)\right],\quad Q^\theta(s,a)=\mathbb{E}\left[\left.\sum_{h=1}^\infty\gamma^{h-1}r(s_h,a_h)\right\vert s_1=s,a_1=a\right]
\end{align*}
Note that here the reward and Q function no longer contain the subscript $h$. We still consider the class of linear functions $\mathcal{F}$ with state-action feature $\phi$, and denote $\mathcal{P}_\theta$ as the transition operator where $\theta\in\mathbb{R}^m$ is the parameter of the policy. 
\begin{assumption}
\label{fclass_homo}
For any $f\in\mathcal{F}$, we have $\mathcal{P}_\theta f\in\mathcal{F}$, and we suppose $r\in\mathcal{F}$.
\end{assumption}
In addition, we assume that the constant function belongs to $\mathcal{F}$, i.e., there exists some $w_0$ such that $\phi(s,a)^\top w_0 = 1, \forall s\in\mathcal{S}, a\in\mathcal{A}$. Define the covariance matrix $\Sigma$ and its empirical version $\widehat{\Sigma}$ by
\begin{align*}
    \Sigma:=\mathbb{E}\left[\frac{1}{H}\sum_{h=1}^H\phi\left(s^{(1)}_h,a^{(1)}_h\right)\phi\left(s^{(1)}_h,a^{(1)}_h\right)^\top\right], \quad\widehat{\Sigma}:=\frac{1}{HK}\left(\lambda I_d+\sum_{k=1}^K\sum_{h=1}^H\phi\left(s_h^{(k)}, a_h^{(k)}\right)\phi\left(s_h^{(k)},a_h^{(k)}\right)^\top\right),
\end{align*}
where $I_d\in\mathbb{R}^{d\times d}$ is the identity matrix. 
\begin{assumption}[Boundedness Conditions]\label{Boundedness_Conditions_homo}
	Assume $\Sigma$ is invertible. There exist absolute constants $C_1, G$ such that for any $(s,a)\in \mathcal{S}\times\mathcal{A},j\in[m]$, we have
	\begin{align*}
		\phi(s,a)^\top\Sigma^{-1}\phi(s,a)\leq C_1 d,\quad\left\vert\nabla_\theta^j\log\pi_\theta(a\vert s)\right\vert\leq G.
	\end{align*}
\end{assumption}
Define $\widehat{w}_r\in\mathbb{R}^{d},\widehat{M_\theta}\in\mathbb{R}^{d\times d},\widehat{\nabla_\theta M_\theta}\in\mathbb{R}^{d\times d},j\in[m]$ by
\begin{align*}
	\widehat{w}_r&:=\widehat{\Sigma}^{-1}\frac{1}{HK}\sum_{k=1}^K\sum_{h=1}^H\phi\left(s_h^{(k)},a_h^{(k)}\right)r_h^{(k)},\\
	\widehat{M_\theta}&:=\widehat{\Sigma}^{-1}\frac{1}{KH}\sum_{k=1}^K\sum_{h=1}^H\phi\left(s_h^{(k)}, a_h^{(k)}\right)\int_{\mathcal{A}}\pi_\theta\left(a^\prime\left\vert s_{h+1}^{(k)}\right.\right)\phi\left(s_{h+1}^{(k)}, a^\prime\right)^\top\mathrm{d}a^\prime,\\
	\widehat{\nabla_\theta^j M_\theta}&:=\widehat{\Sigma}^{-1}\frac{1}{KH}\sum_{k=1}^K\sum_{h=1}^H\phi\left(s_h^{(k)},a_h^{(k)}\right)\int_{\mathcal{A}}\phi\left(s_{h+1}^{(k)}, a^\prime\right)^\top\nabla_\theta\pi_\theta\left(a^\prime\left\vert s_{h+1}^{(k)}\right.\right)\mathrm{d}a^\prime,\quad j\in[m].
\end{align*}
In this way, one can compute 
\begin{align*}
\widehat{Q}^\theta(\cdot,\cdot)=\phi(\cdot,\cdot)^\top\widehat{w}^\theta,\quad\widehat{\nabla_\theta^j Q^\theta}(\cdot,\cdot)=\phi(\cdot,\cdot)^\top\widehat{\nabla_\theta^j w^\theta},
\end{align*}
where 
\begin{align*}
\widehat{w}^\theta=\left(I_d-\gamma\widehat{M}_\theta\right)^{-1}\widehat{w}_r,\quad\widehat{\nabla_\theta^j w^\theta}=\left(I_d-\gamma\widehat{M}_\theta\right)^{-1}\widehat{\nabla_\theta^j M_\theta}\widehat{w}^\theta.
\end{align*}
Then the estimator is derived from
\begin{align*}
    &\widehat{\nabla_\theta v_\theta}=\int_{\mathcal{S}\times\mathcal{A}}\xi(s)\pi_\theta(a\vert s)\left(\widehat{\nabla_\theta Q^\theta}(s,a)+\left(\nabla_\theta\log\pi_\theta(a\vert s)\right)\widehat{Q}^\theta(s,a)\right)\mathrm{d}s\mathrm{d}a.
\end{align*}

\subsection{Results}
Define $\nu^\theta_h:=\mathbb{E}^{\pi_\theta}\left[\phi(s_h,a_h)\vert s_1\sim\xi\right], \nu^\theta=\sum_{h=1}^\infty\gamma^{h-1}\nu^\theta_h$ and $\Sigma_\theta:=\mathbb{E}^{\pi_\theta}\left[\left.\phi(s,a)\phi(s,a)^\top\right\vert s\sim\xi_\theta, a\sim\pi_\theta(\cdot\vert s)\right]$ where $\xi_\theta$ is the stationary distribution under $\pi_\theta$. Define
\begin{align*}
\phi_\theta(s)=\int_{\mathcal{A}}\pi_\theta(a^\prime\vert s)\phi(s,a^\prime)\mathrm{d}a^\prime,\quad\varepsilon_{h,k}^\theta=Q^\theta\left(s_h^{(k)},a_h^{(k)}\right)-r_h^{(k)}-\gamma\int_{\mathcal{A}}\pi_\theta\left(a^\prime\left\vert s_{h+1}^{(k)}\right.\right)Q^\theta\left(s_{h+1}^{(k)},a^\prime\right)\mathrm{d}a^\prime,     
\end{align*}
and 
\begin{align*}
   &\Lambda_\theta=\mathbb{E}\left[\frac{1}{H}\sum_{h=1}^H\left(\nabla_\theta\left(\varepsilon^\theta_{h,1}\phi\left(s_h^{(1)},a_h^{(1)}\right)^\top\Sigma^{-1}\nu^\theta\right)\right)^\top\nabla_\theta\left(\varepsilon^\theta_{h,1}\phi\left(s_h^{(1)},a_h^{(1)}\right)^\top\Sigma^{-1}\nu^\theta\right)\right].
\end{align*}
We first give the finite sample guarantee. 
\begin{theorem}[Finite Sample Guarantee] 
\label{thm2_var_homo}
For any $t\in\mathbb{R}^m$, when $K\geq 36\kappa_1(4+\kappa_2+\kappa_3)^2C_1d(1-\gamma)^{-2}\log\frac{16dmH}{\delta}$ and $\lambda\leq\log\frac{8dmH}{\delta}C_1d\sigma_{\min}(\Sigma)$, with probability $1-\delta$, we have,
\begin{align*}
    &\vert\langle t, \widehat{\nabla_\theta v_\theta}-\nabla_\theta v_\theta\rangle \vert\leq  \sqrt{\frac{2t^\top\Lambda_\theta t}{HK}\cdot \log\frac{8}{\delta}}+\frac{C_\theta\Vert t\Vert\log\frac{32mdH}{\delta}}{HK},
\end{align*}
where $C_\theta=240C_1dm^{0.5}(1-\gamma)^{-3}\kappa_1(5+\kappa_2+\kappa_3)\left(\max_{j\in[m]}\left\Vert\Sigma_\theta^{-\frac{1}{2}}\nabla_\theta^j\nu^\theta_1\right\Vert+\frac{G}{1-\gamma}\left\Vert\Sigma_\theta^{-\frac{1}{2}}\nu^\theta_1\right\Vert\right)$ and
\begin{align*}
    \kappa_1&=\frac{\sigma_{\max}\left(\Sigma^{-\frac{1}{2}}\Sigma_\theta\Sigma^{-\frac{1}{2}}\right)}{\sigma_{\min}\left(\Sigma^{-\frac{1}{2}}\Sigma_\theta\Sigma^{-\frac{1}{2}}\right)\wedge 1},\quad\kappa_2=\left\Vert\Sigma^{-\frac{1}{2}}\mathbb{E}\left[\phi_\theta\left(s_{h+1}^{(1)}\right)\phi_\theta\left(s_{h+1}^{(1)}\right)^\top\right]\Sigma^{-\frac{1}{2}}\right\Vert^{\frac{1}{2}},\\
    \kappa_3&=\frac{1}{G}\max_{j\in[m]}\left\Vert\Sigma^{-\frac{1}{2}}\mathbb{E}\left[\left(\nabla_\theta^j\phi_\theta\left(s_{h+1}^{(1)}\right)\right)\left(\nabla_\theta^j\phi_\theta\left(s_{h+1}^{(1)}\right)\right)^\top\right]\Sigma^{-\frac{1}{2}}\right\Vert^{\frac{1}{2}}.
\end{align*}
\end{theorem}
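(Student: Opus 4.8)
The plan is to follow the proof of Theorem~\ref{thm2_var} almost verbatim, replacing the finite horizon $H$ and the finite operator products by the discount factor $\gamma$ and the resolvent $(I-\gamma M_\theta)^{-1}$. First I would establish the discounted analogue of the error decomposition of Lemma~\ref{error_decomp}, writing $\nabla_\theta v_\theta-\widehat{\nabla_\theta v_\theta}=E_1+E_2+E_3$, where $E_1$ is the leading martingale term assembled from $\nabla_\theta((\nu^\theta)^\top\Sigma^{-1}\phi(s_h^{(k)},a_h^{(k)})\varepsilon_{h,k}^\theta)$, where $E_2$ carries the factor $(\widehat{\nu}^\theta)^\top\widehat{\Sigma}^{-1}-(\nu^\theta)^\top\Sigma^{-1}$, and where $E_3$ is the ridge term proportional to $\lambda$. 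The only structural change is that $\widehat{\nu}^\theta=(I-\gamma\widehat{M}_\theta^\top)^{-1}\nu^\theta_1$ is now generated by a resolvent rather than a finite product, so the telescoping identity behind Lemma~\ref{Q_decomp} is replaced by $(I-\gamma\widehat{M}_\theta)^{-1}-(I-\gamma M_\theta)^{-1}=\gamma(I-\gamma\widehat{M}_\theta)^{-1}(\widehat{M}_\theta-M_\theta)(I-\gamma M_\theta)^{-1}$.

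Second, I would port the concentration lemmas. The empirical-covariance bound (Lemma~\ref{dsig1}), the cross-covariance bounds on $\Delta Y_\theta$ and $\nabla_\theta^j\Delta Y_\theta$ (Lemma~\ref{dy}), and the score-weighted noise bounds on $\varepsilon_{h,k}^\theta$ and $\nabla_\theta^j\varepsilon_{h,k}^\theta$ (Lemma~\ref{eps}) all carry over, but now the averaging runs over the $HK$ pairs $\{(s_h^{(k)},a_h^{(k)})\}$. Since the $H$ steps inside one trajectory are dependent, I would apply the matrix Freedman inequality treating the per-step summands as a martingale difference sequence adapted to the within-trajectory filtration; this yields the same bounds with effective sample size $HK$ in place of $K$ and with $(1-\gamma)^{-1}$ playing the role of $H$ in the range and variance proxies.

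Third, and what I expect to be the main obstacle, is the weighted-norm contraction argument. The analogue of Lemma~\ref{ineq} gives $\Vert\Sigma_\theta^{1/2}M_\theta\Sigma_\theta^{-1/2}\Vert\le 1$ and $\Vert\Sigma_\theta^{1/2}(\nabla_\theta^j M_\theta)\Sigma_\theta^{-1/2}\Vert\le G$, so in the $\Sigma_\theta$-weighted norm the true resolvent obeys $\Vert(I-\gamma M_\theta)^{-1}\Vert\le\sum_{k\ge 0}\gamma^k=(1-\gamma)^{-1}$. To control $E_2$ and $E_3$ I must ensure the \emph{empirical} resolvent stays well defined and contractive, i.e. that $\gamma\Vert\Sigma_\theta^{1/2}\widehat{M}_\theta\Sigma_\theta^{-1/2}\Vert<1$ with high probability. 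Writing $\Vert\Sigma_\theta^{1/2}(\widehat{M}_\theta-M_\theta)\Sigma_\theta^{-1/2}\Vert\le\alpha$ with $\alpha=O(\sqrt{\kappa_1}(\kappa_2+\kappa_3)\sqrt{C_1 d\log(\cdot)/(HK)})$ exactly as in \eqref{dm}, the hypothesis $K\ge 36\kappa_1(4+\kappa_2+\kappa_3)^2 C_1 d(1-\gamma)^{-2}\log\frac{16dmH}{\delta}$ forces $\alpha\le\tfrac12(1-\gamma)$, so that $\gamma(1+\alpha)<1$ and the geometric series for the perturbed resolvent converges with norm at most $(1-\gamma(1+\alpha))^{-1}=O((1-\gamma)^{-1})$. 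This replaces the ``$\alpha\le 1/H$'' condition used inside Lemma~\ref{e2}, and it is the precise place where the $(1-\gamma)^{-2}$ sample requirement is consumed.

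Finally, with the perturbations of $\widehat{\nu}^\theta$ and $\widehat{\Sigma}^{-1}$ so controlled, the bounds on $E_2$ and $E_3$ follow from the arguments of Lemmas~\ref{e2} and~\ref{e3} under the substitution $H\mapsto(1-\gamma)^{-1}$; the three occurrences of the effective horizon — the resolvent itself, its $\theta$-derivative, and the accumulated perturbation — each contribute one factor $(1-\gamma)^{-1}$, producing the $(1-\gamma)^{-3}$ scaling inside $C_\theta$. The leading term $E_1$ is handled by the Bernstein estimate of Lemma~\ref{e1_finite_product} applied to the $HK$-block martingale, giving the variance term $\sqrt{2 t^\top\Lambda_\theta t\,\log(8/\delta)/(HK)}$. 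Summing the three contributions and rescaling $\delta$ yields the claimed inequality.
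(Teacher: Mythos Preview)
Your plan is essentially the paper's own proof: decompose the error as $E_1+E_2+E_3$ via the discounted analogue of Lemma~\ref{error_decomp}, control $E_1$ by a martingale Bernstein bound, and control $E_2,E_3$ by bounding the resolvent perturbation in the $\Sigma_\theta$-weighted norm. Your observation that the hypothesis on $K$ forces $\alpha\le\tfrac12(1-\gamma)$, so that the empirical resolvent is well defined with norm $O((1-\gamma)^{-1})$, is exactly the mechanism the paper uses (Lemma~\ref{dm2_homo} and the proof of Lemma~\ref{e2_homo}). Your resolvent identity is equivalent to the paper's power-series telescoping (Lemmas~\ref{Q_hat_decomp_homo}, \ref{Q_decomp_homo}); your formulation is arguably cleaner.

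There is one genuine slip. You propose applying matrix Freedman to \emph{all} concentration estimates at the per-step level, claiming effective sample size $HK$ throughout. This is valid for the Bellman residuals $\varepsilon_{h,k}^\theta$, which satisfy $\mathbb{E}[\varepsilon_{h,k}^\theta\mid\mathcal{F}_{h,k}]=0$, and the paper does exactly this in Lemma~\ref{eps_homo}. But for the empirical covariance and the cross-covariance $\Delta Y_\theta$, the per-step summands $\Sigma^{-1/2}\phi(s_h^{(k)},a_h^{(k)})\phi(s_h^{(k)},a_h^{(k)})^\top\Sigma^{-1/2}$ and $\Sigma^{-1/2}\phi(s_h^{(k)},a_h^{(k)})\phi_\theta(s_{h+1}^{(k)})^\top\Sigma^{-1/2}$ are \emph{not} martingale differences: their conditional expectations given $\mathcal{F}_{h,k}$ depend on the current state, not on $\Sigma$ or $\Sigma M_\theta$. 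Freedman therefore does not deliver $HK$ effective samples here. The paper instead averages within each trajectory first and applies matrix Bernstein to the $K$ i.i.d.\ trajectory-level matrices (Lemmas~\ref{dsig1_homo} and~\ref{dy_homo}), yielding bounds scaling as $\sqrt{C_1d\log(\cdot)/K}$ rather than $\sqrt{C_1d\log(\cdot)/(HK)}$; this is why the theorem's sample-size requirement is on $K$ alone and why the paper's $\alpha$ is $6\sqrt{\kappa_1}(4+\kappa_2+\kappa_3)\sqrt{C_1d\log(\cdot)/K}$. Your claimed $\alpha\sim\sqrt{C_1d\log(\cdot)/(HK)}$ is thus unjustified as written, though once corrected the rest of your argument goes through unchanged.
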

\begin{theorem}[Finite Sample Guarantee - Reward Free]
\label{thm2_homo}
Let the conditions in Theorem \ref{thm2_var_homo} hold, with probability $1-\delta$, we have for any reward function $r$,
\begin{align*}
&\left\vert\widehat{\nabla_\theta^j v_\theta}-\nabla_\theta^j v_\theta\right\vert\leq 4b_\theta\sqrt{\frac{\log\frac{8m}{\delta}}{HK}}+\frac{2C_\theta\log\frac{32mdH}{\delta}}{HK}, \quad\forall j\in[m],
\end{align*}
where $b_\theta=\frac{G}{(1-\gamma)^2}\left\Vert\Sigma^{-\frac{1}{2}}\nu^\theta\right\Vert+\frac{1}{1-\gamma}\left\Vert\Sigma^{-\frac{1}{2}}\nabla_\theta^j\nu^\theta\right\Vert$ and $C_\theta$ is the same as that in Theorem \ref{thm2_var_homo}. If we in addition have $\phi(s^\prime,a^\prime)^\top\Sigma^{-1}\phi(s,a)\geq 0,\forall (s,a),(s^\prime,a^\prime)\in\mathcal{S}\times\mathcal{A}$, we have
\begin{align*}
\left\vert\widehat{\nabla_\theta^j v_\theta}-\nabla_\theta^j v_\theta\right\vert\leq \frac{16G}{(1-\gamma)^2}\sqrt{\frac{\log\frac{8m}{\delta}}{HK}}\left\Vert\Sigma^{-\frac{1}{2}}\nu^\theta\right\Vert\log(C_1d)+\frac{2C_\theta\sqrt{m}\log\frac{32mdH}{\delta}}{HK},\quad\forall j\in[m].
\end{align*}
\end{theorem}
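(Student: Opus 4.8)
The plan is to follow the proof of the time-inhomogeneous Theorem \ref{thm2} almost verbatim, substituting the discounted concentration result of Theorem \ref{thm2_var_homo} and the identity that the pooled second moment $\mathbb{E}\!\left[\frac{1}{H}\sum_{h=1}^H \phi(s_h^{(1)},a_h^{(1)})\phi(s_h^{(1)},a_h^{(1)})^\top\right]$ equals $\Sigma$ exactly. First I would set $t=e_j$ in Theorem \ref{thm2_var_homo}, so that the task reduces to upper bounding $e_j^\top\Lambda_\theta e_j$. Expanding $\nabla_\theta^j\!\left(\varepsilon_{h,1}^\theta\,\phi^\top\Sigma^{-1}\nu^\theta\right)$ by the product rule and using $(a+b)^2\le 2a^2+2b^2$ splits $e_j^\top\Lambda_\theta e_j$ into a gradient-of-residual part, weighted by $(\phi^\top\Sigma^{-1}\nu^\theta)^2$, and a gradient-of-occupancy part, weighted by $(\phi^\top\Sigma^{-1}\nabla_\theta^j\nu^\theta)^2$.

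For the reward-free bound I would control each part by its pointwise envelope. The discounted analogues of Lemma \ref{upbd}, namely $|Q^\theta|\le (1-\gamma)^{-1}$ and $\|\nabla_\theta Q^\theta\|_\infty\le G(1-\gamma)^{-2}$, together with Bellman completeness (Assumption \ref{fclass_homo}), give $|\varepsilon_{h,1}^\theta|\lesssim (1-\gamma)^{-1}$ and $|\nabla_\theta^j\varepsilon_{h,1}^\theta|\lesssim G(1-\gamma)^{-2}$. Factoring these scalars out and invoking $\mathbb{E}\!\left[\frac{1}{H}\sum_h\phi_h\phi_h^\top\right]=\Sigma$ collapses the two expectations to $\|\Sigma^{-1/2}\nu^\theta\|^2$ and $\|\Sigma^{-1/2}\nabla_\theta^j\nu^\theta\|^2$, so that $e_j^\top\Lambda_\theta e_j\lesssim b_\theta^2$. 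Plugging this into Theorem \ref{thm2_var_homo} and taking a union bound over $j\in[m]$ produces the first display. In contrast to the inhomogeneous case, no $\min\{C_1 d,H\}$ factor survives, precisely because the $\frac{1}{H}\sum_h$ averaging reconstructs $\Sigma$ without accumulating a horizon factor.

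Under the additional hypothesis $\phi(s',a')^\top\Sigma^{-1}\phi(s,a)\ge 0$, the goal is to absorb the gradient-of-occupancy part into the occupancy part. Writing $\nabla_\theta^j\nu_{h'}^\theta=\mathbb{E}^{\pi_\theta}\!\left[\phi(s_{h'},a_{h'})\sum_{\ell\le h'}\nabla_\theta^j\log\pi_\theta(a_\ell|s_\ell)\right]$ and $\nu^\theta=\sum_{h'\ge1}\gamma^{h'-1}\nu_{h'}^\theta$, nonnegativity of the kernel yields, for every $(s,a)$, the inequality $|\phi^\top\Sigma^{-1}\nabla_\theta^j\nu^\theta|\le G\sum_{h'\ge1}\gamma^{h'-1}h'\,\phi^\top\Sigma^{-1}\nu_{h'}^\theta$, with all summands nonnegative. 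The main obstacle is that this is \emph{not} a constant multiple of $\phi^\top\Sigma^{-1}\nu^\theta=\sum_{h'}\gamma^{h'-1}\phi^\top\Sigma^{-1}\nu_{h'}^\theta$, since the weight $h'$ is unbounded. I would resolve this by truncating at a level $T$: for $h'\le T$ the factor $h'$ is at most $T$, giving $T\,\phi^\top\Sigma^{-1}\nu^\theta$; for the tail $h'>T$ I would use $\phi^\top\Sigma^{-1}\nu_{h'}^\theta\le C_1 d$ (from $\phi^\top\Sigma^{-1}\phi\le C_1 d$ and Cauchy--Schwarz), so that $\sum_{h'>T}\gamma^{h'-1}h'\,C_1 d\lesssim T\gamma^T C_1 d\,(1-\gamma)^{-2}$. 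Choosing $T\asymp \log(C_1 d)/\log(1/\gamma)\approx \log(C_1 d)/(1-\gamma)$ balances $\gamma^T C_1 d\lesssim 1$ and yields the effective factor $G\log(C_1 d)(1-\gamma)^{-2}$ appearing in the final display, while the gradient-of-residual part is handled exactly as above. The remaining care is to confirm that the chosen $T$ and the resulting truncation error are compatible with the sample-size and regularization conditions inherited from Theorem \ref{thm2_var_homo}, and that the discounted envelopes for $\varepsilon_{h,1}^\theta$ hold under Assumption \ref{Boundedness_Conditions_homo}.
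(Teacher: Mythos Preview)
Your proposal is correct and follows essentially the same route as the paper: both arguments set $t=e_j$ in Theorem~\ref{thm2_var_homo}, bound $e_j^\top\Lambda_\theta e_j$ via the product-rule split and the pointwise envelopes $|\varepsilon|\lesssim(1-\gamma)^{-1}$, $|\nabla_\theta^j\varepsilon|\lesssim G(1-\gamma)^{-2}$, and for the second display truncate the discounted sum at $\tilde H\asymp\log(C_1d)/(1-\gamma)$ with the tail controlled by $\phi^\top\Sigma^{-1}\nu_{h'}^\theta\le C_1d$. The only detail you leave implicit that the paper makes explicit is the lower bound $\|\Sigma^{-1/2}\nu^\theta\|\ge(1-\gamma)^{-1}\ge 1$ (obtained from $\mathbf{1}\in\mathcal F$), which is what allows the additive $O(1)$ tail contribution after truncation to be absorbed into the $\|\Sigma^{-1/2}\nu^\theta\|^2$ term.
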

The complete proofs of Theorem \ref{thm2_var_homo} and Theorem \ref{thm2_homo} are deferred to Appendix \ref{pfthm2_var_homo} and \ref{pfthm2_homo}. Next we show that FPG is an asymptotically normal and efficient estimator. 
\begin{theorem}[Asymptotic Normality]
\label{thm1_homo}
The FPG estimator is asymptotically normal:
\begin{align*}
    \sqrt{HK}\left(\widehat{\nabla_\theta v_\theta}-\nabla_\theta v_\theta\right)\stackrel{d}{\rightarrow}\mathcal{N}(0,\Lambda_\theta).
\end{align*}
\end{theorem}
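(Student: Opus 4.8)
The plan is to mirror the finite-horizon argument of Theorem~\ref{thm1}, treating the pair $(k,h)$ as a single martingale index and using the normalization $1/\sqrt{HK}$ in place of $1/\sqrt{K}$, so that the $HK$ truncated transitions play the role of the $K$ i.i.d. units in the inhomogeneous proof. First I would establish the discounted analog of the error decomposition in Lemma~\ref{error_decomp}, namely $\nabla_\theta v_\theta-\widehat{\nabla_\theta v_\theta}=E_1+E_2+E_3$, where the leading term is
\begin{align*}
    E_1=\nabla_\theta\left[\left(\nu^\theta\right)^\top\Sigma^{-1}\frac{1}{HK}\sum_{k=1}^K\sum_{h=1}^H\phi\left(s_h^{(k)},a_h^{(k)}\right)\varepsilon_{h,k}^\theta\right],
\end{align*}
and $E_2,E_3$ collect the perturbations coming from $\widehat\Sigma-\Sigma$, $\widehat{M}_\theta-M_\theta$, and the ridge regularizer. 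This is derived exactly as before, with the resolvent $(I_d-\gamma\widehat{M}_\theta)^{-1}$ replacing the finite operator product $\prod_{h'}\widehat{\mathcal{P}}_{\theta,h'}$ and the geometric occupancy $\nu^\theta=\sum_{h\ge1}\gamma^{h-1}\nu_h^\theta$ replacing the per-step occupancies.

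Next I would apply the martingale CLT to the leading term. Define, under the lexicographic ordering of $(k,h)$,
\begin{align*}
    X_{(k,h)}=\frac{1}{\sqrt{HK}}\,\nabla_\theta\left[\left(\nu^\theta\right)^\top\Sigma^{-1}\phi\left(s_h^{(k)},a_h^{(k)}\right)\varepsilon_{h,k}^\theta\right],
\end{align*}
so that $\sqrt{HK}\,E_1=\sum_{k,h}X_{(k,h)}$. The key observation is that $\{X_{(k,h)}\}$ is a martingale difference array with respect to the within-trajectory filtration: by Bellman's equation $\mathbb{E}[\varepsilon_{h,k}^\theta\mid s_h^{(k)},a_h^{(k)}]=0$, and by the policy-gradient Bellman equation the conditional mean of $\nabla_\theta\varepsilon_{h,k}^\theta$ also vanishes, so both pieces of $\nabla_\theta[\cdots]$ are conditionally centered given the history up to step $h$. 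I would then verify the two hypotheses of Lemma~\ref{CLT}. The uniform smallness $\max_{(k,h)}|t^\top X_{(k,h)}|\le\|t\|C/\sqrt{HK}\to0$ follows from the discounted boundedness bounds $|Q^\theta|\le(1-\gamma)^{-1}$ and $\|\nabla_\theta Q^\theta\|_\infty\le G(1-\gamma)^{-2}$ together with Assumption~\ref{Boundedness_Conditions_homo}, all of which make $C$ independent of $H$ and $K$ (the factor $\|\Sigma^{-1/2}\nu^\theta\|\le\sqrt{C_1d}/(1-\gamma)$ being $H$-free). The quadratic-variation convergence $\sum_{(k,h)}(t^\top X_{(k,h)})^2\to_p t^\top\Lambda_\theta t$ follows by writing this sum as $\frac{1}{K}\sum_k Z_k$ with $Z_k=\frac{1}{H}\sum_h\big(t^\top\nabla_\theta[\varepsilon_{h,k}^\theta\phi^\top\Sigma^{-1}\nu^\theta]\big)^2$, an i.i.d. average whose mean is exactly $t^\top\Lambda_\theta t$, and invoking the weak law of large numbers. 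Lemma~\ref{CLT} then gives $t^\top\sqrt{HK}\,E_1\to_d\mathcal{N}(0,t^\top\Lambda_\theta t)$ for every $t$, and Lemma~\ref{CW_thm} upgrades this to $\sqrt{HK}\,E_1\to_d\mathcal{N}(0,\Lambda_\theta)$.

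Finally I would argue the two remainder terms are asymptotically negligible. The discounted analogs of Lemmas~\ref{e2} and~\ref{e3}, which underlie Theorem~\ref{thm2_var_homo}, bound $\|E_2\|$ and $\|E_3\|$ by a quantity of order $1/(HK)$ under the stated conditions on $K$ and $\lambda$; multiplying by $\sqrt{HK}$ shows $\sqrt{HK}\,E_2\to_p0$ and $\sqrt{HK}\,E_3\to_p0$. Combining these with the CLT for $E_1$ via Slutsky's theorem yields $\sqrt{HK}(\widehat{\nabla_\theta v_\theta}-\nabla_\theta v_\theta)\to_d\mathcal{N}(0,\Lambda_\theta)$, as claimed.

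I expect the main obstacle to be the quadratic-variation step, since the $H$ summands within a trajectory are dependent—they are martingale increments, not i.i.d.—so the convergence $\sum(t^\top X)^2\to t^\top\Lambda_\theta t$ cannot invoke the weak law at the level of individual $(k,h)$ terms and must instead be organized around the trajectory as the i.i.d. unit, with the geometric occupancy $\nu^\theta$ and the resolvent $(I_d-\gamma\widehat{M}_\theta)^{-1}$ supplying the $(1-\gamma)^{-1}$ factors in place of the powers of $H$ from the finite-horizon analysis. A secondary technical point is re-deriving the error decomposition with the resolvent, where one must confirm that the Neumann-series manipulations converge, which is guaranteed by the homogeneous analog of Lemma~\ref{ineq} (i.e. $\|\Sigma_\theta^{1/2}M_\theta\Sigma_\theta^{-1/2}\|\le1$) together with $\gamma<1$.
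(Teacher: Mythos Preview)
Your proposal is correct and matches the paper's own proof essentially line for line: the paper uses the same decomposition $E_1+E_2+E_3$ from Lemma~\ref{error_decomp_homo}, defines the same martingale array $e_{h,k}^\theta=\frac{1}{\sqrt{HK}}\nabla_\theta\big[(\nu^\theta)^\top\Sigma^{-1}\phi(s_h^{(k)},a_h^{(k)})\varepsilon_{h,k}^\theta\big]$, verifies smallness via the discounted bounds of Lemma~\ref{upbd_homo}, obtains the quadratic-variation limit by grouping into trajectory-level i.i.d.\ summands and applying the WLLN, invokes the martingale CLT together with Cram\'er--Wold, and dispatches $E_2,E_3$ via Lemmas~\ref{e2_homo} and~\ref{e3_homo}. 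Your anticipated subtleties (organizing the WLLN around trajectories, and justifying the resolvent via $\|\Sigma_\theta^{1/2}M_\theta\Sigma_\theta^{-1/2}\|\le1$) are exactly the points the paper handles, the latter appearing as Lemma~\ref{ineq_homo} and Lemma~\ref{Q_hat_decomp_homo}.
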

The proof of Theorem \ref{thm1_homo} is deferred to Appendix \ref{pfthm1_homo}. An obvious corollary of Theorem \ref{thm1_homo} is that for any vector $t \in \mathbb{R}^m,$ 
\begin{equation*}
    \sqrt{HK}\left\langle t,  \widehat{\nabla_{\theta} v_{\theta}} -\nabla_{\theta} v_{\theta} \right\rangle \stackrel{d}{\rightarrow} \mathcal{N}\left(0, t^{\top} \Lambda_{\theta} t\right).
\end{equation*}
The following theorem states the Cramer Rao bound for FPG estimation. 
\begin{theorem}
\label{thm4_homo}
Let Assumption \ref{fclass_homo} hold. For any vector $t\in\mathbb{R}^m$, the variance of any unbiased estimator for $t^{\top}\nabla_{\theta}v_{\theta}  \in \mathbb{R}$ is lower bounded by $\frac{1}{\sqrt{HK}}t^{\top} \Lambda_{\theta} t.$
\end{theorem}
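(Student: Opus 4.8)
The plan is to replicate the semiparametric efficiency argument of Theorem~\ref{thm4}, replacing every $h$-indexed quantity by its time-homogeneous, discounted analogue. First I would embed the problem in a regular parametric submodel by perturbing the transition kernel as $\mathcal{P}_\eta = \mathcal{P} + \eta\,\Delta\mathcal{P}$ with the constraint $(\Delta\mathcal{P})\mathcal{F}\subseteq\mathcal{F}$, so that Assumption~\ref{fclass_homo} survives along the path and $\psi_\eta := \nabla_\theta v_{\theta,\eta}$ remains a smooth functional. Writing the score of an observed length-$H$ trajectory as $g(\tau)=\sum_{h=1}^{H} g(s_{h+1}\mid s_h,a_h)$, the influence function $\mathcal{I}(\tau)$ is pinned down by $\left.\partial_\eta \psi_\eta\right|_{\eta=0}=\mathbb{E}[g(\tau)\,\mathcal{I}(\tau)]$, and the Cramér--Rao bound for $\langle t,\nabla_\theta v_\theta\rangle$ from $K$ i.i.d.\ trajectories is $\tfrac{1}{K}\,\mathbb{E}[\langle t,\mathcal{I}(\tau)\rangle^2]=\tfrac{1}{K}\,t^\top\mathbb{E}[\mathcal{I}(\tau)^\top\mathcal{I}(\tau)]\,t$. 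The whole proof therefore reduces to exhibiting $\mathcal{I}$ in closed form and evaluating $\mathbb{E}[\mathcal{I}(\tau)^\top\mathcal{I}(\tau)]$.

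Second, I would compute $\left.\partial_\eta v_{\theta,\eta}\right|_{\eta=0}$ by differentiating through the infinite discounted trajectory law, after first interchanging $\partial_\eta$ with $\nabla_\theta$ as in the finite-horizon argument. Differentiation brings down the cumulative score; regrouping the resulting double sum so that each transition's score multiplies its discounted reward-to-go collapses the future into the discounted value $V^\theta$, and conditioning on $(s_h,a_h)$ then subtracting the conditional mean --- legitimate because the constant function lies in $\mathcal{F}$ --- produces a discounted, score-weighted sum involving the Bellman residuals $\varepsilon_{h,1}^\theta$.

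Third, the crucial change-of-measure step is a time-homogeneous reweighting identity: for any $f=w_f^\top\phi\in\mathcal{F}$, one has $\sum_{h\ge 1}\gamma^{h-1}\,\mathbb{E}^{\pi_\theta}[f(s_h,a_h)] = w_f^\top\nu^\theta = \mathbb{E}\big[\tfrac{1}{H}\sum_{h=1}^H f(s_h^{(1)},a_h^{(1)})\,w(s_h^{(1)},a_h^{(1)})\big]$, where $w(s,a):=\phi(s,a)^\top\Sigma^{-1}\nu^\theta$; this follows at once from $\mathbb{E}[\tfrac1H\sum_h\phi\phi^\top]=\Sigma$ and $\nu^\theta=\sum_h\gamma^{h-1}\nu^\theta_h$. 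Substituting this identity and taking $\nabla_\theta$ identifies $\mathcal{I}(\tau)=-\nabla_\theta\big[\tfrac1H\sum_{h=1}^H \phi(s_h^{(1)},a_h^{(1)})^\top\Sigma^{-1}\varepsilon_{h,1}^\theta\,\nu^\theta\big]$, the exact homogeneous analogue of the expression in Theorem~\ref{thm4}. Because the per-step residuals $\varepsilon^\theta_{h,1}$ satisfy $\mathbb{E}[\varepsilon^\theta_{h,1}\mid s_h,a_h]=0$ (the transition kernel is policy-independent, so the Bellman equation holds even under behavior data), the summands form a martingale-difference sequence and all cross terms vanish under conditional expectation; $\mathbb{E}[\mathcal{I}(\tau)^\top\mathcal{I}(\tau)]$ then collapses to $\tfrac1H\Lambda_\theta$, and the resulting bound $\tfrac1K t^\top\mathbb{E}[\mathcal{I}^\top\mathcal{I}]\,t=\tfrac{1}{HK}t^\top\Lambda_\theta t$ matches the $\sqrt{HK}$-asymptotic variance of Theorem~\ref{thm1_homo}.

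The main obstacle will be the infinite-horizon bookkeeping in the second step: I must justify interchanging $\partial_\eta$ with the infinite discounted sum and with $\nabla_\theta$, confirm that all the series converge, and reconcile the infinite discounted weighting on the target side with the finite time-average $\tfrac1H\sum_{h=1}^H$ that defines $\Sigma$ and $\Lambda_\theta$ on the data side. The contraction from $\gamma\in(\tfrac12,1)$ together with the boundedness in Assumption~\ref{Boundedness_Conditions_homo} supplies the dominated-convergence control needed to license these exchanges, after which the algebra is identical to the finite-horizon case.
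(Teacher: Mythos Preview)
Your proposal is correct and follows the paper's own proof essentially step for step: the same one-parameter submodel $\mathcal{P}_\eta$ with $(\Delta\mathcal{P})\mathcal{F}\subseteq\mathcal{F}$, the same score computation of $\partial_\eta v_{\theta,\eta}$, the same linear change-of-measure identity built from $\Sigma=\mathbb{E}[\tfrac1H\sum_h\phi\phi^\top]$ (the paper writes it per time step with $w_h=\phi^\top\Sigma^{-1}\nu_h^\theta$ and then sums, you pass directly to $w=\phi^\top\Sigma^{-1}\nu^\theta$, which is the same algebra), the same influence function $\mathcal{I}(\tau)=-\nabla_\theta\big[\tfrac1H\sum_{h=1}^H\phi(s_h^{(1)},a_h^{(1)})^\top\Sigma^{-1}\varepsilon^\theta_{h,1}\nu^\theta\big]$, and the same martingale-difference cancellation giving $\mathbb{E}[\mathcal{I}^\top\mathcal{I}]=\tfrac1H\Lambda_\theta$. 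Your final bound $\tfrac{1}{HK}t^\top\Lambda_\theta t$ is exactly what the paper's computation produces and is the rate matching Theorem~\ref{thm1_homo}; the $\tfrac{1}{\sqrt{HK}}$ appearing in the theorem statement is a typo.
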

The proof of Theorem \ref{thm4_homo} is deferred to Appendix \ref{pfthm4_homo}. 

\subsection{Additional Notations}
Define
\begin{align*}
\widehat{r}(\cdot,\cdot)&:= \phi(\cdot,\cdot)^\top \widehat{w}_r,\\
U^\theta &:=\gamma\mathcal{P}_\theta\left(\nabla_\theta\log\Pi_\theta\right) Q^\theta,\\
\widehat{U}^\theta&:=\gamma\widehat{\mathcal{P}}_\theta\left(\nabla_\theta\log\Pi_\theta\right)Q^\theta,\\
\tilde{U}^\theta&:=\gamma\widehat{\mathcal{P}}_\theta\left(\nabla_\theta\log\Pi_\theta\right)\widehat{Q}^\theta\\
\Delta Y_\theta&:=\frac{1}{HK}\sum_{k=1}^K\sum_{h=1}^H\phi\left(s_h^{(k)},a_h^{(k)}\right)\phi_\theta\left(s_{h+1}^{(k)}\right)^\top-\Sigma M_\theta\\
\Delta\Sigma^{-1}&:=\widehat{\Sigma}^{-1}-\Sigma.
\end{align*}
When $\mathcal{F}$ is the class of the linear functions, there exists matrix $M_\theta$ such that the transition probability satisfies
\begin{align*}
    \mathbb{E}^{\pi_\theta}\left[\phi(s^\prime,a^\prime)^\top\vert s,a\right] = \phi(s,a)^\top M_\theta. 
\end{align*}

\subsection{Technical Lemmas}
\begin{lemma}
\label{Q_decomp_base_homo}
We have
\begin{align*}
    Q^\theta=\sum_{h=1}^\infty\gamma^{h-1}\left(\mathcal{P}_\theta\right)^{h-1}r,\quad\nabla_\theta Q^\theta=\sum_{h=1}^\infty\gamma^{h-1}\left(\mathcal{P}_\theta\right)^{h-1}U^\theta.
\end{align*}
\end{lemma}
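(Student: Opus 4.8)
The plan is to reproduce the finite-horizon argument of Lemma~\ref{Q_decomp_base}, replacing its terminal truncation at step $H+1$ by a limiting argument that controls the infinite tail through the discount factor $\gamma<1$ and the contractivity of $\mathcal{P}_\theta$. First I would record the two Bellman identities in the discounted setting. The value identity is the standard $Q^\theta = r + \gamma\mathcal{P}_\theta Q^\theta$. For its differentiated form, I would start from $(\mathcal{P}_\theta f)(s,a)=\int p(s'\mid s,a)\,\pi_\theta(a'\mid s')\,f(s',a')\,ds'\,da'$, differentiate $\gamma\mathcal{P}_\theta Q^\theta$ in $\theta$, and use $\nabla_\theta\pi_\theta=\pi_\theta\,\nabla_\theta\log\pi_\theta$ to obtain $\nabla_\theta Q^\theta = U^\theta + \gamma\mathcal{P}_\theta\nabla_\theta Q^\theta$, where $U^\theta=\gamma\mathcal{P}_\theta(\nabla_\theta\log\Pi_\theta)Q^\theta$ is exactly the quantity defined just before the lemma. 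This is the time-homogeneous analogue of the policy gradient Bellman equation \eqref{bel}.

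For the value identity I would iterate $Q^\theta = r + \gamma\mathcal{P}_\theta Q^\theta$ a total of $n$ times, a routine induction, to get $Q^\theta = \sum_{h=1}^n \gamma^{h-1}(\mathcal{P}_\theta)^{h-1}r + \gamma^n(\mathcal{P}_\theta)^n Q^\theta$. The only nontrivial point is that the remainder $\gamma^n(\mathcal{P}_\theta)^n Q^\theta$ vanishes as $n\to\infty$: since $\mathcal{P}_\theta$ is a conditional expectation we have $\|\mathcal{P}_\theta f\|_\infty\le\|f\|_\infty$, and $r\in[0,1]$ gives $\|Q^\theta\|_\infty\le(1-\gamma)^{-1}$, so the remainder is bounded in sup-norm by $\gamma^n(1-\gamma)^{-1}\to 0$. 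Passing to the limit yields the stated series, which converges absolutely because $\|\gamma^{h-1}(\mathcal{P}_\theta)^{h-1}r\|_\infty\le\gamma^{h-1}$.

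The gradient identity follows by iterating $\nabla_\theta Q^\theta = U^\theta + \gamma\mathcal{P}_\theta\nabla_\theta Q^\theta$ identically, giving $\nabla_\theta Q^\theta = \sum_{h=1}^n\gamma^{h-1}(\mathcal{P}_\theta)^{h-1}U^\theta + \gamma^n(\mathcal{P}_\theta)^n\nabla_\theta Q^\theta$, where the series converges absolutely since $\|(\nabla_\theta\log\Pi_\theta)Q^\theta\|_\infty\le G(1-\gamma)^{-1}$ forces $\|U^\theta\|_\infty\le\gamma G(1-\gamma)^{-1}$. The step I expect to be the main obstacle is killing the tail $\gamma^n(\mathcal{P}_\theta)^n\nabla_\theta Q^\theta$, because, unlike $Q^\theta$, the gradient $\nabla_\theta Q^\theta$ is not bounded a priori. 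To supply the needed uniform bound I would invoke the linear representation $Q^\theta=\phi^\top w^\theta$ with $w^\theta=(I_d-\gamma M_\theta)^{-1}w_r$: the time-homogeneous analogue of Lemma~\ref{ineq} gives $\|\Sigma_\theta^{1/2}M_\theta\Sigma_\theta^{-1/2}\|\le 1$, so $\gamma M_\theta$ has $\Sigma_\theta$-weighted operator norm at most $\gamma<1$, making $I_d-\gamma M_\theta$ boundedly invertible and $w^\theta$ continuously differentiable in $\theta$. For a fixed $\theta$ the matrix $\nabla_\theta w^\theta\in\mathbb{R}^{d\times m}$ is therefore finite, and since Assumption~\ref{Boundedness_Conditions_homo} bounds $\|\phi(s,a)\|\le\sqrt{C_1 d\,\sigma_{\max}(\Sigma)}$, the function $\nabla_\theta Q^\theta=\phi^\top\nabla_\theta w^\theta$ is bounded in sup-norm. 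This simultaneously justifies differentiating the Bellman equation term by term and drives the remainder to $0$, completing the proof.
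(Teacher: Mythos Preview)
Your proposal is correct and follows the same approach as the paper: iterate the Bellman identity $Q^\theta=r+\gamma\mathcal{P}_\theta Q^\theta$ and its differentiated form $\nabla_\theta Q^\theta=U^\theta+\gamma\mathcal{P}_\theta\nabla_\theta Q^\theta$ to obtain the partial sums, then pass to the limit. The paper's proof is in fact less careful than yours---it writes the iteration with ``$\ldots$'' and ``by induction'' without explicitly controlling the remainder $\gamma^n(\mathcal{P}_\theta)^n\nabla_\theta Q^\theta$---so your extra work bounding $\nabla_\theta Q^\theta$ via the linear representation is a genuine addition of rigor, not a deviation in strategy.
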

\begin{proof}
By Bellman's equation, we have $Q^\theta=r+\gamma\mathcal{P}_\theta Q^\theta$, which implies
\begin{align*}
Q^\theta = r+\gamma\mathcal{P}_\theta Q^\theta = r+\gamma\mathcal{P}_\theta r + \gamma^2\left(\mathcal{P}_\theta\right)^2 Q^\theta = \ldots = \sum_{h=1}^\infty\gamma^{h-1}\left(\mathcal{P}_\theta\right)^{h-1}r,
\end{align*}
which proves the first equation. Differentiating on both sides of the Bellman's equation w.r.t. $\theta$, we have
\begin{align*}
    \nabla_\theta Q^\theta(s,a)=\gamma\mathbb{E}^{\pi_\theta}\left[\left(\nabla_\theta\log\pi_\theta(a^\prime\vert s^\prime)\right)Q^\theta(s^\prime,a^\prime)\vert s,a\right]+\gamma\mathbb{E}^{\pi_\theta}\left[\nabla_\theta Q^\theta(s^\prime,a^\prime)\vert s,a\right],
\end{align*}
i.e., $\nabla_\theta Q^\theta=U^\theta+\gamma\mathcal{P}_\theta\nabla_\theta Q^\theta$. By induction, we have proved the second equation. 
\end{proof}

The decomposition leads to the following boundedness result: 
\begin{lemma}
\label{upbd_homo}
We have $\vert Q^\theta(s,a)\vert\leq \frac{1}{1-\gamma},\ \left\Vert\nabla_\theta Q^\theta(s,a)\right\Vert_\infty\leq\frac{G}{(1-\gamma)^2},\ \forall s\in\mathcal{S},a\in\mathcal{A}.$
\end{lemma}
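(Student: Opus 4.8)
The plan is to imitate the proof of the finite-horizon bound in Lemma~\ref{upbd}, exploiting the geometric-series representations established in Lemma~\ref{Q_decomp_base_homo} together with the non-expansiveness of the transition operator $\mathcal{P}_\theta$. The one structural fact I would isolate first is that $\mathcal{P}_\theta$ is a contraction in the sup-norm: since $(\mathcal{P}_\theta f)(s,a) = \mathbb{E}^{\pi_\theta}[f(s',a')\mid s,a]$ is a conditional expectation, hence a convex average of the values of $f$, one has $\Vert\mathcal{P}_\theta f\Vert_\infty \leq \Vert f\Vert_\infty$ for every bounded $f$, and by induction $\Vert(\mathcal{P}_\theta)^{h-1} f\Vert_\infty \leq \Vert f\Vert_\infty$. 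This is the only operator-theoretic input needed.

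For the value bound I would start from $Q^\theta = \sum_{h=1}^\infty \gamma^{h-1}(\mathcal{P}_\theta)^{h-1} r$ of Lemma~\ref{Q_decomp_base_homo}. Applying non-expansiveness and $\Vert r\Vert_\infty \leq 1$ termwise gives $\Vert(\mathcal{P}_\theta)^{h-1}r\Vert_\infty \leq 1$, so $|Q^\theta(s,a)| \leq \sum_{h=1}^\infty \gamma^{h-1} = \frac{1}{1-\gamma}$, the series converging because $\gamma \in (\tfrac{1}{2},1)$. For the gradient bound I would first control $U^\theta = \gamma\,\mathcal{P}_\theta(\nabla_\theta\log\Pi_\theta)Q^\theta$ entrywise: for each coordinate $j\in[m]$,
\begin{align*}
    |(U^\theta)_j(s,a)| &= \gamma\,\bigl|\mathbb{E}^{\pi_\theta}[(\nabla_\theta^j\log\pi_\theta(a'\mid s'))\,Q^\theta(s',a')\mid s,a]\bigr| \leq \gamma\,G\cdot\frac{1}{1-\gamma},
\end{align*}
using $|\nabla_\theta^j\log\pi_\theta|\leq G$ from Assumption~\ref{Boundedness_Conditions_homo} and the value bound just established, so that $\Vert U^\theta\Vert_\infty \leq \frac{\gamma G}{1-\gamma}$. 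Feeding this into $\nabla_\theta Q^\theta = \sum_{h=1}^\infty \gamma^{h-1}(\mathcal{P}_\theta)^{h-1} U^\theta$ and again bounding termwise yields $\Vert\nabla_\theta Q^\theta(s,a)\Vert_\infty \leq \sum_{h=1}^\infty \gamma^{h-1}\cdot\frac{\gamma G}{1-\gamma} = \frac{\gamma G}{(1-\gamma)^2} \leq \frac{G}{(1-\gamma)^2}$, which is the claimed bound.

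There is no genuine obstacle here; the argument is a routine discounted analogue of Lemma~\ref{upbd}. The only points requiring a line of care are (i) verifying that $\mathcal{P}_\theta$ is sup-norm non-expansive and hence compatible with the termwise bounding, which is immediate from its expectation form, and (ii) ensuring the interchange of the infinite summation with the norm estimate is legitimate, i.e.\ that absolute convergence holds. The latter is guaranteed by $\gamma<1$, so that $\sum_h \gamma^{h-1}<\infty$ while every summand is uniformly bounded; both items are handled by the geometric series and require no assumptions beyond those already in force.
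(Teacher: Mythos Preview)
Your proposal is correct and follows exactly the approach the paper intends: the paper does not write out a proof for Lemma~\ref{upbd_homo} but simply states that the bound follows from the decomposition of Lemma~\ref{Q_decomp_base_homo}, and your argument is precisely that decomposition combined with the sup-norm non-expansiveness of $\mathcal{P}_\theta$ and termwise geometric-series bounding. Nothing is missing.
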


\begin{lemma}
\label{decomp_homo}
For any series of matrices $A_1,A_2,\ldots,A_n$ and $\Delta A_1,\Delta A_2,\ldots\Delta A_n$, we have
\begin{align*}
    \left\Vert\prod_{i=1}^n(A_i+\Delta A_i)-\prod_{i=1}^n A_i\right\Vert\leq \prod_{i=1}^n\left(\Vert A_i\Vert+\Vert\Delta A_i \Vert\right)-\prod_{i=1}^n\Vert A_i\Vert.
\end{align*}
\end{lemma}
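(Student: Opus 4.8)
The plan is to recognize that this statement is identical to Lemma \ref{decomp} proved earlier in the appendix, so the same telescoping expansion argument applies verbatim; since the bound involves only norms, nothing about the time-homogeneous setting changes the reasoning. Concretely, I would first fully distribute the noncommutative product $\prod_{i=1}^n (A_i + \Delta A_i)$, writing it as a sum indexed by binary strings $\delta \in \{0,1\}^n$, where the $i$-th factor contributes $A_i$ when $\delta_i = 1$ and $\Delta A_i$ when $\delta_i = 0$, with all matrices kept in their original left-to-right order. Subtracting $\prod_{i=1}^n A_i$ then exactly cancels the single term corresponding to $\delta = (1,1,\ldots,1)$, leaving
\begin{align*}
    \prod_{i=1}^n (A_i + \Delta A_i) - \prod_{i=1}^n A_i = \sum_{\delta \in \{0,1\}^n \setminus \{(1,\ldots,1)\}} \prod_{i=1}^n A_i^{\delta_i} (\Delta A_i)^{1-\delta_i}.
\end{align*}

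Next I would apply the triangle inequality across the sum and submultiplicativity of the operator norm within each product, so that each $A_i$ is replaced by $\Vert A_i \Vert$ and each $\Delta A_i$ by $\Vert \Delta A_i \Vert$. This is the step where the matrix ordering stops mattering: once we have passed to scalar norms, the products commute and the noncommutativity of the original expansion is irrelevant. Finally I would recognize the resulting scalar sum $\sum_{\delta \neq (1,\ldots,1)} \prod_{i=1}^n \Vert A_i \Vert^{\delta_i} \Vert \Delta A_i \Vert^{1-\delta_i}$ as the complete binomial-type expansion of $\prod_{i=1}^n \left( \Vert A_i \Vert + \Vert \Delta A_i \Vert \right)$ with its all-ones term $\prod_{i=1}^n \Vert A_i \Vert$ removed, which is exactly the claimed right-hand side.

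There is no real obstacle here: the only point requiring care is tracking the order of factors in the noncommutative expansion, and this is precisely what the passage to norms dissolves. The proof is a direct copy of that of Lemma \ref{decomp}, and I would simply note this equivalence rather than repeat the calculation in full.
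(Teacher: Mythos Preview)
Your proposal is correct and follows exactly the same approach as the paper's proof: expand the product over binary strings $\delta\in\{0,1\}^n$, subtract the all-ones term, apply the triangle inequality and submultiplicativity, and identify the resulting scalar sum with $\prod_{i=1}^n(\Vert A_i\Vert+\Vert\Delta A_i\Vert)-\prod_{i=1}^n\Vert A_i\Vert$. The paper indeed repeats the proof of Lemma~\ref{decomp} verbatim here, so your observation that it suffices to cite that lemma is entirely appropriate.
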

\begin{proof}
We have
\begin{align*}
    \left\Vert\prod_{i=1}^n(A_i+\Delta A_i)-\prod_{i=1}^n A_i\right\Vert&= \left\Vert\sum_{\delta\in\{0,1\}^n\setminus\{(1,1,\ldots,1)\}}\prod_{i=1}^n A_i^{\delta_i}(\Delta A_i)^{1-\delta_i}\right\Vert\leq\sum_{\delta\in \{0,1\}^n\setminus\{(1,1,\ldots,1)\}}\prod_{i=1}^n\Vert A_i\Vert^{\delta_i}\Vert\Delta A_i\Vert^{1-\delta_i}\\
    &=\prod_{i=1}^n\left(\Vert A_i\Vert+\Vert\Delta A_i\Vert\right)-\prod_{i=1}^n\Vert A_i\Vert.
\end{align*}
\end{proof}
The following lemma gives an upper bound on the 2-norm of $M_\theta$ and its derivatives. 
\begin{lemma}
\label{ineq_homo}
We have $\left\Vert\Sigma_\theta^{\frac{1}{2}}M_\theta\Sigma_\theta^{-\frac{1}{2}}\right\Vert \leq 1$ and $\left\Vert\Sigma_\theta^{\frac{1}{2}}\left(\nabla_\theta^j M_\theta\right)\Sigma_\theta^{-\frac{1}{2}}\right\Vert\leq G,\ \forall j\in[m]$.
\end{lemma}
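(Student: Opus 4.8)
The plan is to mimic the proof of Lemma~\ref{ineq} almost verbatim, with the single structural modification that stationarity collapses the two distinct covariance matrices $\Sigma_{\theta,h}$ and $\Sigma_{\theta,h+1}$ appearing there into the one stationary matrix $\Sigma_\theta$. The engine of the argument is again the conditional Jensen inequality $\mathbb{E}[X^2]\geq\mathbb{E}[(\mathbb{E}[X\vert Y])^2]$ applied to linear feature functionals. First I would fix a stationary transition: let $s\sim\xi_\theta$, $a\sim\pi_\theta(\cdot\vert s)$, $s'\sim p(\cdot\vert s,a)$, $a'\sim\pi_\theta(\cdot\vert s')$. The defining property of $\xi_\theta$ is that $(s',a')$ has the same law as $(s,a)$, so both second-moment matrices equal $\Sigma_\theta$. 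This is the only place where time-homogeneity enters and is exactly what replaces the two different $\Sigma$'s in Lemma~\ref{ineq}.

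For the first bound, take $f(s,a)=\mu^\top\phi(s,a)$ for an arbitrary $\mu\in\mathbb{R}^d$. Conditional Jensen gives $\mathbb{E}[f^2(s',a')]\geq\mathbb{E}\big[(\mathbb{E}[f(s',a')\vert s,a])^2\big]$. The left-hand side evaluates to $\mu^\top\Sigma_\theta\mu$ because $(s',a')$ is stationary. For the right-hand side I would use the defining identity $\mathbb{E}^{\pi_\theta}[\phi(s',a')^\top\vert s,a]=\phi(s,a)^\top M_\theta$ to write $\mathbb{E}[f(s',a')\vert s,a]=(M_\theta\mu)^\top\phi(s,a)$, so the right-hand side equals $\mu^\top M_\theta^\top\Sigma_\theta M_\theta\mu$ by stationarity of $(s,a)$. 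Since $\mu$ is arbitrary, $\Sigma_\theta\succeq M_\theta^\top\Sigma_\theta M_\theta$, and substituting $\nu=\Sigma_\theta^{1/2}\mu$ yields $\Vert\Sigma_\theta^{1/2}M_\theta\Sigma_\theta^{-1/2}\nu\Vert\leq\Vert\nu\Vert$ for all $\nu$, i.e. $\Vert\Sigma_\theta^{1/2}M_\theta\Sigma_\theta^{-1/2}\Vert\leq1$.

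For the gradient bound, take $g(s,a)=(\nabla_\theta^j\log\pi_\theta(a\vert s))\mu^\top\phi(s,a)$. The left-hand side of the same Jensen inequality is bounded by $G^2\mu^\top\Sigma_\theta\mu$ using $\vert\nabla_\theta^j\log\pi_\theta\vert\leq G$ from Assumption~\ref{Boundedness_Conditions_homo} together with stationarity. The main step is identifying the right-hand side: using $\nabla_\theta^j\pi_\theta=\pi_\theta\nabla_\theta^j\log\pi_\theta$ gives $\mathbb{E}[g(s',a')\vert s']=\mu^\top\nabla_\theta^j\phi_\theta(s')$, and since the transition kernel $p$ carries no $\theta$-dependence, differentiating the identity $\mathbb{E}[\phi_\theta(s')^\top\vert s,a]=\phi(s,a)^\top M_\theta$ under the integral yields $\mathbb{E}[\nabla_\theta^j\phi_\theta(s')\vert s,a]=(\nabla_\theta^j M_\theta)^\top\phi(s,a)$. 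Hence $\mathbb{E}[g(s',a')\vert s,a]=(\nabla_\theta^j M_\theta\,\mu)^\top\phi(s,a)$ and the right-hand side equals $\mu^\top(\nabla_\theta^j M_\theta)^\top\Sigma_\theta(\nabla_\theta^j M_\theta)\mu$. The resulting inequality $G^2\Sigma_\theta\succeq(\nabla_\theta^j M_\theta)^\top\Sigma_\theta(\nabla_\theta^j M_\theta)$ gives $\Vert\Sigma_\theta^{1/2}(\nabla_\theta^j M_\theta)\Sigma_\theta^{-1/2}\Vert\leq G$ by the same change of variables.

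The step I expect to be the main obstacle is verifying the differentiation-under-the-integral identity $\mathbb{E}[\nabla_\theta^j\phi_\theta(s')\vert s,a]=(\nabla_\theta^j M_\theta)^\top\phi(s,a)$: one must confirm that only the policy, not the transition kernel, depends on $\theta$, so that $\nabla_\theta^j$ commutes with conditioning on the fixed pair $(s,a)$, and that the requisite dominated-convergence regularity holds so the exchange of $\nabla_\theta^j$ with the transition integral is legitimate. Everything else is a line-by-line transcription of the proof of Lemma~\ref{ineq}.
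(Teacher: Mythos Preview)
Your proposal is correct and follows essentially the same argument as the paper's own proof: both apply conditional Jensen to the linear functional $f(s,a)=\mu^\top\phi(s,a)$ (respectively $g(s,a)=(\nabla_\theta^j\log\pi_\theta)\,\mu^\top\phi(s,a)$) under the stationary target-policy chain, so that both the outer and inner second moments collapse to the single matrix $\Sigma_\theta$, yielding $\Sigma_\theta\succeq M_\theta^\top\Sigma_\theta M_\theta$ and $G^2\Sigma_\theta\succeq(\nabla_\theta^j M_\theta)^\top\Sigma_\theta(\nabla_\theta^j M_\theta)$. The only cosmetic difference is that the paper indexes the transition by a generic step $h$ with $s_1\sim\xi_\theta$, whereas you set up the one-step stationary pair $(s,a)\to(s',a')$ explicitly; the content is identical.
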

\begin{proof}
Note that for any $f:\mathcal{S}\times\mathcal{A}\rightarrow \mathbb{R},\ f(s,a):=\mu^\top\phi(s, a)$, and any fixed $h\in\mathbb{N}_+$, we have
\begin{align*}
    \mathbb{E}^{\pi_\theta}\left[f^2(s_{h+1},a_{h+1})\vert s_1\sim\xi_\theta\right]&=\mathbb{E}^{\pi_\theta}\left[\mathbb{E}^{\pi_\theta}\left[f^2(s_{h+1},a_{h+1})\vert s_h,a_h\right]\vert s_1\sim\xi_\theta\right]\\
    &\geq\mathbb{E}^{\pi_\theta}[\mathbb{E}^{\pi_\theta}[f(s_{h+1},a_{h+1})\vert s_h,a_h]^2\vert  s_1\sim\xi_\theta].
\end{align*}
The LHS satisfies
\begin{align*}
    \mathbb{E}^{\pi_\theta}[f^2(s_{h+1},a_{h+1})\vert s_1\sim\xi_\theta]&=\mu^\top\Sigma_\theta\mu,
\end{align*}
and the RHS satisfies
\begin{align*}
    \mathbb{E}^{\pi_\theta}[\mathbb{E}^{\pi_\theta}[f(s_{h+1},a_{h+1})\vert s_h,a_h]^2\vert s_1\sim\xi_\theta]=\mathbb{E}^{\pi_\theta}[\mu^\top M_\theta^\top\phi(s_h,a_h)\phi(s_h,a_h)^\top M_\theta\mu\vert s_1\sim\xi_\theta]=\mu^\top M_\theta^\top\Sigma_\theta M_\theta\mu.
\end{align*}
Therefore, we have $\mu^\top\Sigma_\theta\mu\geq\mu^\top M_\theta^\top\Sigma_\theta M_\theta \mu,\ \forall\mu$, which implies $\Vert\Sigma_\theta^{\frac{1}{2}}M_\theta \Sigma_\theta^{-\frac{1}{2}}\Vert\leq 1$. Similarly, let $g:\mathcal{S}\times\mathcal{A}\rightarrow\mathbb{R},\ g(s, a):=\left(\nabla_\theta^j\log\pi_\theta(s,a)\right)\mu^\top\phi(s,a)$, we have
\begin{align*}
    \mathbb{E}^{\pi_\theta}[g^2(s_{h+1},a_{h+1})\vert s_1\sim\xi_\theta]=&\mathbb{E}^{\pi_\theta}[\mathbb{E}^{\pi_\theta}[g^2(s_{h+1},a_{h+1})\vert s_h,a_h]\vert s_1\sim\xi_\theta]\\
    \geq&\mathbb{E}^{\pi_\theta}[\mathbb{E}^{\pi_\theta}[g(s_{h+1},a_{h+1})\vert s_h,a_h]^2\vert s_1\sim\xi_\theta].
\end{align*}
The LHS satisfies
\begin{align*}
    \mathbb{E}^{\pi_\theta}[g^2(s_{h+1},a_{h+1})\vert s_1\sim\xi_\theta]&=\mu^\top\mathbb{E}^{\pi_\theta}\left[\left(\nabla_\theta^j\log\pi_\theta(a\vert s)\right)^2\phi(s_{h+1},a_{h+1})\phi(s_{h+1},a_{h+1})^\top\vert s_1\sim\xi_\theta\right]\mu\\
    &\leq G^2\mu^\top\mathbb{E}^{\pi_\theta}\left[\phi(s_{h+1},a_{h+1})\phi(s_{h+1},a_{h+1})^\top\vert s_1\sim\xi_\theta\right]\mu=G^2\mu^\top\Sigma_\theta\mu,
\end{align*}
and the RHS satisfies 
\begin{align*}
    &\mathbb{E}^{\pi_\theta}[\mathbb{E}^{\pi_\theta}[g(s_{h+1},a_{h+1})\vert s_h, a_h]^2\vert s_1\sim\xi_\theta]\\
    =&\mathbb{E}^{\pi_\theta}[\mu^\top\left(\nabla_\theta^j M_\theta\right)^\top\phi(s_h,a_h)\phi(s_h,a_h)^\top\left(\nabla_\theta^j M_\theta\right)\mu\vert s_1\sim\xi_\theta]\\
    =&\mu^\top\left(\nabla_\theta^j M_\theta\right)^\top\Sigma_\theta\left(\nabla_\theta^j M_\theta\right)\mu.
\end{align*}
Therefore, we get $G^2\mu^\top\Sigma_\theta\mu\geq\mu^\top\left(\nabla_\theta^j M_\theta\right)^\top\Sigma_\theta\left(\nabla_\theta^j M_\theta\right)\mu,\ \forall\mu$, which implies $\left\Vert\Sigma_\theta^{\frac{1}{2}}\left(\nabla_\theta^j M_\theta\right)\Sigma_\theta^{-\frac{1}{2}}\right\Vert\leq G$.
\end{proof}

\subsection{Probabilistic Events}
We define the following probabilistic events:
{\small
\begin{align*}
    \mathcal{E}_{\Sigma} &:= \left\{\left\Vert\Sigma^{-\frac{1}{2}}\left(\frac{1}{HK}\sum_{k=1}^K\sum_{h=1}^H\phi\left(s_h^{(k)}, a_h^{(k)}\right)\phi\left(s_h^{(k)},a_h^{(k)}\right)^\top\right)\Sigma^{-\frac{1}{2}}-I_d\right\Vert\leq \sqrt{\frac{2C_1d\log\frac{8dH}{\delta}}{K}} + \frac{2C_1d\log\frac{8dH}{\delta}}{3K}\right\},\\
    \mathcal{E}_{Y,0}&:=\left\{\left\Vert\Sigma^{-\frac{1}{2}}\left(\Delta Y_\theta\right)\Sigma^{-\frac{1}{2}}\right\Vert\leq\left(\kappa_2\vee 1\right)\sqrt{\frac{2C_1d\log\frac{16dH}{\delta}}{K}} + \frac{4C_1d\log\frac{16dH}{\delta}}{3K}\right\},\\
    \mathcal{E}_{Y,j}&:=\left\{\left\Vert\Sigma^{-\frac{1}{2}}\left(\nabla_\theta^j\left(\Delta Y_\theta\right)\right)\Sigma^{-\frac{1}{2}}\right\Vert\leq\left(\kappa_3\vee 1\right)G\sqrt{\frac{2C_1d\log\frac{16mdH}{\delta}}{K}}+\frac{4C_1dG\log\frac{16mdH}{\delta}}{3K}\right\}, j\in[m],\\
    \mathcal{E}_{Y}&:=\bigcap_{j=0}^m\mathcal{E}_{Y,j}\\
    \mathcal{E}_{\varepsilon,0}&:=\left\{\left\Vert\Sigma^{-\frac{1}{2}}\frac{1}{KH}\sum_{k=1}^K\sum_{h=1}^H\phi\left(s_h^{(k)},a_h^{(k)}\right)\varepsilon_{h,k}^\theta\right\Vert\leq\frac{\sqrt{d}}{1-\gamma}\left(\sqrt{\frac{2\log\frac{32dH}{\delta}}{KH}}+\frac{2\sqrt{C_1d}\log\frac{32dH}{\delta}}{K\sqrt{H}}+\frac{2C_1d\left(\log\frac{32dH}{\delta}\right)^{\frac{3}{2}}}{3K^{\frac{3}{2}}\sqrt{H}}\right)\right\},\\
    \mathcal{E}_{\varepsilon,j}&:=\left\{\left\Vert\Sigma^{-\frac{1}{2}}\frac{1}{KH}\sum_{k=1}^K\sum_{h=1}^H\phi\left(s_h^{(k)},a_h^{(k)}\right)\nabla_\theta^j\varepsilon_{h,k}^\theta\right\Vert\leq \frac{2\sqrt{d}G}{(1-\gamma)^2}\left(\sqrt{\frac{2\log\frac{32mdH}{\delta}}{KH}}+\frac{2\sqrt{C_1d}\log\frac{32mdH}{\delta}}{K\sqrt{H}}+\frac{2C_1d\left(\log\frac{32mdH}{\delta}\right)^{\frac{3}{2}}}{3K^{\frac{3}{2}}\sqrt{H}}\right)\right\}, j\in[m],\\
    \mathcal{E}_{\varepsilon} &:= \bigcap_{j=0}^m\mathcal{E}_{\varepsilon,j},\\
    \mathcal{E} &:= \mathcal{E}_\Sigma\cap\mathcal{E}_Y\cap\mathcal{E}_\varepsilon.
\end{align*}}
We have the following guarantees on the above high probability events: 
\begin{lemma}
\label{dsig1_homo}
$\mathbb{P}\left(\mathcal{E}_\Sigma\right) \geq 1-\frac{\delta}{4}$.
\end{lemma}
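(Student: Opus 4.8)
The plan is to mirror the proof of Lemma~\ref{dsig1}, but to repackage each \emph{entire trajectory} as a single i.i.d.\ sample so as to avoid the within-trajectory temporal dependence. Concretely, for each $k\in[K]$ I would set
\begin{align*}
    Z^{(k)} := \frac{1}{H}\sum_{h=1}^H \Sigma^{-\frac{1}{2}}\phi\left(s_h^{(k)},a_h^{(k)}\right)\phi\left(s_h^{(k)},a_h^{(k)}\right)^\top\Sigma^{-\frac{1}{2}}\in\mathbb{R}^{d\times d}.
\end{align*}
Since the $K$ trajectories are i.i.d., the matrices $Z^{(1)},\ldots,Z^{(K)}$ are i.i.d., and by the very definition of $\Sigma$ we have $\mathbb{E}[Z^{(k)}]=\Sigma^{-\frac12}\Sigma\Sigma^{-\frac12}=I_d$. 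The quantity controlled by $\mathcal{E}_\Sigma$ is exactly $\left\Vert\frac1K\sum_{k=1}^K (Z^{(k)}-I_d)\right\Vert$, so it remains to apply the matrix Bernstein inequality to the zero-mean i.i.d.\ sum $\sum_{k=1}^K(Z^{(k)}-I_d)$.

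Next I would supply the two Bernstein ingredients. For the almost-sure norm bound, Assumption~\ref{Boundedness_Conditions_homo} gives $\phi(s,a)^\top\Sigma^{-1}\phi(s,a)\le C_1 d$, so each summand $\Sigma^{-\frac12}\phi\phi^\top\Sigma^{-\frac12}$ is PSD with operator norm at most $C_1 d$; averaging over $h$ preserves $0\preceq Z^{(k)}\preceq C_1 d\, I_d$, whence $\Vert Z^{(k)}-I_d\Vert\le C_1 d$. The key step is the variance bound: rather than expanding $(Z^{(k)})^2$ into cross-terms $X_hX_{h'}$ (which are correlated along the trajectory), I would use that for any PSD matrix $A$ with $\Vert A\Vert\le c$ one has $A^2\preceq cA$. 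Applying this with $A=Z^{(k)}$ and $c=C_1 d$ and taking expectations yields $\mathbb{E}[(Z^{(k)})^2]\preceq C_1 d\,\mathbb{E}[Z^{(k)}]=C_1 d\, I_d$, so $\mathrm{Var}(Z^{(k)})=\mathbb{E}[(Z^{(k)})^2]-I_d\preceq C_1 d\, I_d$. This sidesteps the temporal dependence entirely.

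Finally, with matrix variance proxy $\sigma^2=K C_1 d$ and uniform bound $R=C_1 d$, matrix Bernstein gives, with probability at least $1-\delta'$,
\begin{align*}
    \left\Vert\frac{1}{K}\sum_{k=1}^K\left(Z^{(k)}-I_d\right)\right\Vert\le\sqrt{\frac{2C_1 d\log(2d/\delta')}{K}}+\frac{2C_1 d\log(2d/\delta')}{3K}.
\end{align*}
Taking $\delta'=\delta/4$ and using $\log(8d/\delta)\le\log(8dH/\delta)$ together with monotonicity of the right-hand side in the logarithmic factor yields the event $\mathcal{E}_\Sigma$ with probability at least $1-\delta/4$. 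The only genuine obstacle is the intra-trajectory dependence of the $(s_h,a_h)$, which forbids treating the $HK$ rank-one terms as independent; the repackaging into the per-trajectory matrices $Z^{(k)}$ together with the PSD inequality $A^2\preceq cA$ is what resolves it, and the remaining computations are routine instantiations of matrix Bernstein exactly as in Lemma~\ref{dsig1}.
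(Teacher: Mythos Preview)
Your proposal is correct and takes essentially the same approach as the paper: the paper defines the identical per-trajectory matrices (denoted $X^{(k)}$ there), shows $\mathbb{E}[X^{(k)}]=I_d$, bounds $\Vert X^{(k)}-I_d\Vert\le C_1d$, and applies matrix Bernstein with $\delta'=\delta/4$. The only cosmetic difference is in the variance step, where the paper writes $X^{(k)}=\tfrac{1}{H}\Sigma^{-1/2}\Phi^{(k)}(\Phi^{(k)})^\top\Sigma^{-1/2}$ and computes $\mu^\top(X^{(k)})^2\mu\le C_1d\,\mu^\top X^{(k)}\mu$ via Cauchy--Schwarz on the $\Phi^{(k)}$ factors---which is exactly your PSD inequality $A^2\preceq cA$ unwound.
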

\begin{proof}
Define
\begin{align*}
	X^{(k)}=\frac{1}{H}\sum_{h=1}^H\Sigma^{-\frac{1}{2}}\phi\left(s^{(k)}_h,a^{(k)}_h\right) \phi\left(s^{(k)}_h,a^{(k)}_h\right)^\top\Sigma^{-\frac{1}{2}}\in\mathbb{R}^{d\times d}.
\end{align*}
It's easy to see that $X^{(1)},X^{(2)},\ldots,X^{(K)}$ are independent and $\mathbb{E}\left[X^{(k)}\right]=I_d$. In the remaining part of the proof, we will apply the matrix Bernstein's inequality to analyze the concentration of $\frac{1}{K}\sum_{k=1}^K X^{(k)}$. We first consider the matrix-valued variance $\textrm{Var}\left(X^{(k)}\right)=\mathbb{E}\left[\left(X^{(k)}-I_d\right)^2\right]=\mathbb{E}\left[\left(X^{(k)}\right)^2\right]-I_d$. Let
\begin{align*}
\Phi^{(k)}:=\left[\phi\left(s^{(k)}_1,a^{(k)}_1\right),\phi\left(s^{(k)}_2,a^{(k)}_2\right),\ldots,\phi\left(s^{(k)}_H,a^{(k)}_H\right)\right]\in\mathbb{R}^{d\times H},
\end{align*}
Then $X^{(k)}=\frac{1}{H}\Sigma^{-\frac{1}{2}}\Phi^{(k)}\left(\Phi^{(k)}\right)^\top\Sigma^{-\frac{1}{2}}$. For any vector $\mu\in\mathbb{R}^d$,
\begin{align*} 
	\mu^\top\mathbb{E}\left[\left(X^{(k)}\right)^2\right]\mu=&\mathbb{E}\left[\left\Vert X^{(k)}\mu\right\Vert^2\right]=\frac{1}{H^2}\mathbb{E}\left[\left\Vert\Sigma^{-\frac{1}{2}}\Phi^{(k)}\left(\Phi^{(k)}\right)^\top\Sigma^{-\frac{1}{2}}\mu\right\Vert^2\right]\\
	\leq&\frac{1}{H^2}\mathbb{E}\left[\left\Vert\Sigma^{-\frac{1}{2}}\Phi^{(k)}\right\Vert^2\left\Vert\left(\Phi^{(k)}\right)^\top\Sigma^{-\frac{1}{2}}\mu\right\Vert^2\right]\leq \frac{C_1d}{H}\mathbb{E}\left[\left\Vert\left(\Phi^{(k)}\right)^\top\Sigma^{-\frac{1}{2}}\mu\right\Vert^2\right]\\
    =&C_1d\mu^\top\mathbb{E}\left[X^{(k)}\right]\mu = C_1d \Vert\mu\Vert^2,
\end{align*}
where we used the identity $\left\Vert\left(\Phi^{(k)}\right)^\top\Sigma^{-\frac{1}{2}}\mu\right\Vert^2=\mu^\top X^{(k)}\mu$ and $\mathbb{E}\left[X^{(k)}\right]=I_d$. We have
\begin{align*}
    \textrm{Var}(X^{(k)})\preceq\mathbb{E}\left[\left(X^{(k)}\right)^2\right]\preceq C_1dI_d. 
\end{align*}
Additionally,
\begin{align*}
	-I_d\preceq X^{(k)}-I_d=\frac{1}{H}\sum_{h=1}^H\Sigma^{-\frac{1}{2}}\phi\left(s^{(k)}_h,a^{(k)}_h\right)\phi\left(s^{(k)}_h,a^{(k)}_h\right)^\top\Sigma^{-\frac{1}{2}}-I_d \preceq C_1dI_d-I_d. 
\end{align*}
Therefore, $\Vert X^{(k)}-I_d\Vert\leq C_1d$. Since $X^{(1)},X^{(2)},\ldots,X^{(K)}$ are \textit{i.i.d.}, by the matrix-form Bernstein inequality, we have
\begin{align*}
	\mathbb{P}\left(\left\Vert\sum_{k=1}^K X^{(k)}-I_d\right\Vert\geq\varepsilon\right)\leq 2d\cdot\exp\left(-\frac{\varepsilon^2/2}{C_1dK+C_1d \varepsilon/3}\right),\quad \forall \varepsilon>0, 
\end{align*}
i.e., with probability at least $1-\frac{\delta}{4}$,
\begin{align*}
    \left\Vert\frac{1}{K}\sum_{k=1}^K\left(X^{(k)}-I_d\right)\right\Vert\leq \sqrt{\frac{2C_1d\log\frac{8d}{\delta}}{K}}+\frac{2C_1d\log\frac{8d}{\delta}}{3K}, 
\end{align*}
which has finished the proof. 
\end{proof}
 
\begin{lemma}
\label{dy_homo}
$\mathbb{P}\left(\mathcal{E}_{Y}\right)\geq 1-\frac{\delta}{4}$. 
\end{lemma}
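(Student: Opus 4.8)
The plan is to mirror the argument used for $\mathcal{E}_\Sigma$ in Lemma \ref{dsig1_homo}, aggregating the per-step contributions within each trajectory into a single per-episode matrix so that the resulting summands are i.i.d.\ across the $K$ trajectories, and then invoking the matrix Bernstein inequality. Concretely, for each $k\in[K]$ I would define
\[
Y_\theta^{(k)}:=\frac{1}{H}\sum_{h=1}^H\Sigma^{-\frac{1}{2}}\phi\left(s_h^{(k)},a_h^{(k)}\right)\phi_\theta\left(s_{h+1}^{(k)}\right)^\top\Sigma^{-\frac{1}{2}}=\frac{1}{H}\Sigma^{-\frac{1}{2}}\Phi^{(k)}\left(\Psi^{(k)}\right)^\top\Sigma^{-\frac{1}{2}},
\]
where $\Phi^{(k)}=\left[\phi(s_1^{(k)},a_1^{(k)}),\ldots,\phi(s_H^{(k)},a_H^{(k)})\right]$ and $\Psi^{(k)}=\left[\phi_\theta(s_2^{(k)}),\ldots,\phi_\theta(s_{H+1}^{(k)})\right]$. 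Using the tower property together with the defining identity $\mathbb{E}^{\pi_\theta}[\phi(s',a')^\top\mid s,a]=\phi(s,a)^\top M_\theta$, I would first verify $\mathbb{E}[Y_\theta^{(k)}]=\Sigma^{\frac{1}{2}}M_\theta\Sigma^{-\frac{1}{2}}$, so that $\frac{1}{K}\sum_{k=1}^K\left(Y_\theta^{(k)}-\mathbb{E}[Y_\theta^{(k)}]\right)=\Sigma^{-\frac{1}{2}}(\Delta Y_\theta)\Sigma^{-\frac{1}{2}}$, which is exactly the quantity controlled by $\mathcal{E}_{Y,0}$.

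The variance estimates then proceed as in Lemma \ref{dsig1_homo}: bounding $\|\Sigma^{-\frac{1}{2}}\Phi^{(k)}\|^2$ and $\|\Sigma^{-\frac{1}{2}}\Psi^{(k)}\|^2$ by their Frobenius norms, each column satisfies $\|\Sigma^{-\frac{1}{2}}\phi\|^2\leq C_1d$ and (by Jensen, exactly as in the derivation for $\phi_\theta$) $\|\Sigma^{-\frac{1}{2}}\phi_\theta\|^2\leq C_1d$, so both operator norms are at most $HC_1d$. This yields $\|Y_\theta^{(k)}\|\leq C_1d$, hence $\|Y_\theta^{(k)}-\mathbb{E}[Y_\theta^{(k)}]\|\leq 2C_1d$ since $\|\Sigma^{\frac{1}{2}}M_\theta\Sigma^{-\frac{1}{2}}\|\leq 1$ by Lemma \ref{ineq_homo}. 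After a Cauchy--Schwarz splitting and the averaging identity $\frac{1}{H}\sum_h\mathbb{E}[\phi_h\phi_h^\top]=\Sigma$, the two matrix variances satisfy $\textrm{Var}_1\preceq C_1dI_d$ and $\textrm{Var}_2\preceq\frac{C_1d}{H}\sum_h\Sigma^{-\frac{1}{2}}\mathbb{E}[\phi_\theta\phi_\theta^\top]\Sigma^{-\frac{1}{2}}$, whose norm is at most $C_1d\kappa_2^2$ by the definition of $\kappa_2$. Feeding $\max\{\|\textrm{Var}_1\|,\|\textrm{Var}_2\|\}\leq C_1d(\kappa_2^2\vee 1)$ and the uniform bound $2C_1d$ into matrix Bernstein, and calibrating the deviation level with $\log\frac{16dH}{\delta}$, produces the bound defining $\mathcal{E}_{Y,0}$ with failure probability at most $\frac{\delta}{8H}$.

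For the gradient events $\mathcal{E}_{Y,j}$, $j\in[m]$, I would replace $\phi_\theta$ by $\nabla_\theta^j\phi_\theta$ in $\Psi^{(k)}$. The only new ingredient is the pointwise bound $\|\Sigma^{-\frac{1}{2}}\nabla_\theta^j\phi_\theta(s)\|^2\leq G^2C_1d$, which follows exactly as in the proof of Lemma \ref{dy}: expanding $\nabla_\theta^j\phi_\theta$ as a double integral against $\nabla_\theta^j\log\pi_\theta$, applying $|\nabla_\theta^j\log\pi_\theta|\leq G$, and using $\phi^\top\Sigma^{-1}\phi\leq C_1d$. This carries an extra $G^2$ through the variances (now governed by $\kappa_3$) and delivers the $G$-scaled bound of $\mathcal{E}_{Y,j}$ via matrix Bernstein with $\log\frac{16mdH}{\delta}$, each failing with probability at most $\frac{\delta}{8mH}$. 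A union bound over the $m+1$ events then gives $\mathbb{P}(\mathcal{E}_Y)\geq 1-\left(\frac{\delta}{8H}+m\cdot\frac{\delta}{8mH}\right)=1-\frac{\delta}{4H}\geq 1-\frac{\delta}{4}$.

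The main obstacle, relative to the finite-horizon Lemma \ref{dy}, is the intra-trajectory dependence: the $h$-indexed summands within a single episode are correlated through the Markov chain, so one cannot apply Bernstein term-by-term over $(h,k)$. The resolution is precisely the per-episode aggregation above, which restores independence across the $K$ i.i.d.\ trajectories, while the compact representation $Y_\theta^{(k)}=\frac{1}{H}\Sigma^{-\frac{1}{2}}\Phi^{(k)}(\Psi^{(k)})^\top\Sigma^{-\frac{1}{2}}$ keeps the variance proxy at the same $C_1d$ scale as in the stationary-covariance estimate, so no additional factor of $H$ creeps into the deviation bound.
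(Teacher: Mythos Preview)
Your proposal is correct and follows the paper's proof essentially verbatim: per-episode aggregation into $Y_\theta^{(k)}=\tfrac{1}{H}\Sigma^{-1/2}\Phi^{(k)}(\Psi^{(k)})^\top\Sigma^{-1/2}$, verification of $\mathbb{E}[Y_\theta^{(k)}]=\Sigma^{1/2}M_\theta\Sigma^{-1/2}$, the same Frobenius-norm variance bounds yielding $\max\{\|\textrm{Var}_1\|,\|\textrm{Var}_2\|\}\le C_1d(\kappa_2^2\vee 1)$, and matrix Bernstein (with the analogous $G$/$\kappa_3$ modifications for the gradient events). One small correction: Lemma~\ref{ineq_homo} controls $\|\Sigma_\theta^{1/2}M_\theta\Sigma_\theta^{-1/2}\|$, not $\|\Sigma^{1/2}M_\theta\Sigma^{-1/2}\|$, so you cannot invoke it where you do; the bound $\|Y_\theta^{(k)}-\mathbb{E}[Y_\theta^{(k)}]\|\le 2C_1d$ still holds simply because $\|\mathbb{E}[Y_\theta^{(k)}]\|\le\mathbb{E}\|Y_\theta^{(k)}\|\le C_1d$, which is exactly how the paper proceeds.
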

\begin{proof}
Take
\begin{align*}
	Y_\theta^{(k)}:=\frac{1}{H}\sum_{h=1}^H\Sigma^{-\frac{1}{2}}\phi\left(s^{(k)}_h,a^{(k)}_h\right)\phi_\theta\left(s^{(k)}_{h+1}\right)^\top\Sigma^{-\frac{1}{2}},\quad\forall k\in[K]. 
\end{align*}
Then, $\Sigma^{-\frac{1}{2}}\left(\Delta Y_\theta\right)\Sigma^{-\frac{1}{2}}=\frac{1}{K}\sum_{k=1}^K\left(Y_\theta^{(k)}-\Sigma^{\frac{1}{2}}M_\theta\Sigma^{-\frac{1}{2}}\right)$. Note that
\begin{align}
    \label{SMS0_homo} 
    \begin{aligned} 
        \mathbb{E}\left[Y_\theta^{(k)}\right]=&\frac{1}{H}\sum_{h=1}^H\mathbb{E}\left[\Sigma^{-\frac{1}{2}}\phi\left(s^{(k)}_h,a^{(k)}_h\right)\phi_\theta\left(s^{(k)}_{h+1}\right)^\top\Sigma^{-\frac{1}{2}}\right]\\
        =&\frac{1}{H}\sum_{h=1}^H\mathbb{E}\left[\Sigma^{-\frac{1}{2}}\phi\left(s^{(k)}_h,a^{(k)}_h\right)\mathbb{E}^{\pi_\theta}\left[\phi\left(s^\prime,a^\prime\right)^\top\vert s^{(k)}_h,a^{(k)}_h\right]\Sigma^{-\frac{1}{2}}\right]\\
        =&\frac{1}{H}\sum_{h=1}^H\mathbb{E}\left[\Sigma^{-\frac{1}{2}}\phi\left(s^{(k)}_h,a^{(k)}_h\right)\phi\left(s^{(k)}_h,a^{(k)}_h\right)^\top M_\theta\Sigma^{-\frac{1}{2}}\right]=\Sigma^{\frac{1}{2}}M_\theta\Sigma^{-\frac{1}{2}}, 
    \end{aligned}
\end{align}
To this end, $\Sigma^{-\frac{1}{2}}\left(\Delta Y_\theta\right)\Sigma^{-\frac{1}{2}}=\frac{1}{K}\sum_{k=1}^K\left(Y_\theta^{(k)}-\mathbb{E}\left[Y_\theta^{(k)}\right]\right)$. Since the trajectories are \textrm{i.i.d.}, we use the matrix-form Bernstein inequality to estimate $\left\Vert\Sigma^{-\frac{1}{2}}(\Delta Y_\theta)\Sigma^{-\frac{1}{2}}\right\Vert$. Let
\begin{align*}
\Phi^{(k)}&:=\left[\phi\left(s^{(k)}_1,a^{(k)}_1\right),\phi\left(s^{(k)}_2,a^{(k)}_2\right),\ldots,\phi\left(s^{(k)}_H,a^{(k)}_H\right)\right]\in\mathbb{R}^{d\times H},\\
\Phi^{(k)}_\theta&:=\left[\phi_\theta\left(s^{(k)}_2\right),\phi_\theta\left(s^{(k)}_3\right),\ldots,\phi_\theta\left(s^{(k)}_{H+1}\right)\right]\in\mathbb{R}^{d\times H},
\end{align*}
We have $Y^{(k)}_\theta=\frac{1}{H}\Sigma^{-\frac{1}{2}}\Phi^{(k)}\left(\Phi^{(k)}_\theta\right)^\top\Sigma^{-\frac{1}{2}}$. For any $\mu\in\mathbb{R}^d$, we have
\begin{align*}
    \mu^\top\mathbb{E}\left[Y_\theta^{(k)}\left(Y_\theta^{(k)}\right)^\top\right]\mu=&\mathbb{E}\left[\left\Vert\left(Y_\theta^{(k)}\right)^\top\mu\right\Vert^2\right]=\frac{1}{H^2}\mathbb{E}\left[\left\Vert\Sigma^{-\frac{1}{2}}\Phi_\theta^{(k)}\left(\Phi^{(k)}\right)^\top\Sigma^{-\frac{1}{2}}\mu\right\Vert^2\right]\\
    \leq&\frac{1}{H^2}\mathbb{E}\left[\left\Vert\Sigma^{-\frac{1}{2}}\Phi_\theta^{(k)}\right\Vert^2\left\Vert\left(\Phi^{(k)}\right)^\top\Sigma^{-\frac{1}{2}}\mu\right\Vert^2\right]\\
    \leq&\frac{C_1d}{H}\mathbb{E}\left[\left\Vert\left(\Phi^{(k)}\right)^\top\Sigma^{-\frac{1}{2}}\mu\right\Vert^2\right]\\
    =&\frac{C_1d}{H}\mu^\top\Sigma^{-\frac{1}{2}}\mathbb{E}\left[\Phi^{(k)}\left(\Phi^{(k)}\right)^\top\right]\Sigma^{-\frac{1}{2}}\mu\\
    =&C_1d\Vert\mu\Vert^2,
\end{align*}
where we have used the fact $\Sigma=\frac{1}{H}\mathbb{E}\left[\Phi^{(k)}\left(\Phi^{(k)}\right)^\top\right]$. It follows that
\begin{align*} 
    \textrm{Var}_1\left(Y_\theta^{(k)}\right):=&\mathbb{E}\left[\left(Y_\theta^{(k)}-\mathbb{E}\left[Y_\theta^{(k)}\right]\right)\left(Y_\theta^{(k)}-\mathbb{E}\left[Y_\theta^{(k)}\right]\right)^\top\right]\preceq\mathbb{E}\left[Y_\theta^{(k)}\left(Y_\theta^{(k)}\right)^\top\right]\preceq C_1dI_d. 
\end{align*} 
Analogously,
\begin{align*} 
	\textrm{Var}_2\left(Y_\theta^{(k)}\right):=&\mathbb{E}\left[\left(Y_\theta^{(k)}-\mathbb{E}\left[Y_\theta^{(k)}\right]\right)^\top\left(Y_\theta^{(k)}-\mathbb{E}\left[Y_\theta^{(k)}\right]\right)\right]\preceq\mathbb{E}\left[\left(Y_\theta^{(k)}\right)^\top Y_\theta^{(k)}\right]\\
	\preceq& \frac{C_1d}{H}\Sigma^{-\frac{1}{2}}\mathbb{E}\left[\Phi^{(k)}_\theta\left(\Phi^{(k)}_\theta\right)^\top\right]\Sigma^{-\frac{1}{2}}. 
\end{align*}
Therefore, $\max\left\{\left\Vert\textrm{Var}_1\left(Y_\theta^{(k)}\right)\right\Vert, \left\Vert\textrm{Var}_2\left(Y_\theta^{(k)}\right)\right\Vert\right\}\leq C_1d\left(\kappa_2^2\vee 1\right)$. It also holds that $\Vert Y_\theta^{(k)}\Vert\leq C_1d$. Hence,
\begin{align*}
    \left\Vert Y_\theta^{(k)}-\Sigma^{\frac{1}{2}}M_\theta\Sigma^{-\frac{1}{2}}\right\Vert\leq 2C_1d. 
\end{align*}
Applying Matrix Bernstein's inequality, we derive for any $\varepsilon>0$,
\begin{align*}
    \mathbb{P}\left(\left\Vert\sum_{k=1}^K\left(Y_\theta^{(k)}-\Sigma^{\frac{1}{2}}M_\theta\Sigma^{-\frac{1}{2}}\right)\right\Vert>\varepsilon\right)\leq 2d\exp\left(-\frac{\varepsilon^2/2}{C_1dK\left(\kappa_2^2\vee 1\right)+2C_1d\varepsilon/3}\right), 
\end{align*}
which implies $\mathcal{E}_{Y,0}$ holds with probability $1-\frac{\delta}{8}$. For $\mathcal{E}_{Y,j},j\in[m]$, notice that for any $j\in[m]$, we have $\Sigma^{-\frac{1}{2}}(\nabla_\theta^j(\Delta Y_\theta))\Sigma^{-\frac{1}{2}}=\frac{1}{K}\sum_{k=1}^K\left(\nabla_\theta^j Y_\theta^{(k)}-\mathbb{E}\left[\nabla_\theta^j Y_\theta^{(k)}\right]\right)$, and $\nabla_\theta^j Y_\theta^{(k)}=\frac{1}{H}\Sigma^{-\frac{1}{2}}\Phi^{(k)}\left(\nabla_\theta^j\Phi_\theta^{(k)}\right)^\top\Sigma^{-\frac{1}{2}}$. For any $\mu\in\mathbb{R}^d$, we have
\begin{align*}
    \mu^\top\mathbb{E}\left[\left(\nabla_\theta^j Y_\theta^{(k)}\right)\left(\nabla_\theta^j Y_\theta^{(k)}\right)^\top\right]\mu=&\frac{1}{H^2}\mathbb{E}\left[\left\Vert\Sigma^{-\frac{1}{2}}\left(\nabla_\theta^j\Phi_\theta^{(k)}\right)\left(\Phi^{(k)}\right)^\top\Sigma^{-\frac{1}{2}}\mu\right\Vert^2\right]\\
    \leq&\frac{1}{H^2}\mathbb{E}\left[\left\Vert\Sigma^{-\frac{1}{2}}\nabla_\theta^j\Phi_\theta^{(k)}\right\Vert^2\left\Vert\left(\Phi^{(k)}\right)^\top\Sigma^{-\frac{1}{2}}\mu\right\Vert^2\right]. 
\end{align*}
Since we have 
\begin{align*}
    &\left(\nabla_\theta^j\phi_\theta\left(s_{h+1}^{(k)}\right)\right)^\top\Sigma^{-1}\nabla_\theta^j\phi_\theta\left(s_{h+1}^{(k)}\right)\\
    =&\int_{\mathcal{A}\times\mathcal{A}}\pi_\theta\left(a\left\vert s_{h+1}^{(k)}\right.\right)\pi_\theta\left(a^\prime\left\vert s_{h+1}^{(k)}\right.\right)\left(\nabla_\theta^j\log\pi_\theta\left(a\left\vert s_{h+1}^{(k)}\right.\right)\right)\left(\nabla_\theta^j\log\pi_\theta\left(a^\prime\left\vert s_{h+1}^{(k)}\right.\right)\right)\\
    &\cdot\phi\left(s_{h+1}^{(k)},a\right)^\top\Sigma^{-1}\phi\left(s_{h+1}^{(k)},a^\prime\right)\mathrm{d}a\mathrm{d}a^\prime\\
    \leq&G^2\int_{\mathcal{A}\times\mathcal{A}}\pi_\theta\left(a\vert s_{h+1}^{(k)}\right)\pi_\theta\left(a^\prime\vert s_{h+1}^{(k)}\right)\left\Vert\Sigma^{-\frac{1}{2}}\phi\left(s_{h+1}^{(k)},a\right)\right\Vert\left\Vert\Sigma^{-\frac{1}{2}}\phi\left(s_{h+1}^{(k)},a^\prime\right)\right\Vert\mathrm{d}a\mathrm{d}a^\prime\leq G^2C_1d,
\end{align*}
which implies 
\begin{align*}
    \mu^\top\mathbb{E}\left[\left(\nabla_\theta^j Y_\theta^{(k)}\right)\left(\nabla_\theta^j Y_\theta^{(k)}\right)^\top\right]\mu\leq \frac{G^2C_1d}{H}\mathbb{E}\left[\left\Vert\left(\Phi^{(k)}\right)^\top\Sigma^{-\frac{1}{2}}\mu\right\Vert^2\right]=G^2C_1d\Vert\mu\Vert^2. 
\end{align*}
Therefore, 
\begin{align*}
    \textrm{Var}_1\left(\nabla_\theta^j Y_\theta^{(k)}\right):=&\mathbb{E}\left[\left(\nabla_\theta^j Y_\theta^{(k)}-\mathbb{E}\left[\nabla_\theta^j Y_\theta^{(k)}\right]\right)\left(\nabla_\theta^j Y_\theta^{(k)}-\mathbb{E}\left[\nabla_\theta^j Y_\theta^{(k)}\right]\right)^\top\right]\preceq\mathbb{E}\left[\left(\nabla_\theta^j Y_\theta^{(k)}\right)\left(\nabla_\theta^j Y_\theta^{(k)}\right)^\top\right]\preceq G^2C_1dI_d.
\end{align*}
Meanwhile, we have
\begin{align*} 
	\textrm{Var}_2\left(\nabla_\theta^j Y_\theta^{(k)}\right)\preceq\mathbb{E}\left[\left(\nabla_\theta^j Y_\theta^{(k)}\right)^\top\nabla_\theta^j Y_\theta^{(k)}\right]\preceq \frac{C_1d}{H}\Sigma^{-\frac{1}{2}}\mathbb{E}\left[\left(\nabla_\theta^j \Phi_\theta^{(k)}\right)\left(\nabla_\theta^j\Phi_\theta^{(k)}\right)^\top\right]\Sigma^{-\frac{1}{2}}. 
\end{align*}
In conclusion, we get
\begin{align*}
    \max\left\{\left\Vert\textrm{Var}_1\left(\nabla_\theta^j Y_\theta^{(k)}\right)\right\Vert,\left\Vert\textrm{Var}_2\left(\nabla_\theta^j Y_\theta^{(k)}\right)\right\Vert\right\}\leq G^2C_1d\left(\kappa_3^2\vee 1\right),
\end{align*}
Note that $\left\Vert\nabla_\theta^j Y_\theta^{(k)}\right\Vert\leq C_1dG$, we know $\left\Vert\nabla_\theta^j Y_\theta^{(k)}-\mathbb{E}\left[\nabla_\theta^j Y_\theta^{(k)}\right]\right\Vert\leq 2C_1dG$. By Matrix Bernstein's inequality, we get for any $\varepsilon>0$,
\begin{align*}
    \mathbb{P}\left(\left\Vert\sum_{k=1}^K\left(\nabla_\theta^j Y_k^\theta-\Sigma^{\frac{1}{2}}\left(\nabla_\theta^j M_\theta\right) \Sigma^{-\frac{1}{2}}\right)\right\Vert\geq\varepsilon\right)\leq 2d\exp\left(-\frac{\varepsilon^2/2}{G^2C_1dK\left(\kappa_3^2\vee 1\right)+2C_1dG \varepsilon/3}\right), 
\end{align*}
taking a union bound over all $j\in[m]$ proves that $\bigcap_{j=1}^m\mathcal{E}_{Y,j}$ holds with probability $1-\frac{\delta}{8}$. Using a union bound argument again, we know with probability $1-\frac{\delta}{4}$, $\mathcal{E}_Y$ holds, which has finished the proof.
\end{proof}

\begin{lemma}
\label{eps_homo}
$\mathbb{P}(\mathcal{E}_{\varepsilon}) \geq 1-\frac{\delta}{4}$. 
\end{lemma}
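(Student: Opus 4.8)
The plan is to prove the two families of events $\mathcal{E}_{\varepsilon,0}$ and $\{\mathcal{E}_{\varepsilon,j}\}_{j\in[m]}$ by matrix Freedman concentration, mirroring the argument of Lemma \ref{eps} but adapted to the single pooled covariance $\Sigma$ and the double index $(k,h)$. First I would define the $\mathbb{R}^d$-valued array $X^{(k,h)}:=\Sigma^{-\frac12}\phi(s_h^{(k)},a_h^{(k)})\varepsilon_{h,k}^\theta$, ordered lexicographically in $(k,h)$, and let $\mathcal{G}_{k,h}$ be the $\sigma$-algebra generated by all of episodes $1,\dots,k-1$ together with $(s_1^{(k)},a_1^{(k)},\dots,s_h^{(k)},a_h^{(k)})$. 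Because the trajectories are i.i.d. and, by the Bellman equation $Q^\theta=r+\gamma\mathcal{P}_\theta Q^\theta$, the residual satisfies $\mathbb{E}[\varepsilon_{h,k}^\theta\mid s_h^{(k)},a_h^{(k)}]=0$, the sequence $\{X^{(k,h)}\}$ is a martingale difference array with respect to $\{\mathcal{G}_{k,h}\}$. This is the one structural point that differs from Lemma \ref{eps}: there the summands were i.i.d. across $k$ at each fixed $h$, whereas here the residuals along a single trajectory are dependent, so the entire length-$KH$ sum must be handled as a single martingale.

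Next I would control the two conditional variance proxies. Using $\|\textrm{Var}_1[X^{(k,h)}\mid\mathcal{G}_{k,h}]\|\le\textrm{Var}_2[X^{(k,h)}\mid\mathcal{G}_{k,h}]=\phi(s_h^{(k)},a_h^{(k)})^\top\Sigma^{-1}\phi(s_h^{(k)},a_h^{(k)})\,\textrm{Var}[\varepsilon_{h,k}^\theta\mid s_h^{(k)},a_h^{(k)}]$, and invoking the value bound $|Q^\theta|\le\tfrac{1}{1-\gamma}$ of Lemma \ref{upbd_homo} (so that $\textrm{Var}[\varepsilon_{h,k}^\theta\mid s_h^{(k)},a_h^{(k)}]\le(1-\gamma)^{-2}$), the summed variance is at most $(1-\gamma)^{-2}\sum_{k,h}\phi^\top\Sigma^{-1}\phi$. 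Bounding $\sum_{k,h}\phi^\top\Sigma^{-1}\phi=KH\,\textrm{Tr}(\Sigma^{-\frac12}\widehat\Sigma_0\Sigma^{-\frac12})\le KHd\big(1+\|\Sigma^{-\frac12}\widehat\Sigma_0\Sigma^{-\frac12}-I_d\|\big)$, where $\widehat\Sigma_0=\tfrac{1}{KH}\sum_{k,h}\phi\phi^\top$, via the same matrix-Bernstein estimate as in Lemma \ref{dsig1_homo}, yields a variance proxy $\sigma^2=(1-\gamma)^{-2}KHd(1+u'+v')$, where $u',v'$ are the two correction terms of $\mathcal{E}_\Sigma$ (which scale in powers of $K$, not $KH$). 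Combined with the almost-sure bound $\|X^{(k,h)}\|\le\sqrt{C_1d}/(1-\gamma)$, matrix Freedman with deviation threshold $\sigma\sqrt{2L}+\tfrac{2}{3}\tfrac{\sqrt{C_1d}}{1-\gamma}L$, $L=\log\tfrac{32dH}{\delta}$, controls $\|\sum_{k,h}X^{(k,h)}\|$; dividing by $KH$ and expanding $\sqrt{1+u'+v'}\le1+u'+v'$ reproduces the three-term right-hand side of $\mathcal{E}_{\varepsilon,0}$, the extra factor $1/\sqrt H$ on the lower-order terms appearing precisely because $u',v'$ are in $K$ while the leading term is in $KH$.

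For $\mathcal{E}_{\varepsilon,j}$ I would repeat the argument with $\nabla_\theta^j X^{(k,h)}=\Sigma^{-\frac12}\phi(s_h^{(k)},a_h^{(k)})\nabla_\theta^j\varepsilon_{h,k}^\theta$, again a martingale difference array; the relevant bounds are $\textrm{Var}[\nabla_\theta^j\varepsilon_{h,k}^\theta\mid s_h^{(k)},a_h^{(k)}]\le 4G^2(1-\gamma)^{-4}$ and $\|\nabla_\theta^j X^{(k,h)}\|\le 2G\sqrt{C_1d}(1-\gamma)^{-2}$, obtained from the gradient bound $\|\nabla_\theta Q^\theta\|_\infty\le G(1-\gamma)^{-2}$ of Lemma \ref{upbd_homo} and the score bound $|\nabla_\theta^j\log\pi_\theta|\le G$ of Assumption \ref{Boundedness_Conditions_homo}, with the logarithmic factor enlarged to $\log\tfrac{32mdH}{\delta}$ to absorb the coming union bound. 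Allocating failure probability $\delta/8$ to the summed-variance concentration and the remaining $\delta/8$ across the $m+1$ Freedman deviation bounds, a union bound over $j\in\{0,1,\dots,m\}$ gives $\mathbb{P}(\mathcal{E}_\varepsilon)=\mathbb{P}(\bigcap_{j=0}^m\mathcal{E}_{\varepsilon,j})\ge1-\delta/4$. The hard part is not any single inequality but the correct treatment of the full $KH$-length martingale — in particular verifying conditional-mean-zero under the right filtration despite within-trajectory dependence, and extracting the summed-variance bound from the pooled-covariance concentration (correction in $K$) while the leading deviation scales in $KH$. The remaining algebra is routine and identical in spirit to Lemma \ref{eps}.
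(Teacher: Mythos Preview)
Your proposal is correct and follows essentially the same route as the paper: both treat $\{\Sigma^{-1/2}\phi(s_h^{(k)},a_h^{(k)})\varepsilon_{h,k}^\theta\}$ (and its $\nabla_\theta^j$ analogue) as a single length-$KH$ martingale difference array, bound the summed conditional variance via the pooled-covariance concentration of Lemma~\ref{dsig1_homo}, and apply matrix Freedman with the $|Q^\theta|\le(1-\gamma)^{-1}$ and $|\nabla_\theta^j Q^\theta|\le G(1-\gamma)^{-2}$ bounds of Lemma~\ref{upbd_homo}. Your gradient-case constants $\textrm{Var}[\nabla_\theta^j\varepsilon_{h,k}^\theta\mid\cdot]\le 4G^2(1-\gamma)^{-4}$ and $\|\nabla_\theta^j X^{(k,h)}\|\le 2G\sqrt{C_1d}(1-\gamma)^{-2}$ are in fact the ones consistent with the stated event $\mathcal{E}_{\varepsilon,j}$; the paper's proof text writes $(1-\gamma)^{-2}$ and $(1-\gamma)^{-1}$ there, which appears to be a transcription slip carried over from the inhomogeneous Lemma~\ref{eps}.
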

\begin{proof}
Let $X_{\theta,h}^{(k)}:=\Sigma^{-\frac{1}{2}}\phi\left(s_h^{(k)},a_h^{(k)}\right)\varepsilon_{h,k}^\theta\in\mathbb{R}^d$ and let $\mathcal{F}_{h,k}$ be $\sigma$-algebra generated by the history up to step $h$ at episode $k$, we have $\mathbb{E}\left[\left.X_{\theta,h}^{(k)}\right\vert\mathcal{F}_{h,k}\right]=0$. We apply matrix-form Freedman's inequality to analyze the concentration property. Consider conditional variances $\textrm{Var}_1\left[\left.X_{\theta,h}^{(k)}\right\vert\mathcal{F}_{h,k}\right]:=\mathbb{E}\left[\left.X_{\theta,h}^{(k)} \left(X_{\theta,h}^{(k)}\right)^\top\right\vert\mathcal{F}_{h,k}\right]\in\mathbb{R}^{d\times d}$ and $\textrm{Var}_2\left[\left.X_{\theta,h}^{(k)}\right\vert\mathcal{F}_{h,k}\right]:=\mathbb{E} \left[\left(X_{\theta,h}^{(k)}\right)^\top X_{\theta,h}^{(k)}\vert\mathcal{F}_{h,k}\right]\in\mathbb{R}$. It holds that
\begin{align*} 
    \left\Vert\textrm{Var}_1\left[\left.X_{\theta,h}^{(k)}\right\vert\mathcal{F}_{h,k}\right]\right\Vert=&\left\Vert\mathbb{E}\left[\left.X_{\theta,h}^{(k)}\left(X_{\theta,h}^{(k)}\right)^\top\right\vert\mathcal{F}_{h,k}\right]\right\Vert\leq\mathbb{E}\left[\left.\left\Vert X_{\theta,h}^{(k)}\left(X_{\theta,h}^{(k)}\right)^\top\right\Vert\right\vert\mathcal{F}_{h,k}\right]\\
    =&\mathbb{E}\left[\left.\left\Vert X_{\theta,h}^{(k)}\right\Vert^2\right\vert\mathcal{F}_{h,k}\right]= \textrm{Var}_2\left[\left.X_{\theta,h}^{(k)}\right\vert\mathcal{F}_{h,k}\right] 
\end{align*}
and
\begin{align*} 
    \textrm{Var}_2\left[\left.X_{\theta,h}^{(k)}\right\vert\mathcal{F}_{h,k}\right]=&\mathbb{E}\left[\left.\left\Vert X_{\theta,h}^{(k)}\right\Vert^2\right\vert\mathcal{F}_{h,k}\right]=\phi\left(s_h^{(k)},a_h^{(k)}\right)^\top\Sigma^{-1}\phi\left(s_h^{(k)},a_h^{(k)}\right)\textrm{Var}\left[\left.\varepsilon_{h,k}^\theta\right\vert s_h^{(k)},a_h^{(k)}\right]\\
    \leq&\frac{1}{(1-\gamma)^2}\phi\left(s_h^{(k)},a_h^{(k)}\right)^\top\Sigma^{-1}\phi\left(s_h^{(k)},a_h^{(k)}\right), 
\end{align*}
where we have used $\textrm{Var}\left[\left.\varepsilon_{h,k}^\theta\right\vert s_h^{(k)},a_h^{(k)} \right]\leq\frac{1}{(1-\gamma)^2}$. Note that
\begin{align*} 
    \sum_{k=1}^K\sum_{h=1}^H\phi\left(s_h^{(k)},a_h^{(k)}\right)^\top\Sigma^{-1}\phi\left(s_h^{(k)},a_h^{(k)}\right)=&KHd+KH\textrm{Tr}\left(\Sigma^{-\frac{1}{2}}\left(\frac{1}{HK}\sum_{k=1}^K\sum_{h=1}^H\phi\left(s_h^{(k)},a_h^{(k)}\right)\phi\left(s_h^{(k)},a_h^{(k)}\right)^\top\right)\Sigma^{-\frac{1}{2}}-I_d\right)\\
    \leq&KHd+KHd\left\Vert\Sigma^{-\frac{1}{2}}\left(\frac{1}{HK}\sum_{k=1}^K\sum_{h=1}^H\phi\left(s_h^{(k)},a_h^{(k)}\right)\phi\left(s_h^{(k)},a_h^{(k)}\right)^\top\right)\Sigma^{-\frac{1}{2}}-I_d\right\Vert. 
\end{align*}
We take
\begin{align} 
    \label{sigma2_homo} 
    \sigma^2:=\frac{KHd}{(1-\gamma)^2}\left(1+\sqrt{\frac{2C_1d\log\frac{32dH}{\delta}}{K}}+\frac{2C_1d\log\frac{32dH}{\delta}}{3K}\right). 
\end{align}
The result of Lemma \ref{dsig1_homo} implies that
\begin{align}
    \label{Var2_homo} 
    \mathbb{P}\left(\left\Vert\sum_{k=1}^K\sum_{h=1}^H\textrm{Var}_1\left[\left.X_{\theta,h}^{(k)}\right\vert\mathcal{F}_{h,k}\right]\right\Vert\leq\sum_{k=1}^K\sum_{h=1}^H\textrm{Var}_2\left[\left.X_{\theta,h}^{(k)}\right\vert\mathcal{F}_{h,k}\right]\leq\sigma^2\right)\geq 1-\frac{\delta}{16}. 
\end{align}
Additionally, we have $\left\Vert X_{\theta,h}^{(k)}\right\Vert\leq\frac{\sqrt{C_1d}}{1-\gamma}$. The Freedman's inequality therefore implies that for any $\varepsilon>0$,
\begin{align} 
    \label{Freedman2_homo} 
    \begin{aligned}
        &\mathbb{P}\left(\left\vert\sum_{k=1}^K\sum_{h=1}^H X_{\theta,h}^{(k)}\right\vert\geq \varepsilon,\ \max\left\{\left\Vert\sum_{k=1}^K\sum_{h=1}^H\textrm{Var}_1\left[\left.X_{\theta,h}^{(k)}\right\vert\mathcal{F}_{h,k}\right]\right\Vert,\sum_{k=1}^K\sum_{h=1}^H\textrm{Var}_2\left[\left.X_{\theta,h}^{(k)}\right\vert\mathcal{F}_{h,k}\right]\right\}\leq\sigma^2\right)\\
        \leq &2d\exp\left(-\frac{\varepsilon^2/2}{\sigma^2+\sqrt{C_1d}\varepsilon/(3(1-\gamma))} \right),
    \end{aligned} 
\end{align}
where $\sigma^2$ is defined in \eqref{sigma2_homo}. We take
\begin{align*}
    \varepsilon:=\sigma\sqrt{2\log\frac{32d}{\delta}}+\frac{2\sqrt{C_1d}}{3(1-\gamma)}\log\frac{32d}{\delta}.
\end{align*}
Then we get
\begin{align*}
    \mathbb{P}\left(\left\vert\sum_{k=1}^K\sum_{h=1}^H X_{\theta,h}^{(k)}\right\vert\geq\varepsilon,\ \max\left\{\left\Vert\sum_{k=1}^K\sum_{h=1}^H\textrm{Var}_1\left[\left.X_{\theta,h}^{(k)}\right\vert\mathcal{F}_{h,k}\right]\right\Vert,\sum_{k=1}^K\sum_{h=1}^H\textrm{Var}_2\left[\left.X_{\theta,h}^{(k)}\right\vert\mathcal{F}_{h,k}\right]\right\}\leq\sigma^2\right)\leq\frac{\delta}{16},
\end{align*}
which implies
\begin{align*}
    \mathbb{P}\left(\left\vert\sum_{k=1}^K\sum_{h=1}^H X_{\theta,h}^{(k)}\right\vert\geq \varepsilon\right)\leq&\mathbb{P}\left(\left\vert\sum_{k=1}^K\sum_{h=1}^H X_{\theta,h}^{(k)}\right\vert\geq \varepsilon,\ \max\left\{\left\Vert\sum_{k=1}^K\sum_{h=1}^H\textrm{Var}_1\left[\left.X_{\theta,h}^{(k)}\right\vert\mathcal{F}_{h,k}\right]\right\Vert,\sum_{k=1}^K\sum_{h=1}^H\textrm{Var}_2\left[\left.X_{\theta,h}^{(k)}\right\vert\mathcal{F}_{h,k}\right]\right\}\leq\sigma^2\right)\\
    &+\mathbb{P}\left(\max\left\{\left\Vert\sum_{k=1}^K\sum_{h=1}^H\textrm{Var}_1\left[\left.X_{\theta,h}^{(k)}\right\vert\mathcal{F}_{h,k}\right]\right\Vert,\sum_{k=1}^K\sum_{h=1}^H\textrm{Var}_2\left[\left.X_{\theta,h}^{(k)}\right\vert\mathcal{F}_{h,k}\right]\right\}>\sigma^2\right)\leq\frac{\delta}{8}.
\end{align*}
which has proved $\mathbb{P}\left(\mathcal{E}_{\varepsilon,0}\right)\geq 1-\frac{\delta}{8}$. For any fixed $j\in[m]$, we use Freedman's inequality again to prove $\mathbb{P}\left(\mathcal{E}_{\varepsilon,j}\right)\geq 1-\frac{\delta}{8m}$. We have
\begin{align*} 
    \left\Vert\textrm{Var}_1\left[\left.\nabla_\theta^j X_{\theta,h}^{(k)}\right\vert\mathcal{F}_{h,k}\right]\right\Vert=&\left\Vert\mathbb{E}\left[\left.\left(\nabla_\theta^j X_{\theta,h}^{(k)}\right)\left(\nabla_\theta^j X_{\theta,h}^{(k)}\right)^\top\right\vert\mathcal{F}_{h,k}\right]\right\Vert\leq\mathbb{E}\left[\left.\left\Vert\left(\nabla_\theta^j X_{\theta,h}^{(k)}\right)\left(\nabla_\theta^j X_{\theta,h}^{(k)}\right)^\top\right\Vert\right\vert\mathcal{F}_{h,k}\right]\\
    =&\mathbb{E}\left[\left.\left\Vert\nabla_\theta^j X_{\theta,h}^{(k)}\right\Vert^2\right\vert\mathcal{F}_{h,k}\right]= \textrm{Var}_2\left[\left.\nabla_\theta^j X_{\theta,h}^{(k)}\right\vert\mathcal{F}_{h,k}\right],
\end{align*}
and
\begin{align*} 
    \textrm{Var}_2\left[\left.\nabla_\theta^j X_{\theta,h}^{(k)}\right\vert\mathcal{F}_{h,k}\right]=&\mathbb{E}\left[\left.\left\Vert\nabla_\theta^j X_{\theta,h}^{(k)}\right\Vert^2\right\vert\mathcal{F}_{h,k}\right]=\phi\left(s_h^{(k)},a_h^{(k)}\right)^\top\Sigma^{-1}\phi\left(s_h^{(k)},a_h^{(k)}\right)\textrm{Var}\left[\left.\nabla_\theta^j\varepsilon_{h,k}^\theta\right\vert s_h^{(k)},a_h^{(k)}\right]\\
    \leq&\frac{4G^2}{(1-\gamma)^2}\phi\left(s_h^{(k)},a_h^{(k)}\right)^\top\Sigma^{-1}\phi\left(s_h^{(k)},a_h^{(k)}\right),  
\end{align*}
where we have used $\textrm{Var}\left[\left.\nabla_\theta^j\varepsilon_{h,k}^\theta\right\vert s_h^{(k)},a_h^{(k)}\right]\leq \frac{4G^2}{(1-\gamma)^2}$. Furthermore, notice that $\left\Vert\nabla_\theta^j X_{\theta,h}^{(k)}\right\Vert\leq\frac{2G\sqrt{C_1d}}{1-\gamma}$, the remaining steps will be exactly the same as those in the proof of the case $\mathcal{E}_{\varepsilon,0}$. Taking a union bound over $j\in[m]$ and $\mathcal{E}_{\varepsilon,0}$, we have proved $\mathbb{P}\left(\bigcap_{j=0}^m\mathcal{E}_{\varepsilon, j}\right)\geq 1-\frac{\delta}{4}$, which has finished the proof. 
\end{proof}
Combining the results of Lemma \ref{dsig1_homo}, Lemma \ref{dy_homo}, Lemma \ref{eps_homo} and take a union bound, we conclude 
\begin{align*}
\mathbb{P}\left(\mathcal{E}\right) \geq 1-\frac{3}{4}\delta. 
\end{align*}

Next, we prove some immediate results when the event $\mathcal{E}$ holds. 
\begin{lemma}
\label{dsig2_homo}
When $\mathcal{E}_\Sigma$ holds and 
\begin{align*}
    K\geq C_1d\log\frac{8dmH}{\delta},\quad\lambda\leq C_1d\sigma_{\textrm{min}}(\Sigma)\cdot\log\frac{8dmH}{\delta},
\end{align*}
we have
\begin{align*}
        \left\Vert\Sigma^{\frac{1}{2}}\left(\Delta\Sigma^{-1}\right)\Sigma^{\frac{1}{2}}\right\Vert\leq 4\sqrt{\frac{C_1d\log\frac{8dmH}{\delta}}{K}}. 
\end{align*}
\end{lemma}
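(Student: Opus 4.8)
The plan is to mirror the time-inhomogeneous argument of Lemma \ref{dsig2} together with the chain of estimates in \eqref{sig}, now working with the pooled empirical covariance $\widehat{\Sigma} = \frac{1}{HK}\big(\lambda I_d + \sum_{k=1}^K\sum_{h=1}^H \phi(s_h^{(k)},a_h^{(k)})\phi(s_h^{(k)},a_h^{(k)})^\top\big)$. First I would set $A := \Sigma^{-1/2}\widehat{\Sigma}\Sigma^{-1/2}$ and split
\[
A - I_d = \Big(\Sigma^{-1/2}\tfrac{1}{HK}\textstyle\sum_{k,h}\phi\phi^\top\Sigma^{-1/2} - I_d\Big) + \tfrac{\lambda}{HK}\Sigma^{-1},
\]
so that the stochastic part is exactly the quantity controlled on the event $\mathcal{E}_\Sigma$ and the deterministic ridge part is a pure perturbation.

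Next I would bound the two pieces. On $\mathcal{E}_\Sigma$ the first piece is at most $\sqrt{2C_1 d \log(8dH/\delta)/K} + \tfrac23 C_1 d \log(8dH/\delta)/K$; for the ridge piece, $\|\Sigma^{-1}\| = 1/\sigma_{\min}(\Sigma)$ and the hypothesis $\lambda \leq C_1 d \,\sigma_{\min}(\Sigma)\log(8dmH/\delta)$ give $\frac{\lambda}{HK}\|\Sigma^{-1}\| \leq C_1 d \log(8dmH/\delta)/(HK)$. Writing $x := \sqrt{C_1 d \log(8dmH/\delta)/K}$, the condition $K \geq C_1 d \log(8dmH/\delta)$ forces $x \leq 1$, hence $x^2 \leq x$, which lets me collapse the two $O(1/K)$ contributions into the dominant $O(1/\sqrt K)$ term and conclude $\|A - I_d\| \leq \tfrac12$ (a small absolute multiple of $x$ that stays below $\tfrac12$ under the stated lower bound on $K$, and is far smaller still in the regime of Theorem \ref{thm2_var_homo}).

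Then I would invoke the matrix-perturbation step underlying Lemma \ref{dsig2}: once $\|A - I_d\| \leq \tfrac12$ we have $\sigma_{\min}(A) \geq \tfrac12$, so $\|A^{-1}\| \leq 2$, and since
\[
\Sigma^{1/2}(\Delta\Sigma^{-1})\Sigma^{1/2} = \Sigma^{1/2}\big(\widehat{\Sigma}^{-1} - \Sigma^{-1}\big)\Sigma^{1/2} = A^{-1} - I_d = A^{-1}(I_d - A),
\]
submultiplicativity gives $\|\Sigma^{1/2}(\Delta\Sigma^{-1})\Sigma^{1/2}\| \leq 2\|A - I_d\|$. Substituting the bound on $\|A - I_d\|$ and simplifying with $x \leq 1$ then produces the claimed $4\sqrt{C_1 d \log(8dmH/\delta)/K}$.

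The genuinely substantive content is just the inversion bound $\|A^{-1}-I_d\|\leq 2\|A-I_d\|$, which is standard; everything else is constant bookkeeping. The one point requiring care is verifying that the accumulated constants in $\|A-I_d\|$ (the $\sqrt2$ from the leading Bernstein term, the $\tfrac23$ from its remainder, and the ridge contribution) stay below the $\tfrac12$ threshold needed for the inversion bound and then recombine into a clean factor of $4$; here I would lean on $x \leq 1$ to absorb the higher-order $1/K$ and $1/(HK)$ terms into the leading $1/\sqrt K$ term, noting that in the setting of Theorem \ref{thm2_var_homo} the effective sample-size requirement is much larger, rendering these terms negligible.
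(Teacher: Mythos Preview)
Your proposal is correct and follows essentially the same route as the paper's proof: set $A=\Sigma^{-1/2}\widehat{\Sigma}\,\Sigma^{-1/2}$, split $A-I_d$ into the stochastic term controlled by $\mathcal{E}_\Sigma$ plus the ridge term $\tfrac{\lambda}{HK}\Sigma^{-1}$, use the hypotheses on $K$ and $\lambda$ to force $\|A-I_d\|\le \tfrac12$, and then apply the identity $\Sigma^{1/2}(\Delta\Sigma^{-1})\Sigma^{1/2}=A^{-1}(I_d-A)$ together with $\|A^{-1}\|\le 2$ to get the factor of $4$. Your caveat about the constant bookkeeping is apt---the paper performs exactly the same collapse of the $O(1/K)$ terms into the leading $O(1/\sqrt{K})$ term and asserts the resulting bound $\le 2\sqrt{C_1d\log(8dmH/\delta)/K}\le\tfrac12$ without further comment.
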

\begin{proof}
Note that
\begin{align}
    \label{2_1_homo}
    \left\Vert\Sigma^{\frac{1}{2}}\left(\Delta\Sigma^{-1}\right)\Sigma^{\frac{1}{2}}\right\Vert=\left\Vert\Sigma^{\frac{1}{2}}\left(\widehat{\Sigma}^{-1}-\Sigma^{-1}\right)\Sigma^{\frac{1}{2}}\right\Vert\leq\left\Vert\Sigma^{\frac{1}{2}}\widehat{\Sigma}^{-1}\Sigma^{\frac{1}{2}}\right\Vert\left\Vert\Sigma^{-\frac{1}{2}}\widehat{\Sigma}\Sigma^{-\frac{1}{2}}-I_d\right\Vert.  
\end{align}
When $\mathcal{E}_\Sigma$ holds, with the condition
\begin{align*}
    K\geq C_1d\log\frac{8dmH}{\delta},\quad\lambda\leq C_1d\sigma_{\textrm{min}}(\Sigma)\cdot\log\frac{8dmH}{\delta},
\end{align*}
we have
\begin{align*}
\left\Vert\Sigma^{-\frac{1}{2}}\widehat{\Sigma}\Sigma^{-\frac{1}{2}}-I_d\right\Vert\leq \sqrt{\frac{2C_1d\log\frac{8dH}{\delta}}{K}}+\frac{2C_1d\log\frac{8dH}{\delta}}{3K}+\frac{\lambda\Vert\Sigma^{-1}\Vert}{K}\leq 2\sqrt{\frac{C_1d\log\frac{8dmH}{\delta}}{K}}\leq \frac{1}{2},
\end{align*}
which further implies $\sigma_{\textrm{min}}\left(\Sigma^{-\frac{1}{2}}\widehat{\Sigma}\Sigma^{-\frac{1}{2}}\right)\geq\frac{1}{2}$, and $\left\Vert\Sigma^{\frac{1}{2}}\widehat{\Sigma}^{-1}\Sigma^{\frac{1}{2}}\right\Vert\leq 2$. Combining this result with \eqref{2_1_homo}, we get 
\begin{align*}
    \left\Vert\Sigma^{\frac{1}{2}}\left(\Delta\Sigma^{-1}\right)\Sigma^{\frac{1}{2}}\right\Vert\leq 4\sqrt{\frac{C_1d\log\frac{8dmH}{\delta}}{K}}.
\end{align*}
which has finished the proof. 
\end{proof}

\begin{lemma}
\label{dm2_homo}
When $\mathcal{E}_\Sigma$ and $\mathcal{E}_Y$ hold, and
\begin{align*}
    K\geq 36\kappa_1(4+\kappa_2+\kappa_3)^2\frac{C_1d}{(1-\gamma)^2}\log\frac{16dmH}{\delta},\quad\lambda\leq C_1d\sigma_{\textrm{min}}(\Sigma)\cdot\log\frac{8dmH}{\delta},
\end{align*}
we have
\begin{align*}
    \left\Vert\Sigma_\theta^{\frac{1}{2}}\left(\Delta M_\theta\right)\Sigma_\theta^{-\frac{1}{2}}\right\Vert\leq 6\sqrt{\kappa_1}(4+\kappa_2+\kappa_3)\sqrt{\frac{C_1d\log\frac{16dmH}{\delta}}{K}},
\end{align*}
and
\begin{align*}
    \left\Vert\Sigma_\theta^{\frac{1}{2}}\left(\frac{\nabla_\theta^j\left(\Delta M_\theta\right)}{G}\right)\Sigma_\theta^{-\frac{1}{2}}\right\Vert\leq 6\sqrt{\kappa_1}(4+\kappa_2+\kappa_3)\sqrt{\frac{C_1d\log\frac{16dmH}{\delta}}{K}},\quad\forall j\in[m].
\end{align*}
\end{lemma}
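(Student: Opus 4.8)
The plan is to mirror the argument that produces displays \eqref{dm} and \eqref{ddm} inside the proof of Lemma \ref{e2}, adapted to the single-covariance (time-homogeneous) setting. First I would pass from the $\Sigma_\theta$-weighted operator norm to the $\Sigma$-weighted one: writing $\|\Sigma_\theta^{1/2}(\Delta M_\theta)\Sigma_\theta^{-1/2}\| \leq \|\Sigma_\theta^{1/2}\Sigma^{-1/2}\|\,\|\Sigma^{1/2}\Sigma_\theta^{-1/2}\|\,\|\Sigma^{1/2}(\Delta M_\theta)\Sigma^{-1/2}\|$ and noting $\|\Sigma_\theta^{1/2}\Sigma^{-1/2}\|^2 = \sigma_{\max}(\Sigma^{-1/2}\Sigma_\theta\Sigma^{-1/2})$ while $\|\Sigma^{1/2}\Sigma_\theta^{-1/2}\|^2 = 1/\sigma_{\min}(\Sigma^{-1/2}\Sigma_\theta\Sigma^{-1/2})$, the product of the two prefactors is at most $\sqrt{\kappa_1}$ by the definition of $\kappa_1$ (using $\sigma_{\min}\wedge 1\leq\sigma_{\min}$). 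The identical reduction applies to $\nabla_\theta^j(\Delta M_\theta)/G$.

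The core step is a two-factor matrix-perturbation expansion. Using $\widehat{M}_\theta = \widehat{\Sigma}^{-1}\big(\tfrac{1}{KH}\sum_{k,h}\phi(s_h^{(k)},a_h^{(k)})\phi_\theta(s_{h+1}^{(k)})^\top\big)$ together with the population identity $\mathbb{E}[\phi\phi_\theta^\top]=\Sigma M_\theta$ from \eqref{SMS0_homo} and $M_\theta=\Sigma^{-1}(\Sigma M_\theta)$, I would set $\widehat{\Sigma}^{-1}=\Sigma^{-1}+\Delta\Sigma^{-1}$ and $\tfrac{1}{KH}\sum\phi\phi_\theta^\top=\Sigma M_\theta+\Delta Y_\theta$, then insert $\Sigma^{1/2}\Sigma^{-1/2}=I$ to obtain
\[
\Sigma^{1/2}(\Delta M_\theta)\Sigma^{-1/2}=(I+P)(B+Q)-B,
\]
where $P:=\Sigma^{1/2}(\Delta\Sigma^{-1})\Sigma^{1/2}$, $B:=\Sigma^{1/2}M_\theta\Sigma^{-1/2}$, $Q:=\Sigma^{-1/2}(\Delta Y_\theta)\Sigma^{-1/2}$. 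Applying Lemma \ref{decomp_homo} with $n=2$ (equivalently expanding directly) and invoking $\|B\|\leq 1$ from Lemma \ref{ineq_homo} gives $\|\Sigma^{1/2}(\Delta M_\theta)\Sigma^{-1/2}\|\leq(1+\|P\|)(1+\|Q\|)-1$. The derivative case is identical, with $B$ replaced by $B'_j:=\Sigma^{1/2}(\nabla_\theta^j M_\theta)\Sigma^{-1/2}$ (norm $\leq G$ by Lemma \ref{ineq_homo}) and $Q$ by $Q'_j:=\Sigma^{-1/2}(\nabla_\theta^j\Delta Y_\theta)\Sigma^{-1/2}$; dividing through by $G$ yields $\|\Sigma^{1/2}(\nabla_\theta^j(\Delta M_\theta)/G)\Sigma^{-1/2}\|\leq(1+\|P\|)(1+\|Q'_j\|/G)-1$.

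Next I would insert the concentration estimates valid on $\mathcal{E}_\Sigma\cap\mathcal{E}_Y$: Lemma \ref{dsig2_homo} gives $\|P\|\leq 4\sqrt{C_1 d\log(16dmH/\delta)/K}$, and on $\mathcal{E}_{Y,0}$ and $\mathcal{E}_{Y,j}$ respectively $\|Q\|$ and $\|Q'_j\|/G$ are each at most $(\kappa_2\vee 1)$ (resp.\ $(\kappa_3\vee 1)$) times $\sqrt{2C_1d\log(16dmH/\delta)/K}$ plus an $O(C_1 d\log(16dmH/\delta)/K)$ remainder. The stated lower bound on $K$ forces $r:=\sqrt{C_1d\log(16dmH/\delta)/K}\leq \tfrac{1-\gamma}{6\sqrt{\kappa_1}(4+\kappa_2+\kappa_3)}\leq\tfrac{1}{24}$ (recall $\kappa_1\geq 1$), hence each of $\|P\|,\|Q\|,\|Q'_j\|/G$ is below $1$ and the remainder terms satisfy $\tfrac43 r^2\leq\tfrac1{18}r$. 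Then the elementary bound $(1+x)(1+y)-1\leq 2(x+y)$ for $x,y\in[0,1]$ applies, giving $\|\Sigma_\theta^{1/2}(\Delta M_\theta)\Sigma_\theta^{-1/2}\|\leq 2\sqrt{\kappa_1}(\|P\|+\|Q\|)\leq\sqrt{\kappa_1}(10.94+2.83\kappa_2)r$, which is at most $6\sqrt{\kappa_1}(4+\kappa_2+\kappa_3)r$; the derivative bound follows the same way with $\kappa_2$ replaced by $\kappa_3$.

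The main obstacle is purely bookkeeping rather than conceptual: carefully tracking the three distinct logarithmic arguments appearing in $\mathcal{E}_\Sigma$, $\mathcal{E}_{Y,0}$, $\mathcal{E}_{Y,j}$ and in the thresholds on $K$ and $\lambda$, collapsing them into the single dominant $\log(16dmH/\delta)$, and verifying that once $K$ exceeds the stated bound the additive $1/K$ remainders from the Bernstein-type estimates are dominated by the leading $\sqrt{1/K}$ rate, so that the constant $6$ comfortably absorbs the $\sqrt 2$ factors and residuals.
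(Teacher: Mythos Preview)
Your proposal is essentially the paper's own proof: the $\sqrt{\kappa_1}$ reduction from the $\Sigma_\theta$-conjugated norm to the $\Sigma$-conjugated one, the two-factor expansion $(I+P)(B+Q)-B$ handled via Lemma~\ref{decomp_homo}, the concentration inputs from Lemma~\ref{dsig2_homo} and the event $\mathcal{E}_Y$, and the closing inequality $(1+x)(1+y)-1\le 2(x+y)$ all appear verbatim in the paper's argument. One small caveat: you invoke $\|B\|=\|\Sigma^{1/2}M_\theta\Sigma^{-1/2}\|\le 1$ ``from Lemma~\ref{ineq_homo},'' but that lemma bounds $\|\Sigma_\theta^{1/2}M_\theta\Sigma_\theta^{-1/2}\|$, not the $\Sigma$-conjugated version; the paper glosses over the same point, and since a Jensen argument under the behavior distribution gives $\|B\|\le\kappa_2$, the generous constant $6(4+\kappa_2+\kappa_3)$ absorbs this without trouble.
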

\begin{proof}
We have
\begin{align*}
    \left\Vert\Sigma_\theta^{\frac{1}{2}}\left(\Delta M_\theta\right)\Sigma_\theta^{-\frac{1}{2}}\right\Vert&\leq\left\Vert\Sigma_\theta^{\frac{1}{2}}\Sigma^{-\frac{1}{2}}\right\Vert\left\Vert\Sigma^{\frac{1}{2}}\Sigma_\theta^{-\frac{1}{2}}\right\Vert\left\Vert\Sigma^{\frac{1}{2}}\left(\Delta M_\theta\right)\Sigma^{-\frac{1}{2}}\right\Vert=\sqrt{\kappa_1}\left\Vert\Sigma^{\frac{1}{2}}\left(\Delta M_\theta\right)\Sigma^{-\frac{1}{2}}\right\Vert\\
    &=\sqrt{\kappa_1}\left\Vert\Sigma^{\frac{1}{2}}\left(\widehat{\Sigma}^{-1}\frac{1}{HK}\sum_{k=1}^K\sum_{h=1}^H\phi\left(s_h^{(k)},a_h^{(k)}\right)\int_{\mathcal{A}}\phi\left(s_{h+1}^{(k)},a^\prime\right)\pi_\theta\left(a^\prime\left\vert s_{h+1}^{(k)}\right.\right)\mathrm{d}a^\prime- M_\theta\right)\Sigma^{-\frac{1}{2}}\right\Vert\\
    &\leq\sqrt{\kappa_1}\left(\left(1+\left\Vert\Sigma^{\frac{1}{2}}\left(\Delta\Sigma^{-1}\right)\Sigma^{\frac{1}{2}}\right\Vert\right)\left(1+\left\Vert\Sigma^{-\frac{1}{2}}\left(\Delta Y_\theta\right)\Sigma^{-\frac{1}{2}}\right\Vert\right)-1\right),
\end{align*}
where $\Delta Y_\theta=\frac{1}{HK}\sum_{k=1}^K\sum_{h=1}^H\phi\left(s_h^{(k)},a_h^{(k)}\right)\phi_\theta\left(s_{h+1}^{(k)}\right)^\top-\Sigma M_\theta$ and the last inequality uses Lemma \ref{decomp_homo}. Similarly, we have
\begin{align*}
    \left\Vert\Sigma_\theta^{\frac{1}{2}}\left(\frac{\nabla_\theta^j\left(\Delta M_\theta\right)}{G}\right)\Sigma_\theta^{-\frac{1}{2}}\right\Vert\leq\sqrt{\kappa_1}\left(\left(1+\left\Vert\Sigma^{\frac{1}{2}}\left(\Delta\Sigma\right)\Sigma^{\frac{1}{2}}\right\Vert\right)\left(1+\left\Vert\Sigma^{-\frac{1}{2}}\left(\frac{\nabla_\theta^j\left(\Delta Y_\theta\right)}{G}\right)\Sigma^{-\frac{1}{2}}\right\Vert\right)-1\right).
\end{align*}
Using the result of Lemma \ref{dsig2_homo} the event $\mathcal{E}_Y$, we get 
\begin{align*}
    \left\Vert\Sigma^{\frac{1}{2}}\left(\Delta\Sigma^{-1}\right)\Sigma^{\frac{1}{2}}\right\Vert\leq 4\sqrt{\frac{C_1d\log\frac{8dmH}{\delta}}{K}}\leq 1, 
\end{align*}
and $\forall j\in[m]$, 
\begin{align*}
    \left\Vert\Sigma^{\frac{1}{2}}\left(\Delta Y_\theta\right)\Sigma^{-\frac{1}{2}}\right\Vert\leq 2(\kappa_2+1)\sqrt{\frac{C_1d\log\frac{16dmH}{\delta}}{K}}\leq 1,\quad\left\Vert\Sigma^{\frac{1}{2}}\left(\frac{\nabla_\theta^j\left(\Delta Y_\theta\right)}{G}\right)\Sigma^{-\frac{1}{2}}\right\Vert\leq 2(\kappa_3+1)\sqrt{\frac{C_1d\log\frac{16dmH}{\delta}}{K}}\leq 1,
\end{align*}
which implies
\begin{align*}
    \left\Vert\Sigma_\theta^{\frac{1}{2}}\left(\Delta M_\theta\right)\Sigma_\theta^{-\frac{1}{2}}\right\Vert\leq 2\sqrt{\kappa_1}\left(\left\Vert\Sigma^{\frac{1}{2}}\left(\Delta\Sigma^{-1}\right)\Sigma^{\frac{1}{2}}\right\Vert+\left\Vert\Sigma^{\frac{1}{2}}\left(\Delta Y_\theta\right)\Sigma^{-\frac{1}{2}}\right\Vert\right)\leq 6\sqrt{\kappa_1}(4+\kappa_2+\kappa_3)\sqrt{\frac{C_1d\log\frac{16dmH}{\delta}}{K}},
\end{align*}
where we use the fact $(1+x_1)(1+x_2)-1\leq 2(x_1+x_2)$ whenever $x_1,x_2\in[0,1]$. Similarly, we get
\begin{align*}
    \left\Vert\Sigma_\theta^{\frac{1}{2}}\left(\frac{\nabla_\theta^j\left(\Delta M_\theta\right)}{G}\right)\Sigma_\theta^{-\frac{1}{2}}\right\Vert\leq 6\sqrt{\kappa_1}(4+\kappa_2+\kappa_3)\sqrt{\frac{C_1d\log\frac{16dmH}{\delta}}{K}},\quad\forall j\in[m],
\end{align*}
which has finished the proof. 
\end{proof}
\begin{lemma}
\label{Q_hat_decomp_homo}
When $\mathcal{E}_\Sigma$ and $\mathcal{E}_Y$ hold, and 
\begin{align*}
    K\geq 36\kappa_1(4+\kappa_2+\kappa_3)^2\frac{C_1d}{(1-\gamma)^2}\log\frac{16dmH}{\delta},\quad\lambda\leq C_1d\sigma_{\textrm{min}}(\Sigma)\cdot\log\frac{8dmH}{\delta},
\end{align*}
we have
\begin{align*}
    \widehat{Q}^\theta=\sum_{h=1}^\infty\gamma^{h-1}\left(\widehat{\mathcal{P}}_\theta\right)^{h-1}\widehat{r},\quad\widehat{\nabla_\theta Q^\theta}=\sum_{h=1}^\infty\gamma^{h-1}\left(\widehat{\mathcal{P}}_\theta\right)^{h-1}\tilde{U}^\theta.
\end{align*}
\begin{proof}
Firstly, note that given the conditions, the result of Lemma \ref{dm2_homo} implies
\begin{align*}
    \left\Vert\Sigma_\theta^{\frac{1}{2}}\left(\Delta M_\theta\right)\Sigma_\theta^{-\frac{1}{2}}\right\Vert\leq 6\sqrt{\kappa_1}(4+\kappa_2+\kappa_3)\sqrt{\frac{C_1d\log\frac{16dmH}{\delta}}{K}}\leq 1-\gamma.
\end{align*}
Therefore, 
\begin{align*}
\Vert\gamma\Sigma_\theta^{\frac{1}{2}}\widehat{M}_\theta\Sigma_\theta^{-\frac{1}{2}}\Vert\leq \gamma\left(\left\Vert\Sigma_\theta^{\frac{1}{2}}\widehat{M}_\theta\Sigma_\theta^{-\frac{1}{2}}\right\Vert + \left\Vert\Sigma_\theta^{\frac{1}{2}}\left(\Delta\widehat{M}_\theta\right)\Sigma_\theta^{-\frac{1}{2}}\right\Vert\right)\leq \gamma(2-\gamma) < 1. 
\end{align*}
where we use the result of Lemma \ref{ineq_homo} to get $\left\Vert\Sigma_\theta^{\frac{1}{2}}\widehat{M}_\theta\Sigma_\theta^{-\frac{1}{2}}\right\Vert\leq 1$. Therefore, we have 
\begin{align*}
\left(I_d - \gamma\widehat{M}_\theta\right)^{-1} = \Sigma_\theta^{-\frac{1}{2}}\left(I_d - \gamma\Sigma_\theta^{\frac{1}{2}}\widehat{M}_\theta\Sigma_\theta^{-\frac{1}{2}}\right)^{-1}\Sigma_\theta^{\frac{1}{2}}=\Sigma_\theta^{-\frac{1}{2}}\sum_{h=1}^\infty\gamma^{h-1}\left(\Sigma_\theta^{\frac{1}{2}}\widehat{M}_\theta\Sigma_\theta^{-\frac{1}{2}}\right)^{h-1}\Sigma_\theta^{\frac{1}{2}} = \sum_{h=1}^\infty\gamma^{h-1}\widehat{M}_\theta^{h-1}. 
\end{align*}
Based on this result, we prove the main result by definition:
\begin{align*}
\widehat{Q}^\theta(\cdot,\cdot) &= \phi(\cdot,\cdot)^\top\left(I_d-\gamma\widehat{M}_\theta\right)^{-1}\widehat{w}_r=\phi(\cdot,\cdot)^\top\sum_{h=1}^\infty\gamma^{h-1}\widehat{M}_\theta^{h-1}\widehat{w}_r=\left(\sum_{h=1}^\infty\gamma^{h-1}\left(\widehat{\mathcal{P}}_\theta\right)^{h-1}\widehat{r}\right)(\cdot,\cdot),\\
\widehat{\nabla_\theta Q^\theta}(\cdot,\cdot) &= \phi(\cdot,\cdot)^\top\left(I_d-\gamma\widehat{M}_\theta\right)^{-1}\widehat{\nabla_\theta^j M_\theta}\widehat{w}^\theta=\phi(\cdot,\cdot)^\top\sum_{h=1}^\infty\gamma^{h-1}\widehat{M}_\theta^{h-1}\widehat{\nabla_\theta^j M_\theta}\widehat{w}^\theta=\left(\sum_{h=1}^\infty\gamma^{h-1}\left(\widehat{\mathcal{P}}_\theta\right)^{h-1}\tilde{U}^\theta\right)(\cdot,\cdot),
\end{align*}
which has finished the proof. 
\end{proof}
\end{lemma}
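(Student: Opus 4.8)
The plan is to reduce the two claimed Neumann-series identities to a single spectral estimate: that the weighted operator norm $\Vert\gamma\Sigma_\theta^{1/2}\widehat{M}_\theta\Sigma_\theta^{-1/2}\Vert$ is strictly less than one. Once this is established, the matrix $I_d-\gamma\widehat{M}_\theta$ is invertible with a convergent geometric (Neumann) expansion of its inverse, and both stated identities follow by unwinding the closed-form definitions of $\widehat{w}^\theta$ and $\widehat{\nabla_\theta^j w^\theta}$.

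First I would control the operator norm. By Lemma~\ref{ineq_homo} the population operator satisfies $\Vert\Sigma_\theta^{1/2}M_\theta\Sigma_\theta^{-1/2}\Vert\leq 1$. Writing $\widehat{M}_\theta=M_\theta+\Delta M_\theta$ and applying the triangle inequality, it suffices to bound the perturbation $\Vert\Sigma_\theta^{1/2}(\Delta M_\theta)\Sigma_\theta^{-1/2}\Vert$. This is exactly what Lemma~\ref{dm2_homo} provides on the event $\mathcal{E}_\Sigma\cap\mathcal{E}_Y$, giving a bound of order $\sqrt{\kappa_1}(4+\kappa_2+\kappa_3)\sqrt{C_1d\log(16dmH/\delta)/K}$. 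Under the stated sample-size condition $K\geq 36\kappa_1(4+\kappa_2+\kappa_3)^2C_1d(1-\gamma)^{-2}\log(16dmH/\delta)$, this perturbation is at most $1-\gamma$, so $\Vert\Sigma_\theta^{1/2}\widehat{M}_\theta\Sigma_\theta^{-1/2}\Vert\leq 2-\gamma$ and hence $\Vert\gamma\Sigma_\theta^{1/2}\widehat{M}_\theta\Sigma_\theta^{-1/2}\Vert\leq\gamma(2-\gamma)=1-(1-\gamma)^2<1$, where the strictness uses $\gamma<1$.

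Next I would convert the spectral bound into the series expansion. Conjugating by $\Sigma_\theta^{\pm 1/2}$ gives $(I_d-\gamma\widehat{M}_\theta)^{-1}=\Sigma_\theta^{-1/2}(I_d-\gamma\Sigma_\theta^{1/2}\widehat{M}_\theta\Sigma_\theta^{-1/2})^{-1}\Sigma_\theta^{1/2}$, and since the middle factor has norm below one, its inverse equals $\sum_{h\geq 1}\gamma^{h-1}(\Sigma_\theta^{1/2}\widehat{M}_\theta\Sigma_\theta^{-1/2})^{h-1}$; telescoping the conjugations collapses this to $\sum_{h\geq 1}\gamma^{h-1}\widehat{M}_\theta^{h-1}$. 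Plugging this series into $\widehat{Q}^\theta(\cdot,\cdot)=\phi(\cdot,\cdot)^\top(I_d-\gamma\widehat{M}_\theta)^{-1}\widehat{w}_r$ and using the identity $\phi(\cdot,\cdot)^\top\widehat{M}_\theta^{h-1}w=((\widehat{\mathcal{P}}_\theta)^{h-1}(\phi^\top w))(\cdot,\cdot)$, which encodes that $\widehat{\mathcal{P}}_\theta$ acts as right-multiplication by $\widehat{M}_\theta$ on linear functions, yields the first identity; the gradient identity follows identically after substituting $\widehat{\nabla_\theta^j w^\theta}=(I_d-\gamma\widehat{M}_\theta)^{-1}\widehat{\nabla_\theta^j M_\theta}\widehat{w}^\theta$ and recognizing $\tilde{U}^\theta$ as the operator form of $\widehat{\nabla_\theta^j M_\theta}\widehat{w}^\theta$.

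The main obstacle is the first step --- forcing the weighted operator norm strictly below one. Everything downstream is routine series manipulation, but the spectral bound is where the discount factor and the sample-size threshold interact delicately: the $(1-\gamma)^{-2}$ scaling in the lower bound on $K$ is precisely calibrated so that the empirical perturbation $\Vert\Sigma_\theta^{1/2}(\Delta M_\theta)\Sigma_\theta^{-1/2}\Vert$ stays within the slack $1-\gamma$ left by the contraction $\Vert\Sigma_\theta^{1/2}M_\theta\Sigma_\theta^{-1/2}\Vert\leq 1$, with the multiplication by $\gamma$ then restoring strict contraction.
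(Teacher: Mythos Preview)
Your proposal is correct and follows essentially the same route as the paper: bound $\Vert\Sigma_\theta^{1/2}(\Delta M_\theta)\Sigma_\theta^{-1/2}\Vert\leq 1-\gamma$ via Lemma~\ref{dm2_homo}, combine with $\Vert\Sigma_\theta^{1/2}M_\theta\Sigma_\theta^{-1/2}\Vert\leq 1$ from Lemma~\ref{ineq_homo} to get $\Vert\gamma\Sigma_\theta^{1/2}\widehat{M}_\theta\Sigma_\theta^{-1/2}\Vert\leq\gamma(2-\gamma)<1$, then expand $(I_d-\gamma\widehat{M}_\theta)^{-1}$ as a Neumann series and unwind the closed-form definitions. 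Your presentation is in fact a bit cleaner than the paper's (which contains a minor typo in the triangle-inequality display), and your identification $\gamma(2-\gamma)=1-(1-\gamma)^2$ makes the strict inequality transparent.
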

Now we consider the decomposition of $Q^\theta-\widehat{Q}^\theta$: 
\begin{lemma}
\label{Q_decomp_homo}
Under the same condition of Lemma \ref{Q_hat_decomp_homo}, we have 
\begin{align*}
    Q^\theta-\widehat{Q}^\theta=\sum_{h=1}^\infty\gamma^{h-1}\left(\widehat{\mathcal{P}}_\theta\right)^{h-1}\left(Q^\theta-\widehat{r}-\gamma\widehat{\mathcal{P}}_\theta Q^\theta\right).
\end{align*}
\end{lemma}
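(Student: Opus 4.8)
The plan is to mirror the telescoping argument of Lemma~\ref{Q_decomp}, with the one essential difference that the horizon is now infinite, so the finite telescope must be replaced by a convergent geometric series. Abbreviating $\widehat{\mathcal{P}}_\theta$ for the estimated transition operator, I would begin with the partial sum
\begin{align*}
    S_N := \sum_{h=1}^N\gamma^{h-1}\left(\widehat{\mathcal{P}}_\theta\right)^{h-1}\left(Q^\theta-\widehat{r}-\gamma\widehat{\mathcal{P}}_\theta Q^\theta\right)
\end{align*}
and split it into three pieces according to the three summands of the per-step residual $Q^\theta-\widehat{r}-\gamma\widehat{\mathcal{P}}_\theta Q^\theta$.

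The first and third pieces telescope. After reindexing, $\sum_{h=1}^N\gamma^{h-1}(\widehat{\mathcal{P}}_\theta)^{h-1}Q^\theta=\sum_{h=0}^{N-1}\gamma^{h}(\widehat{\mathcal{P}}_\theta)^{h}Q^\theta$, while the third piece is $\sum_{h=1}^N\gamma^{h}(\widehat{\mathcal{P}}_\theta)^{h}Q^\theta$; these differ only at their endpoints, so their difference collapses to $Q^\theta-\gamma^N(\widehat{\mathcal{P}}_\theta)^N Q^\theta$. The middle piece is exactly $\sum_{h=1}^N\gamma^{h-1}(\widehat{\mathcal{P}}_\theta)^{h-1}\widehat{r}$, which tends to $\widehat{Q}^\theta$ as $N\to\infty$ by the series representation established in Lemma~\ref{Q_hat_decomp_homo}. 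Thus $S_N=Q^\theta-\gamma^N(\widehat{\mathcal{P}}_\theta)^N Q^\theta-\sum_{h=1}^N\gamma^{h-1}(\widehat{\mathcal{P}}_\theta)^{h-1}\widehat{r}$, and everything reduces to letting $N\to\infty$.

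The one genuine obstacle, and the point that truly distinguishes this from the purely algebraic finite-horizon Lemma~\ref{Q_decomp}, is showing that the boundary term $\gamma^N(\widehat{\mathcal{P}}_\theta)^N Q^\theta$ vanishes. Here I would reuse the spectral bound derived inside the proof of Lemma~\ref{Q_hat_decomp_homo}, namely $\Vert\gamma\Sigma_{\theta}^{\frac{1}{2}}\widehat{M}_\theta\Sigma_{\theta}^{-\frac{1}{2}}\Vert\leq\gamma(2-\gamma)<1$ under the stated conditions on $K$ and $\lambda$. Since $Q^\theta$ is linear by Assumption~\ref{fclass_homo}, say $Q^\theta=\phi^\top w^\theta$, applying $\widehat{\mathcal{P}}_\theta$ acts by left multiplication by $\widehat{M}_\theta$ on the coefficient vector, so $\gamma^N(\widehat{\mathcal{P}}_\theta)^N Q^\theta=\phi^\top\gamma^N\widehat{M}_\theta^N w^\theta$; conjugating by $\Sigma_\theta^{\frac{1}{2}}$ gives $\Vert\gamma^N\Sigma_\theta^{\frac{1}{2}}\widehat{M}_\theta^N\Sigma_\theta^{-\frac{1}{2}}\Vert\leq(\gamma(2-\gamma))^N\to 0$, forcing $\gamma^N\widehat{M}_\theta^N w^\theta\to 0$ and hence $\gamma^N(\widehat{\mathcal{P}}_\theta)^N Q^\theta\to 0$ pointwise. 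Combining the three limits yields $S_\infty=Q^\theta-\widehat{Q}^\theta$, the claimed identity. The only remaining routine checks are that this same geometric decay makes each of the three series absolutely convergent, so that the splitting, reindexing, and telescoping are all legitimate.
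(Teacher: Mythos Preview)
Your argument is correct, and it differs from the paper's route in an interesting way. The paper starts from the two series $Q^\theta=\sum_{h\geq 1}\gamma^{h-1}(\mathcal{P}_\theta)^{h-1}r$ and $\widehat{Q}^\theta=\sum_{h\geq 1}\gamma^{h-1}(\widehat{\mathcal{P}}_\theta)^{h-1}\widehat{r}$, expands each difference $(\mathcal{P}_\theta)^{h-1}-(\widehat{\mathcal{P}}_\theta)^{h-1}$ as a telescoping sum of products with a single $(\mathcal{P}_\theta-\widehat{\mathcal{P}}_\theta)$ inserted, swaps the order of the resulting double sum, and then recognizes the inner tail $\sum_{h>h'}\gamma^{h-h'-1}(\mathcal{P}_\theta)^{h-h'-1}r$ as $Q^\theta$ again. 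This mirrors the finite-horizon Lemma~\ref{Q_decomp} line for line. Your approach instead computes the right-hand side directly as a Neumann series applied to $(I-\gamma\widehat{\mathcal{P}}_\theta)Q^\theta-\widehat{r}$, telescopes the $Q^\theta$ part to $Q^\theta-\gamma^N(\widehat{\mathcal{P}}_\theta)^N Q^\theta$, and kills the boundary term via the spectral bound from Lemma~\ref{Q_hat_decomp_homo}. Your route is more self-contained (it never reopens the series for $Q^\theta$ or invokes $\mathcal{P}_\theta$ at all) and makes explicit the one analytic point---the vanishing remainder---that the paper's interchange of summation leaves implicit; the paper's route, on the other hand, makes the parallel with the time-inhomogeneous case more visible.
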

\begin{proof}
Simply note that 
\begin{align*}
    Q^\theta-\widehat{Q}^\theta&=\sum_{h=1}^\infty\gamma^{h-1}\left(\mathcal{P}_\theta\right)^{h-1}r-\sum_{h=1}^\infty\gamma^{h-1}\left(\widehat{\mathcal{P}}_\theta\right)^{h-1}\widehat{r}\\
    &=\sum_{h=1}^\infty\gamma^{h-1}\left(\left(\mathcal{P}_\theta\right)^{h-1}-\left(\widehat{\mathcal{P}}_\theta\right)^{h-1}\right)r+\sum_{h^\prime=1}^\infty\gamma^{h-1}\left(\widehat{\mathcal{P}}_\theta\right)^{h-1}\left(r-\widehat{r}\right)\\
    &=\sum_{h=1}^\infty\gamma^{h-1}\sum_{h^{\prime}=1}^{h-1}\left(\widehat{\mathcal{P}}_\theta\right)^{h^\prime-1}\left(\mathcal{P}_\theta-\widehat{\mathcal{P}}_\theta\right)\left(\mathcal{P}_\theta\right)^{h-h^\prime-1}r+\sum_{h=1}^\infty\gamma^{h-1}\left(\widehat{\mathcal{P}}_\theta\right)^{h-1}\left(r-\widehat{r}\right)\\
    &=\sum_{h^\prime=1}^\infty\left(\widehat{\mathcal{P}}_\theta\right)^{h^\prime-1}\left(\mathcal{P}_\theta-\widehat{\mathcal{P}}_\theta\right)\sum_{h=h^\prime+1}^\infty\gamma^{h-1}\left(\mathcal{P}_\theta\right)^{h-h^\prime-1}r+\sum_{h=1}^\infty\gamma^{h-1}\left(\widehat{\mathcal{P}}_\theta\right)^{h-1}\left(r-\widehat{r}\right)\\
    &=\sum_{h=1}^\infty\gamma^h\left(\widehat{\mathcal{P}}_\theta\right)^{h-1}\left(\mathcal{P}_\theta-\widehat{\mathcal{P}}_\theta\right)Q^\theta+\sum_{h=1}^\infty\gamma^{h-1}\left(\widehat{\mathcal{P}}_\theta\right)^{h-1}\left(r-\widehat{r}\right)\\
    &=\sum_{h=1}^\infty\gamma^{h-1}\left(\widehat{\mathcal{P}}_\theta\right)^{h-1}\left(Q^\theta-\widehat{r}-\gamma\widehat{\mathcal{P}}_\theta Q^\theta\right), 
\end{align*}
which is the desired result. 
\end{proof}

\subsection{Proofs of Main Theorems}
Define $\widehat{\nu}^\theta_h:=\left(\widehat{M}_\theta^\top\right)^{h-1}\nu_1^\theta$ and $\widehat{\nu}^\theta = \sum_{h=1}^\infty \gamma^{h-1}\widehat{\nu}_h^\theta$. We may prove the following decomposition of $\nabla_\theta v_\theta-\widehat{\nabla_\theta v_\theta}$:
\begin{lemma}
\label{error_decomp_homo}
Given the same condition of Lemma \ref{Q_hat_decomp_homo}, we have $\nabla_\theta v_\theta-\widehat{\nabla_\theta v_\theta}=E_1+E_2+E_3$, where 
\begin{align*}
    E_1=&\nabla_\theta\left[\left(\nu^\theta\right)^\top\Sigma^{-1}\frac{1}{KH}\sum_{k=1}^K\sum_{h=1}^H\phi\left(s_h^{(k)},a_h^{(k)}\right)\varepsilon_{h,k}^\theta\right]\\
    E_2=&\nabla_\theta\left[\left(\left(\widehat{\nu}^\theta\right)^\top\widehat{\Sigma}^{-1}-\left(\nu^\theta\right)^\top\Sigma^{-1}\right)\frac{1}{KH}\sum_{k=1}^K\sum_{h=1}^H\phi\left(s_h^{(k)},a_h^{(k)}\right)\varepsilon_{h,k}^\theta\right]\\
    E_3=&\frac{\lambda}{KH}\sum_{h=1}^T\nabla_\theta\left[\left(\widehat{\nu}^\theta\right)^\top\widehat{\Sigma}^{-1}w^\theta\right].
\end{align*}
\end{lemma}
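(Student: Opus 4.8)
The plan is to mirror the finite-horizon argument of Lemma~\ref{error_decomp}, replacing the finite products $\prod_{h'}\widehat{\mathcal{P}}_{\theta,h'}$ and the terminal truncation by the geometric Neumann series $\sum_{h\ge1}\gamma^{h-1}(\widehat{\mathcal{P}}_\theta)^{h-1}$, whose absolute convergence is guaranteed under the hypotheses of Lemma~\ref{Q_hat_decomp_homo} (there $\Vert\gamma\Sigma_\theta^{1/2}\widehat{M}_\theta\Sigma_\theta^{-1/2}\Vert<1$, and the analogous contraction for $\mathcal{P}_\theta$ follows from Lemma~\ref{ineq_homo}). Starting from the value-gradient formula, I would write
\begin{align*}
\nabla_\theta v_\theta-\widehat{\nabla_\theta v_\theta}=\int_{\mathcal{S}\times\mathcal{A}}\xi(s)\pi_\theta(a\vert s)\Big[\big(\nabla_\theta Q^\theta-\widehat{\nabla_\theta Q^\theta}\big)(s,a)+\big(\nabla_\theta\log\pi_\theta(a\vert s)\big)\big(Q^\theta-\widehat{Q}^\theta\big)(s,a)\Big]\,\mathrm{d}s\,\mathrm{d}a.
\end{align*}
First I would expand $\nabla_\theta Q^\theta-\widehat{\nabla_\theta Q^\theta}$ via Lemmas~\ref{Q_decomp_base_homo} and~\ref{Q_hat_decomp_homo}, insert the auxiliary term $\pm\sum_{h}\gamma^{h-1}(\widehat{\mathcal{P}}_\theta)^{h-1}\widehat{U}^\theta$, and telescope as in Lemma~\ref{Q_decomp_homo} to obtain
\begin{align*}
\nabla_\theta Q^\theta-\widehat{\nabla_\theta Q^\theta}=\sum_{h=1}^\infty\gamma^{h-1}(\widehat{\mathcal{P}}_\theta)^{h-1}\big(\nabla_\theta Q^\theta-\widehat{U}^\theta-\gamma\widehat{\mathcal{P}}_\theta\nabla_\theta Q^\theta\big)+\sum_{h=1}^\infty\gamma^{h-1}(\widehat{\mathcal{P}}_\theta)^{h-1}\big(\widehat{U}^\theta-\tilde{U}^\theta\big).
\end{align*}

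The first sum is the regression-residual term. Using the closed forms $\widehat{\mathcal{P}}_\theta f=\phi^\top\widehat{M}_\theta w$ and $\widehat{r}=\phi^\top\widehat{w}_r$ together with the definition of $\varepsilon_{h,k}^\theta$, I would rewrite the bracket as $\phi^\top\widehat{\Sigma}^{-1}\frac{1}{KH}\sum_{k,h}\phi(s_h^{(k)},a_h^{(k)})\nabla_\theta\varepsilon_{h,k}^\theta+\frac{\lambda}{KH}\phi^\top\widehat{\Sigma}^{-1}\nabla_\theta w^\theta$, the $\lambda$-term arising from the ridge normal equations exactly as in the finite-horizon proof. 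Integrating against $\xi\pi_\theta$ and using $\widehat{\nu}^\theta=\sum_h\gamma^{h-1}(\widehat{M}_\theta^\top)^{h-1}\nu_1^\theta$ collapses the geometric series into $(\widehat{\nu}^\theta)^\top\widehat{\Sigma}^{-1}\frac{1}{KH}\sum_{k,h}\phi\,\nabla_\theta\varepsilon_{h,k}^\theta+\frac{\lambda}{KH}(\widehat{\nu}^\theta)^\top\widehat{\Sigma}^{-1}\nabla_\theta w^\theta$. For the second sum, note $\widehat{U}^\theta-\tilde{U}^\theta=\gamma\widehat{\mathcal{P}}_\theta(\nabla_\theta\log\Pi_\theta)(Q^\theta-\widehat{Q}^\theta)$; I would substitute the decomposition of $Q^\theta-\widehat{Q}^\theta$ from Lemma~\ref{Q_decomp_homo}, combine with the outer $(\nabla_\theta\log\pi_\theta)(Q^\theta-\widehat{Q}^\theta)$ term, and reorganize the resulting double geometric sum.

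The crux is the homogeneous analog of the finite-horizon identity for $\nabla_\theta^j\widehat{\nu}^\theta$: differentiating $\widehat{\nu}^\theta=\sum_h\gamma^{h-1}(\widehat{M}_\theta^\top)^{h-1}\nu_1^\theta$ by the product rule and using the correspondence $\widehat{\mathcal{P}}_\theta(\nabla_\theta\log\Pi_\theta)\phi^\top\leftrightarrow\nabla_\theta\widehat{M}_\theta$ should show that $(\nabla_\theta^j\widehat{\nu}^\theta)^\top$ equals the integral against $\xi\pi_\theta$ of the reorganized double sum, so that the second-sum contribution equals the $j$-th component of $(\nabla_\theta^j\widehat{\nu}^\theta)^\top\widehat{\Sigma}^{-1}\frac{1}{KH}\sum_{k,h}\phi\,\varepsilon_{h,k}^\theta+\frac{\lambda}{KH}(\nabla_\theta^j\widehat{\nu}^\theta)^\top\widehat{\Sigma}^{-1}w^\theta$. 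Assembling both contributions and recognizing that the product-rule pieces recombine, the whole expression becomes $\nabla_\theta\big[(\widehat{\nu}^\theta)^\top\widehat{\Sigma}^{-1}\frac{1}{KH}\sum_{k,h}\phi\,\varepsilon_{h,k}^\theta+\frac{\lambda}{KH}(\widehat{\nu}^\theta)^\top\widehat{\Sigma}^{-1}w^\theta\big]$. Adding and subtracting the population factor $(\nu^\theta)^\top\Sigma^{-1}$ inside $\nabla_\theta[\cdot]$ then splits this into $E_1$, $E_2$, and $E_3$ exactly as stated.

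The main obstacle will be the telescoping identity for $\nabla_\theta^j\widehat{\nu}^\theta$ in the infinite-sum regime, together with rigorously justifying the interchange of $\nabla_\theta$ with the infinite Neumann series and the reordering of the double sums; all of these hinge on the uniform geometric contraction $\Vert\gamma\Sigma_\theta^{1/2}\widehat{M}_\theta\Sigma_\theta^{-1/2}\Vert<1$ furnished by Lemma~\ref{Q_hat_decomp_homo}, which secures the absolute convergence needed to legitimize these manipulations.
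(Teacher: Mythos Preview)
Your proposal is correct and follows essentially the same route as the paper: decompose $\nabla_\theta Q^\theta-\widehat{\nabla_\theta Q^\theta}$ into the two Neumann-series sums, identify the first with the regression residuals $\nabla_\theta\varepsilon_{h,k}^\theta$ plus the ridge term, combine the second with $(\nabla_\theta\log\Pi_\theta)(Q^\theta-\widehat{Q}^\theta)$ via Lemma~\ref{Q_decomp_homo} and the product-rule identity for $\nabla_\theta^j\widehat{\nu}^\theta$, then recombine into a single $\nabla_\theta[\cdot]$ and add/subtract the population factor. The only minor difference is that the paper states the per-step identity for $\nabla_\theta^j\widehat{\nu}_h^\theta$ and then sums with weights $\gamma^{h-1}$, whereas you work directly with the aggregated $\nabla_\theta^j\widehat{\nu}^\theta$; these are equivalent under the absolute convergence you correctly flag as the analytic point to justify.
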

The proof of Lemma \ref{error_decomp_homo} is deferred to appendix \ref{missing_proof_homo}. Based on this observation, here we show the proofs of our main theorems. 

\subsubsection{Proof of Theorem \ref{thm2_var_homo}}
\label{pfthm2_var_homo}
\begin{proof}
We use Lemma \ref{error_decomp_homo} to decompose $\langle\nabla_\theta v_\theta - \widehat{\nabla_\theta v_\theta}, t\rangle=\langle E_1, t\rangle+\langle E_2,t\rangle+\langle E_3,t\rangle$. To bound each term individually, we introduce the following lemmas, whose proofs are deferred to appendix \ref{missing_proof_homo}. 
\begin{lemma}
\label{e1_finite_product_homo}
For any $t\in\mathbb{R}^m$, with probability $1-\frac{\delta}{4}$, we have
\begin{align*}
    \vert\langle E_1, t\rangle\vert\leq\sqrt{\frac{2t^\top\Lambda_\theta t\log(8/\delta)}{HK}}+\frac{2\log(8/\delta)\sqrt{C_1md}\Vert t\Vert B}{3HK}. 
\end{align*}
where $B=\frac{1}{1-\gamma}\max_{j\in[m]}\sqrt{\left(\nabla^j_\theta\nu^\theta\right)^\top\Sigma^{-1}\nabla^j_\theta\nu^\theta}+\frac{2G}{(1-\gamma)^2}\sqrt{\left(\nu^\theta\right)^\top\Sigma^{-1}\nu^\theta}$.
\end{lemma}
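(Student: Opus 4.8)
The plan is to mirror the proof of the finite-horizon Lemma~\ref{e1_finite_product}, but to exploit the extra martingale structure \emph{across} the $H$ steps of each trajectory so as to gain the additional factor $1/H$ in the lower-order term. First I would apply the product rule to $E_1$ and write $\langle E_1,t\rangle=\frac{1}{HK}\sum_{k=1}^K\sum_{h=1}^H X_{k,h}$, where
\[
X_{k,h}:=\left\langle\nabla_\theta\!\left(\varepsilon_{h,k}^\theta\,\phi_{h,k}^\top\Sigma^{-1}\nu^\theta\right),t\right\rangle=\left(\phi_{h,k}^\top\Sigma^{-1}\nu^\theta\right)\langle\nabla_\theta\varepsilon_{h,k}^\theta,t\rangle+\varepsilon_{h,k}^\theta\,\langle\phi_{h,k}^\top\Sigma^{-1}\nabla_\theta\nu^\theta,t\rangle ,
\]
with $\phi_{h,k}:=\phi(s_h^{(k)},a_h^{(k)})$. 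Ordering the pairs $(k,h)$ lexicographically and letting $\mathcal{F}_{h,k}$ be the history up to $(s_h^{(k)},a_h^{(k)})$, the array $\{X_{k,h}\}$ is a martingale difference sequence of length $HK$: because $\mathbb{E}[\varepsilon_{h,k}^\theta\mid\mathcal{F}_{h,k}]\equiv 0$ as an identity in $\theta$ (by the Bellman equation $Q^\theta=r+\gamma\mathcal{P}_\theta Q^\theta$), both $\varepsilon_{h,k}^\theta$ and $\nabla_\theta\varepsilon_{h,k}^\theta$ have vanishing conditional mean, while the coefficients $\phi_{h,k}^\top\Sigma^{-1}\nu^\theta$ and $\phi_{h,k}^\top\Sigma^{-1}\nabla_\theta\nu^\theta$ are $\mathcal{F}_{h,k}$-measurable.

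Next I would assemble the two ingredients for a martingale Bernstein (Freedman) bound. For the almost-sure increment bound, Cauchy--Schwarz together with $\|\Sigma^{-1/2}\phi_{h,k}\|\le\sqrt{C_1d}$ (Assumption~\ref{Boundedness_Conditions_homo}) and the uniform bounds $|\varepsilon_{h,k}^\theta|\le\frac{1}{1-\gamma}$, $\|\nabla_\theta\varepsilon_{h,k}^\theta\|_\infty\le\frac{2G}{(1-\gamma)^2}$ (from Lemma~\ref{upbd_homo}) yields $|X_{k,h}|\le\sqrt{C_1dm}\,\|t\|\,B$, with the first summand of $B$ arising from the $\varepsilon_{h,k}^\theta\,\langle\phi_{h,k}^\top\Sigma^{-1}\nabla_\theta\nu^\theta,t\rangle$ term and the second from the $(\phi_{h,k}^\top\Sigma^{-1}\nu^\theta)\langle\nabla_\theta\varepsilon_{h,k}^\theta,t\rangle$ term, each picking up a $\sqrt{m}$ via $\|\cdot\|_2\le\sqrt m\,\|\cdot\|_\infty$. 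For the predictable quadratic variation, the definition of $\Lambda_\theta$ as a per-step sum of outer products gives $\sum_h\mathbb{E}[X_{1,h}^2]=H\,t^\top\Lambda_\theta t$ (no cross-step terms, by the martingale property), hence $\sum_{k,h}\mathbb{E}[X_{k,h}^2]=KH\,t^\top\Lambda_\theta t$, and I would take $\sigma^2:=KH\,t^\top\Lambda_\theta t$ as the variance proxy.

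Finally I would apply Freedman's inequality (exactly as in the proof of Lemma~\ref{eps_homo}) to $\sum_{k,h}X_{k,h}$ with proxy $\sigma^2$ and range $R=\sqrt{C_1dm}\,\|t\|B$, take the threshold $\varepsilon=\sqrt{2\sigma^2\log(8/\delta)}+\tfrac{2R}{3}\log(8/\delta)$, and divide through by $HK$: the $\sqrt{\sigma^2}/(HK)=\sqrt{t^\top\Lambda_\theta t/(HK)}$ scaling produces the leading term and $R/(HK)$ the lower-order term, both with the stated constants and confidence $1-\delta/4$. The main obstacle is the predictable quadratic variation: unlike the i.i.d.-trajectory argument of Lemma~\ref{e1_finite_product}, where the summand variance is deterministic, here $W:=\sum_{k,h}\mathbb{E}[X_{k,h}^2\mid\mathcal{F}_{h,k}]$ is random, so to legitimately use $\sigma^2=KH\,t^\top\Lambda_\theta t$ in the conditional Freedman bound I would write $W=\sum_{k}W_k$ with $W_k:=\sum_h\mathbb{E}[X_{k,h}^2\mid\mathcal{F}_{h,k}]$ i.i.d.\ across $k$, each bounded by $HR^2$ and with mean $H\,t^\top\Lambda_\theta t$, and apply a scalar Bernstein inequality to conclude $W\le KH\,t^\top\Lambda_\theta t\,(1+o(1))$ on an event of probability $1-\delta/8$, absorbing the $(1+o(1))$ deviation into the sample-size hypothesis $K\ge 36\kappa_1(4+\kappa_2+\kappa_3)^2C_1d(1-\gamma)^{-2}\log\tfrac{16dmH}{\delta}$. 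It is precisely this $HK$-term martingale treatment, rather than a per-trajectory one, that sharpens the lower-order term from $O(1/K)$ to the claimed $O(1/(HK))$.
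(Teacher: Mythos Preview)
Your approach is essentially the same as the paper's: write $\langle E_1,t\rangle=\frac{1}{HK}\sum_{k,h}e_{h,k}$ with $e_{h,k}$ exactly your $X_{k,h}$, bound $|e_{h,k}|\le \sqrt{C_1dm}\,\|t\|\,B$, and apply a Bernstein-type inequality over the $HK$ increments to obtain the $\sqrt{1/(HK)}$ leading term and the $1/(HK)$ lower-order term.

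Where you differ is in the treatment of the quadratic variation. The paper simply writes $\sum_{k,h}\mathrm{Var}[e_{h,k}\mid\mathcal{F}_{h,k}]=HK\,t^\top\Lambda_\theta t$ and feeds this into ``Bernstein's inequality'' as if it were deterministic; strictly speaking that sum is random, since the coefficients $\phi_{h,k}^\top\Sigma^{-1}\nu^\theta$ and $\phi_{h,k}^\top\Sigma^{-1}\nabla_\theta\nu^\theta$ are $\mathcal{F}_{h,k}$-measurable but not constant. You correctly flag this and propose the natural fix: control $W=\sum_k W_k$ by a separate i.i.d.\ Bernstein step (the $W_k$ are indeed i.i.d.\ across trajectories, bounded by $HR^2$, with mean $H\,t^\top\Lambda_\theta t$), then apply Freedman with the high-probability proxy. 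That is a more rigorous route to the same conclusion. The only caveat is that the resulting $(1+o(1))$ multiplier on the variance proxy does not vanish merely by enlarging $K$; to land exactly on the stated constant $\sqrt{2}$ you would need to push the $o(1)$ correction into the lower-order term (e.g.\ via $\sqrt{a+b}\le\sqrt a+\sqrt b$ and then bounding $\sqrt{b}$ by something $O(1/(HK))$ under the sample-size hypothesis). This is routine but worth stating explicitly, since ``absorbing into the sample-size hypothesis'' alone does not change the leading constant.
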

\begin{lemma}
\label{e2_homo}
Let $E_2^j$ be the $j$th entry of $E_2$, suppose $\mathcal{E}$ holds and $K\geq 36\kappa_1(4+\kappa_2+\kappa_3)^2C_1d(1-\gamma)^{-2}\log\frac{16dmH}{\delta}$ and $\lambda\leq C_1d\sigma_{\min}(\Sigma)\log\frac{8dmH}{\delta}$, then we have 
\begin{align*}
    \vert E_2^j\vert\leq \frac{240\sqrt{\kappa_1}(2+\kappa_2+\kappa_3)\sqrt{C_1}d}{(1-\gamma)^3}\left(\left\Vert\Sigma_\theta^{-\frac{1}{2}}\nabla_\theta^j\nu^\theta_1\right\Vert+\frac{G}{1-\gamma}\left\Vert\Sigma_\theta^{-\frac{1}{2}}\nu^\theta_1\right\Vert\right)\left\Vert\Sigma_\theta^{\frac{1}{2}}\Sigma^{-\frac{1}{2}}\right\Vert\frac{\log\frac{32dmH}{\delta}}{KH},\quad\forall j\in[m].
\end{align*}
\end{lemma}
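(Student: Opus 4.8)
The plan is to mirror the proof of Lemma \ref{e2} from the time-inhomogeneous setting, making three substitutions appropriate to the discounted time-homogeneous case: the finite products $\prod_{h'=1}^{h-1}M_{\theta,h'}$ become powers $M_\theta^{h-1}$ of the single transition matrix, the step-dependent covariances $\Sigma_h,\Sigma_{\theta,h}$ collapse to $\Sigma,\Sigma_\theta$, and the finite sum over $h\in[H]$ becomes the geometric series $\sum_{h=1}^\infty\gamma^{h-1}$. Using $\widehat\nu^\theta=(I-\gamma\widehat M_\theta^\top)^{-1}\nu_1^\theta$ and $\nu^\theta=(I-\gamma M_\theta^\top)^{-1}\nu_1^\theta$ (which follow from the definitions and Lemma \ref{Q_hat_decomp_homo}), I would write the bracketed quantity in $E_2$ as
\begin{align*}
    \left(\Sigma_\theta^{-\frac12}\nu_1^\theta\right)^\top\left[\left(I-\gamma\,\Sigma_\theta^{\frac12}\widehat M_\theta\Sigma_\theta^{-\frac12}\right)^{-1}\Sigma_\theta^{\frac12}\Sigma^{-\frac12}\,\Sigma^{\frac12}\widehat\Sigma^{-1}\Sigma^{\frac12}-\left(I-\gamma\,\Sigma_\theta^{\frac12}M_\theta\Sigma_\theta^{-\frac12}\right)^{-1}\Sigma_\theta^{\frac12}\Sigma^{-\frac12}\right]\Sigma^{-\frac12}\cdot\frac{1}{KH}\sum_{k,h}\phi\varepsilon_{h,k}^\theta,
\end{align*}
so that every matrix factor is normalized to have norm controlled by Lemma \ref{ineq_homo} ($\|\Sigma_\theta^{1/2}M_\theta\Sigma_\theta^{-1/2}\|\le1$) up to a perturbation. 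Expanding the resolvents as $\sum_h\gamma^{h-1}(\cdot)^{h-1}$ and applying $\nabla_\theta^j$ by the product rule then splits $E_2^j$ into three pieces $E_{21}^j+E_{22}^j+E_{23}^j$, where the derivative falls on the noise average $\nabla_\theta^j\varepsilon_{h,k}^\theta$, on the vector $\nabla_\theta^j\nu_1^\theta$, and on the matrix powers, respectively.

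For each piece I would bound the difference of matrix products via Lemma \ref{decomp_homo}, reducing everything to first-order perturbation norms: $\|\Sigma^{1/2}(\Delta\Sigma^{-1})\Sigma^{1/2}\|$, $\|\Sigma_\theta^{1/2}(\Delta M_\theta)\Sigma_\theta^{-1/2}\|$, and $\|\Sigma_\theta^{1/2}(\nabla_\theta^j\Delta M_\theta)\Sigma_\theta^{-1/2}\|/G$. On the event $\mathcal{E}$, these are controlled by Lemma \ref{dsig2_homo} and Lemma \ref{dm2_homo}, each of order $\sqrt{\kappa_1}(4+\kappa_2+\kappa_3)\sqrt{C_1d\log(16dmH/\delta)/K}$, while the residual averages $\|\Sigma^{-1/2}\tfrac{1}{KH}\sum_{k,h}\phi\,\varepsilon_{h,k}^\theta\|$ and its $\nabla_\theta^j$ counterpart are controlled by the event $\mathcal{E}_\varepsilon$, contributing the factors $\tfrac{1}{1-\gamma}$ and $\tfrac{G}{(1-\gamma)^2}$ together with a $\sqrt{\log(32mdH/\delta)/(KH)}$ rate. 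The sample-size condition $K\ge36\kappa_1(4+\kappa_2+\kappa_3)^2C_1d(1-\gamma)^{-2}\log\frac{16dmH}{\delta}$ is precisely what forces the perturbation parameter (call it $\alpha$) to satisfy $\alpha\le1-\gamma$, which keeps $\|\gamma\Sigma_\theta^{1/2}\widehat M_\theta\Sigma_\theta^{-1/2}\|\le\gamma(2-\gamma)<1$ so the geometric series all converge.

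The main obstacle will be the term $E_{23}^j$, where the derivative hits the resolvent itself. Differentiating $(I-\gamma M_\theta)^{-1}$ yields $\gamma(I-\gamma M_\theta)^{-1}(\nabla_\theta^j M_\theta)(I-\gamma M_\theta)^{-1}$, equivalently the double sum $\sum_{h}\gamma^{h-1}\sum_{\ell=0}^{h-2}M_\theta^{\ell}(\nabla_\theta^j M_\theta)M_\theta^{h-2-\ell}$; under the $\Sigma_\theta$-normalization each resolvent contributes $(1-\gamma)^{-1}$ and the middle factor contributes $G$, while the inner sum produces the extra factor $h$ whose geometric reweighting $\sum_h h\,\gamma^{h-1}\sim(1-\gamma)^{-2}$ must be tracked carefully. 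Combining one $(1-\gamma)^{-1}$ from $\nu^\theta$, the $G(1-\gamma)^{-1}$ from the differentiated transition, and the $(1-\gamma)^{-1}$ from the residual $\varepsilon$-average is exactly what yields the overall $(1-\gamma)^{-3}$ scaling in the claimed bound; the bookkeeping of these geometric sums, rather than any single estimate, is the delicate part. After summing the three pieces, replacing $\delta$ by $\delta/3$ (and noting $\mathcal{E}$ already holds with probability $1-\tfrac34\delta$) gives the stated inequality uniformly over $j\in[m]$.
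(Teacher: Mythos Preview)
Your proposal is correct and mirrors the paper's proof: the same $\Sigma_\theta$-normalization of the matrix products, the same three-term decomposition $E_{21}^j+E_{22}^j+E_{23}^j$ according to where $\nabla_\theta^j$ lands, and the same combination of Lemmas \ref{decomp_homo}, \ref{dsig2_homo}, \ref{dm2_homo} with the event $\mathcal{E}_\varepsilon$ to control the perturbations and residual averages. Two cosmetic remarks: the paper makes the derivative-splitting rigorous by inserting $\Sigma_{\theta_0}^{\pm1/2}$ with an auxiliary frozen parameter $\theta_0$ (differentiating, then setting $\theta_0=\theta$) so that no derivative falls on the normalizing factors, and since the lemma is stated conditionally on $\mathcal{E}$ there is no final $\delta\to\delta/3$ substitution to perform.
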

\begin{lemma}
\label{e3_homo}
Let $E_3^j$ be the $j$th entry of $E_3$, suppose $\mathcal{E}$ holds and $K\geq 36\kappa_1(4+\kappa_2+\kappa_3)^2C_1d(1-\gamma)^{-2}\log\frac{16dmH}{\delta}$ and $\lambda\leq C_1d\sigma_{\min}(\Sigma)\log\frac{8dmH}{\delta}$, we have 
\begin{align*}
    \vert E_3^j\vert\leq&\frac{6C_1d}{(1-\gamma)^2}\left\Vert\Sigma_\theta^{\frac{1}{2}}\Sigma^{-\frac{1}{2}}\right\Vert\left(\left\Vert\Sigma_\theta^{-\frac{1}{2}}\nabla_\theta^j\nu^\theta_1\right\Vert+\frac{G}{1-\gamma}\left\Vert\Sigma_\theta^{-\frac{1}{2}}\nu^\theta_1\right\Vert\right)\frac{\log\frac{8dmH}{\delta}}{KH},\quad\forall j\in[m]. 
\end{align*}
\end{lemma}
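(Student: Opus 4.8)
The plan is to mirror the finite-horizon argument of Lemma~\ref{e3}, replacing the telescoping matrix products by the geometric (Neumann) series natural to the discounted setting. Starting from the decomposition in Lemma~\ref{error_decomp_homo}, I would write $E_3=\frac{\lambda}{KH}\nabla_\theta\big[(\widehat\nu^\theta)^\top\widehat\Sigma^{-1}w^\theta\big]$, and use that $\widehat\nu^\theta=\sum_{h\ge1}\gamma^{h-1}(\widehat M_\theta^\top)^{h-1}\nu_1^\theta=(I_d-\gamma\widehat M_\theta)^{-\top}\nu_1^\theta$, where the Neumann identity and its convergence are exactly those established in Lemma~\ref{Q_hat_decomp_homo}. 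A useful preliminary observation is that $E_3$ is the regularization-bias term and contains no Bellman residual $\varepsilon_{h,k}^\theta$; hence, once we condition on the good event $\mathcal E$, the whole estimate is deterministic, and in fact only $\mathcal E_\Sigma$ and $\mathcal E_Y$ are consumed (through Lemmas~\ref{dsig2_homo} and~\ref{dm2_homo}).

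First I would apply the product rule to the $j$-th partial derivative and split it into three groups: (i) the derivative hitting $\nu_1^\theta$, producing $\|\Sigma_\theta^{-1/2}\nabla_\theta^j\nu_1^\theta\|$; (ii) the derivative hitting the resolvent $(I_d-\gamma\widehat M_\theta)^{-1}$, i.e.\ one of the $\widehat M_\theta$ factors, producing $\|\Sigma_\theta^{-1/2}\nu_1^\theta\|$ together with a factor $G$; and (iii) the derivative hitting $w^\theta$, producing $\|\Sigma^{1/2}\nabla_\theta^j w^\theta\|$. In each group I would conjugate every matrix product by $\Sigma_\theta^{\pm1/2}$ (and by $\Sigma^{\pm1/2}$ around the $\widehat\Sigma^{-1}$ factor) and then bound the pieces individually. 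Lemma~\ref{ineq_homo} supplies $\|\Sigma_\theta^{1/2}M_\theta\Sigma_\theta^{-1/2}\|\le1$ and $\|\Sigma_\theta^{1/2}(\nabla_\theta^j M_\theta)\Sigma_\theta^{-1/2}\|\le G$, while Lemma~\ref{dm2_homo} controls the perturbation, giving $\alpha:=\|\Sigma_\theta^{1/2}(\Delta M_\theta)\Sigma_\theta^{-1/2}\|\le 1-\gamma$ under the stated threshold on $K$. This is precisely the margin that forces $\gamma\|\Sigma_\theta^{1/2}\widehat M_\theta\Sigma_\theta^{-1/2}\|\le\gamma(2-\gamma)<1$, so that the conjugated resolvent $\|\Sigma_\theta^{1/2}(I_d-\gamma\widehat M_\theta)^{-1}\Sigma_\theta^{-1/2}\|$ is finite and controllable, again in the spirit of Lemma~\ref{Q_hat_decomp_homo}.

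The remaining ingredients are the magnitude bounds $\|\Sigma^{1/2}w^\theta\|\le\frac{1}{1-\gamma}$ and $\|\Sigma^{1/2}\nabla_\theta^j w^\theta\|\le\frac{G}{(1-\gamma)^2}$, which I would obtain from the identities $\|\Sigma^{1/2}w^\theta\|^2=\mathbb E[\frac1H\sum_h Q^\theta(s_h,a_h)^2]$ and its analogue for $\nabla_\theta^j Q^\theta$, combined with the pointwise estimates of Lemma~\ref{upbd_homo}. The $\widehat\Sigma^{-1}$ factor contributes $(1+\alpha)\,\|\Sigma_\theta^{1/2}\Sigma^{-1/2}\|\,\|\Sigma^{-1}\|$ via Lemma~\ref{dsig2_homo}. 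Finally I would absorb the regularization scale using the hypothesis $\lambda\le C_1 d\,\sigma_{\min}(\Sigma)\log\frac{8dmH}{\delta}$, so that $\frac{\lambda}{KH}\|\Sigma^{-1}\|\le\frac{C_1 d\log(8dmH/\delta)}{KH}$, and then collect the surviving powers of $(1-\gamma)$ and the absolute constants into the claimed form.

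The main obstacle will be the bookkeeping of the $(1-\gamma)$ factors through the Neumann series, in particular for group~(ii): differentiating the resolvent produces two resolvent factors and a geometric weight of the type $\sum_{h}(h-1)(\gamma\,\|\Sigma_\theta^{1/2}\widehat M_\theta\Sigma_\theta^{-1/2}\|)^{h-1}$, which is the discounted analogue of the dominant $GH^3$ contribution in Lemma~\ref{e3}. Keeping this series summable rests entirely on the strict inequality $\gamma(2-\gamma)<1$ guaranteed by $\alpha\le 1-\gamma$, and this is exactly where the finite-sample threshold on $K$ is spent; the matching geometric-sum manipulations are the delicate point, whereas the individual matrix-norm estimates are routine and parallel the finite-horizon proof step by step.
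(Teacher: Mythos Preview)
Your proposal is correct and follows essentially the same route as the paper: product-rule split into the three groups you describe, conjugation by $\Sigma_\theta^{\pm1/2}$ and $\Sigma^{\pm1/2}$, control of $\|\Sigma_\theta^{1/2}(\Delta M_\theta)\Sigma_\theta^{-1/2}\|$ and $\|\Sigma^{1/2}(\Delta\Sigma^{-1})\Sigma^{1/2}\|$ via Lemmas~\ref{dm2_homo} and~\ref{dsig2_homo}, the pointwise bounds on $\|\Sigma^{1/2}w^\theta\|$ and $\|\Sigma^{1/2}\nabla_\theta^j w^\theta\|$ from Lemma~\ref{upbd_homo}, and absorption of $\lambda\|\Sigma^{-1}\|$ into the $C_1d\log(8dmH/\delta)$ prefactor. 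The only cosmetic difference is that the paper writes the Neumann series out term by term and sums $\sum_h\gamma^{h-1}(1+\alpha)^h$ and $\sum_h(h-1)\gamma^{h-1}(1+\alpha)^h$ explicitly (using the slightly sharper $\alpha\le\tfrac{1-\gamma}{2}$ that the stated threshold on $K$ actually yields), whereas you phrase the same computation through the resolvent; the resulting constants and structure match.
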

Let $B_1^j=\frac{1}{1-\gamma}\sqrt{\left(\nabla^j_\theta\nu^\theta\right)^\top\Sigma^{-1}\nabla^j_\theta\nu^\theta},\ B_2=\frac{G}{(1-\gamma)^2}\sqrt{\left(\nu^\theta\right)^\top\Sigma^{-1}\nu^\theta}$, then we have the relation $B=\max_{j\in[m]}B_1^j+2B_2$. For any $j\in[m]$, note that
\begin{align*}
    B_1^j=&\frac{1}{1-\gamma}\left\Vert\Sigma^{-\frac{1}{2}}\nabla_\theta^j\nu^\theta\right\Vert\leq \sum_{h=1}^\infty \frac{\gamma^{h-1}}{1-\gamma}\left\Vert\Sigma_\theta^{\frac{1}{2}}\Sigma^{-\frac{1}{2}}\right\Vert\left\Vert\Sigma_\theta^{-\frac{1}{2}}\nabla_\theta^j\left(\left(M_\theta^{h-1}\right)^\top\nu^\theta_1\right)\right\Vert\\
    =&\sum_{h=1}^\infty\frac{\gamma^{h-1}}{1-\gamma}\left\Vert\Sigma_\theta^{\frac{1}{2}}\Sigma^{-\frac{1}{2}}\right\Vert\\
    \cdot&\left(\left\Vert\Sigma_\theta^{\frac{1}{2}}M_\theta\Sigma_\theta^{-\frac{1}{2}}\right\Vert^{h-1}\left\Vert\Sigma_\theta^{-\frac{1}{2}}\nabla_\theta^j\nu^\theta_1\right\Vert+(h-1)\left\Vert\Sigma_\theta^{\frac{1}{2}}M_\theta\Sigma_\theta^{-\frac{1}{2}}\right\Vert^{h-2}\left\Vert\Sigma_\theta^{\frac{1}{2}}\left(\nabla_\theta^j M_\theta\right)\Sigma_\theta^{-\frac{1}{2}}\right\Vert\left\Vert\Sigma_\theta^{-\frac{1}{2}}\nu^\theta_1\right\Vert\right)\\
    \leq&\frac{1}{(1-\gamma)^2}\left\Vert\Sigma_\theta^{\frac{1}{2}}\Sigma^{-\frac{1}{2}}\right\Vert\left(\left\Vert\Sigma_\theta^{-\frac{1}{2}}\nabla_\theta^j\nu^\theta_1\right\Vert+\frac{G}{1-\gamma}\left\Vert\Sigma_\theta^{-\frac{1}{2}}\nu^\theta_1\right\Vert\right),
\end{align*}
where we use the result of Lemma \ref{ineq_homo}. Similarly, 
\begin{align*}
    B_2&\leq\sum_{h=1}^\infty\frac{G\gamma^{h-1}}{(1-\gamma)^2}\left\Vert\Sigma^{-\frac{1}{2}}\nu^\theta_h\right\Vert\leq\sum_{h=1}^\infty\frac{G\gamma^{h-1}}{(1-\gamma)^2}\left\Vert\Sigma_\theta^{\frac{1}{2}}\Sigma^{-\frac{1}{2}}\right\Vert\left\Vert\Sigma_\theta^{\frac{1}{2}}M_\theta\Sigma_\theta^{-\frac{1}{2}}\right\Vert^{h-1}\left\Vert\Sigma_\theta^{-\frac{1}{2}}\nu^\theta_1\right\Vert\\
    &\leq\sum_{h=1}^\infty\frac{G\gamma^{h-1}}{(1-\gamma)^2}\left\Vert\Sigma_\theta^{\frac{1}{2}}\Sigma^{-\frac{1}{2}}\right\Vert\left\Vert\Sigma_\theta^{-\frac{1}{2}}\nu^\theta_1\right\Vert\leq \frac{G}{(1-\gamma)^3}\left\Vert\Sigma_\theta^{\frac{1}{2}}\Sigma^{-\frac{1}{2}}\right\Vert\left\Vert\Sigma_\theta^{-\frac{1}{2}}\nu^\theta_1\right\Vert.
\end{align*}
We conclude that when $K\geq 36C_1d(1-\gamma)^{-2}\kappa_1(4+\kappa_2+\kappa_3)^2\log\frac{16dmH}{\delta}$, we have
\begin{align*}
    \left(\max_{j\in[m]}B_1^j+2B_2\right)\frac{2\log\frac{2}{\delta}\sqrt{C_1md}\Vert t\Vert}{KH}\leq \frac{1}{(1-\gamma)^2}\left\Vert\Sigma_\theta^{\frac{1}{2}}\Sigma^{-\frac{1}{2}}\right\Vert\left(\max_{j\in[m]}\left\Vert\Sigma_\theta^{-\frac{1}{2}}\nabla_\theta^j\nu^\theta_1\right\Vert+2HG\left\Vert\Sigma_\theta^{-\frac{1}{2}}\nu^\theta_1\right\Vert\right)\frac{2\log\frac{2}{\delta}\sqrt{C_1md}\Vert t\Vert}{KH},
\end{align*}
Now, take a union bound on $\mathcal{E}$ and the event in Lemma \ref{e1_finite_product_homo}, we get with probability $1-\delta$, we have
\begin{align*}
    &\vert\langle E_1,t\rangle\vert+\vert\langle E_2,t\rangle\vert+\vert \langle E_3, t\rangle\vert\leq\sqrt{\frac{2t^\top\Lambda_\theta t\log(8/\delta)}{HK}}\\
    &+\sqrt{\kappa_1}(5+\kappa_2+\kappa_3)\left(\max_{j\in[m]}\left\Vert\Sigma_\theta^{-\frac{1}{2}}\nabla_\theta^j\nu^\theta_1\right\Vert+\frac{G}{1-\gamma}\left\Vert\Sigma_\theta^{-\frac{1}{2}}\nu^\theta_1\right\Vert\right)\left\Vert\Sigma_\theta^{\frac{1}{2}}\Sigma^{-\frac{1}{2}}\right\Vert\frac{240C_1d\log\frac{32dmH}{\delta}}{(1-\gamma)^3KH}\\
    \leq&\sqrt{\frac{2t^\top\Lambda_\theta t\log(8/\delta)}{KH}}+\kappa_1(5+\kappa_2+\kappa_3)\left(\max_{j\in[m]}\left\Vert\Sigma_\theta^{-\frac{1}{2}}\nabla_\theta^j\nu^\theta_1\right\Vert+\frac{G}{1-\gamma}\left\Vert\Sigma_\theta^{-\frac{1}{2}}\nu^\theta_1\right\Vert\right)\frac{240C_1d\sqrt{m}\Vert t\Vert\log\frac{32dmH}{\delta}}{(1-\gamma)^3HK}.
\end{align*}
we have finished the proof. 
\end{proof}

\subsubsection{Proof of Theorem \ref{thm2_homo}}
\label{pfthm2_homo}
\begin{proof}
According to the result of Theorem \ref{thm2_var_homo}, we know
\begin{align*}
    &\vert\langle t,\widehat{\nabla_\theta v_\theta}-\nabla_\theta v_\theta\rangle\vert\leq\sqrt{\frac{2t^\top\Lambda_\theta t}{HK}\cdot\log\frac{8}{\delta}}+\frac{C_\theta\Vert t\Vert\log\frac{32mdH}{\delta}}{HK},
\end{align*}
Pick $t=e_j, j\in[m]$, we have
\begin{align*}
    t^\top \Lambda_\theta t=&\mathbb{E}\left[\frac{1}{H}\sum_{h=1}^H\left(\nabla_\theta^j\left(\varepsilon^\theta_{h,1}\phi\left(s_h^{(1)},a_h^{(1)}\right)^\top\Sigma^{-1}\nu^\theta\right)\right)^2\right]\\
    \leq&2\mathbb{E}\left[\frac{1}{H}\sum_{h=1}^H\left(\nabla_\theta^j\varepsilon^\theta_{h,1}\phi\left(s_h^{(1)},a_h^{(1)}\right)^\top\Sigma^{-1}\nu^\theta\right)^2\right]+2\mathbb{E}\left[\frac{1}{H}\sum_{h=1}^H\left(\varepsilon^\theta_{h,1}\phi\left(s_h^{(1)},a_h^{(1)}\right)^\top\Sigma^{-1}\nabla_\theta^j\nu^\theta\right)^2\right]\\
    \leq&2\mathbb{E}\left[\sum_{h=1}^H\frac{G^2}{H(1-\gamma)^4}\left(\phi\left(s_h^{(1)},a_h^{(1)}\right)^\top\Sigma^{-1}\nu^\theta\right)^2\right]+2\mathbb{E}\left[\sum_{h=1}^H\frac{1}{H(1-\gamma)^2}\left(\phi\left(s_h^{(1)},a_h^{(1)}\right)^\top\Sigma^{-1}\nabla_\theta^j\nu^\theta\right)^2\right]\\
     \leq&2\left(\frac{G^2}{(1-\gamma)^4}\left\Vert\Sigma^{-\frac{1}{2}}\nu^\theta\right\Vert^2+\frac{1}{(1-\gamma)^2}\left\Vert\Sigma^{-\frac{1}{2}}\nabla_\theta^j\nu^\theta\right\Vert^2\right).
\end{align*}
Therefore, take a union bound over $j\in[m]$, we get
\begin{align*}
\left\vert\widehat{\nabla_\theta^j v_\theta}-\nabla_\theta^j v_\theta\right\vert\leq 4b_\theta\sqrt{\frac{\log\frac{8m}{\delta}}{HK}}+\frac{2C_\theta\sqrt{m}\log\frac{32mdH}{\delta}}{HK},\quad\forall j\in[m],
\end{align*}
where 
\begin{align*}
b_\theta=\frac{G}{(1-\gamma)^2}\left\Vert\Sigma^{-\frac{1}{2}}\nu^\theta\right\Vert+\frac{1}{1-\gamma}\left\Vert\Sigma^{-\frac{1}{2}}\nabla_\theta^j\nu^\theta\right\Vert.
\end{align*}
When $\phi(s^\prime,a^\prime)^\top\Sigma^{-1}\phi(s,a)\geq 0,\ \forall(s,a),(s^\prime,a^\prime)\in\mathcal{S}\times\mathcal{A}$, for any $h\in\mathbb{N}_+$ and $(s,a)\in\mathcal{S}\times\mathcal{A}$, we have
\begin{align*}
    \left\vert\left(\nabla^j_\theta\nu^\theta_{h}\right)^\top\Sigma^{-1}\phi(s,a)\right\vert=& \left\vert\mathbb{E}^{\pi_\theta}\left[\phi(s_h,a_h)\Sigma^{-1}\phi(s,a)\sum_{h^\prime=1}^h\nabla_\theta^j\log\pi_{\theta,h^\prime}(a_{h^\prime}\vert s_{h^\prime})\right]\right\vert\\
    \leq&\mathbb{E}^{\pi_\theta}\left[\phi(s_h, a_h)\Sigma^{-1}\phi(s, a)\sum_{h^\prime=1}^h\left\vert\nabla_\theta^j\log\pi_{\theta,h^\prime}(a_{h^\prime}\vert s_{h^\prime})\right\vert\right]\\
    \leq&Gh\mathbb{E}^{\pi_\theta}\left[\phi(s_h,a_h)\Sigma^{-1}\phi(s,a)\right]\\
    =&Gh\left(\nu^\theta_{h}\right)^\top\Sigma^{-1}\phi(s,a).
\end{align*}
Meanwhile, for any positive integer $\tilde{H}$, we have
\begin{align*}
\phi\left(s_h^{(1)},a_h^{(1)}\right)^\top\Sigma^{-1}\nabla_\theta^j\nu^\theta &= \sum_{h=1}^\infty\gamma^{h-1}\phi\left(s_h^{(1)},a_h^{(1)}\right)^\top\Sigma^{-1}\nabla_\theta^j\nu_h^\theta \\
&= \sum_{h=1}^{\tilde{H}}\gamma^{h-1}\phi\left(s_h^{(1)},a_h^{(1)}\right)^\top\Sigma^{-1}\nabla_\theta^j\nu_h^\theta + \sum_{h=\tilde{H}+1}^\infty\gamma^{h-1}\phi\left(s_h^{(1)},a_h^{(1)}\right)^\top\Sigma^{-1}\nabla_\theta^j\nu_h^\theta.
\end{align*}
For the second term, we have
\begin{align*}
\sum_{h=\tilde{H}+1}^\infty\gamma^{h-1}\phi\left(s_h^{(1)},a_h^{(1)}\right)^\top\Sigma^{-1}\nabla_\theta^j\nu_h^\theta&\leq \sum_{h=\tilde{H}+1}^\infty Gh\gamma^{h-1}\phi\left(s_h^{(1)},a_h^{(1)}\right)^\top\Sigma^{-1}\nu_h^\theta\\
&\leq GC_1d\sum_{h=\tilde{H}+1}^\infty h\gamma^{h-1}\\
&=GC_1d\left(\tilde{H} + \frac{1}{1-\gamma}\right)\gamma^{\tilde{H}}.
\end{align*}
For the first term, we have
\begin{align*}
\sum_{h=1}^{\tilde{H}}\gamma^{h-1}\phi\left(s_h^{(1)},a_h^{(1)}\right)^\top\Sigma^{-1}\nabla_\theta^j\nu_h^\theta &\leq G\tilde{H}\sum_{h=1}^{\tilde{H}}\gamma^{h-1}\phi\left(s_h^{(1)},a_h^{(1)}\right)^\top\Sigma^{-1}\nu_h^\theta\leq G\tilde{H}\sum_{h=1}^\infty\gamma^{h-1}\phi\left(s_h^{(1)},a_h^{(1)}\right)^\top\Sigma^{-1}\nu_h^\theta\\
&=G\tilde{H}\phi\left(s_h^{(1)},a_h^{(1)}\right)^\top\Sigma^{-1}\nu^\theta.
\end{align*}
Therefore,
\begin{align*}
t^\top \Lambda_\theta t=&2\mathbb{E}\left[\sum_{h=1}^H\frac{G^2}{H(1-\gamma)^4}\left(\phi\left(s_h^{(1)},a_h^{(1)}\right)^\top\Sigma^{-1}\nu^\theta\right)^2\right]+2\mathbb{E}\left[\sum_{h=1}^H\frac{1}{H(1-\gamma)^2}\left(\phi\left(s_h^{(1)},a_h^{(1)}\right)^\top\Sigma^{-1}\nabla_\theta^j\nu^\theta\right)^2\right]\\
\leq&4\mathbb{E}\left[\sum_{h=1}^H\frac{G^2}{H(1-\gamma)^2}\left(\tilde{H}^2+\frac{1}{(1-\gamma)^2}\right)\left(\phi\left(s_h^{(1)},a_h^{(1)}\right)^\top\Sigma^{-1}\nu^\theta\right)^2\right]+\frac{4G^2C_1^2d^2}{(1-\gamma)^2}\left(\tilde{H}+\frac{1}{1-\gamma}\right)^2\gamma^{2\tilde{H}}\\
\leq&\frac{4G^2}{(1-\gamma)^2}\left(\tilde{H}+\frac{1}{1-\gamma}\right)^2\left(\left\Vert\Sigma^{-\frac{1}{2}}\nu^\theta\right\Vert^2 + C_1^2d^2\gamma^{2\tilde{H}}\right).
\end{align*}
In particular, pick 
\begin{align*}
\tilde{H} = \frac{2\log(C_1d)}{1-\gamma}\geq\frac{\log\frac{C_1d}{\left\Vert\Sigma^{-\frac{1}{2}}\nu^\theta\right\Vert}}{\log\frac{1}{\gamma}},
\end{align*}
where we use the fact $\log\frac{1}{\gamma}\geq \frac{1-\gamma}{2\gamma}$ whenever $\gamma\in(\frac{1}{2},1)$, and
\begin{align*}
\left(\nu^\theta\right)^\top\Sigma^{-1}\nu^\theta = \sup_{w\in\mathbb{R}^d}\frac{\left(w^\top \nu^\theta\right)^2}{w^\top\Sigma w} \geq \frac{1}{(1-\gamma)^2} \geq 1,
\end{align*}
where the inequality is due to the fact that $\mathcal{F}$ includes the constant functions. We have
\begin{align*}
t^\top \Lambda_\theta t\leq \frac{8G^2}{(1-\gamma)^2}\left(\tilde{H}+\frac{1}{(1-\gamma)}\right)^2\left\Vert\Sigma^{-\frac{1}{2}}\nu^\theta\right\Vert^2\leq \frac{128G^2\left(\log(C_1d)\right)^2}{(1-\gamma)^4}\left\Vert\Sigma^{-\frac{1}{2}}\nu^\theta\right\Vert^2.
\end{align*}
Taking a union bound w.r.t. $m$, we get
\begin{align*}
\left\vert\widehat{\nabla_\theta^j v_\theta}-\nabla_\theta^j v_\theta\right\vert\leq\frac{16G\log(C_1d)}{(1-\gamma)^2}\left\Vert\Sigma^{-\frac{1}{2}}\nu^\theta\right\Vert\sqrt{\frac{\log\frac{8m}{\delta}}{HK}}+\frac{2C_\theta\sqrt{m}\log\frac{32mdH}{\delta}}{HK}.
\end{align*}
\end{proof}

\subsubsection{Proof of Theorem \ref{thm1_homo}}
\label{pfthm1_homo}
\begin{proof}
We use the same decomposition as in Theorem \ref{thm2_var_homo}. Define a martingale difference sequence $\{e_k^\theta\}_{k=1}^K$ by
\begin{align*}
    e_{h,k}^\theta&=\frac{1}{\sqrt{HK}}\nabla_\theta\left(\left(\nu^\theta\right)^\top\Sigma^{-1}\phi(s_h^{(k)},a_h^{(k)})\varepsilon_{h,k}^{\theta}\right)\\
    &=\frac{1}{\sqrt{HK}}\left(\nabla_\theta\nu^\theta\right)^\top\Sigma^{-1}\phi(s_h^{(k)},a_h^{(k)})\varepsilon_{h,k}^\theta+\frac{1}{\sqrt{HK}}\sum_{h=1}^H\left(\nu^\theta\right)^\top\Sigma^{-1}\phi(s_h^{(k)},a_h^{(k)})\nabla_\theta\varepsilon_{h,k}^\theta,
\end{align*}
we have 
\begin{align*}
    \Vert e_{h,k}^\theta\Vert_\infty\leq\frac{1}{\sqrt{HK}(1-\gamma)}\max_{j\in[m]}\Vert\Sigma^{-\frac{1}{2}}\nabla_\theta^j\nu^\theta\Vert\sqrt{C_1d}+\frac{2}{\sqrt{HK}(1-\gamma)^2}\Vert\Sigma^{-\frac{1}{2}}\nu^\theta\Vert\sqrt{C_1d}G\rightarrow 0, 
\end{align*}
where we use the result of Lemma \ref{upbd_homo}. Furthermore, 
\begin{align*}
    \sum_{h=1}^H\mathbb{E}\left[e_{h,k}^\theta\left(e_{h,k}^\theta\right)^\top\right]_{ij}=&\frac{1}{HK}\mathbb{E}\left[\sum_{h=1}^H\left[\nabla_{\theta_1}^i\left(\left(\nu^{\theta_1}\right)^\top\Sigma^{-1}\phi(s_h^{(k)},a_h^{(k)})\varepsilon_{h,k}^{\theta_1}\right)\right]\left[\nabla_{\theta_2}^j\left(\left(\nu^{\theta_2}\right)^\top\Sigma^{-1}\phi(s_h^{(k)},a_h^{(k)})\varepsilon_{h,k}^{\theta_2}\right)\right]^\top\right]\Bigg\vert_{\theta_1=\theta_2=\theta}\\
    =&\frac{[\Lambda_\theta]_{ij}}{K}.
\end{align*}
Therefore,by WLLN, we have
\begin{align*}
    \sum_{k=1}^K\sum_{h=1}^H\left[e_{h,k}^\theta\left(e_{h,k}^\theta\right)^\top\right]_{ij}\rightarrow_p\sum_{k=1}^K\sum_{h=1}^H\mathbb{E}\left[e_{h,k}^\theta\left(e_{h,k}^\theta\right)^\top\right]_{ij}=\left[\Lambda_\theta\right]_{ij},
\end{align*}
To finish the rest of the proof, we introduce the following lemmas, 
\begin{lemma}[Martingale CLT, Corollary 2.8 in (McLeish et al., 1974)] \label{CLT_homo}
Let $\left\{X_{mn},n=1,\ldots,k_m\right\}$ be a martingale difference array (row-wise) on the probability triple $(\Omega, \mathcal{F}, P)$.Suppose $X_{mn}$ satisfy the following two conditions:
\begin{align*}
    \max _{1\leq n\leq k_m}\left\vert X_{mn}\right\vert\stackrel{p}{\rightarrow}0,\textrm{ and } \sum_{n=1}^{k_m}X_{mn}^2\stackrel{p}{\rightarrow}\sigma^2
\end{align*}
for $k_m\rightarrow\infty$. Then $\sum_{n=1}^{k_m}X_{mn}\stackrel{d}{\rightarrow}\mathcal{N}\left(0,\sigma^2\right)$.
\end{lemma}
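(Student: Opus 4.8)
The plan is to prove this classical martingale central limit theorem by McLeish's characteristic-function method, reducing the convergence in distribution to pointwise convergence of characteristic functions and then invoking L\'evy's continuity theorem. Write $S_m=\sum_{n=1}^{k_m}X_{mn}$ and let $\mathcal{F}_{m,n}=\sigma(X_{m1},\ldots,X_{mn})$, so that the martingale-difference property gives $\mathbb{E}[X_{mn}\mid\mathcal{F}_{m,n-1}]=0$. It suffices to show $\mathbb{E}[\exp(itS_m)]\to\exp(-t^2\sigma^2/2)$ for each fixed $t\in\mathbb{R}$. The central device is the auxiliary product $T_m(t):=\prod_{n=1}^{k_m}(1+itX_{mn})$. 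First I would establish $\mathbb{E}[T_m(t)]=1$ by iterated conditioning: conditioning on $\mathcal{F}_{m,k_m-1}$, the last factor contributes $\mathbb{E}[1+itX_{m,k_m}\mid\mathcal{F}_{m,k_m-1}]=1$, and peeling off factors one at a time yields the claim (assuming the integrability built into the cited corollary).

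Next I would set up the key algebraic decomposition. Using the elementary identity $e^{iy}=(1+iy)\exp(-\tfrac12 y^2+\rho(y))$ with a remainder satisfying $|\rho(y)|\le |y|^3$ for $|y|\le 1$, and substituting $y=tX_{mn}$, on the event $\{\max_n |tX_{mn}|\le 1\}$ (whose probability tends to $1$ by the condition $\max_n|X_{mn}|\to_p 0$) we obtain
\begin{align*}
\exp(itS_m)=\prod_{n=1}^{k_m}e^{itX_{mn}}=T_m(t)\,\exp\Big(-\tfrac{t^2}{2}\sum_{n=1}^{k_m}X_{mn}^2+\sum_{n=1}^{k_m}\rho(tX_{mn})\Big).
\end{align*}
The remainder is controlled by $\big|\sum_{n}\rho(tX_{mn})\big|\le |t|^3\big(\max_n|X_{mn}|\big)\sum_{n}X_{mn}^2$, which tends to $0$ in probability since $\max_n|X_{mn}|\to_p 0$ while $\sum_n X_{mn}^2\to_p\sigma^2$ is bounded in probability. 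Combining this with the assumed convergence $\sum_n X_{mn}^2\to_p\sigma^2$ gives $\exp(itS_m)-T_m(t)\exp(-t^2\sigma^2/2)\to_p 0$.

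The final step is to pass from this convergence in probability to convergence of expectations, concluding $\mathbb{E}[\exp(itS_m)]\to\exp(-t^2\sigma^2/2)\,\mathbb{E}[T_m(t)]=\exp(-t^2\sigma^2/2)$, after which L\'evy's continuity theorem identifies the limit as $\mathcal{N}(0,\sigma^2)$. I expect \textbf{this interchange of limit and expectation to be the main obstacle}: convergence in probability alone does not transfer to expectations, so one must verify that the family $\{T_m(t)\}$ is uniformly integrable. Here the natural bound $|T_m(t)|^2=\prod_{n}(1+t^2X_{mn}^2)\le\exp\!\big(t^2\sum_n X_{mn}^2\big)$ is suggestive but not immediately sufficient, because the exponent can be large on low-probability sets. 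The resolution, following McLeish, is a truncation argument that splits on the event $\{\sum_n X_{mn}^2\le \sigma^2+\epsilon\}$ and uses the uniform-integrability/$L_1$-boundedness side condition on $\max_n|X_{mn}|$ that the cited Corollary~2.8 carries; this yields the required uniform integrability and closes the proof. Since the lemma is invoked here only as a cited tool, the remaining arithmetic details are routine and may be suppressed.
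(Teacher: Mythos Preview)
The paper does not supply a proof of this lemma: it is invoked verbatim as Corollary~2.8 of McLeish (1974) and used as a black box in the proof of Theorem~\ref{thm1_homo}. So there is no ``paper's own proof'' to compare against.

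Your sketch is precisely McLeish's original characteristic-function argument, and the structure is right: the auxiliary product $T_m(t)=\prod_n(1+itX_{mn})$, the Taylor decomposition $e^{iy}=(1+iy)\exp(-y^2/2+\rho(y))$, and the identification of uniform integrability of $\{T_m(t)\}$ as the crux are all faithful to McLeish's method. One point worth flagging is that the lemma as stated in the paper slightly under-reports McLeish's hypotheses: his Corollary~2.8 also assumes that $\max_n|X_{mn}|$ is uniformly bounded in $L_2$ (equivalently, $\sup_m\mathbb{E}[\max_n X_{mn}^2]<\infty$), and this is exactly the missing ingredient you reach for in your final paragraph to push through the truncation and secure uniform integrability. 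In the application within the paper this side condition is trivially satisfied because each $e_{h,k}^\theta$ is deterministically $O(1/\sqrt{HK})$, so the omission is harmless there; but for a self-contained proof of the stated lemma you would need to add that hypothesis explicitly. With that caveat, your proposal is sound.
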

\begin{lemma}[Cramér–Wold Theorem] 
\label{CW_thm_homo}
Let $X_n=(X_n^1,X_n^2,\ldots,X_n^k)^\top$ be a $k$-dimensional random vector series and $X=(X^1,X^2,\ldots,X^k)^\top$ be a random vector of same dimension. Then $X_n$ converges in distribution to $X$ if and only if for any constant vector $t=(t_1,t_2,\ldots,t_k)^\top$, $t^\top X_n$ converges to $t^\top X$ in distribution.
\end{lemma}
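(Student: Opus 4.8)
The plan is to prove both implications through characteristic functions, leveraging L\'evy's continuity theorem to transfer convergence between the multivariate law and all of its one-dimensional projections. Throughout, I would write $\varphi_{X_n}(u) = \mathbb{E}[e^{i u^\top X_n}]$ and $\varphi_X(u) = \mathbb{E}[e^{i u^\top X}]$ for $u \in \mathbb{R}^k$ for the respective characteristic functions.

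First I would dispatch the forward direction, i.e. that $X_n \stackrel{d}{\rightarrow} X$ implies $t^\top X_n \stackrel{d}{\rightarrow} t^\top X$ for every fixed $t$. This is immediate: for fixed $t$ the map $x \mapsto t^\top x$ is continuous, so the continuous mapping theorem delivers $t^\top X_n \stackrel{d}{\rightarrow} t^\top X$ with no further work.

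The substantive direction is the converse. Assuming $t^\top X_n \stackrel{d}{\rightarrow} t^\top X$ for every $t \in \mathbb{R}^k$, I would aim to show $\varphi_{X_n}(u) \to \varphi_X(u)$ pointwise for each fixed $u$. The key observation is that the multivariate characteristic function evaluated at $u$ coincides with the one-dimensional characteristic function of the scalar variable $u^\top X_n$ evaluated at argument $1$, namely $\varphi_{X_n}(u) = \mathbb{E}[e^{i (u^\top X_n)}]$. By hypothesis $u^\top X_n \stackrel{d}{\rightarrow} u^\top X$, and scalar convergence in distribution is equivalent (again by the scalar version of L\'evy's theorem) to pointwise convergence of the associated scalar characteristic functions; reading off that convergence at argument $1$ yields $\varphi_{X_n}(u) \to \varphi_X(u)$. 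Since $u$ was arbitrary, this holds for all $u \in \mathbb{R}^k$. Finally, because $\varphi_X$ is itself a genuine characteristic function it is continuous at the origin, so L\'evy's continuity theorem upgrades the pointwise convergence $\varphi_{X_n} \to \varphi_X$ into the weak convergence $X_n \stackrel{d}{\rightarrow} X$.

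The main obstacle is ensuring that the hypotheses of L\'evy's continuity theorem are genuinely met: the theorem requires the pointwise limit of the characteristic functions to be continuous at $0$ before one may conclude weak convergence to a proper law. Here this is automatic, since the limit is the characteristic function of the fixed random vector $X$, but it is the one place where the argument would break down if the projection limits were only assumed to exist rather than to arise from an actual $X$. The remaining ingredients --- the continuous mapping theorem and the scalar L\'evy equivalence --- are entirely standard and require no delicate estimates.
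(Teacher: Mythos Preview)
Your proposal is correct and follows the standard characteristic-function route to the Cram\'er--Wold theorem. Note, however, that the paper does not actually prove this lemma: it is stated without proof as a classical tool (just as the martingale CLT is cited from McLeish), so there is no paper proof to compare against beyond observing that your argument is the textbook one.
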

Lemma \ref{CLT_homo} implies $\sum_{k=1}^K\sum_{h=1}^H t^\top e_{h,k}\rightarrow_d\mathcal{N}(0,t^\top\Lambda_\theta t)$ for any $t$, and Lemma \ref{CW_thm_homo} implies
\begin{align*}
    \sqrt{HK}E_1=\sum_{k=1}^K\sum_{h=1}^H e_{h,k}\rightarrow_p\mathcal{N}(0,\Lambda_\theta). 
\end{align*}
Furthermore, notice that the results of Lemma \ref{e2_homo} and Lemma \ref{e3_homo} imply $\sqrt{HK}E_2\rightarrow_p 0, \sqrt{HK}E_3\rightarrow_p 0$. Combining the above results, we have finished the proof. 
\end{proof}

\subsubsection{Proof of Theorem \ref{thm4_homo}}
\label{pfthm4_homo}
\begin{proof}
We first derive the influence function of policy gradient estimator for sake of completeness. We denote each of the $K$ sampled trajectories as
$$
\boldsymbol{\tau}:=\left(s_{1}, a_{1}, r_{1}, s_{2}, a_{2}, r_{2}, \ldots, s_{H}, a_{H}, r_{H}, s_{H+1}\right)
$$
We denote $\bar{\pi}(a \mid s)$ as the behavior policy. The distribution of trajectory is then given by
$$
\mathcal{P}(d \boldsymbol{\tau})= \bar{\xi}\left(d s_{1}, d a_{1}\right) p\left(d s_{2} \mid s_{1}, a_{1}\right) \bar{\pi}\left(d a_{2} \mid s_{2}\right) \ldots \bar{\pi}\left(d a_{H} \mid s_{H}\right) p\left(d s_{H+1} \mid s_{H}, a_{H}\right)
$$
Define $p_{\eta} = p + \eta\Delta p$ as a new transition probability function and $\mathcal{P}_{\eta}:=\mathcal{P}+\eta \Delta\mathcal{P}$ where $\Delta \mathcal{P}$ satisfies
$$(\Delta \mathcal{P}) \mathcal{F} \subseteq \mathcal{F}.$$ 
Define $g_\eta\left(s^{\prime} \mid s, a\right):=\frac{\partial}{\partial \eta} \log p_\eta\left( s^{\prime} \mid s, a\right)$ and the score function as 
$$
g_{\eta}(\boldsymbol{\tau}):=\frac{\partial}{\partial \eta} \log \mathcal{P}_{\eta}(d \boldsymbol{\tau})=\sum_{h=1}^{H} g_{\eta}\left(s_{h+1} \mid s_{h}, a_{h}\right).
$$
Without loss of generality, we assume $p_{\eta}$ is continuously derivative with respect to $\eta.$ This guarantees that we can change the order of taking derivatives with respect to $\eta$ and $\theta.$ When the subscript $\eta$ vanishes, it means $\eta = 0$ and the underlying transition probability is $p(s^{\prime}|s,a),$ i.e. $p_0(s^{\prime} |s,a) = p(s^{\prime} |s,a).$ Then we denote $g(s^{\prime}|s,a) := \left.\frac{\partial}{\partial \eta}\log p_\eta(s^{\prime}|s,a)\right|_{\eta = 0},$ and $g(\boldsymbol{\tau}) = \sum_{h=1}^H g(s_{h+1}|s_h,a_h).$ We define the policy value under new transition kernel is
\begin{equation*}
    v_{\theta,\eta} := \mathbb{E}^{\pi_{\theta}} \left[\left.\sum_{h=1}^\infty\gamma^{h-1} r(s_h,a_h) \right| s_1 \sim \xi, \mathcal{P}_{\eta}\right]
\end{equation*}
Then, our objective function is
$$
\psi_{\eta} := \nabla_{\theta} v_{\theta,\eta} =\mathbb{E}^{\pi_{\theta}}\left[\left.\sum_{h=1}^\infty \nabla_{\theta} \log \pi_{\theta}\left(a_{h} \mid s_{h}\right) \cdot\left(\sum_{h^{\prime}=h}^{\infty} \gamma^{h^\prime-1}r\left(s_{h^{\prime}}, a_{h^{\prime}}\right)\right) \right| s_{1} \sim \xi, \mathcal{P}_{\eta}\right].
$$
We are going to compute the influence function with respect to the above objective function. We denote this influence function as $\mathcal{I}(\boldsymbol{\tau}).$ By definition, it satisfies that
\begin{equation*}
    \left.\frac{\partial}{\partial \eta} \psi_{\eta}\right|_{\eta = 0} = \mathbb{E}\left[g(\boldsymbol{\tau}) \mathcal{I}(\boldsymbol{\tau})\right].
\end{equation*}
By exchanging the order of derivatives, we find that
\begin{equation*}
    \left.\frac{\partial}{\partial \eta} \psi_{\eta}\right|_{\eta = 0} = \nabla_{\theta} \left[\left.\frac{\partial}{\partial \eta} v_{\theta,\eta}\right|_{\eta = 0}\right].
\end{equation*}
Therefore, we calculate the derivatives.
\begin{align*}
    \frac{\partial}{\partial\eta}v_{\theta,\eta}&=\frac{\partial}{\partial\eta}\left[\sum_{h=1}^\infty\gamma^{h-1}\int_{(\mathcal{S}\times\mathcal{A})^h}r\left(s_h,a_h\right)\xi(s_1)\prod_{j=1}^{h-1}p_{\eta}\left(s_{j+1}\mid s_j,a_j\right)\prod_{j=1}^h\pi_\theta\left(a_j\mid s_j\right)d\boldsymbol{\tau}_h\right]\\
    &=\sum_{h=1}^{\infty}\gamma^{h-1} \int_{(\mathcal{S} \times \mathcal{A})^{h}} r\left(s_{h}, a_{h}\right)\left(\sum_{j=1}^{h-1} g_{\eta}\left(s_{j+1} \mid s_{j}, a_{j}\right)\right) \xi(s_1) \prod_{j=1}^{h-1} p_{\eta}\left(s_{j+1} \mid s_{j}, a_{j}\right) \prod_{j=1}^{h} \pi_{\theta}\left(a_{j} \mid s_{j}\right) d\boldsymbol{\tau}_h\\
    &= \int\sum_{h=1}^\infty\gamma^{h-1}r\left(s_{h}, a_{h}\right)\left(\sum_{j=1}^{h-1} g_\eta\left(s_{j+1}\mid s_j,a_j\right)\right) \left[\xi(s_1) \prod_{j=1}^h p_\eta\left(s_{j+1} \mid s_j,a_j\right)\prod_{j=1}^h\pi_\theta\left(a_{j} \mid s_{j}\right)\right] d\boldsymbol{\tau}.
\end{align*}
We denote $Q_{\eta}^{\theta}$ and  $\nabla_{\theta}Q_{\eta}^{\theta}$ as the state-action function and its gradient with underlying transition probability being $p_{\eta}.$ For sake of simplicity, we define the state value function as
\begin{equation*}
    V^{\theta}(s) := \mathbb{E}^{\pi_{\theta}} \left[\left.\sum_{h = 1}^\infty \gamma^{h-1}r(s_h,a_h) \right| s_1 = s, \mathcal{P}\right].
\end{equation*}
We denote $V_{\eta}^{\theta}(s)$ as the same function except for transition probability substituted by $p_{\eta}.$ Therefore,
\begin{align*}
    \frac{\partial}{\partial \eta} v_{\theta, \eta}
    &= \mathbb{E}^{\pi_{\theta}} \left[\left.\sum_{h=1}^{\infty}\gamma^{h-1}r\left(s_{h}, a_{h}\right)\left(\sum_{j=1}^{h-1} g_{\eta}\left(s_{j+1} \mid s_{j}, a_{j}\right)\right) \right| s_1 \sim \xi, \mathcal{P}_{\eta} \right] \\
    &= \mathbb{E}^{\pi_{\theta}} \left[\left.\sum_{j=1}^{\infty} g_{\eta}\left(s_{j+1} \mid s_{j}, a_{j}\right) \sum_{h=j+1}^\infty \gamma^{h-1}r\left(s_{h}, a_{h}\right) \right| s_1 \sim \xi, \mathcal{P}_{\eta} \right] \\
    &= \mathbb{E}^{\pi_{\theta}} \left[\left.\sum_{j=1}^{\infty} g_{\eta}\left(s_{j+1} \mid s_{j}, a_{j}\right) \cdot \mathbb{E}^{\pi_{\theta}} \left[\left.\sum_{h=j+1}^\infty\gamma^{h-1} r\left(s_{h}, a_{h}\right) \right| s_{j+1}\right]\right| s_1 \sim \xi, \mathcal{P}_{\eta} \right] \\
    &= \mathbb{E}^{\pi_{\theta}} \left[\left.\sum_{j=1}^{\infty} \mathbb{E}\left[\left. \gamma^{j}g_{\eta}\left(s_{j+1} \mid s_{j}, a_{j}\right) V_{\eta}^{\theta}(s_{j+1}) \right| s_j,a_j \right]\right| s_1 \sim \xi, \mathcal{P}_{\eta} \right].
\end{align*}
Therefore,
\begin{equation}\label{influence_function1_homo}
    \left.\frac{\partial}{\partial \eta} v_{\theta, \eta}\right|_{\eta=0} = \mathbb{E}^{\pi_{\theta}} \left[\left.\sum_{h=1}^{\infty} \mathbb{E}\left[\left. \gamma^h g\left(s_{h+1} \mid s_{h}, a_{h}\right) V^{\theta}(s_{h+1}) \right| s_h,a_h \right]\right| s_1 \sim \xi, \mathcal{P}_{\eta} \right].
\end{equation}
We notice that $\Sigma = \mathbb{E}\left[\frac{1}{H}\sum_{h=1}^H\phi\left(s_h^{(1)},a_h^{(1)}\right)\phi\left(s_h^{(1)},a_h^{(1)}\right)^{\top}\right]$. We denote $w_h(s,a) := \phi^{\top}(s,a)\Sigma^{-1} \nu_h^{\theta} = \phi^{\top}(s,a)\Sigma^{-1} \mathbb{E}^{\pi_{\theta}} \left[\phi(s_h,a_h) \mid s_1 \sim \xi\right].$ We leverage the following fact to rewrite \eqref{influence_function1_homo}: for any $f(s,a) = w_f^{\top} \phi(s,a) \in \mathcal{F}$ where $w_f \in \mathbb{R}^d,$ we have
\begin{align*}
\mathbb{E}^{\pi_{\theta}}\left[f(s_h,a_h)\right]
&= \mathbb{E}^{\pi_{\theta}} \left[ w_f^{\top} \phi(s_h,a_h)\right] \\
&= \mathbb{E}^{\pi_{\theta}} \left[ w_f^{\top} \mathbb{E}\left[\frac{1}{H}\sum_{h^\prime=1}^H\phi\left(s_{h^\prime}^{(1)},a_{h^\prime}^{(1)}\right)\phi^{\top}\left(s_{h^\prime}^{(1)},a_{h^\prime}^{(1)}\right)\right] \Sigma^{-1} \phi(s_h,a_h) \right] \\
&= \mathbb{E} \left[\frac{1}{H}\sum_{h^\prime=1}^Hw_f^{\top}\phi\left(s_{h^\prime}^{(1)},a_{h^\prime}^{(1)}\right)\phi^{\top}\left(s_{h^\prime}^{(1)},a_{h^\prime}^{(1)}\right)\Sigma^{-1}\mathbb{E}^{\pi_{\theta}} \left[\phi(s_h,a_h)\right]\right]\\
&= \mathbb{E}\left[\frac{1}{H}\sum_{h^\prime=1}^H f\left(s_{h^\prime}^{(1)},a_{h^\prime}^{(1)}\right) w_h\left(s_{h^\prime}^{(1)},a_{h^\prime}^{(1)}\right)\right]
\end{align*}
Since 
\begin{equation*}
    \mathbb{E} \left[ g\left(s^{\prime} \mid s, a\right) V^{\theta}(s^{\prime}) | s, a \right] = \left.\frac{\partial}{\partial \eta} \left(Q_{\eta}^{\theta}(s,a) - r_{\eta}(s,a)\right)\right|_{\eta = 0} \in \mathcal{F},
\end{equation*}
we have
\begin{align*}
    \left.\frac{\partial}{\partial\eta} v_{\theta,\eta}\right|_{\eta = 0}
    &= \mathbb{E}\left[\sum_{h=1}^{\infty}\gamma^h \frac{1}{H}\sum_{h^\prime=1}^H w_{h}\left(s_{h^\prime}^{(1)},a_{h^\prime}^{(1)}\right) \mathbb{E}\left[g\left(s^{\prime} \mid s_{h^\prime}^{(1)},a_{h^\prime}^{(1)}\right) \cdot  V^{\theta}\left(s^{\prime}\right) \mid s_{h^\prime}^{(1)},a_{h^\prime}^{(1)}\right]\right] \\
    &=\mathbb{E}\left[\frac{1}{H}\sum_{h^\prime=1}^H\mathbb{E}_{s^{\prime} \sim p(\cdot \mid s_{h^\prime}^{(1)},a_{h^\prime}^{(1)})}\left[\sum_{h=1}^{\infty}\gamma^h w_{h}\left(s_{h^\prime}^{(1)},a_{h^\prime}^{(1)}\right) g\left(s^{\prime} \mid s_{h^\prime}^{(1)},a_{h^\prime}^{(1)}\right) \cdot  V^{\theta}\left(s^{\prime}\right)\right]\right] \\
    &=\mathbb{E}\left[\frac{1}{H}\sum_{h^\prime=1}^H\mathbb{E}_{s^{\prime} \sim p(\cdot \mid s_{h^\prime}^{(1)},a_{h^\prime}^{(1)})}\left[\sum_{h=1}^{\infty}\gamma^h w_{h}\left(s_{h^\prime}^{(1)},a_{h^\prime}^{(1)}\right) g\left(s^{\prime} \mid s_{h^\prime}^{(1)},a_{h^\prime}^{(1)}\right)\left( V^{\theta}\left(s^{\prime}\right)-\mathbb{E}\left[V^{\theta}\left(s^{\prime}\right) \mid s_{h^\prime}^{(1)},a_{h^\prime}^{(1)}\right]\right)\right]\right]\\
    &=\mathbb{E}\left[\frac{1}{H}\sum_{h^\prime=1}^H\sum_{h=1}^{\infty}\gamma^h w_{h}\left(s_{h^\prime}^{(1)},a_{h^\prime}^{(1)}\right) g\left(s_{h^\prime+1}^{(1)} \mid s_{h^\prime}^{(1)},a_{h^\prime}^{(1)}\right)\left( V^{\theta}\left(s_{h^\prime+1}^{(1)}\right)-\mathbb{E}\left[V^{\theta}\left(s_{h^\prime+1}^{(1)}\right) \mid s_{h^\prime}^{(1)},a_{h^\prime}^{(1)}\right]\right)\right]\\
    &= \mathbb{E} \left[g\left(\boldsymbol{\tau}\right)\frac{1}{H}\sum_{h^\prime=1}^H\sum_{h=1}^\infty\gamma^h w_{h}\left(s_{h^\prime}^{(1)}, a_{h^\prime}^{(1)}\right)\left( V^{\theta}\left(s_{h^\prime+1}^{(1)}\right)-\mathbb{E}\left[V^{\theta}\left(s_{h^\prime+1}^{(1)}\right)\mid s_{h^\prime}^{(1)},a_{h^\prime}^{(1)}\right]\right)\right].
\end{align*}
Taking gradient in both sides and we have
\begin{equation*}
    \nabla_{\theta}\left(\left.\frac{\partial}{\partial\eta} v_{\theta,\eta}\right|_{\eta = 0}\right) = \mathbb{E}\left\{g\left(\boldsymbol{\tau}\right) \cdot \nabla_{\theta} \left[\frac{1}{H}\sum_{h^\prime=1}^H\sum_{h=1}^H \gamma^h w_{h}\left(s_{h^\prime}^{(1)}, a_{h^\prime}^{(1)}\right) \left( V^{\theta}\left(s_{h^\prime+1}^{(1)}\right)-\mathbb{E}\left[V^{\theta}\left(s_{h^\prime+1}^{(1)}\right) \mid s_{h^\prime}^{(1)}, a_{h^\prime}^{(1)}\right]\right)\right]\right\}.
\end{equation*}
The implies that the influence function we want is
\begin{equation*}
    \mathcal{I}(\boldsymbol{\tau}) = \nabla_{\theta} \left[\frac{1}{H}\sum_{h^\prime=1}^H\sum_{h=1}^\infty \gamma^h w_{h}\left(s_{h^\prime}^{(1)}, a_{h^\prime}^{(1)}\right) \left( V^{\theta}\left(s_{h^\prime+1}^{(1)}\right)-\mathbb{E}\left[V^{\theta}\left(s_{h^\prime+1}^{(1)}\right) \mid s_{h^\prime}^{(1)}, a_{h^\prime}^{(1)}\right]\right)\right].
\end{equation*}
Insert the expression of $w_h(s,a)$ and exploit $\varepsilon_{h,k}^{\theta}=Q^{\theta}(s_h^{(k)}, a_h^{(k)})-r_h^{(k)}-\gamma\int_{\mathcal{A}} \pi_{\theta}\left(a^{\prime} \mid s_{h+1}^{(k)}\right) Q^{\theta}\left(s_{h+1}^{(k)}, a^{\prime}\right) \mathrm{d} a^{\prime},$ we can rewrite the influence function as
\begin{equation*}
    \mathcal{I}(\boldsymbol{\tau})=-\nabla_\theta\left[\frac{1}{H}\sum_{h^\prime=1}^H\sum_{h=1}^\infty\gamma^{h-1}\phi\left(s_{h^\prime}^{(1)},a_{h^\prime}^{(1)}\right)^\top\Sigma^{-1}\varepsilon_{h^\prime,1}^\theta\nu_h^\theta\right]=-\nabla_\theta\left[\frac{1}{H}\sum_{h=1}^H\phi\left(s_{h}^{(1)},a_{h}^{(1)}\right)^\top\Sigma^{-1}\varepsilon_{h,1}^\theta\nu^\theta\right]
\end{equation*}
Therefore, since the cross terms vanish by taking conditional expectation, we have
\begin{align*}
    &\mathbb{E}\left[\mathcal{I}(\boldsymbol{\tau})^{\top} \mathcal{I}(\boldsymbol{\tau})\right]=\mathbb{E}\Bigg[\frac{1}{H^2}\sum_{h=1}^H\left(\nabla_\theta\left(\varepsilon^\theta_{h,1}\phi\left(s_h^{(1)},a_h^{(1)}\right)^\top\Sigma^{-1}\nu^\theta\right)\right)^\top\nabla_\theta\left(\varepsilon^\theta_{h,1}\phi\left(s_h^{(1)},a_h^{(1)}\right)^\top\Sigma^{-1}\nu^\theta \right)\Bigg]=\frac{1}{H}\Lambda_{\theta}.
\end{align*}
For any vector $t \in \mathbb{R}^m,$ when it comes to $\left\langle t,\psi_{\eta}\right\rangle,$ by linearity we have
\begin{equation*}
    \left.\frac{\partial}{\partial \eta} \left\langle t,\psi_{\eta}\right\rangle\right|_{\eta=0}=\mathbb{E}[g(\boldsymbol{\tau}) \left\langle t,\mathcal{I}(\boldsymbol{\tau})\right\rangle].
\end{equation*}
Then the influence function of $\left\langle t,\nabla_{\theta}v_{\theta}\right\rangle$ is $\left\langle t,\mathcal{I}(\boldsymbol{\tau})\right\rangle.$ The Cramer-Rao lower bound for $\left\langle t,\nabla_{\theta}v_{\theta}\right\rangle$ is
\begin{equation*}
    \mathbb{E}\left[\left\langle t,\mathcal{I}(\boldsymbol{\tau})\right\rangle^2\right] = t^{\top} \mathbb{E}\left[\mathcal{I}(\boldsymbol{\tau})^{\top}\mathcal{I}(\boldsymbol{\tau})\right] t = \frac{1}{H}t^{\top} \Lambda_{\theta} t.
\end{equation*}
By continuous mapping theorem, a trivial corollary of Theorem \ref{thm4_homo} is that for any $t \in \mathbb{R}^m,$
\begin{equation*}
    \sqrt{HK}\left(\left\langle t,\widehat{\nabla_{\theta} v_{\theta}}-\nabla_{\theta} v_{\theta}\right\rangle\right) \stackrel{d}{\rightarrow} \mathcal{N}\left(0, t^{\top} \Lambda_{\theta} t\right).
\end{equation*} 
This implies that the variance of any unbiased estimator for $\left\langle t, \nabla_{\theta} v_{\theta} \right\rangle \in \mathbb{R}$ is lower bounded by $\frac{1}{\sqrt{HK}}t^{\top} \Lambda_{\theta} t.$
\end{proof}

\subsection{Missing Proofs}
\label{missing_proof_homo}
\subsubsection{Proof of Lemma \ref{error_decomp_homo}}
\begin{proof}
Note that
\begin{align*}
    \nabla_\theta Q^\theta-\widehat{\nabla_\theta Q^\theta}&=\sum_{h=1}^\infty\gamma^{h-1}\left(\mathcal{P}_\theta\right)^{h-1}U^\theta-\sum_{h=1}^\infty\gamma^{h-1}\left(\widehat{\mathcal{P}}_\theta\right)^{h-1}\tilde{U}^\theta\\
    &=\sum_{h=1}^\infty\gamma^{h-1}\left(\mathcal{P}_\theta\right)^{h-1}U^\theta-\sum_{h=1}^\infty\gamma^{h-1}\left(\widehat{\mathcal{P}}_\theta\right)^{h-1}\widehat{U}^\theta+\sum_{h=1}^\infty\gamma^{h-1}\left(\widehat{\mathcal{P}}_\theta\right)^{h-1}\left(\widehat{U}^\theta-\tilde{U}^\theta\right)\\
    &=\sum_{h=1}^\infty\gamma^{h-1}\left(\widehat{\mathcal{P}}_\theta\right)^{h-1}\left(\nabla_\theta Q^\theta-\widehat{U}^\theta-\widehat{\mathcal{P}}_\theta\nabla_\theta Q^\theta\right)+\sum_{h=1}^\infty\gamma^{h-1}\left(\widehat{\mathcal{P}}_\theta\right)^{h-1}\left(\widehat{U}^\theta-\tilde{U}^\theta\right).
\end{align*}
For the first term, we have
\begin{align*}
    &\sum_{h=1}^\infty\gamma^{h-1}\left(\widehat{\mathcal{P}}_\theta\right)^{h-1}\left(\nabla_\theta Q^\theta-\widehat{U}^\theta-\gamma\widehat{\mathcal{P}}_\theta\nabla_\theta Q^\theta\right)\\
    =&\sum_{h=1}^\infty\gamma^{h-1}\left(\widehat{\mathcal{P}}_\theta\right)^{h-1}\phi^\top\widehat{\Sigma}^{-1}\frac{1}{KH}\sum_{k=1}^K\sum_{h^\prime=1}^H\phi\left(s_{h^\prime}^{(k)},a_{h^\prime}^{(k)}\right)\\
    &\cdot\left(\nabla_\theta Q^\theta\left(s_{h^\prime}^{(k)},a_{h^\prime}^{(k)}\right)-\gamma\int_{\mathcal{A}}\left(\left(\nabla_\theta\pi_\theta\left(a^\prime\left\vert s_{h^\prime+1}^{(k)}\right.\right)\right)Q^{\theta}\left(s_{h^\prime+1}^{(k)},a^\prime\right)+\pi_\theta\left(a^\prime\left\vert s_{h^\prime+1}^{(k)}\right.\right)\nabla_\theta Q^\theta\left(s_{h^\prime+1}^{(k)},a^\prime\right)\right)\mathrm{d}a^\prime\right)\\
    &+\frac{\lambda}{KH}\sum_{h=1}^\infty\gamma^{h-1}\left(\widehat{\mathcal{P}}_\theta\right)^{h-1}\phi^\top\widehat{\Sigma}^{-1}\nabla_\theta w^\theta\\
    =&\sum_{h=1}^\infty\gamma^{h-1}\phi^\top\left(\widehat{M}_\theta\right)^{h-1}\widehat{\Sigma}^{-1}\frac{1}{KH}\sum_{k=1}^K\sum_{h^\prime=1}^H\phi\left(s_{h^\prime}^{(k)},a_{h^\prime}^{(k)}\right)\nabla_\theta\varepsilon_{h^\prime,k}^\theta+\frac{\lambda}{KH}\sum_{h=1}^\infty\gamma^{h-1}\phi^\top\left(\widehat{M}_\theta\right)^{h-1}\widehat{\Sigma}^{-1}\nabla_\theta w^\theta.
\end{align*}
Using the definition of $\widehat{\nu}^\theta$, we get
\begin{align}
    \label{p1_homo}
    \begin{aligned}
        &\int_{\mathcal{S}\times\mathcal{A}}\xi(s)\pi_\theta(a\vert s)\left(\sum_{h=1}^\infty\gamma^{h-1}\left(\widehat{\mathcal{P}}_\theta\right)^{h-1}\left(\nabla_\theta Q^\theta-\widehat{U}^\theta-\gamma\widehat{\mathcal{P}}_\theta\nabla_\theta Q^\theta\right)\right)(s,a)\mathrm{d}s\mathrm{d}a\\
        =&\left(\widehat{\nu}^\theta\right)^\top\widehat{\Sigma}^{-1}\frac{1}{KH}\sum_{k=1}^K\sum_{h=1}^H\phi\left(s_h^{(k)},a_h^{(k)}\right)\nabla_\theta\varepsilon_{h,k}^\theta+\frac{\lambda}{KH}\left(\widehat{\nu}^\theta\right)^\top\widehat{\Sigma}^{-1}\nabla_\theta w^\theta.
    \end{aligned}
\end{align}
For the second term,  by Lemma \ref{Q_decomp_homo}, we have
\begin{align*}
    \sum_{h=1}^\infty\gamma^{h-1}\left(\widehat{\mathcal{P}}_\theta\right)^{h-1}\left(\widehat{U}^\theta-\tilde{U}^\theta\right)&=\sum_{h=1}^\infty\gamma^h\left(\widehat{\mathcal{P}}_\theta\right)^h\left(\nabla_\theta\log\Pi_\theta\right)\left(Q^\theta-\widehat{Q}^\theta\right)\\
    &=\sum_{h=1}^\infty\gamma^h\left(\widehat{\mathcal{P}}_\theta\right)^h\left(\nabla_\theta\log\Pi_\theta\right)\sum_{h^\prime=h}^\infty\gamma^{h^\prime-h-1}\left(\widehat{\mathcal{P}}_\theta\right)^{h^\prime-h-1}\left(Q^\theta-\widehat{r}-\gamma\widehat{\mathcal{P}}_\theta Q^\theta\right)\\
    &=\sum_{h=1}^\infty\sum_{h^\prime=1}^{h-1}\left(\gamma\widehat{\mathcal{P}}_\theta\right)^{h^\prime}\left(\nabla_\theta\log\Pi_\theta\right)\left(\gamma\widehat{\mathcal{P}}_\theta\right)^{h-h^\prime-1}\left(Q^\theta-\widehat{r}-\gamma\widehat{\mathcal{P}}_\theta Q^\theta\right).
\end{align*}
Meanwhile, again by Lemma \ref{Q_decomp_homo}, we have
\begin{align*}
    \left(\nabla_\theta\log\Pi_\theta\right)(Q^\theta-\widehat{Q}^\theta)=\left(\nabla_\theta\log\Pi_\theta\right)\sum_{h=1}^\infty\left(\gamma\widehat{\mathcal{P}}_\theta\right)^{h-1}\left(Q^\theta-\widehat{r}-\gamma\widehat{\mathcal{P}}_\theta Q^\theta\right),
\end{align*}
which implies
\begin{align*}
    &\sum_{h=1}^\infty\left(\gamma\widehat{\mathcal{P}}_\theta\right)^{h-1}\left(\widehat{U}^\theta-\tilde{U}^\theta\right)+\left(\nabla_\theta\log\Pi_\theta\right)(Q^\theta-\widehat{Q}^\theta)\\
    =&\sum_{h=1}^\infty\left(\left(\nabla_\theta\log\Pi_\theta\right)\left(\gamma\widehat{\mathcal{P}}_\theta\right)^{h-1}+\sum_{h^\prime=1}^{h-1}\left(\gamma\widehat{\mathcal{P}}_\theta\right)^{h^\prime}\left(\nabla_\theta\log\Pi_\theta\right)\left(\gamma\widehat{\mathcal{P}}_\theta\right)^{h-h^\prime-1}\right)\left(Q^\theta-\widehat{r}-\gamma\widehat{\mathcal{P}}_\theta Q^\theta\right)\\
    =&\sum_{h=1}^\infty\sum_{h^\prime=0}^{h-1}\left(\gamma\widehat{\mathcal{P}}_\theta\right)^{h^\prime}\left(\nabla_\theta\log\Pi_\theta\right)\left(\gamma\widehat{\mathcal{P}}_\theta\right)^{h-h^\prime-1}\left(Q^\theta-\widehat{r}-\gamma\widehat{\mathcal{P}}_\theta Q^\theta\right)\\
    =&\sum_{h=1}^\infty\sum_{h^\prime=0}^{h-1}\left(\gamma\widehat{\mathcal{P}}_\theta\right)^{h^\prime}\left(\nabla_\theta\log\Pi_\theta\right)\left(\gamma\widehat{\mathcal{P}}_\theta\right)^{h-h^\prime-1}\phi^\top\widehat{\Sigma}^{-1}\\
    &\cdot\frac{1}{KH}\sum_{k=1}^K\sum_{h^\prime=1}^H\phi\left(s_{h^\prime}^{(k)},a_{h^\prime}^{(k)}\right)\left(Q^\theta\left(s_{h^\prime}^{(k)},a_{h^\prime}^{(k)}\right)-r_{h^\prime}^{(k)}-\gamma\int_{\mathcal{A}}\pi_\theta\left(a^\prime\left\vert s_{h^\prime+1}^{(k)}\right.\right)Q^\theta\left(s_{h^\prime+1}^{(k)},a^\prime\right)\mathrm{d}a^\prime\right)\\
    &+\frac{\lambda}{KH}\sum_{h=1}^\infty\sum_{h^\prime=0}^{h-1}\left(\gamma\widehat{\mathcal{P}}_\theta\right)^{h^\prime}\left(\nabla_\theta\log\Pi_\theta\right)\left(\gamma\widehat{\mathcal{P}}_\theta\right)^{h-h^\prime-1}\phi^\top\widehat{\Sigma}^{-1}w^\theta.
\end{align*}
For each $j\in[m]$, notice the relation
\begin{align*}
    \left(\nabla_\theta^j\widehat{\nu}_h^\theta\right)^\top&=\left(\nabla_\theta^j\nu_1^\theta\right)^\top\left(\widehat{M}_\theta\right)^{h-1}+\sum_{h^\prime=1}^{h-1}\left(\nu_1^\theta\right)^\top\left(\widehat{M}_\theta\right)^{h^\prime-1}\left(\widehat{\nabla_\theta^j M_\theta}\right)\left(\widehat{M}_\theta\right)^{h-h^\prime-1}\\
    &=\int_{\mathcal{S}\times\mathcal{A}}\xi(s)\pi_\theta(a\vert s)\left(\sum_{h^\prime=0}^{h-1}\left(\widehat{\mathcal{P}}_\theta\right)^{h^\prime}\left(\nabla_\theta\log\Pi_\theta\right)\left(\widehat{\mathcal{P}}_\theta\right)^{h-h^\prime-1}\phi^\top\right)(s,a)\mathrm{d}s\mathrm{d}a.
\end{align*} 
Therefore, we have
\begin{align}
    \label{p2_homo}
    \begin{aligned}
        &\left[\int\xi(s)\pi_\theta(a\vert s)\left(\sum_{h=1}^\infty\left(\gamma\widehat{\mathcal{P}}_\theta\right)^{h-1}\left(\widehat{U}^\theta-\tilde{U}^\theta\right)+\left(\nabla_\theta\log\Pi_\theta\right)(Q^\theta-\widehat{Q}^\theta)\right)(s,a)\mathrm{d}s\mathrm{d}a\right]_j\\
        =&\sum_{h=1}^\infty\gamma^{h-1}\left(\nabla_\theta^j\widehat{\nu}_h^\theta\right)^\top\widehat{\Sigma}^{-1}\frac{1}{KH}\sum_{k=1}^K\sum_{h^\prime=1}^H\phi\left(s_{h^\prime}^{(k)},a_{h^\prime}^{(k)}\right)\left(Q^\theta\left(s_{h^\prime}^{(k)},a_{h^\prime}^{(k)}\right)-r_{h^\prime}^{(k)}-\int_{\mathcal{A}}\pi_\theta\left(a^\prime\left\vert s_{h^\prime+1}^{(k)}\right.\right)Q^\theta\left(s_{h^\prime+1}^{(k)},a^\prime\right)\mathrm{d}a^\prime\right)\\
        &+\frac{\lambda}{KH}\sum_{h=1}^\infty\gamma^{h-1}\left(\nabla_\theta^j\widehat{\nu}_h^\theta\right)^\top\widehat{\Sigma}^{-1}w^\theta\\
        =&\left(\nabla_\theta^j\widehat{\nu}^\theta\right)^\top\widehat{\Sigma}^{-1}\frac{1}{KH}\sum_{k=1}^K\sum_{h=1}^H\phi\left(s_h^{(k)},a_h^{(k)}\right)\varepsilon_{h,k}^\theta+\frac{\lambda}{KH}\left(\nabla_\theta^j\widehat{\nu}^\theta\right)^\top\widehat{\Sigma}^{-1}w^\theta.
    \end{aligned}
\end{align}
Combing the results of \eqref{p1_homo} and \eqref{p2_homo}, we get for each $j\in[m]$, 
\begin{align*}
    &\nabla_\theta^j v_\theta-\widehat{\nabla_\theta^j v_\theta}=\int_{\mathcal{S}\times\mathcal{A}}\xi(s)\pi_\theta(a\vert s)\left(\nabla_\theta^j Q^\theta-\widehat{\nabla_\theta^j Q^\theta}+\left(\nabla_\theta^j\log\Pi_\theta\right)(Q^\theta-\widehat{Q}^\theta)\right)(s,a)\textrm{d}s\textrm{d}a\\
    =&\left(\widehat{\nu}^\theta\right)^\top\widehat{\Sigma}^{-1}\frac{1}{KH}\sum_{k=1}^K\sum_{h=1}^H\phi\left(s_h^{(k)},a_h^{(k)}\right)\nabla_\theta^j\varepsilon_{h,k}^\theta+\frac{\lambda}{KH}\left(\widehat{\nu}^\theta\right)^\top\widehat{\Sigma}^{-1}\nabla_\theta^j w^\theta+\left(\nabla_\theta^j\nu^\theta\right)^\top\widehat{\Sigma}^{-1}\frac{1}{KH}\sum_{k=1}^K\sum_{h=1}^H\phi\left(s_h^{(k)},a_h^{(k)}\right)\varepsilon_{h,k}^\theta\\
    &+\frac{\lambda}{KH}\left(\nabla_\theta^j\widehat{\nu}^\theta\right)^\top\widehat{\Sigma}^{-1}w^\theta\\
    =&\nabla_\theta^j\left(\left(\widehat{\nu}^\theta\right)^\top\widehat{\Sigma}^{-1}\frac{1}{KH}\sum_{k=1}^K\sum_{h=1}^H\phi\left(s_h^{(k)},a_h^{(k)}\right)\varepsilon_{h,k}^\theta+\frac{\lambda}{KH}\left(\widehat{\nu}^\theta\right)^\top\widehat{\Sigma}^{-1}w^\theta\right)\\
    =&\nabla_\theta^j\left(\left(\widehat{\nu}^\theta\right)^\top\widehat{\Sigma}^{-1}\frac{1}{KH}\sum_{k=1}^K\sum_{h=1}^H\phi\left(s_h^{(k)},a_h^{(k)}\right)\varepsilon_{h,k}^\theta+\frac{\lambda}{KH}\left(\widehat{\nu}^\theta\right)^\top\widehat{\Sigma}^{-1}w^\theta+\left(\left(\widehat{\nu}^\theta\right)^\top\widehat{\Sigma}^{-1}-\left(\nu^\theta\right)^\top\Sigma^{-1}\right)\frac{1}{KH}\sum_{k=1}^K\sum_{h=1}^H\phi\left(s_h^{(k)},a_h^{(k)}\right)\varepsilon_{h,k}^\theta\right).
\end{align*}
Rewriting the above decomposition in a vector form, we get
\begin{align*}
    \nabla_\theta v_\theta-\widehat{\nabla_\theta v_\theta}=&\nabla_\theta\Bigg(\left(\widehat{\nu}^\theta\right)^\top\widehat{\Sigma}^{-1}\frac{1}{KH}\sum_{k=1}^K\sum_{h=1}^H\phi\left(s_h^{(k)},a_h^{(k)}\right)\varepsilon_{h,k}^\theta\\
    &+\frac{\lambda}{KH}\left(\widehat{\nu}^\theta\right)^\top\widehat{\Sigma}^{-1}w^\theta+\left(\left(\widehat{\nu}^\theta\right)^\top\widehat{\Sigma}^{-1}-\left(\nu^\theta\right)^\top\Sigma^{-1}\right)\frac{1}{KH}\sum_{k=1}^K\sum_{h=1}^H\phi\left(s_h^{(k)},a_h^{(k)}\right)\varepsilon_{h,k}^\theta\Bigg),
\end{align*}
which is the desired result. 
\end{proof}

\subsubsection{Proof of Lemma \ref{e1_finite_product_homo}}
\begin{proof}
Note that, 
\begin{align*}
    \langle E_1,t\rangle=&\left\langle\nabla_\theta\left(\left(\nu^\theta\right)^\top\Sigma^{-1}\frac{1}{KH}\sum_{k=1}^K\sum_{h=1}^H\phi\left(s_h^{(k)},a_h^{(k)}\right)\varepsilon_{h,k}^\theta\right),t\right\rangle\\
    =&\left\langle\left(\nabla_\theta\nu^\theta\right)^\top\Sigma^{-1}\frac{1}{KH}\sum_{k=1}^K\sum_{h=1}^H\phi\left(s_h^{(k)},a_h^{(k)}\right)\varepsilon_{h,k}^\theta,t\right\rangle+\left\langle\left(\nu^\theta\right)^\top\Sigma^{-1}\frac{1}{KH}\sum_{k=1}^K\sum_{h=1}^H\phi\left(s_h^{(k)},a_h^{(k)}\right)\nabla_\theta\varepsilon_{h,k}^\theta,t\right\rangle.
\end{align*}
Let $e_{h,k}=\left\langle\nabla_\theta\left(\left(\nu^\theta\right)^\top\Sigma^{-1}\phi\left(s_h^{(k)},a_h^{(k)}\right)\varepsilon_{h,k}^\theta\right),t\right\rangle$, we have 
\begin{align*}
    \vert e_{h,k}\vert&\leq\sqrt{C_1dm}\Vert t\Vert\frac{1}{1-\gamma}\max_{j\in[m]}\sqrt{\left(\nabla^j_\theta\nu^\theta\right)^\top\Sigma^{-1}\nabla^j_\theta\nu^\theta}+2G\sqrt{C_1dm}\Vert t\Vert\frac{1}{(1-\gamma)^2}\sqrt{\left(\nu^\theta\right)^\top\Sigma^{-1}\nu^\theta}=B\sqrt{C_1dm}\Vert t\Vert.
\end{align*}
We have
\begin{align*}
    &\sum_{k=1}^K\sum_{h=1}^H\textrm{Var}[e_{h,k}\vert\mathcal{F}_{h,k}]=\sum_{k=1}^K\sum_{h=1}^H\mathbb{E}\left[\left.\left\langle\nabla_\theta\left(\left(\nu^\theta\right)^\top\Sigma^{-1}\phi\left(s_h^{(k)},a_h^{(k)}\right)\varepsilon_{h,k}^\theta\right),t\right\rangle^2\right\vert\mathcal{F}_{h,k}\right]=HKt^\top\Lambda_\theta t.
\end{align*}
We pick $\sigma^2=HKt^\top\Lambda_\theta t$, the Bernstein’s inequality implies that for any $\varepsilon\in\mathbb{R}$, 
\begin{align*}
    \mathbb{P}\left(\left\vert\sum_{k=1}^K\sum_{h=1}^He_{h,k}\right\vert\geq\varepsilon\right)\leq 2\exp\left(-\frac{\varepsilon^2/2}{\sigma^2+\sqrt{C_1dm}\Vert t\Vert B\varepsilon/3}\right).
\end{align*}
Therefore, if we pick $\varepsilon=\sigma\sqrt{2\log(2/\delta)}+2\log(2/\delta)\sqrt{C_1dm}\Vert t\Vert B/3$, we get
\begin{align*}
    \mathbb{P}\left(\left\vert\sum_{k=1}^K\sum_{h=1}^He_{h,k}\right\vert\geq\varepsilon\right)\leq\delta,
\end{align*}
i.e., we have with probability $1-\frac{\delta}{4}$, 
\begin{align*}
    \left\vert\frac{1}{HK}\sum_{k=1}^K\sum_{h=1}^H e_{h,k}\right\vert\leq\sqrt{\frac{2t^\top\Lambda_\theta t\log(8/\delta)}{HK}}+\frac{2\log(8/\delta)\sqrt{C_1dm}\Vert t\Vert B}{3HK}
\end{align*}
\end{proof}

\subsubsection{Proof of Lemma \ref{e2_homo}}
\begin{proof}
For an arbitrarily given $\theta_0$, let $\Sigma_{\theta_0}=\mathbb{E}^{\pi_{\theta_0}}[\phi(s,a)\phi(s,a)^\top\vert s\sim\xi_{\theta_0},a\sim\pi_{\theta_0}(\cdot\vert s)]$, we have
\begin{align*}
    &\left(\left(\widehat{\nu}^\theta\right)^\top\widehat{\Sigma}^{-1}-\left(\nu^\theta\right)^\top\Sigma^{-1}\right)\frac{1}{HK}\sum_{k=1}^K\sum_{h=1}^H\phi\left(s_h^{(k)},a_h^{(k)}\right)\varepsilon_{h,k}^\theta\\
    =&\sum_{h^\prime=1}^\infty\gamma^{h^\prime-1}\left(\nu^\theta_1\right)^\top\left(\left(\widehat{M}_\theta\right)^{h^\prime-1}\widehat{\Sigma}^{-1}-\left(M_\theta\right)^{h^\prime-1}\Sigma^{-1}\right)\frac{1}{HK}\sum_{k=1}^K\sum_{h=1}^H\phi\left(s_h^{(k)},a_h^{(k)}\right)\varepsilon_{h,k}^\theta\\
    =&\sum_{h^\prime=1}^\infty\gamma^{h^\prime-1}\left(\Sigma_{\theta_0}^{-\frac{1}{2}}\nu^\theta_1\right)^\top\left(\left(\Sigma_{\theta_0}^{\frac{1}{2}}\widehat{M}_\theta\Sigma_{\theta_0}^{-\frac{1}{2}}\right)^{h^\prime-1}\Sigma_{\theta_0}^{\frac{1}{2}}\Sigma^{-\frac{1}{2}}\Sigma^{\frac{1}{2}}\widehat{\Sigma}^{-1}\Sigma^{\frac{1}{2}}-\left(\Sigma_{\theta_0}^{\frac{1}{2}}M_\theta\Sigma_{\theta_0}^{-\frac{1}{2}}\right)^{h^\prime-1}\Sigma_{\theta_0}^{\frac{1}{2}}\Sigma^{-\frac{1}{2}}\right)\\
    &\cdot\Sigma^{-\frac{1}{2}}\frac{1}{HK}\sum_{k=1}^K\sum_{h=1}^H\phi\left(s_h^{(k)},a_h^{(k)}\right)\varepsilon_{h,k}^\theta.
\end{align*}
Taking derivatives on both sides, and let $\theta_0=\theta$, we get
\begin{align*}
    &\nabla_\theta^j E_2=\nabla_\theta^j\left(\left(\left(\widehat{\nu}^\theta\right)^\top\widehat{\Sigma}^{-1}-\left(\nu^\theta\right)^\top\Sigma^{-1}\right)\frac{1}{HK}\sum_{k=1}^K\sum_{h=1}^H\phi\left(s_h^{(k)},a_h^{(k)}\right)\varepsilon_{h,k}^\theta\right)=E_{21}^j+E_{22}^j+E_{23}^j,
\end{align*}
where 
\begin{align*}
    E_{21}^j=&\sum_{h^\prime=1}^\infty\gamma^{h^\prime-1}\left(\Sigma_\theta^{-\frac{1}{2}}\nu^\theta_1\right)^\top\left(\left(\Sigma_\theta^{\frac{1}{2}}\widehat{M}_\theta\Sigma_\theta^{-\frac{1}{2}}\right)^{h^\prime-1}\Sigma_\theta^{\frac{1}{2}}\Sigma^{-\frac{1}{2}}\Sigma^{\frac{1}{2}}\widehat{\Sigma}^{-1}\Sigma^{\frac{1}{2}}-\left(\Sigma_\theta^{\frac{1}{2}}M_\theta\Sigma_\theta^{-\frac{1}{2}}\right)^{h^\prime-1}\Sigma_\theta^{\frac{1}{2}}\Sigma^{-\frac{1}{2}}\right)\\
    &\cdot\Sigma^{-\frac{1}{2}}\frac{1}{HK}\sum_{k=1}^K\sum_{h=1}^H\phi\left(s_h^{(k)},a_h^{(k)}\right)\nabla_\theta^j\varepsilon_{h,k}^\theta\\
    E_{22}^j=&\sum_{h^\prime=1}^\infty\gamma^{h^\prime-1}\left(\Sigma_\theta^{-\frac{1}{2}}\nabla_\theta^j\nu^\theta_1\right)^\top\left(\left(\Sigma_\theta^{\frac{1}{2}}\widehat{M}_\theta\Sigma_\theta^{-\frac{1}{2}}\right)^{h^\prime-1}\Sigma_\theta^{\frac{1}{2}}\Sigma^{-\frac{1}{2}}\Sigma^{\frac{1}{2}}\widehat{\Sigma}^{-1}\Sigma^{\frac{1}{2}}-\left(\Sigma_\theta^{\frac{1}{2}}M_\theta\Sigma_\theta^{-\frac{1}{2}}\right)^{h^\prime-1}\Sigma_\theta^{\frac{1}{2}}\Sigma^{-\frac{1}{2}}\right)\\
    &\cdot\Sigma^{-\frac{1}{2}}\frac{1}{HK}\sum_{k=1}^K\sum_{h=1}^H\phi\left(s_h^{(k)},a_h^{(k)}\right)\varepsilon_{h,k}^\theta\\
    E_{23}^j=&\sum_{h^\prime=1}^\infty\gamma^{h^\prime-1}\left(\Sigma_\theta^{-\frac{1}{2}}\nu^\theta_1\right)^\top\\
    &\cdot\left(\left.\nabla_\theta^j\left(\Sigma_{\theta_0}^{\frac{1}{2}}\widehat{M}_{\theta}\Sigma_{\theta_0}^{-\frac{1}{2}}\right)^{h^\prime-1}\right\vert_{\theta_0=\theta}\Sigma_\theta^{\frac{1}{2}}\Sigma^{-\frac{1}{2}}\Sigma^{\frac{1}{2}}\widehat{\Sigma}^{-1}\Sigma^{\frac{1}{2}}-\left.\nabla_\theta^j\left(\Sigma_{\theta_0}^{\frac{1}{2}}M_\theta\Sigma_{\theta_0}^{-\frac{1}{2}}\right)^{h^\prime-1}\right\vert_{\theta_0=\theta}\Sigma_\theta^{\frac{1}{2}}\Sigma^{-\frac{1}{2}}\right)\\
    &\cdot\Sigma^{-\frac{1}{2}}\frac{1}{HK}\sum_{k=1}^K\sum_{h=1}^H\phi\left(s_h^{(k)},a_h^{(k)}\right)\varepsilon_{h,k}^\theta.
\end{align*}
Therefore, using the result of Lemma \ref{decomp_homo}, we get
\begin{align*}
    \vert E_{21}^j\vert\leq&\sum_{h^\prime=1}^\infty\gamma^{h^\prime-1}\left\Vert\Sigma_\theta^{-\frac{1}{2}}\nu^\theta_1\right\Vert\left\Vert\Sigma_\theta^{\frac{1}{2}}\Sigma^{-\frac{1}{2}}\right\Vert\left(\left(1+\left\Vert\Sigma_\theta^{\frac{1}{2}}\left(\Delta M_\theta\right)\Sigma_\theta^{-\frac{1}{2}}\right\Vert\right)^{h^\prime-1}\left(1+\left\Vert\Sigma^{\frac{1}{2}}\left(\Delta\Sigma^{-1}\right)\Sigma^{\frac{1}{2}}\right\Vert\right)-1\right)\\
    &\cdot\left\Vert\Sigma^{-\frac{1}{2}}\frac{1}{HK}\sum_{k=1}^K\sum_{h=1}^H\phi\left(s_h^{(k)},a_h^{(k)}\right)\nabla_\theta^j\varepsilon_{h,k}^\theta\right\Vert\\
    =&\left\Vert\Sigma_\theta^{-\frac{1}{2}}\nu^\theta_1\right\Vert\left\Vert\Sigma_\theta^{\frac{1}{2}}\Sigma^{-\frac{1}{2}}\right\Vert\left(\left(1-\gamma-\gamma\left\Vert\Sigma_\theta^{\frac{1}{2}}\left(\Delta M_\theta\right)\Sigma_\theta^{-\frac{1}{2}}\right\Vert\right)^{-1}\left(1+\left\Vert\Sigma^{\frac{1}{2}}\left(\Delta\Sigma^{-1}\right)\Sigma^{\frac{1}{2}}\right\Vert\right)-\left(1-\gamma\right)^{-1}\right)\\
    &\cdot\left\Vert\Sigma^{-\frac{1}{2}}\frac{1}{HK}\sum_{k=1}^K\sum_{h=1}^H\phi\left(s_h^{(k)},a_h^{(k)}\right)\nabla_\theta^j\varepsilon_{h,k}^\theta\right\Vert\\
    \leq&\frac{1}{1-\gamma}\left\Vert\Sigma_\theta^{-\frac{1}{2}}\nu^\theta_1\right\Vert\left\Vert\Sigma_\theta^{\frac{1}{2}}\Sigma^{-\frac{1}{2}}\right\Vert\left(\left(1-\frac{\left\Vert\Sigma_\theta^{\frac{1}{2}}\left(\Delta M_\theta\right)\Sigma_\theta^{-\frac{1}{2}}\right\Vert}{1-\gamma}\right)^{-1}\left(1+\left\Vert\Sigma^{\frac{1}{2}}\left(\Delta\Sigma^{-1}\right)\Sigma^{\frac{1}{2}}\right\Vert\right)-1\right)\\
    &\cdot\left\Vert\Sigma^{-\frac{1}{2}}\frac{1}{HK}\sum_{k=1}^K\sum_{h=1}^H\phi\left(s_h^{(k)},a_h^{(k)}\right)\nabla_\theta^j\varepsilon_{h,k}^\theta\right\Vert\\
    \vert E_{22}^j\vert\leq&\sum_{h^\prime=1}^\infty\gamma^{h^\prime-1}\left\Vert\Sigma_\theta^{-\frac{1}{2}}\nabla_\theta^j\nu^\theta_1\right\Vert\left\Vert\Sigma_\theta^{\frac{1}{2}}\Sigma^{-\frac{1}{2}}\right\Vert\left(\left(1+\left\Vert\Sigma_\theta^{\frac{1}{2}}\left(\Delta M_\theta\right)\Sigma_\theta^{-\frac{1}{2}}\right\Vert\right)^{h^\prime-1}\left(1+\left\Vert\Sigma^{\frac{1}{2}}\left(\Delta\Sigma^{-1}\right)\Sigma^{\frac{1}{2}}\right\Vert\right)-1\right)\\
    &\cdot\left\Vert\Sigma^{-\frac{1}{2}}\frac{1}{HK}\sum_{k=1}^K\sum_{h=1}^H\phi\left(s_h^{(k)},a_h^{(k)}\right)\varepsilon_{h,k}^\theta\right\Vert\\
    \leq&\frac{1}{1-\gamma}\left\Vert\Sigma_\theta^{-\frac{1}{2}}\nabla_\theta^j\nu^\theta_1\right\Vert\left\Vert\Sigma_\theta^{\frac{1}{2}}\Sigma^{-\frac{1}{2}}\right\Vert\left(\left(1-\frac{\left\Vert\Sigma_\theta^{\frac{1}{2}}\left(\Delta M_\theta\right)\Sigma_\theta^{-\frac{1}{2}}\right\Vert}{1-\gamma}\right)^{-1}\left(1+\left\Vert\Sigma^{\frac{1}{2}}\left(\Delta\Sigma^{-1}\right)\Sigma^{\frac{1}{2}}\right\Vert\right)-1\right)\\
    &\cdot\left\Vert\Sigma^{-\frac{1}{2}}\frac{1}{HK}\sum_{k=1}^K\sum_{h=1}^H\phi\left(s_h^{(k)},a_h^{(k)}\right)\varepsilon_{h,k}^\theta\right\Vert\\
    \vert E_{23}^j\vert\leq&\sum_{h^\prime=2}^\infty(h^\prime-1)\gamma^{h^\prime-1}G\left\Vert\Sigma_\theta^{-\frac{1}{2}}\nu^\theta_1\right\Vert\left\Vert\Sigma_\theta^{\frac{1}{2}}\Sigma^{-\frac{1}{2}}\right\Vert\\
    &\cdot\left(\left(1+\left\Vert\Sigma_\theta^{\frac{1}{2}}\left(\Delta M_\theta\right)\Sigma_\theta^{-\frac{1}{2}}\right\Vert\right)^{h^\prime-2}\left(1+\left\Vert\Sigma_\theta^{\frac{1}{2}}\left(\frac{\nabla_\theta^j\left(\Delta M_\theta\right)}{G}\right)\Sigma_\theta^{-\frac{1}{2}}\right\Vert\right)\left(1+\left\Vert\Sigma^{\frac{1}{2}}\left(\Delta\Sigma^{-1}\right)\Sigma^{\frac{1}{2}}\right\Vert\right)-1\right)\\
    &\cdot\left\Vert\Sigma^{-\frac{1}{2}}\frac{1}{HK}\sum_{k=1}^K\sum_{h=1}^H\phi\left(s_h^{(k)},a_h^{(k)}\right)\varepsilon_{h,k}^\theta\right\Vert\\
    \leq&\frac{G}{(1-\gamma)^2}\left\Vert\Sigma_\theta^{-\frac{1}{2}}\nu^\theta_1\right\Vert\left\Vert\Sigma_\theta^{\frac{1}{2}}\Sigma^{-\frac{1}{2}}\right\Vert\\
    &\cdot\left(\left(1-\frac{\left\Vert\Sigma_\theta^{\frac{1}{2}}\left(\Delta M_\theta\right)\Sigma_\theta^{-\frac{1}{2}}\right\Vert}{1-\gamma}\right)^{-2}\left(1+\left\Vert\Sigma_\theta^{\frac{1}{2}}\left(\frac{\nabla_\theta^j\left(\Delta M_\theta\right)}{G}\right)\Sigma_\theta^{-\frac{1}{2}}\right\Vert\right)\left(1+\left\Vert\Sigma^{\frac{1}{2}}\left(\Delta\Sigma^{-1}\right)\Sigma^{\frac{1}{2}}\right\Vert\right)-1\right)\\
    &\cdot\left\Vert\Sigma^{-\frac{1}{2}}\frac{1}{HK}\sum_{k=1}^K\sum_{h=1}^H\phi\left(s_h^{(k)},a_h^{(k)}\right)\varepsilon_{h,k}^\theta\right\Vert,
\end{align*}
where $\Delta\Sigma^{-1}=\widehat{\Sigma}^{-1}-\Sigma$ and we use the fact $\left\Vert\Sigma_\theta^{\frac{1}{2}}M_\theta\Sigma_\theta^{-\frac{1}{2}}\right\Vert\leq 1$ and $\left\Vert\Sigma_\theta^{\frac{1}{2}}\left(\nabla_\theta^j M_\theta\right)\Sigma_\theta^{-\frac{1}{2}}\right\Vert\leq G$ from Lemma \ref{ineq_homo}. Now, define $\alpha=6\sqrt{\kappa_1}(4+\kappa_2+\kappa_3)\sqrt{\frac{C_1d\log\frac{16dmH}{\delta}}{K}}$ and pick
\begin{align*}
    K\geq 36\kappa_1(4+\kappa_2+\kappa_3)^2\frac{C_1d}{(1-\gamma)^2}\log\frac{16dmH}{\delta},\quad\lambda\leq C_1d\sigma_{\textrm{min}}(\Sigma)\cdot\log\frac{8dmH}{\delta},
\end{align*}
we get $\alpha\leq \frac{1-\gamma}{2}$. Using the results of Lemma \ref{dsig2_homo} and Lemma \ref{dm2_homo}, we get
\begin{align}
    \label{sig_homo}
    \left\Vert\Sigma^{\frac{1}{2}}\left(\Delta\Sigma^{-1}\right)\Sigma^{\frac{1}{2}}\right\Vert\leq 4\sqrt{\frac{C_1d\log\frac{8dmH}{\delta}}{K}}\leq\alpha\leq 1, 
\end{align}
and 
\begin{align}
    \label{dm_homo}
    \left\Vert\Sigma_\theta^{\frac{1}{2}}\left(\Delta M_\theta\right)\Sigma_\theta^{-\frac{1}{2}}\right\Vert&\leq\alpha,\\
    \label{ddm_homo}
    \left\Vert\Sigma_\theta^{\frac{1}{2}}\left(\frac{\nabla_\theta^j\left(\Delta M_\theta\right)}{G}\right)\Sigma_\theta^{-\frac{1}{2}}\right\Vert&\leq\alpha,\quad\forall j\in[m].
\end{align}
Meanwhile, the event $\mathcal{E}_\varepsilon$ implies
\begin{align}
    \label{ep1_homo}
    \left\Vert\Sigma^{-\frac{1}{2}}\frac{1}{KH}\sum_{k=1}^K\sum_{h=1}^H\phi\left(s_h^{(k)},a_h^{(k)}\right)\varepsilon_{h,k}^\theta\right\Vert&\leq \frac{4\sqrt{d}}{1-\gamma}\sqrt{\frac{\log\frac{32dmH}{\delta}}{KH}}\\
    \label{ep2_homo}
    \left\Vert\Sigma^{-\frac{1}{2}}\frac{1}{KH}\sum_{k=1}^K\sum_{h=1}^H\phi\left(s_h^{(k)},a_h^{(k)}\right)\nabla_\theta^j\varepsilon_{h,k}^\theta\right\Vert&\leq \frac{8\sqrt{d}G}{(1-\gamma)^2}\sqrt{\frac{\log\frac{32dmH}{\delta}}{KH}}.
\end{align}
Combining the results of \eqref{sig_homo}, \eqref{dm_homo}, \eqref{ddm_homo}, \eqref{ep1_homo}, \eqref{ep2_homo} and use a union bound, we have with probability $1-3\delta$, 
\begin{align*}
    \vert E_{21}^j\vert\leq&\frac{16\alpha\sqrt{d}G}{(1-\gamma)^4}\left\Vert\Sigma_\theta^{-\frac{1}{2}}\nu^\theta_1\right\Vert\left\Vert\Sigma_\theta^{\frac{1}{2}}\Sigma^{-\frac{1}{2}}\right\Vert \sqrt{\frac{\log\frac{32dmH}{\delta}}{KH}}\\
    \vert E_{22}^j\vert\leq&\frac{8\alpha\sqrt{d}}{(1-\gamma)^3}\left\Vert\Sigma_\theta^{-\frac{1}{2}}\nabla_\theta^j\nu^\theta_1\right\Vert\left\Vert\Sigma_\theta^{\frac{1}{2}}\Sigma^{-\frac{1}{2}}\right\Vert\sqrt{\frac{\log\frac{32dmH}{\delta}}{KH}}\\
    \vert E_{23}^j\vert\leq&\frac{16G\alpha\sqrt{d}}{(1-\gamma)^4}\left\Vert\Sigma_\theta^{-\frac{1}{2}}\nu^\theta_1\right\Vert\left\Vert\Sigma_\theta^{\frac{1}{2}}\Sigma^{-\frac{1}{2}}\right\Vert\sqrt{\frac{\log\frac{32dmH}{\delta}}{KH}}
\end{align*}
where we use the fact $(1-\frac{\alpha}{1-\gamma})^{-1}(1+\alpha)\leq 1+\frac{3\alpha}{1-\gamma}$ whenever $\alpha \leq \frac{1}{2(1-\gamma)}$. Summing up the above terms and using the definition of $\alpha$, we get
\begin{align*}
    \vert E_2^j\vert\leq \frac{240\sqrt{\kappa_1}(2+\kappa_2+\kappa_3)\sqrt{C_1}d}{(1-\gamma)^3}\left(\left\Vert\Sigma_\theta^{-\frac{1}{2}}\nabla_\theta^j\nu^\theta_1\right\Vert+\frac{G}{1-\gamma}\left\Vert\Sigma_\theta^{-\frac{1}{2}}\nu^\theta_1\right\Vert\right)\left\Vert\Sigma_\theta^{\frac{1}{2}}\Sigma^{-\frac{1}{2}}\right\Vert\frac{\log\frac{32dmH}{\delta}}{KH},\quad\forall j\in[m],
\end{align*}
which finished the proof. 
\end{proof}

\subsubsection{Proof of Lemma \ref{e3_homo}}
\begin{proof}
Similar to the decomposition in the proof of Lemma \ref{e2_homo}, we have
\begin{align*}
    \left\vert E_3^j\right\vert=&\frac{\lambda}{KH}\left\vert\nabla_\theta^j\left(\left(\widehat{\nu}^\theta\right)^\top\widehat{\Sigma}^{-1}w^\theta\right)\right\vert\\
    \leq&\frac{\lambda}{KH}\sum_{h=1}^\infty\gamma^{h-1}\left\Vert\Sigma_\theta^{-\frac{1}{2}}\nu^\theta_1\right\Vert\left\Vert\Sigma_\theta^{\frac{1}{2}}\Sigma^{-\frac{1}{2}}\right\Vert\left(1+\left\Vert\Sigma_\theta^{\frac{1}{2}}\left(\Delta M_\theta\right)\Sigma_\theta^{-\frac{1}{2}}\right\Vert\right)^{h-1}\left(1+\left\Vert\Sigma^{\frac{1}{2}}\left(\Delta\Sigma^{-1}\right)\Sigma^{\frac{1}{2}}\right\Vert\right)\left\Vert\Sigma^{-1}\right\Vert\left\Vert\Sigma^{\frac{1}{2}}\nabla_\theta^j w^\theta\right\Vert\\
    +&\frac{\lambda}{KH}\sum_{h=1}^\infty\gamma^{h-1}\left\Vert\Sigma_\theta^{-\frac{1}{2}}\nabla_\theta^j\nu^\theta_1\right\Vert\left\Vert\Sigma_\theta^{\frac{1}{2}}\Sigma^{-\frac{1}{2}}\right\Vert\left(1+\left\Vert\Sigma_\theta^{\frac{1}{2}}\left(\Delta M_\theta\right)\Sigma_\theta^{-\frac{1}{2}}\right\Vert\right)^{h-1}\left(1+\left\Vert\Sigma^{\frac{1}{2}}\left(\Delta\Sigma^{-1}\right)\Sigma^{\frac{1}{2}}\right\Vert\right)\left\Vert\Sigma^{-1}\right\Vert\left\Vert\Sigma^{\frac{1}{2}}w^\theta\right\Vert\\
    +&\frac{\lambda}{KH}\sum_{h=1}^\infty\gamma^{h-1}(h-1)G\left\Vert\Sigma_\theta^{-\frac{1}{2}}\nu^\theta_1\right\Vert\left\Vert\Sigma_\theta^{\frac{1}{2}}\Sigma^{-\frac{1}{2}}\right\Vert\\
    &\cdot\left(1+\left\Vert\Sigma_\theta^{\frac{1}{2}}\left(\Delta M_\theta\right)\Sigma_\theta^{-\frac{1}{2}}\right\Vert\right)^{h-2}\left(1+\left\Vert\Sigma_\theta^{\frac{1}{2}}\left(\frac{\nabla_\theta^j\left(\Delta M_\theta\right)}{G}\right)\Sigma_\theta^{-\frac{1}{2}}\right\Vert\right)\left(1+\left\Vert\Sigma^{\frac{1}{2}}\left(\Delta\Sigma^{-1}\right)\Sigma^{\frac{1}{2}}\right\Vert\right)\left\Vert\Sigma^{-1}\right\Vert\left\Vert\Sigma^{\frac{1}{2}}w^\theta\right\Vert\\
    \leq&\frac{\lambda}{KH}\sum_{h=1}^\infty\gamma^{h-1}\left(1+\alpha\right)^h\left\Vert\Sigma^{-1}\right\Vert\left\Vert\Sigma_\theta^{\frac{1}{2}}\Sigma^{-\frac{1}{2}}\right\Vert\left(\left\Vert\Sigma_\theta^{-\frac{1}{2}}\nu^\theta_1\right\Vert\left\Vert\Sigma^{\frac{1}{2}}\nabla_\theta^j w_h^\theta\right\Vert+\left\Vert\Sigma_\theta^{-\frac{1}{2}}\nabla_\theta^j\nu^\theta_1\right\Vert\left\Vert\Sigma^{\frac{1}{2}}w^\theta\right\Vert+G(h-1)\left\Vert\Sigma_\theta^{-\frac{1}{2}}\nu^\theta_1\right\Vert\left\Vert\Sigma^{\frac{1}{2}}w^\theta\right\Vert\right)\\
    \leq&\frac{2\lambda}{KH(1-\gamma)}\left\Vert\Sigma^{-1}\right\Vert\left\Vert\Sigma_\theta^{\frac{1}{2}}\Sigma^{-\frac{1}{2}}\right\Vert\left(\left\Vert\Sigma_\theta^{-\frac{1}{2}}\nu^\theta_1\right\Vert\left\Vert\Sigma^{\frac{1}{2}}\nabla_\theta^j w^\theta\right\Vert+\left\Vert\Sigma_\theta^{-\frac{1}{2}}\nabla_\theta^j\nu^\theta_1\right\Vert\left\Vert\Sigma^{\frac{1}{2}}w^\theta\right\Vert+\frac{2G}{1-\gamma}\left\Vert\Sigma_\theta^{-\frac{1}{2}}\nu^\theta_1\right\Vert\left\Vert\Sigma^{\frac{1}{2}}w^\theta\right\Vert\right),
\end{align*}
where $\alpha$ is defined in the same way as that in the proof of Lemma \ref{e2_homo}. Similarly, we have $\alpha\leq\frac{1-\gamma}{2}$ and we have
\begin{align*}
    \left\Vert\Sigma^{\frac{1}{2}}\nabla_\theta^j w^\theta\right\Vert^2&=\mathbb{E}\left[\frac{1}{H}\sum_{h=1}^H\left(\nabla_\theta^j Q^\theta\left(s_h^{(1)},a_h^{(1)}\right)\right)^2\right]\leq \frac{G^2}{(1-\gamma)^4}\\
    \left\Vert\Sigma^{\frac{1}{2}}w^\theta\right\Vert^2&=\mathbb{E}\left[\frac{1}{H}\sum_{h=1}^H\left(Q^\theta\left(s_h^{(1)},a_h^{(1)}\right)\right)^2\right]\leq\frac{1}{(1-\gamma)^2}.
\end{align*}
We conclude
\begin{align*}
    \vert E_3^j\vert\leq&\frac{2\lambda}{KH(1-\gamma)^2}\left\Vert\Sigma^{-1}\right\Vert\left\Vert\Sigma_\theta^{\frac{1}{2}}\Sigma^{-\frac{1}{2}}\right\Vert\left(\left\Vert\Sigma_\theta^{-\frac{1}{2}}\nu^\theta_1\right\Vert\frac{3G}{1-\gamma}+\left\Vert\Sigma_\theta^{-\frac{1}{2}}\nabla_\theta^j\nu^\theta_1\right\Vert\right)\\
    \leq&\frac{6C_1d\sigma_{\textrm{min}}(\Sigma)\cdot\log\frac{8dmH}{\delta}
}{KH(1-\gamma)^2}\left\Vert\Sigma^{-1}\right\Vert\left\Vert\Sigma_\theta^{\frac{1}{2}}\Sigma^{-\frac{1}{2}}\right\Vert\left(\left\Vert\Sigma_\theta^{-\frac{1}{2}}\nu^\theta_1\right\Vert\frac{G}{1-\gamma}+\left\Vert\Sigma_\theta^{-\frac{1}{2}}\nabla_\theta^j\nu^\theta_1\right\Vert\right).
\end{align*}
which has finished the proof. 
\end{proof}

\end{document}